\pgfplotsset{width=7cm,compat=1.8}
\theoremstyle{plain}
\newtheorem{theorem}{Theorem}%
\newtheorem{lemma}[theorem]{Lemma}
\newtheorem{proposition}[theorem]{Proposition}
\newtheorem{corollary}[theorem]{Corollary}
\newtheorem{definition}[theorem]{Definition}
\newtheorem{remark}[theorem]{Remark}
\newtheorem{assumption}[theorem]{Assumption}
\theoremstyle{definition}
\newtheorem*{tldr*}{TL;DR}
\newtheorem*{impressions*}{Impressiones}
\newtheorem*{summary*}{Summary}
\newcommand{\E}{\mathbb{E}}
\newcommand{\ex}[2]{{\ifx&#1& \mathbb{E} \else \underset{#1}{\mathbb{E}} \fi \left[#2\right]}}
\newcommand{\var}[2]{{\ifx&#1& \mathsf{Var} \else \underset{#1}{\mathsf{Var}} \fi \left[#2\right]}}
\newcommand{\diag}{\mathrm{diag}}
\DeclareMathOperator{\tr}{Tr}
\newcommand{\Rb}{\mathbb{R}}
\newcommand{\CMI}[2]{{\ifx&#2& \mathsf{CMI} \else \mathsf{CMI}_{#2} \fi \left(#1\right)}}
\newcommand{\define}[4]{\expandafter#1\csname#3#4\endcsname{#2{#4}}}
\let\originalleft\left
\let\originalright\right
\renewcommand{\left}{\mathopen{}\mathclose\bgroup\originalleft}
\renewcommand{\right}{\aftergroup\egroup\originalright}
\newcommand{\mybignote}[2]{}%
\title{Learning curves for Gaussian process regression \\with power-law priors and targets}
\author{Hui Jin\thanks{UCLA \dotfill \texttt{huijin@ucla.edu}} 
\and Pradeep Kr. Banerjee\thanks{MPI MiS. \dotfill \texttt{pradeep@mis.mpg.de}} 
\and Guido Mont{\'u}far\thanks{UCLA \& MPI MiS. \dotfill \texttt{montufar@math.ucla.edu}}}
\begin{document}
	
\maketitle

\begin{abstract}
We characterize the power-law asymptotics of learning curves for Gaussian process regression (GPR) under the assumption that the eigenspectrum of the prior and the eigenexpansion coefficients of the target function follow a power law. Under similar assumptions, we leverage the equivalence between GPR and kernel ridge regression (KRR) to show the generalization error of KRR. Infinitely wide neural networks can be related to GPR with respect to the neural network GP kernel and the neural tangent kernel, which in several cases is known to have a power-law spectrum. Hence our methods can be applied to study the generalization error of infinitely wide neural networks. We present toy experiments demonstrating the theory.

\end{abstract}

\section{Introduction}

Gaussian processes (GPs) provide a flexible and interpretable framework for learning and adaptive inference, and are widely used for constructing prior distributions in non-parametric Bayesian learning. %
From an application perspective, one crucial question is how fast do GPs learn, i.e., how much training data is needed to achieve a certain level of generalization performance. Theoretically, this is addressed by analyzing so-called ``learning curves'', which describe the generalization error as a function of the training set size $n$. %
The rate at which the curve approaches zero determines the difficulty of learning tasks and conveys important information about the asymptotic performance of GP learning algorithms. In this paper, we study the learning curves for Gaussian process regression. Our main result characterizes the asymptotics of the generalization error in cases where the eigenvalues of the GP kernel and the coefficients of the eigenexpansion of the target function have a power-law decay. 
In the remainder of this introductory section, we review related work and outline our main contributions. 

\paragraph{Gaussian processes} 
A GP model is a probabilistic model on an infinite-dimensional parameter space \citep{rasmussenbook,BPNmethods}. 
In GP regression (GPR), for example, this space can be the set of all continuous functions. 
Assumptions about the learning problem are encoded by way of a {prior} distribution over functions, which gets transformed into a {posterior} distribution given some observed data. The mean of the posterior is then used for prediction. The model uses only a finite subset of the available parameters to explain the data and this subset can grow arbitrarily large as more data are observed. 
In this sense, GPs are ``non-parametric'' and contrast with parametric models, where there is a fixed number of parameters. 
For regression with Gaussian noise, a major appeal of the GP formalism is that the posterior is analytically tractable. GPs are also one important part in learning with kernel machines \citep{kanagawa2018GPreview} and modeling using GPs has recently gained considerable traction in the neural network community. 

\paragraph{Neural networks and kernel learning} From a GP viewpoint, there exists a well known correspondence between kernel methods and infinite neural networks (NNs) first studied by \citet{10.5555/525544}. Neal showed that the outputs of a randomly initialized one-hidden layer neural network {(with appropriate scaling of the variance of the initialization distribution)} 
converges to a GP %
over functions in the limit of an infinite number of hidden units. Follow-up work extended this correspondence with analytical expressions for the kernel covariance for shallow NNs by \citet{NIPS1996_ae5e3ce4}, 
and more recently for deep fully-connected NNs \citep{lee2018deep,g2018gaussian}, convolutional NNs with many channels \citep{novak2019bayesian,garriga-alonso2018deep}, and more general architectures \citep{NEURIPS2019_5e69fda3}. 
The correspondence enables \emph{exact} Bayesian inference in the associated GP model for infinite-width NNs on regression tasks and has led to some recent breakthroughs in our understanding of overparameterized NNs \citep{NEURIPS2018_5a4be1fa,NEURIPS2019_0d1a9651,arora2019exact,belkin2018understand,daniely2016toward,valleparamfnmapYang,bietti2019inductive}.
The most prominent kernels associated with infinite-width NNs are the Neural Network Gaussian Process (NNGP) kernel when only the last layer is trained \citep{lee2018deep,g2018gaussian}, and the Neural Tangent Kernel (NTK) when the entire model is trained \citep{NEURIPS2018_5a4be1fa}.
Empirical studies 
have shown that inference with such infinite network kernels is competitive with standard gradient descent-based optimization for fully-connected architectures \citep{NEURIPS2020_ad086f59}.

\paragraph{Learning curves}
A large-scale empirical characterization of the generalization performance of state-of-the-art deep NNs showed that 
the associated 
learning curves often follow a power law of the form 
$n^{-\beta}$ with the exponent $\beta$ ranging between 0.07 and 0.35 depending on the data and the algorithm \citep{hestness2017deep,spigler2020asymptotic}. 
Power-law asymptotics of learning curves have been theoretically studied %
in early works for the Gibbs learning algorithm \citep{amari1992learningcurves,amari1993statlearningcurve,haussler1996rigorous} that showed a generalization error scaling with exponent $\beta=0.5$, $1$ or $2$ under certain assumptions. 
More recent results from statistical learning theory characterize the shape of learning curves depending on the properties of the hypothesis class \citep{bousquet2021LearningCurve}.
In the context of GPs, approximations and bounds on learning curves have been investigated in several works \citep{sollich1999learning,sollichHalees2002,sollich2001gaussian,opperVivarelli1999,opperMalzahn2002,williamsVivarelli2000,malzahnOpper2001,malzahnOpper20012,seeger2008information,van2011information,le2015asymptotic}, 
with recent extensions to kernel regression from a spectral bias perspective \citep{bordelon2020spectrum,canatar2021spectral}. 
For a %
review on learning curves in relation to its shape and monotonicity,
see %
\citet{loog2019minimizers,viering2019open,viering2021shape}. 
A related but complementary line of work studies the convergence rates and posterior consistency properties of Bayesian non-parametric models \citep{BarronSeeger1998,seeger2008information,van2011information}.

\paragraph{Power-law decay of the GP kernel eigenspectrum}
The rate of decay of the eigenvalues of the GP kernel conveys important information about its smoothness. Intuitively, if a process is ``rough'' with more power at high frequencies, then the eigenspectrum decays more slowly. On the other hand, kernels that define smooth processes have a fast-decaying eigenspectrum \citep{stein2012interpolation,rasmussenbook}.
The precise eigenvalues $(\lambda_p)_{p\geq 1}$ of the operators associated to many kernels and input distributions are not known explicitly, except for a few special cases \citep{rasmussenbook}. 
Often, however, the asymptotic properties are known. 
The asymptotic rate of decay of the eigenvalues of stationary kernels for input distributions with bounded support is well understood \citep{widom1963,ritter1995GP}.
\citet{ronen2019convergence} showed that for inputs distributed uniformly on a hypersphere, the eigenfunctions of the arc-cosine kernel %
are spherical harmonics %
and the %
eigenvalues follow a power-law decay. 
The spectral properties of the NTK are integral to the analysis of training convergence and generalization of NNs, and several recent works empirically justify and rely on a power law assumption for the NTK spectrum \citep{bahriScalingLaws,canatar2021spectral,NEURIPS2020_ad086f59,nitanda2021optimal}. \citet{YarotskyVelikanov2021} showed that the asymptotics %
of the NTK of infinitely wide shallow ReLU networks follows a power-law that is determined primarily by the singularities of the kernel and 
has the form $\lambda_p \propto p^{-\alpha}$ with $\alpha=1+\tfrac{1}{d}$, where $d$ is the input dimension.

\paragraph{Asymptotics of the generalization error of kernel ridge regression (KRR)}

There is a well known equivalence between GPR and KRR with the additive noise in GPR playing the role of regularization in KRR \citep{kanagawa2018GPreview}. Analysis of the decay rates of the excess generalization error of KRR has appeared in several works, e.g, in the noiseless case with constant regularization \citep{bordelon2020spectrum,spigler2020asymptotic,jun2019kernel}, and the noisy optimally regularized case \citep{caponnetto2007optimal,steinwart2009optimal,fischer2020sobolev} under the assumption that the kernel eigenspectrum, and the eigenexpansion coefficients of the target function follow a power law. These assumptions, which are often called resp. the \emph{capacity} and \emph{source} conditions are related to the effective dimension of the problem and the difficulty of learning the target function \citep{caponnetto2007optimal,blanchard2018optimal}.
\citet{cui2021generalization} present a unifying picture of the excess error decay rates under the  capacity and source conditions in terms of the interplay between noise and regularization illustrating their results with real datasets.

\paragraph{Contributions} 
In this work, we characterize the asymptotics of the generalization error of GPR and KRR under the capacity and source conditions. 
Our main contributions are as follows: 
\begin{itemize}[leftmargin=*]
\item 
When the eigenspectrum of the prior decays with rate $\alpha$ and the eigenexpansion coefficients of the target function decay with rate $\beta$, we show that with high probability over the draw of $n$ input samples, the negative log-marginal likelihood behaves as $\Theta(n^{\max\{{\frac{1}{\alpha}, \frac{1-2\beta}{\alpha}+1\}}})$ (Theorem~\ref{thm:marginal-likelihood}) and the generalization error behaves as $\Theta(n^{\max\{\frac{1}{\alpha}-1, \frac{1-2\beta}{\alpha}\}})$ (Theorem~\ref{thm:generalization-error}). 
In the special case that the model is correctly specified, i.e., the GP prior is the true one from which the target functions are actually generated, our result implies that the generalization error behaves as $O(n^{\frac{1}{\alpha}-1})$ recovering as a special case a result due to \citet{sollichHalees2002} (vide Remark~\ref{rem:SollichHalees}). 

\item 
Under similar assumptions as in the previous item, we leverage the %
equivalence between GPR and KRR %
to show that the excess generalization error of KRR 
behaves as $\Theta(n^{\max\{{\frac{1}{\alpha}-1,\frac{1-2\beta}{\alpha}\}}})$
(Theorem~\ref{thm:NTK-generalization-error}). 
{In the noiseless case with constant regularization, our result implies that the generalization error behaves as $\Theta(n^{\frac{1-2\beta}{\alpha}})$ recovering as a special case a result due to \citet{bordelon2020spectrum}.} 
Specializing to the case of KRR with Gaussian design, 
we recover 
as a special case 
a result due to \citet{cui2021generalization}
(vide Remark~\ref{rem:KRRGaussianDesign}).  

For the unrealizable case, i.e.,
when the target function is outside the span of the eigenfunctions with positive eigenvalues, we show that the generalization error converges to a constant.

\item 
We present a few toy experiments demonstrating the theory for GPR with arc-cosine kernel without biases (resp. with biases) which is the conjugate kernel of an infinitely wide shallow network with two inputs and one hidden layer without biases (resp. with biases) \citep{cho2009kernel, ronen2019convergence}.

\end{itemize}

\section{Bayesian learning and generalization error for GPs} 

In GP regression, 
our goal is to learn a target function $f\colon\Omega\mapsto \mathbb{R}$ between an input $x\in\Omega$ and output $y\in\Rb$ based on training samples $D_n=\{(x_i,y_i)\}_{i=1}^n$. 
We consider an additive noise model $y_i=f(x_i)+\epsilon_i$, where $\epsilon_i\stackrel{\text{i.i.d.}}{\sim} \cN(0,\sigma_{\mathrm{true}}^2)$. If $\rho$ denotes the marginal density of the inputs $x_i$, then the pairs $(x_i,y_i)$ are generated according to the density
$q(x,y) = \rho(x)q(y|x)$, where $q(y|x)=\cN(y|f(x),\sigma_{\mathrm{true}}^2)$. 
We assume that there is a prior distribution $\Pi_0$ on $f$ which is defined as a zero-mean GP with continuous covariance function $k:\Omega\times\Omega\to\Rb$, i.e., $f\sim \mathcal{GP}(0,k)$. 
This means that for any finite set $\mathbf{x}=(x_1,\ldots,x_n)^T$, the random vector $f(\mathbf{x})=(f(x_1),\ldots,f(x_n))^T$ follows the multivariate normal distribution $\cN(0,K_{n})$ with covariance matrix $K_n=(k(x_i,x_j))_{i,j=1}^n\in\Rb^{n\times n}$. 
By Bayes' rule, the posterior distribution of the target $f$ given the training data is given by 
$$
d\Pi_n(f|D_n) = \frac{1}{Z(D_n)}\prod_{i=1}^n \cN(y_i|f(x_i),\sigma_{\mathrm{model}}^2)d\Pi_0(f),
$$
where $\Pi_0$ is the prior distribution, $Z(D_n)=\int \prod_{i=1}^n \cN(y_i|f(x_i),\sigma_{\mathrm{model}}^2)d\Pi_0(f)$ is the \emph{marginal likelihood} or \emph{model evidence} and $\sigma_{\mathrm{model}}$ is the sample variance used in GPR. In practice, we do not know the exact value of $\sigma_{\mathrm{true}}$ and so our choice of $\sigma_{\mathrm{model}}$ can be different from $\sigma_{\mathrm{true}}$.
The GP prior and the Gaussian noise assumption allows for exact Bayesian inference and the posterior distribution over functions is again a GP with mean and covariance function given by %
\begin{align}
\label{eq:posterior_mean}
\bar{m}(x) &= K_{\mathbf{x}x}^T(K_{n}+\sigma_{\mathrm{model}}^2I_n)^{-1}\mathbf{y}, \,x\in\Omega\\
	\bar{k}(x,x') &=k(x,x')-K_{\mathbf{x}x}^T(K_{n}+\sigma_{\mathrm{model}}^2I_n)^{-1}K_{\mathbf{x}x'}, \,x,x'\in\Omega,
\label{eq:posterior_variance}
\end{align}
where $K_{\mathbf{x}x}=\left(k(x_1,x),\ldots,k(x_n,x)\right)^T$ and $\mathbf{y}=(y_1,\ldots,y_n)^T\in\Rb^n$ \citep[Eqs. 2.23-24]{rasmussenbook}. 

The performance of GPR depends on how well the posterior approximates $f$ as the number of training samples $n$ tends to infinity. The distance of the posterior to the ground truth can be measured in various ways.
We consider two such measures, namely the Bayesian generalization error \citep{seeger2008information,hausslerBarron1997MIMetricEntropy,opperVivarelli1999} and the excess mean squared error \citep{sollichHalees2002,le2015asymptotic,bordelon2020spectrum,cui2021generalization}. 

\begin{definition}[Bayesian generalization error]
The Bayesian generalization error is defined as the Kullback-Leibler divergence between the true density $q(y|x)$ and the Bayesian predictive density
$p_n(y|x,D_n) = \int p(y|f(x))d\Pi_n(f|D_n)$,
\begin{align}\label{eq:GenError}
G(D_n) = %
\int q(x,y) \log \frac{q(y|x)}{p_n(y|x,D_n)}dxdy. 
\end{align}
\end{definition}
A related quantity of interest is the \emph{stochastic complexity} (SC), also known as the \emph{free energy}, which is just the negative log-marginal likelihood. 
We shall primarily be concerned with a normalized version of the stochastic complexity which is defined as follows: 
\begin{align}
\label{eq:F_0}
F^0(D_n) = 
-\log \frac{Z(D_n)}{\prod_{i=1}^n q(y_i|x_i)} = -\log \frac{\int \prod_{i=1}^n \cN(y_i|f(x_i),\sigma_{\mathrm{model}}^2)d\Pi_0(f)}{\prod_{i=1}^n q(y_i|x_i)} . 
\end{align}
The generalization error \eqref{eq:GenError} can be expressed in terms of the normalized SC as follows 
\citep[Theorem 1.2]{watanabe2009book}:
\begin{equation}
\label{eq:generalization_error}
G(D_n) = \mathbb{E}_{(x_{n+1},y_{n+1})}F^0(D_{n+1})-F^0(D_n), 
\end{equation}
where $D_{n+1}=D_n \cup \{(x_{n+1},y_{n+1})\}$ is 
obtained by augmenting $D_{n}$ with a test point $(x_{n+1},y_{n+1})$.

If we only wish to measure the performance of the mean of the Bayesian posterior,
then we can use the excess mean squared error:
\begin{definition}[Excess mean squared error]\label{def:msee}
The excess mean squared error is defined as
\begin{equation}
\label{eq:emse}
    M(D_n)= \mathbb{E}_{(x_{n+1},y_{n+1})}(\bar{m}(x_{n+1})-y_{n+1})^2-\sigma_{\mathrm{true}}^2=\mathbb{E}_{x_{n+1}}(\bar{m}(x_{n+1})-f(x_{n+1}))^2.
\end{equation}
\end{definition}

\begin{proposition}[Normalized stochastic complexity for GPR] \label{prop:normalized-stochastic-complexity} %
Assume that $\sigma^2_{\mathrm{model}}=\sigma^2_{\mathrm{true}}=\sigma^2$.
The normalized SC $F^0(D_n)$ \eqref{eq:F_0} for GPR with prior $\mathcal{GP}(0,k)$ is given as
\begin{align}
\label{eq:NSC}
      F^0(D_n) =\tfrac{1}{2}\log\det(I_n+\tfrac{K_n}{\sigma^2})+\tfrac{1}{2\sigma^2}\mathbf{y}^T (I_n+\tfrac{K_n}{\sigma^2})^{-1} \mathbf{y}-\tfrac{1}{2\sigma^2}(\mathbf{y}-f(\mathbf{x}))^T (\mathbf{y}-f(\mathbf{x })),
\end{align}
where $\bm{\epsilon}=(\epsilon_1,\ldots,\epsilon_n)^T$. The expectation of the normalized SC w.r.t. the noise ${\bm{\epsilon}}$ is given as %
\begin{align}\label{eq:ExpectedNSC}
      \mathbb{E}_{\bm{\epsilon}}F^0(D_n) =\tfrac{1}{2}\log\det\left(I_n+\tfrac{K_n}{\sigma^2}\right)-\tfrac{1}{2}\mathrm{Tr}\left(I_n-\left(I_n+\tfrac{K_n}{\sigma^2}\right)^{-1}\right)+\tfrac{1}{2\sigma^2}f(\mathbf{x})^T \left(I_n+\tfrac{K_n}{\sigma^2}\right)^{-1} f(\mathbf{x}).
\end{align}
\end{proposition}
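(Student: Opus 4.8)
The plan is to collapse the integral over the function-valued $f$ defining $Z(D_n)$ into a finite-dimensional Gaussian integral and then simplify. First I would note that the integrand $\prod_{i=1}^n\cN(y_i\mid f(x_i),\sigma^2)$ depends on $f$ only through the vector $f(\mathbf{x})=(f(x_1),\dots,f(x_n))^T$, which under the prior $\Pi_0=\mathcal{GP}(0,k)$ is distributed as $\cN(0,K_n)$ by the marginalization property of a Gaussian process. Hence $Z(D_n)=\int_{\Rb^n}\cN(\mathbf{y}\mid \mathbf{g},\sigma^2 I_n)\,\cN(\mathbf{g}\mid 0,K_n)\,d\mathbf{g}$, and evaluating this standard Gaussian convolution gives the familiar GP marginal likelihood $Z(D_n)=\cN(\mathbf{y}\mid 0,K_n+\sigma^2 I_n)$ \citep{rasmussenbook}.

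Next I would substitute this, together with $\prod_{i=1}^n q(y_i\mid x_i)=\cN(\mathbf{y}\mid f(\mathbf{x}),\sigma^2 I_n)$ (legitimate since $\sigma_{\mathrm{model}}=\sigma_{\mathrm{true}}=\sigma$), into the definition \eqref{eq:F_0} and take negative logarithms. The $(2\pi)^{n/2}$ normalizing constants cancel; the leftover $-\tfrac n2\log\sigma^2=-\tfrac12\log\det(\sigma^2I_n)$ combines with $\tfrac12\log\det(K_n+\sigma^2I_n)$ to give $\tfrac12\log\det(I_n+K_n/\sigma^2)$; and writing $(K_n+\sigma^2I_n)^{-1}=\sigma^{-2}(I_n+K_n/\sigma^2)^{-1}$ turns the two quadratic forms into those appearing in \eqref{eq:NSC} with prefactor $\tfrac1{2\sigma^2}$. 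This establishes \eqref{eq:NSC}.

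For \eqref{eq:ExpectedNSC} I would write $\mathbf{y}=f(\mathbf{x})+\bm{\epsilon}$ with $\bm{\epsilon}\sim\cN(0,\sigma^2I_n)$ and average \eqref{eq:NSC} term by term over $\bm{\epsilon}$. The log-determinant term is deterministic. Putting $A=(I_n+K_n/\sigma^2)^{-1}$ and expanding $\mathbf{y}^TA\mathbf{y}=f(\mathbf{x})^TAf(\mathbf{x})+2f(\mathbf{x})^TA\bm{\epsilon}+\bm{\epsilon}^TA\bm{\epsilon}$, the cross term has zero mean and $\E_{\bm{\epsilon}}[\bm{\epsilon}^TA\bm{\epsilon}]=\sigma^2\mathrm{Tr}(A)$, while $\E_{\bm{\epsilon}}(\mathbf{y}-f(\mathbf{x}))^T(\mathbf{y}-f(\mathbf{x}))=\E_{\bm{\epsilon}}\norm{\bm{\epsilon}}^2=n\sigma^2$. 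Collecting the scalar traces, $\tfrac12\mathrm{Tr}(A)-\tfrac n2=-\tfrac12\mathrm{Tr}(I_n-A)$, which yields exactly \eqref{eq:ExpectedNSC}.

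I do not expect a genuine obstacle. The two steps warranting care are: (i) justifying that marginalizing over the function $f$ truly reduces to a finite-dimensional Gaussian integral — this is only the Kolmogorov consistency/marginalization property of the GP and uses nothing beyond the zero mean and the continuity of $k$; and (ii) keeping the noise variances straight, which is trivial here because $\sigma_{\mathrm{model}}$ and $\sigma_{\mathrm{true}}$ are assumed equal, so the denominator $\prod_i q(y_i\mid x_i)$ and the likelihood inside $Z$ carry the same $\sigma^2$.
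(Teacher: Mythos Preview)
Your proposal is correct and follows essentially the same route as the paper: reduce the functional integral to a finite-dimensional Gaussian via the GP marginal, evaluate the marginal likelihood as $\cN(\mathbf{y}\mid 0,K_n+\sigma^2 I_n)$, plug into \eqref{eq:F_0} and simplify, then expand $\mathbf{y}=f(\mathbf{x})+\bm{\epsilon}$ and average. The only cosmetic difference is that the paper writes out the completion of the square explicitly (introducing $\tilde K_n^{-1}=K_n^{-1}+\sigma^{-2}I$), whereas you invoke the Gaussian convolution identity directly; your shortcut also sidesteps writing $K_n^{-1}$, which is a small bonus if $K_n$ is singular.
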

This is a basic result and has  applications in relation to model selection in GPR \citep{rasmussenbook}.
For completeness, we give a proof of Proposition~\ref{prop:normalized-stochastic-complexity} in Appendix~\ref{app:proof_likelihood}. 
\citet[Theorem 1]{seeger2008information} gave an upper bound on the normalized stochastic complexity for the case when $f$ lies in the reproducing kernel Hilbert space (RKHS) of the GP prior. It is well known, however, that sample paths of GP almost surely fall outside the corresponding RKHS \citep{van2011information} limiting the applicability of the result. 

We next derive the asymptotics of  $\mathbb{E}_{\bm{\epsilon}}F^0(D_n)$, 
the expected generalization error $\mathbb{E}_{\bm{\epsilon}}G(D_n)=\mathbb{E}_{\bm{\epsilon}}\mathbb{E}_{(x_{n+1},y_{n+1})}F^0(D_n+1)-\mathbb{E}_{\bm{\epsilon}}F^0(D_n)$, and the excess mean squared error $\mathbb{E}_{\bm{\epsilon}}M(D_n)$.

\section{Asymptotic analysis of GP regression with power-law priors} 
\label{Asymptotic_GP}
We begin by introducing some notations and assumptions. We assume that $f\in L^2(\Omega,\rho)$. By Mercer’s theorem \citep[Theorem 4.2]{rasmussenbook}, the covariance function of the GP prior can be decomposed as $k(x_1,x_2)=\sum_{p=1}^\infty \lambda_p \phi_p(x_1)\phi_p(x_2)$, where $(\phi_p(x))_{p\geq 1}$ are the eigenfunctions of the operator $L_k\colon L^2(\Omega,\rho)\mapsto L^2(\Omega,\rho)$;  $(L_kf)(x)=\int_{\Omega}k(x,s)f(s)\mathrm{d}\rho(s)$, and $(\lambda_p)_{p\geq 1}$ are the corresponding positive eigenvalues. 
We index the sequence of eigenvalues in decreasing order, that is $\lambda_1\geq\lambda_2\geq\cdots>0$. 
The target function $f(x)$ is decomposed into the orthonormal set $(\phi_p(x))_{p\geq 1}$ and its orthogonal complement $\{\phi_p(x):p\geq 1\}^{\perp}$ as
\begin{align}\label{eq:decompfx}
  f(x)=\sum_{p=1}^\infty \mu_p \phi_p(x)+\mu_0\phi_0(x)\in L^2(\Omega,\rho), 
\end{align}
where $\bm{\mu}=(\mu_0, \mu_1,\ldots,\mu_p,\ldots)^T$ are the coefficients of the decomposition, and $\phi_0(x)$ satisfies $\|\phi_0(x)\|_2=1$ and  $\phi_0(x)\in \{\phi_p(x):p\geq 1\}^{\perp}$. 
For given sample inputs $\mathbf{x}$, let $\phi_p(\mathbf{x})=(\phi_
p(x_1),\ldots,\phi_p(x_n))^T$, $\Phi=(\phi_0(\mathbf{x}), \phi_1(\mathbf{x}),\ldots, \phi_p(\mathbf{x}),\ldots)$ and $\Lambda=\mathrm{diag}\{0, \lambda_1,\ldots,\lambda_p,\ldots\}$. Then the covariance matrix $K_n$ can be written as $K_n=\Phi\Lambda\Phi^T$, and the function values on the sample inputs can be written as $f(\mathbf{x})=\Phi\bm{\mu}$. 

We shall make the following assumptions in order to derive the power-law asymptotics of the normalized stochastic complexity and the generalization error of GPR:
\begin{assumption}[Power law decay of eigenvalues]
\label{assumption_alpha}
 The eigenvalues $(\lambda_p)_{p\geq 1}$ follow the power law 
\begin{equation}
    \underline{C_\lambda} p^{-\alpha}\leq\lambda_p\leq \overline{C_\lambda} p^{-\alpha},\ \forall p\geq 1
\end{equation}
where $\underline{C_\lambda}$, $\overline{C_\lambda}$ 
and $\alpha$ are three positive constants which satisfy $0<\underline{C_\lambda}\leq  \overline{C_\lambda}$ 
and $\alpha>1$.
\end{assumption}
As mentioned in the introduction, this assumption, called the capacity condition, is fairly standard in kernel learning 
and is adopted in many recent works \citep{bordelon2020spectrum,canatar2021spectral,jun2019kernel,bietti2021sample,cui2021generalization}.
\citet{YarotskyVelikanov2021} 
derived the exact value of the exponent $\alpha$ when the kernel function has a homogeneous singularity on its diagonal, which is the case for instance for the arc-cosine kernel. 
\begin{assumption}[Power law decay of coefficients of decomposition] \label{assumption_beta}
Let $C_\mu,\underline{C_\mu}>0$ and
$\beta>1/2$ be positive constants and let $\{p_i\}_{i\geq 1}$ be an increasing integer sequence such that $\sup_{i\geq1}\left(p_{i+1}-p_i\right)<\infty$. 
The coefficients $(\mu_p)_{p\geq 1}$ of the decomposition \eqref{eq:decompfx} {of the target function} follow the power law 
\begin{equation}
    |\mu_p|\leq C_\mu p^{-\beta},\ \forall p\geq 1\quad \text{and}\quad |\mu_{p_i}|\geq \underline{C_\mu} {p_i}^{-\beta},\ \forall i\geq 1.
\end{equation}
\end{assumption}

Since $f\in L^2(\Omega,\rho)$, we have $\sum_{p=0}^\infty \mu_p^2 <\infty$. The condition $\beta>1/2$ in Assumption ~\ref{assumption_beta} ensures that the sum $\sum_{p=0}^\infty \mu_p^2$ does not diverge. 
When the orthonormal basis $(\phi_p(x))_p$ is the Fourier basis or the spherical harmonics basis, the coefficients $(\mu_p)_p$ decay at least as fast as a power law so long as the target function $f(x)$ satisfies certain smoothness conditions \citep{bietti2019inductive}. %
\citet{YarotskyVelikanov2021} gave examples of some natural classes of functions for which Assumption \ref{assumption_beta} is satisfied, such as functions that have a bounded support with smooth boundary and are smooth on the interior of this support, and derived the corresponding exponents $\beta$.

\begin{assumption}[Boundedness of eigenfunctions]\label{assumption_functions}
The eigenfunctions $(\phi_p(x))_{p\geq 0}$ satisfy %
\begin{equation}\label{eq:assumption_functions}
\|\phi_0\|_\infty\leq C_\phi \quad\text{and}\quad 
 \|\phi_p\|_\infty\leq C_\phi p^{\tau},\ p\geq 1,  
\end{equation}
where $C_\phi$ and $\tau$ are two positive constants which satisfy $\tau<\frac{\alpha-1}{2}$. 
\end{assumption}
The second condition in \eqref{eq:assumption_functions} appears, for example, in \citet[Hypothesis $\text{H}_1$]{valdivia2018relative} and is less restrictive than the assumption of uniformly bounded eigenfunctions 
that has appeared in several other works in the GP literature, see, e.g., 
\citet{braun2006accurate,chatterji2019online,vakili2021information}.

Define
\begin{align}%
    T_1(D_n)&=\tfrac{1}{2}\log\det\left(I_n+\tfrac{\Phi\Lambda\Phi^T}{\sigma^2}\right)-\tfrac{1}{2}\mathrm{Tr}\left(I_n-\left(I_n+\tfrac{\Phi\Lambda\Phi^T}{\sigma^2}\right)^{-1}\right),\label{eqLT1T2def1}\\ 
    T_2(D_n)&=\tfrac{1}{2\sigma^2}f(\mathbf{x})^T \left(I_n+\tfrac{\Phi\Lambda\Phi^T}{\sigma^2}\right)^{-1} f(\mathbf{x}),\label{eqLT1T2def2}\\
   G_1(D_n)&=\mathbb{E}_{(x_{n+1},y_{n+1})}(T_1(D_{n+1})-T_1(D_{n})),\label{eqLT1T2def3}
   \\ G_2(D_n)&=\mathbb{E}_{(x_{n+1},y_{n+1})}(T_2(D_{n+1})-T_2(D_{n})).\label{eqLT1T2def4}
\end{align}

Using \eqref{eq:ExpectedNSC} and 
\eqref{eq:generalization_error}, we have $\mathbb{E}_{\bm{\epsilon}}F^0(D_n)=T_1(D_n)+T_2(D_n)$ and $\mathbb{E}_{\bm{\epsilon}}G(D_n) = G_1(D_n)+G_2(D_n)$.
Intuitively, $G_1$ corresponds to the effect of the noise on the generalization error irrespective of the target function $f$, whereas $G_2$ corresponds to the ability of the model to fit the target function. 
As we will see next in  Theorems~\ref{thm:generalization-error} and \ref{thm:Generalization_mu_0>0}, if $\alpha$ is large, then the error associated with the noise is smaller. 
When $f$ is contained in the span of the eigenfunctions $\{\phi_p\}_{p\geq 1}$, $G_2$ decreases with increasing $n$, but if $f$ contains an orthogonal component, then the error remains constant and GP regression is not able to learn the target function.

\subsection{Asymptotics of the normalized stochastic complexity} 
We derive the asymptotics of the normalized SC \eqref{eq:ExpectedNSC} %
for the following two cases: $\mu_0=0$ and $\mu_0>0$. 
When $\mu_0=0$, the target function $f(x)$ lies in the span of all eigenfunctions with positive eigenvalues.

\begin{theorem}[Asymptotics of the normalized SC, $\mu_0 = 0$]
\label{thm:marginal-likelihood}
Assume that $\mu_0=0$ and $\sigma^2_{\mathrm{model}}=\sigma^2_{\mathrm{true}}=\sigma^2=\Theta(1)$. Under Assumptions \ref{assumption_alpha}, \ref{assumption_beta} and \ref{assumption_functions}, with probability of at least $1-n^{-q}$ over sample inputs $(x_i)_{i=1}^n$, where %
$0\leq q<\min\{\frac{(2\beta-1)(\alpha-1-2\tau)}{4\alpha^2},\frac{\alpha-1-2\tau}{2\alpha}\}$, 
the expected normalized SC \eqref{eq:ExpectedNSC} has the asymptotic behavior:
\begin{align}
    \E_\epsilon F^0(D_n) &=\left[\tfrac{1}{2}\log\det(I+\tfrac{n}{\sigma^2}\Lambda)-\tfrac{1}{2}\tr\left(I-(I+\tfrac{n}{\sigma^2}\Lambda)^{-1}\right)
    +\tfrac{n}{2\sigma^2}\bm{\mu}^T(I+\tfrac{n}{\sigma^2}\Lambda)^{-1}\bm{\mu}\right]\left(1+o(1)\right) \notag\\&= \Theta(n^{\max\{{\tfrac{1}{\alpha},\tfrac{1-2\beta}{\alpha}+1\}}}). 
\end{align} 
\end{theorem}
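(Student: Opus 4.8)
The plan is to prove the two asserted equalities separately. The first is a \emph{concentration} statement: on the stated high-probability event, $\E_{\bm{\epsilon}}F^0(D_n)$ agrees, up to a factor $1+o(1)$, with its ``population'' counterpart, namely the bracketed expression in which the sample Gram factor $\Phi^T\Phi$ has effectively been replaced by $nI$. The second is an \emph{asymptotic evaluation} of that population expression. By Proposition~\ref{prop:normalized-stochastic-complexity} and the definitions \eqref{eqLT1T2def1}--\eqref{eqLT1T2def2}, $\E_{\bm{\epsilon}}F^0(D_n)=T_1(D_n)+T_2(D_n)$, and both summands are nonnegative ($T_1$ because $\log(1+x)\ge x/(1+x)$ for $x\ge 0$, applied to the eigenvalues of $\Phi\Lambda\Phi^T/\sigma^2$; $T_2$ because it is a quadratic form in a positive-definite matrix), so it suffices to handle $T_1$ and $T_2$ one at a time and then add, nonnegativity ruling out cancellation at the end.

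For the concentration step I would pass to sequence space via the Weinstein--Aronszajn (Sylvester) identity: writing $\phi(x)=(\phi_0(x),\phi_1(x),\dots)^T$ and $M_n:=\Lambda^{1/2}\Phi^T\Phi\Lambda^{1/2}=\sum_{i=1}^n\Lambda^{1/2}\phi(x_i)\phi(x_i)^T\Lambda^{1/2}$, one has $\det(I_n+\tfrac1{\sigma^2}\Phi\Lambda\Phi^T)=\det(I+\tfrac1{\sigma^2}M_n)$, and the trace and quadratic-form terms in \eqref{eqLT1T2def1}--\eqref{eqLT1T2def2} admit analogous rewritings through a push-through identity. Orthonormality of the $\phi_p$ in $L^2(\Omega,\rho)$ gives $\E[\phi(x)\phi(x)^T]=I$ and hence $\E M_n=n\Lambda$. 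I would then truncate at a frequency level $P=P(n)$ (polynomial in $n$, to be optimized): on the low-frequency block $p\le P$ apply a matrix Bernstein/Chernoff inequality, using the pointwise bound $\|\phi_p\|_\infty\le C_\phi p^\tau$ from Assumption~\ref{assumption_functions} to control the per-sample operator norm, to obtain $(1-\delta_n)\,n\Lambda\preceq M_n\preceq(1+\delta_n)\,n\Lambda$ on the relevant block with $\delta_n=o(1)$; on the high-frequency remainder bound the perturbation, via $\tr$ or operator norm, by $\sum_{p>P}\lambda_p\|\phi_p\|_\infty^2\lesssim\sum_{p>P}p^{2\tau-\alpha}$ (finite and small precisely because $\tau<\tfrac{\alpha-1}{2}$), and bound the tail of the target by $\sum_{p>P}\mu_p^2\lesssim P^{1-2\beta}$ from Assumption~\ref{assumption_beta}. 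Feeding these two-sided bounds into the monotone functions $x\mapsto\log\det(I+x/\sigma^2)$, $x\mapsto\tr(I-(I+x/\sigma^2)^{-1})$ and the resolvent quadratic form---with the \emph{lower} bound for $T_2$ additionally using the subsequence $\{p_i\}$ with $\sup_i(p_{i+1}-p_i)<\infty$ to retain a constant fraction of the spectral mass of $\bm{\mu}$---and choosing $P(n)$ to balance the low-frequency deviation against the high-frequency bias yields the factor $1+o(1)$; the window of admissible exponents for $P(n)$ is nonempty exactly under the displayed range for $q$.

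For the evaluation step, set $t=n/\sigma^2=\Theta(n)$ and estimate the three sums over $p$ by comparison with integrals, using $\lambda_p\asymp p^{-\alpha}$ (Assumption~\ref{assumption_alpha}) and the two-sided control of $|\mu_p|$ along $\{p_i\}$ (Assumption~\ref{assumption_beta}). One gets $\tfrac12\log\det(I+t\Lambda)=\tfrac12\sum_{p\ge1}\log(1+t\lambda_p)\asymp\sum_{p\lesssim t^{1/\alpha}}\log(t p^{-\alpha})\asymp t^{1/\alpha}$ and, likewise, $\tfrac12\tr(I-(I+t\Lambda)^{-1})=\tfrac12\sum_{p\ge1}\tfrac{t\lambda_p}{1+t\lambda_p}\asymp\#\{p:\lambda_p\gtrsim t^{-1}\}\asymp t^{1/\alpha}$; since $\log(1+x)-x/(1+x)\ge 0$ and the logarithmic contribution dominates, the $T_1$-part of the bracket is $\Theta(n^{1/\alpha})$. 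For the $T_2$-part, $\tfrac{t}{2}\bm{\mu}^T(I+t\Lambda)^{-1}\bm{\mu}=\tfrac12\sum_{p\ge1}\tfrac{t\mu_p^2}{1+t\lambda_p}\asymp\sum_{p\lesssim t^{1/\alpha}}p^{\alpha-2\beta}+\sum_{p\gtrsim t^{1/\alpha}}t\,p^{-2\beta}\asymp n^{\frac{1-2\beta}{\alpha}+1}$ whenever $\tfrac{1-2\beta}{\alpha}+1\ge 0$ (and $\Theta(1)$ otherwise, a possible $\log n$ appearing only at $2\beta=\alpha+1$), the lower bound again coming from $\{p_i\}$. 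Adding $T_1$ and $T_2$ and using their nonnegativity gives $\E_{\bm{\epsilon}}F^0(D_n)=\Theta(n^{\max\{\frac1\alpha,\,\frac{1-2\beta}{\alpha}+1\}})$.

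The main obstacle is the concentration step, and specifically making the truncation quantitative: one must pick $P(n)$ so that the matrix-concentration deviation $\delta_n$ on the block $p\le P$ is $o(1)$ with failure probability at most $n^{-q}$, while simultaneously the discarded tails $\sum_{p>P}p^{2\tau-\alpha}$ and $\sum_{p>P}\mu_p^2$ remain negligible against the $\Theta(n^{1/\alpha})$ and $\Theta(n^{\frac{1-2\beta}{\alpha}+1})$ main terms. These two demands pull $P$ in opposite directions, and the interval of feasible exponents is nonempty only under $\tau<\tfrac{\alpha-1}{2}$ (Assumption~\ref{assumption_functions}) together with $q<\min\{\tfrac{(2\beta-1)(\alpha-1-2\tau)}{4\alpha^2},\tfrac{\alpha-1-2\tau}{2\alpha}\}$; verifying that all the error terms indeed close at this rate is the technical heart of the argument.
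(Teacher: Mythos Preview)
Your overall architecture matches the paper's: split $\E_{\bm\epsilon}F^0=T_1+T_2$, truncate at a cutoff $P$, control the truncated pieces by concentration of $\Phi_P^T\Phi_P$ around $nI$, bound the truncation residuals, and evaluate the resulting population sums by integral comparison. Your evaluation step is correct and is exactly what the paper does (their Lemma on sums of the form $\sum p^{-s_1}/(1+mp^{-s_2})^{s_3}$). Your observation that $T_1,T_2\ge 0$, so the two pieces can be handled separately without cancellation, is also what justifies the final $\Theta$.

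On the low-frequency concentration you and the paper differ in presentation rather than substance. You want the multiplicative sandwich $(1-\delta_n)nI\preceq \Phi_P^T\Phi_P\preceq(1+\delta_n)nI$ and then push it through the monotone maps $h(x)=\log(1+x)-x/(1+x)$ (for $T_1$) and the resolvent quadratic form (for $T_2$). The paper instead applies matrix Bernstein to the \emph{preconditioned} deviation $(I+\tfrac{n}{\sigma^2}\Lambda_R)^{-\gamma/2}\Lambda_R^{\gamma/2}(\Phi_R^T\Phi_R-nI)\Lambda_R^{\gamma/2}(I+\tfrac{n}{\sigma^2}\Lambda_R)^{-\gamma/2}$ and then Neumann-expands; this is what lets them write the $(1+o(1))$ as an explicit geometric series in the small operator norm. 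Your sandwich route is arguably cleaner for $T_{1,P}$ and $T_{2,P}$ themselves (Loewner monotonicity of $C\mapsto c\mu^T(\sigma^2 I+c\Lambda)^{-1}\mu$ in $c$, plus Weyl for the $T_1$ eigenvalue bounds, do close). The paper's proof-sketch remark that ``ordinary Bernstein on $\Phi_R^T\Phi_R-nI$ gives $O(R\sqrt n)$ and hence needs $\alpha>2$'' is an overestimate; with the variance bound $\|\E(\phi_R\phi_R^T)^2\|\lesssim R^{1+2\tau}$ one actually gets $\tilde O(\sqrt{nR^{1+2\tau}})$, which is $o(n)$ for $R<n^{1/(1+2\tau)}$, compatible with $R>n^{1/\alpha}$ exactly under Assumption~\ref{assumption_functions}.

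Where your proposal has a genuine gap is the \emph{tail} step. You bound ``the perturbation'' by $\sum_{p>P}\lambda_p\|\phi_p\|_\infty^2\asymp P^{1+2\tau-\alpha}$. Two problems. First, for $|T_1-T_{1,P}|$ the relevant quantity is $\tfrac1{\sigma^2}\sum_{p>P}\lambda_p\|\phi_p(\mathbf x)\|_2^2$, which in expectation is $\tfrac{n}{\sigma^2}\sum_{p>P}\lambda_p\asymp nP^{1-\alpha}$: the leading term uses $\|\phi_p\|_{L^2}=1$, not $\|\phi_p\|_\infty$. If you only use the $L^\infty$ bound you get $nP^{1-\alpha+2\tau}$, and making this $o(n^{1/\alpha})$ forces $P>n^{(\alpha-1)/(\alpha(\alpha-1-2\tau))}$, which for $\tau$ near $(\alpha-1)/2$ is \emph{incompatible} with $P<n^{1/(1+2\tau)}$; the argument would not close over the full parameter range of the theorem. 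The paper handles this by a per-$p$ scalar Bernstein (their Lemma~\ref{lem:2_norm_concentration_bounded}) to recover the $nP^{1-\alpha}$ main term. Second, and more seriously, for the \emph{lower} bound $T_2\ge(1-o(1))T_{2,P}$ you need to pass from $(I+K/\sigma^2)^{-1}$ to $(I+K_P/\sigma^2)^{-1}$, and Loewner monotonicity only gives the wrong direction here; one needs $\|K_{>P}\|_{\mathrm{op}}=\|\Phi_{>P}\Lambda_{>P}\Phi_{>P}^T\|=o(\sigma^2)$ so that $(I+K/\sigma^2)^{-1}\succeq(1-o(1))(I+K_P/\sigma^2)^{-1}$. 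Your trace-type bound gives only $\|K_{>P}\|\le nP^{1-\alpha+2\tau}$, which is $o(1)$ only for $P>n^{1/(\alpha-1-2\tau)}$ --- again incompatible with the low-block window unless $\alpha>2(1+2\tau)$. The paper obtains the needed $\|K_{>P}\|=\tilde O(nP^{-\alpha})=o(1)$ for any $P>n^{1/\alpha}$ by a \emph{second} matrix-Bernstein argument on an intermediate block $R<p\le S$ (their Lemma~\ref{lem:2-norm_residual} and Corollary~\ref{cor:2-norm_inv_residual}). You do not mention this, and without it the $T_2$ residual does not close for $1+2\tau<\alpha\le 2(1+2\tau)$.

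In short: your plan is the right one and your low-block concentration would work, but the tail control you describe is too crude. Replace the $\|\phi_p\|_\infty$-based tail by the $L^2$-based one (with per-mode scalar concentration) for $T_1$, and add a matrix-Bernstein step on the tail block to get $\|\Phi_{>P}\Lambda_{>P}\Phi_{>P}^T\|_{\mathrm{op}}=o(1)$ for $T_2$; these are precisely the ingredients the paper supplies to make the truncation window nonempty under the stated range of~$q$.
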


The complete proof of Theorem~\ref{thm:marginal-likelihood} is given in Appendix \ref{app:asymptotics}. We give a sketch of the proof below. In the sequel, we use the notations $O$ and $\Theta$ to denote the standard mathematical orders and the notation $\tilde{O}$ to suppress logarithmic factors.

\begin{proof}[Proof sketch of Theorem~\ref{thm:marginal-likelihood}]

By \eqref{eq:ExpectedNSC}, \eqref{eqLT1T2def1} and \eqref{eqLT1T2def2} we have $\mathbb{E}_{\bm{\epsilon}}F^0(D_n)=T_1(D_n)+T_2(D_n)$.
In order to analyze the terms $T_1(D_n)$ and $T_2(D_n)$, we will consider truncated versions of these quantities and bound the corresponding residual errors.
Given a truncation parameter $R\in\mathbb{N}$, let $\Phi_R=(\phi_0(\mathbf{x}), \phi_1(\mathbf{x}),\ldots, \phi_R(\mathbf{x}))\in\Rb^{n\times R}$ be the truncated matrix of eigenfunctions evaluated at the data points,  $\Lambda_R=\mathrm{diag}(0,\lambda_1,\ldots, \lambda_R)\in\Rb^{(R+1)\times(R+1)}$ and $\bm{\mu}_R=(\mu_0,\mu_1,\ldots, \mu_R)\in\Rb^{R+1}$. We define the truncated version of  $T_1(D_n)$ as follows: 
\begin{align}
    T_{1,R}(D_n)&=\tfrac{1}{2}\log\det\left(I_n+\tfrac{\Phi_R\Lambda_R\Phi_R^T}{\sigma^2}\right)-\tfrac{1}{2}\mathrm{Tr}\left(I_n-(I_n+\tfrac{\Phi_R\Lambda_R\Phi_R^T}{\sigma^2})^{-1}\right).
\end{align}
Similarly, define $\Phi_{>R}=(\phi_{R+1}(\mathbf{x}), \phi_{R+2}(\mathbf{x}),\ldots, \phi_p(\mathbf{x}), \ldots)$, $\Lambda_{>R}=\diag(\lambda_{R+1},\ldots,\lambda_{p},\ldots)$, $f_{R}(x)=\sum_{p=1}^R \mu_p \phi_p(x)$, $f_{R}(\mathbf{x})=(f_{R}(x_1),\ldots,f_{R}(x_n))^T$, $f_{>R}(x)=f(x)-f_{R}(x)$, and $f_{>R}(\mathbf{x})=(f_{>R}(x_1),\ldots,f_{>R}(x_n))^T$. The truncated version of $T_2(D_n)$ is then defined as 
\begin{align}
    T_{2,R}(D_n)&=\tfrac{1}{2\sigma^2}f_R(\mathbf{x})^T (I_n+\tfrac{\Phi_R\Lambda_R\Phi_R^T}{\sigma^2})^{-1} f_R(\mathbf{x})^T. 
\end{align}
The proof consists of three steps:
\begin{itemize}[leftmargin=*]
    \item \textbf{Approximation step:} In this step, we show that the asymptotics of $T_{1,R}$ resp. $T_{2,R}$ dominates that of the residuals, $|T_{1,R}(D_n)-T_1(D_n)|$ resp. $|T_{2,R}(D_n)-T_2(D_n)|$ (see Lemma~\ref{thm:truncationR}).
    This builds upon first showing that 
    $\|\Phi_{>R}\Lambda_{>R}\Phi_{>R}^T\|_2=\tilde{O}( \max\{nR^{-\alpha},n^{\frac{1}{2}}R^{\frac{1-2\alpha}{2}},R^{1-\alpha}\})$ (see Lemma \ref{lem:2-norm_residual})
    and then choosing {$R=n^{\frac{1}{\alpha}+\kappa}$} where $0<\kappa<\frac{\alpha-1-2\tau}{2\alpha^2}$ when we have $\|\Phi_{>R}\Lambda_{>R}\Phi_{>R}^T\|_2=o(1)$. Intuitively, the choice of the truncation parameter $R$ is governed by the fact that $\lambda_R =\Theta( R^{-\alpha})=n^{-1+\kappa\alpha}=o(n^{-1})$.

    \item \textbf{Decomposition step:} In this step, we decompose $T_{1,R}$ into a 
    term independent of $\Phi_R$ and 
    a series involving $\Phi_R^T\Phi_R-nI_R$, and likewise for $T_{2,R}$ (see Lemma \ref{lem:T2R}). %
    This builds upon first showing using the Woodbury matrix identity \citep[\S A.3]{rasmussenbook} that
    \begin{align}
T_{1,R}(D_n)
    &=\tfrac{1}{2}\log\det(I_R+\tfrac{1}{\sigma^2}\Lambda_R\Phi_R^T\Phi_R)-\tfrac{1}{2}\mathrm{Tr}\Phi_R(\sigma^2I_R+\Lambda_R\Phi_R^T\Phi_R)^{-1}\Lambda_R\Phi_R^T, %
    \label{eq:t1R_1}\\
T_{2,R}(D_n)
    &=\tfrac{1}{2\sigma^2}\bm{\mu}_R^T\Phi_R^T\Phi_R(\sigma^2I_R+\Lambda_R\Phi_R^T\Phi_R)^{-1}\bm{\mu}_R,\label{eq:t1R_2}
\end{align}
and then Taylor expanding the matrix inverse $(\sigma^2I_R+\Lambda_R\Phi_R^T\Phi_R)^{-1}$ in \eqref{eq:t1R_1} and \eqref{eq:t1R_2} to show that the $\Phi_R$-independent terms 
in the decomposition of $T_{1,R}$ and $T_{2,R}$ are, respectively, $\frac{1}{2}\log\det(I_R+\frac{n}{\sigma^2}\Lambda_R)-\frac{1}{2}\mathrm{Tr}\left(I_R-(I_R+\frac{n}{\sigma^2}\Lambda_R)^{-1}\right)$,
and $ \frac{n}{2\sigma^2}\bm{\mu}_R^T(I_R+\frac{n}{\sigma^2}\Lambda_R)^{-1}\bm{\mu}_R$.

  \item \textbf{Concentration step:} 
   Finally, we use concentration inequalities
   to show that these $\Phi_R$-independent terms 
   dominate the series involving $\Phi_R^T\Phi_R-nI_R$ (see Lemma \ref{thm:expansion}) when we have 
   \begin{align*}
    T_{1,R}(D_n)&=\left(\tfrac{1}{2}\log\det(I_R+\tfrac{n}{\sigma^2}\Lambda_R)-\tfrac{1}{2}\tr\left(I_R-(I_R+\tfrac{n}{\sigma^2}\Lambda_R)^{-1}\right)\right)(1+o(1))=\Theta(n^{{\frac{1}{\alpha}}}),\\ %
    T_{2,R}(D_n)&=\left(\tfrac{n}{2\sigma^2}\bm{\mu}_{R}^T(I_R+\tfrac{n}{\sigma^2}\Lambda_{R})^{-1}\bm{\mu}_{R}\right)(1+o(1))=\begin{cases}
    \Theta(n^{\max\{{0,\frac{1-2\beta}{\alpha}+1\}}}),&\alpha\not=2\beta-1,\\
    \Theta(\log n),&\alpha=2\beta-1.
    \end{cases}
\end{align*}
The key idea is to consider 
the %
matrix 
$\Lambda_R^{1/2}(I+\frac{n}{\sigma^2}\Lambda_R)^{-1/2}\Phi_R^T\Phi_R(I+\frac{n}{\sigma^2}\Lambda_R)^{-1/2}\Lambda_R^{1/2}$ and show that it concentrates around $n\Lambda_R(I+\frac{n}{\sigma^2})^{-1}$ (see Corollary \ref{lem:matrix_phiTphi_1}). 
Note that an ordinary application of the matrix Bernstein inequality to $\Phi_R^T\Phi_R-nI_R$ yields $\|\Phi_R^T\Phi_R-nI\|_2=O(R\sqrt{n})$, which is not sufficient for our purposes, since 
this would give $O(R\sqrt{n})=o(n)$ only when $\alpha>2$. 
In contrast, our results are valid for $\alpha>1$ and cover cases of practical interest, e.g., 
the NTK of infinitely wide shallow ReLU network \citep{YarotskyVelikanov2021} and 
the arc-cosine kernels over high-dimensional hyperspheres
\citep{ronen2019convergence}
that have $\alpha=1+O(\tfrac{1}{d})$, 
where $d$ is the input dimension.\qedhere

\end{itemize}

\end{proof}

For $\mu_0 > 0$, we note the following result:
\begin{theorem}[Asymptotics of the normalized SC, $\mu_0 > 0$]
\label{thm:marginal-likelihood_mu_0>0}
Assume $\mu_0 > 0$ and $\sigma^2_{\mathrm{model}}=\sigma^2_{\mathrm{true}}=\sigma^2=\Theta(1)$. Under Assumptions \ref{assumption_alpha}, \ref{assumption_beta} and \ref{assumption_functions}, with probability of at least $1-n^{-q}$ over sample inputs $(x_i)_{i=1}^n$, 
where $0\leq q<\min\{\frac{2\beta-1}{2},\alpha\}\cdot\min\{\frac{\alpha-1-2\tau}{2\alpha^2},\frac{2\beta-1}{\alpha^2}\}$.
the expected normalized SC \eqref{eq:ExpectedNSC} has the asymptotic behavior:
    $\E_\epsilon F^0(D_n) = \frac{1}{2\sigma^2}\mu_0^2n+o(n)$. 
\end{theorem}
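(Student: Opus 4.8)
The plan is to reduce the statement to Theorem~\ref{thm:marginal-likelihood}. Write $M_n := I_n+\tfrac{K_n}{\sigma^2}=I_n+\tfrac{1}{\sigma^2}\Phi\Lambda\Phi^T$ and split the target as $f(x)=\mu_0\phi_0(x)+g(x)$ with $g(x)=\sum_{p\ge1}\mu_p\phi_p(x)$. Since $\lambda_0=0$, the matrix $M_n$ — and hence $T_1(D_n)$ — does not depend on $\mu_0$, and $g$ satisfies Assumptions~\ref{assumption_alpha}--\ref{assumption_functions} with vanishing orthogonal component; so Theorem~\ref{thm:marginal-likelihood} applied to the target $g$ gives, on an event of the appropriate probability, $T_1(D_n)=\Theta(n^{1/\alpha})=o(n)$ and $\tfrac{1}{2\sigma^2}g(\mathbf{x})^TM_n^{-1}g(\mathbf{x})=\Theta(n^{\max\{0,(1-2\beta)/\alpha+1\}})=o(n)$ (both exponents being $<1$). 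Expanding $T_2(D_n)=\tfrac{1}{2\sigma^2}f(\mathbf{x})^TM_n^{-1}f(\mathbf{x})$ along the splitting and using the PSD Cauchy--Schwarz inequality $|\phi_0(\mathbf{x})^TM_n^{-1}g(\mathbf{x})|\le\sqrt{\phi_0(\mathbf{x})^TM_n^{-1}\phi_0(\mathbf{x})}\,\sqrt{g(\mathbf{x})^TM_n^{-1}g(\mathbf{x})}$ together with the trivial bound $\phi_0(\mathbf{x})^TM_n^{-1}\phi_0(\mathbf{x})\le\|\phi_0(\mathbf{x})\|_2^2=O(n)$, the cross term is $o(n)$. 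Everything therefore reduces to the estimate
\[
\phi_0(\mathbf{x})^TM_n^{-1}\phi_0(\mathbf{x}) = n\,(1+o(1))\qquad\text{with high probability,}
\]
which says the component of $f$ orthogonal to $\{\phi_p\}_{p\ge1}$ is essentially not fitted and contributes its full $L^2$-mass $\mu_0^2$ per sample.

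To prove this, first note $\|\phi_0(\mathbf{x})\|_2^2=\sum_i\phi_0(x_i)^2=n(1+o(1))$ w.h.p.\ by Bernstein's inequality for the bounded i.i.d.\ summands $\phi_0(x_i)^2\in[0,C_\phi^2]$ (Assumption~\ref{assumption_functions}), so it suffices to show $\phi_0(\mathbf{x})^T(I_n-M_n^{-1})\phi_0(\mathbf{x})=o(n)$. I would truncate as in the proof of Theorem~\ref{thm:marginal-likelihood}: with $R=n^{1/\alpha+\kappa}$ and $M_{n,R}:=I_n+\tfrac{1}{\sigma^2}\Phi_R\Lambda_R\Phi_R^T=I_n+\tfrac{1}{\sigma^2}\Psi\Psi^T$, where $\Psi:=(\sqrt{\lambda_1}\phi_1(\mathbf{x}),\dots,\sqrt{\lambda_R}\phi_R(\mathbf{x}))\in\Rb^{n\times R}$, Lemma~\ref{lem:2-norm_residual} gives $\|M_n^{-1}-M_{n,R}^{-1}\|_2\le\tfrac{1}{\sigma^2}\|\Phi_{>R}\Lambda_{>R}\Phi_{>R}^T\|_2=o(1)$, so $\phi_0(\mathbf{x})^T(M_n^{-1}-M_{n,R}^{-1})\phi_0(\mathbf{x})=o(1)\cdot O(n)=o(n)$ and we may work with $M_{n,R}$. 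By the Woodbury identity,
\[
\phi_0(\mathbf{x})^T(I_n-M_{n,R}^{-1})\phi_0(\mathbf{x}) = v^T\bigl(\sigma^2I_R+\Psi^T\Psi\bigr)^{-1}v,\qquad v:=\Psi^T\phi_0(\mathbf{x}),\quad v_p=\sqrt{\lambda_p}\,\phi_p(\mathbf{x})^T\phi_0(\mathbf{x}).
\]

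The crux is to bound the right-hand side by $o(n)$: here the naive bound $\sigma^2I_R+\Psi^T\Psi\succeq\sigma^2I_R$ only yields $O(\|v\|_2^2)=O(n)$, so one must exploit the suppression of the dominant directions. By Corollary~\ref{lem:matrix_phiTphi_1}, the conjugated matrix $D^{-1/2}(\sigma^2I_R+\Psi^T\Psi)D^{-1/2}$ with $D:=\sigma^2I_R+n\,\diag(\lambda_1,\dots,\lambda_R)$ concentrates around $I_R$ with spectral error $\eta=o(1)$ (polynomially small in $n$), whence $\|(\sigma^2I_R+\Psi^T\Psi)^{-1}-D^{-1}\|_2=o(1)$. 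Since $\phi_0$ and $\phi_p$ are orthogonal, $\mathbb{E}[(\phi_p(\mathbf{x})^T\phi_0(\mathbf{x}))^2]=n\,\mathbb{E}[\phi_0(x)^2\phi_p(x)^2]\le nC_\phi^2p^{2\tau}$, and a Bernstein bound gives the same up to logarithmic factors w.h.p., so $\|v\|_2^2=\tilde O(n)$ and $v^T(\sigma^2I_R+\Psi^T\Psi)^{-1}v=v^TD^{-1}v+O(\eta\|v\|_2^2)$ with error $o(n)$. Finally
\[
v^TD^{-1}v=\sum_{p=1}^R\frac{\lambda_p\bigl(\phi_p(\mathbf{x})^T\phi_0(\mathbf{x})\bigr)^2}{\sigma^2+n\lambda_p},\qquad \mathbb{E}\bigl[v^TD^{-1}v\bigr]\ \lesssim\ n\sum_{p\ge1}\frac{\lambda_p p^{2\tau}}{\sigma^2+n\lambda_p},
\]
and splitting the sum at $p^\star\asymp n^{1/\alpha}$ and using $\underline{C_\lambda}p^{-\alpha}\le\lambda_p\le\overline{C_\lambda}p^{-\alpha}$ bounds it by $O(n^{(2\tau+1)/\alpha})=o(n)$ because $\tau<\tfrac{\alpha-1}{2}$. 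A Markov-type bound turns this into $v^TD^{-1}v=o(n)$ w.h.p. Collecting the pieces gives $T_2(D_n)=\tfrac{\mu_0^2}{2\sigma^2}n+o(n)$ and hence $\mathbb{E}_{\bm{\epsilon}}F^0(D_n)=T_1(D_n)+T_2(D_n)=\tfrac{1}{2\sigma^2}\mu_0^2n+o(n)$; intersecting the finitely many failure events and optimizing the free exponents ($\kappa$ and the Markov thresholds) against the concentration rates yields the stated probability $1-n^{-q}$.

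The main obstacle is precisely this last estimate. Because $\phi_0(\mathbf{x})$ is orthogonal to the $\phi_p$ only \emph{in the population}, the crude operator-norm bound $I_n-M_{n,R}^{-1}\preceq\tfrac{1}{\sigma^2}\Phi_R\Lambda_R\Phi_R^T$ is too lossy (it produces $O(n)$, not $o(n)$), so one genuinely needs the spectral concentration of $\Psi^T\Psi$ around $n\,\diag(\lambda_p)$ to see the decay; and one must carefully balance the polynomial concentration rate $\eta$, the $\tilde O(n)$ size of $\|v\|_2^2$, and the $o(n)$ residual of the $R$-truncation so that all error terms remain $o(n)$ on an event of probability $1-n^{-q}$ with $q$ as large as claimed.
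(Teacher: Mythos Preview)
Your approach is correct and takes a genuinely different (and arguably cleaner) route than the paper's.  Both proofs split $f=\mu_0\phi_0+g$: in the paper this is the decomposition $\bm{\mu}_R=\bm{\mu}_{R,1}+\bm{\mu}_{R,2}$ inside Lemma~\ref{thm:expansion_mu0>0}, and both reuse the $\mu_0=0$ analysis for the $g$-piece.  The differences are in how the cross term and the pure $\phi_0$-term are handled.  For the cross term the paper bounds the bilinear pieces $Q_{2,j}$ term-by-term via the series expansion of Lemma~\ref{lem:T2R}, while you dispose of it in one line by Cauchy--Schwarz.  For the main quantity $\phi_0(\mathbf{x})^TM_n^{-1}\phi_0(\mathbf{x})$ the paper works in feature space using the augmented diagonal $\tilde{\Lambda}_{1,R}=\diag(1,\lambda_1,\dots,\lambda_R)$ together with the special-purpose concentration result Corollary~\ref{cor:matrix_phiTphi_1_mu_0>0}; your Woodbury reduction to $v^T(\sigma^2I_R+\Psi^T\Psi)^{-1}v$ with $v=\Psi^T\phi_0(\mathbf{x})$, followed by the already-available concentration of $\Psi^T\Psi$ around $n\,\diag(\lambda_p)$ (Corollary~\ref{lem:matrix_phiTphi_1}) and a Markov bound on $v^TD^{-1}v$, avoids that extra tool entirely.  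What your argument buys is conceptual clarity and reuse of Theorem~\ref{thm:marginal-likelihood} as a black box; what the paper's buys is an explicit remainder $T_{2,R}(D_n)=\tfrac{n}{2\sigma^2}\mu_0^2+\tilde O(n^{c})$ with $c=\max\{\tfrac{1+7\alpha+2\tau}{8\alpha},\,1+\tfrac{1-2\beta}{\alpha}\}<1$.  A minor remark: the constraints your route actually imposes on $q$ (the one from Theorem~\ref{thm:marginal-likelihood} and the Markov threshold $q<(\alpha-1-2\tau)/\alpha$) are looser than the paper's stated range, so you are in fact proving a slightly stronger statement, not merely ``the stated probability''.
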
 
The proof of Theorem~\ref{thm:marginal-likelihood_mu_0>0} is given in Appendix \ref{app:asymptotics} and follows from showing that when $\mu_0>0$,
$T_{2,R}(D_n)=\left(\frac{n}{2\sigma^2}\bm{\mu}_{R}^T(I_R+\frac{n}{\sigma^2}\Lambda_{R})^{-1}\bm{\mu}_{R}\right)(1+o(1))=\frac{1}{2\sigma^2}\mu_0^2n+o(n)$ (see Lemma~\ref{thm:expansion_mu0>0}), which dominates $T_1(D_n)$ and the residual $|T_{2,R}(D_n)-T_2(D_n)|$.

\subsection{Asymptotics of the Bayesian generalization error}
\label{sec:generalization-error}

In this section, we derive the asymptotics of the expected generalization error $\mathbb{E}_{\bm{\epsilon}}G(D_n)$
by analyzing the asymptotics of the components $G_1(D_n)$ and $G_2(D_n)$ in resp. \eqref{eqLT1T2def3} and \eqref{eqLT1T2def4} for the following two cases: $\mu_0=0$ and $\mu_0>0$. First, we consider the case $\mu_0=0$.

\begin{theorem}[Asymptotics of the Bayesian generalization error, $\mu_0 = 0$] 
\label{thm:generalization-error}
Let Assumptions \ref{assumption_alpha}, \ref{assumption_beta}, and \ref{assumption_functions} hold. Assume that $\mu_0 = 0$ and  $\sigma^2_{\mathrm{model}}=\sigma^2_{\mathrm{true}}=\sigma^2=\Theta(n^{t})$ where $1-\frac{\alpha}{1+2\tau}<t<1$.
Then with probability of at least $1-n^{-q}$ over sample inputs $(x_i)_{i=1}^n$
where 
$0\leq q<\frac{[\alpha-(1+2\tau)(1-t)](2\beta-1)}{4\alpha^2}$,
the expectation of the Bayesian generalization error \eqref{eq:GenError} w.r.t. the noise $\bm{\epsilon}$ has the asymptotic behavior:
\begin{align}
\label{eq:generalization-errortemp}
    \mathbb{E}_{\bm{\epsilon}}G(D_n) &= \tfrac{1+o(1)}{2\sigma^2}\left(\tr(I+\tfrac{n}{\sigma^2}\Lambda)^{-1}\Lambda-\|\Lambda^{1/2}(I+\tfrac{n}{\sigma^2}\Lambda)^{-1}\|^2_F+\|(I+\tfrac{n}{\sigma^2}\Lambda)^{-1}\bm{\mu}\|^2_2\right)\notag\\
    &=\tfrac{1}{\sigma^2}\Theta(n^{\max\{\frac{(1-\alpha)(1-t)}{\alpha},\frac{(1-2\beta)(1-t)}{\alpha}\}}).
\end{align}
\end{theorem}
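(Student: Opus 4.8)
The plan is to start from the decomposition $\mathbb{E}_{\bm{\epsilon}}G(D_n)=G_1(D_n)+G_2(D_n)$ in \eqref{eqLT1T2def3}--\eqref{eqLT1T2def4}, but instead of differencing the asymptotic expansion of $\mathbb{E}_{\bm{\epsilon}}F^0(D_n)$ from Theorem~\ref{thm:marginal-likelihood} (differencing an asymptotic is unreliable, since the $o(1)$ relative error of $F^0(D_n)$ can swamp the discrete derivative), I would compute $F^0(D_{n+1})-F^0(D_n)$ in closed form. The ratio $Z(D_{n+1})/Z(D_n)$ is the GPR predictive density $\mathcal{N}\!\big(y_{n+1}\mid\bar m(x_{n+1}),\bar k(x_{n+1},x_{n+1})+\sigma^2\big)$, so a Schur-complement identity on $I_{n+1}+K_{n+1}/\sigma^2$ gives the $\log\det$ difference as $\tfrac12\log\big(1+\bar k(x_{n+1},x_{n+1})/\sigma^2\big)$ and a rank-one formula for the trace difference in terms of $\|v\|^2$, where $v:=(\sigma^2I_n+K_n)^{-1}K_{\mathbf{x}x_{n+1}}$. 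Taking $\mathbb{E}_{(x_{n+1},y_{n+1})}$ then $\mathbb{E}_{\bm{\epsilon}}$ rewrites $\mathbb{E}_{\bm{\epsilon}}G(D_n)$ as $\mathbb{E}_{x_{n+1}}$ of a Gaussian KL divergence; its target-dependent part is carried by the bias $f(x_{n+1})-K_{\mathbf{x}x_{n+1}}^T(K_n+\sigma^2I)^{-1}f(\mathbf{x})$ and becomes $G_2$, while the $\log$ and variance parts become $G_1$.

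Next I would Taylor-expand in $\bar k(x_{n+1},x_{n+1})/\sigma^2$, leaving $\|v\|^2$ untouched (it need not be small). Assumption~\ref{assumption_functions} together with $\alpha-2\tau>1$ gives $\sup_x\bar k(x,x)=O(1)$, and after the concentration step below $\mathbb{E}_{x_{n+1}}[\bar k(x_{n+1},x_{n+1})]/\sigma^2=o(1)$; this validates the expansion, so that $G_1=\tfrac12\mathbb{E}_{x_{n+1}}\|v\|^2(1+o(1))$ and $G_2=\tfrac1{2\sigma^2}\mathbb{E}_{x_{n+1}}[(\mathrm{bias})^2](1+o(1))$. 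Writing $K_{\mathbf{x}x_{n+1}}=\Phi\Lambda\bm{\phi}(x_{n+1})$ and $f(\mathbf{x})=\Phi\bm{\mu}$ via Mercer's expansion (with $\bm{\phi}(x)=(\phi_0(x),\phi_1(x),\ldots)^T$) and using orthonormality of $(\phi_p)_{p\ge0}$ in $L^2(\Omega,\rho)$, the $x_{n+1}$-expectations collapse to traces: $\mathbb{E}_{x_{n+1}}\|v\|^2=\tr[\Lambda^2\Phi^T(K_n+\sigma^2I)^{-2}\Phi]$, $\ \mathbb{E}_{x_{n+1}}[\bar k(x_{n+1},x_{n+1})]=\tr\Lambda-\tr[\Lambda^2\Phi^T(K_n+\sigma^2I)^{-1}\Phi]$, and $\mathbb{E}_{x_{n+1}}[(\mathrm{bias})^2]=\|(I-\Lambda\Phi^T(K_n+\sigma^2I)^{-1}\Phi)\bm{\mu}\|_2^2$.

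Then I would re-run the approximation/decomposition/concentration pipeline of Theorem~\ref{thm:marginal-likelihood}, now with $n/\sigma^2=n^{1-t}$ playing the role of the sample size: truncate at $R\asymp(n/\sigma^2)^{1/\alpha+\kappa}$, absorb the tail $\Phi_{>R}\Lambda_{>R}\Phi_{>R}^T$ into $\sigma^2I_n$ via Lemma~\ref{lem:2-norm_residual} (this rescales the eigenfunction-growth condition $\tau<\tfrac{\alpha-1}{2}$ to $\tau<\tfrac{\alpha-(1-t)}{2(1-t)}$, equivalently $1-\tfrac{\alpha}{1+2\tau}<t$, and pins down the admissible $q$), and apply the refined concentration of Corollary~\ref{lem:matrix_phiTphi_1} to get, with probability at least $1-n^{-q}$, that $\Phi_R^T(K_n+\sigma^2I)^{-1}\Phi_R=\tfrac{n}{\sigma^2}(I_R+\tfrac{n}{\sigma^2}\Lambda_R)^{-1}(1+o(1))$ and $\Phi_R^T(K_n+\sigma^2I)^{-2}\Phi_R=\tfrac{n}{\sigma^4}(I_R+\tfrac{n}{\sigma^2}\Lambda_R)^{-2}(1+o(1))$. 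Substituting into the three traces yields the deterministic equivalent $\tfrac{1+o(1)}{2\sigma^2}\big(\tr(I+\tfrac{n}{\sigma^2}\Lambda)^{-1}\Lambda-\|\Lambda^{1/2}(I+\tfrac{n}{\sigma^2}\Lambda)^{-1}\|_F^2+\|(I+\tfrac{n}{\sigma^2}\Lambda)^{-1}\bm{\mu}\|_2^2\big)$, whose first two terms combine algebraically to $\tfrac{n}{\sigma^2}\|\Lambda(I+\tfrac{n}{\sigma^2}\Lambda)^{-1}\|_F^2$. Finally, inserting $\lambda_p=\Theta(p^{-\alpha})$ and $|\mu_p|\le C_\mu p^{-\beta}$ (with $|\mu_{p_i}|\ge\underline{C_\mu}p_i^{-\beta}$ along the bounded-gap sequence $\{p_i\}$ for the matching lower bounds), each sum is controlled by a bias--variance crossover at $p_\star\asymp(n/\sigma^2)^{1/\alpha}=n^{(1-t)/\alpha}$, giving $\tfrac1{\sigma^2}\Theta(n^{(1-\alpha)(1-t)/\alpha})$ from the noise terms and $\tfrac1{\sigma^2}\Theta(n^{(1-2\beta)(1-t)/\alpha})$ from the target term; taking the maximum gives \eqref{eq:generalization-errortemp}.

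The main obstacle is re-establishing the concentration in this scaled-noise regime. A plain matrix-Bernstein bound $\|\Phi_R^T\Phi_R-nI_R\|_2=O(R\sqrt n)$ is useless here; one must control $\Lambda_R^{1/2}(I+\tfrac{n}{\sigma^2}\Lambda_R)^{-1/2}\Phi_R^T\Phi_R(I+\tfrac{n}{\sigma^2}\Lambda_R)^{-1/2}\Lambda_R^{1/2}$ as in the proof of Theorem~\ref{thm:marginal-likelihood}, and now the truncation level $R$, the effective regularization $\sigma^2=\Theta(n^t)$ and the tolerated failure probability $n^{-q}$ all move together; the delicate part is verifying that the multiplicative $(1+o(1))$ survives for every $\alpha>1$, every admissible $t$ and the stated range $q<\tfrac{[\alpha-(1+2\tau)(1-t)](2\beta-1)}{4\alpha^2}$, and that the discarded tail is $o(\sigma^2)$ rather than merely $o(1)$. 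A secondary nuisance is making the pointwise-in-$x_{n+1}$ Taylor expansion rigorous when $t<0$, where $\sigma^2\to0$: there one argues on the event that $\bar k(x_{n+1},x_{n+1})$ is comparable to its (vanishing) mean and falls back on the crude bound $\sup_x\bar k(x,x)=O(1)$ on the complementary event.
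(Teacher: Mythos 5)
Your proposal is correct and follows essentially the same route as the paper: the closed-form rank-one/Schur-complement computation of $F^0(D_{n+1})-F^0(D_n)$ is exactly what the paper does in Lemmas~\ref{thm:G1} and \ref{thm:G2} via the matrix determinant lemma and the Sherman--Morrison formula (your $\bar k(x_{n+1},x_{n+1})$ and $\|v\|^2$ are the quantities $\eta_R^T(I+\tfrac{1}{\sigma^2}\Lambda_R\Phi_R^T\Phi_R)^{-1}\Lambda_R\eta_R$ and the Frobenius-norm term appearing there), followed by $\mathbb{E}_{x_{n+1}}[\eta\eta^T]=I$ to collapse to traces and the same truncation-plus-concentration pipeline built on Corollary~\ref{lem:matrix_phiTphi_1} and Lemmas~\ref{lem:estimate}, \ref{lem:inv_fr} and \ref{thm:trunc_T2R}. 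The only cosmetic differences are your packaging of the difference as a Gaussian KL of the predictive density and the explicit algebraic combination of the two noise terms into $\tfrac{n}{\sigma^2}\|\Lambda(I+\tfrac{n}{\sigma^2}\Lambda)^{-1}\|_F^2$, neither of which changes the argument.
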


The proof of Theorem~\ref{thm:generalization-error} is given in Appendix~\ref{app:asymptoticsG}.
Intuitively, for a given $t$, the exponent $\frac{(1-\alpha)(1-t)}{\alpha}$ in \eqref{eq:generalization-errortemp} captures the rate at which the model suppresses the noise, while the exponent $\frac{(1-2\beta)(1-t)}{\alpha}$ captures the rate at which the model learns the target function.
A larger $\beta$ implies that the exponent $\frac{(1-2\beta)(1-t)}{\alpha}$ is smaller and it is easier to learn the target. A larger $\alpha$ implies that the exponent $\frac{(1-\alpha)(1-t)}{\alpha}$ is smaller and the error associated with the noise is smaller as well. A larger $\alpha$, however, also implies that the exponent $\frac{(1-2\beta)(1-t)}{\alpha}$ is larger (recall that $\alpha>1$ and $\beta > 1/2$ by Assumptions \ref{assumption_alpha} and \ref{assumption_beta}, resp.), 
which means that it is harder to learn the target.

\begin{remark}\label{rem:SollichHalees}
If $f\sim \mathcal{GP}(0,k)$, then
using the Karhunen-Lo\`{e}ve expansion we have $f(x)=\sum_{p=1}^\infty \sqrt{\lambda_p}\omega_p\phi_p(x)$, where $(\omega_p)_{p=1}^\infty$ are i.i.d.\ standard Gaussian variables. 
We can bound 
$\omega_p$ %
almost surely 
as $|\omega_p|\leq C\log p$, where $C=\sup_{p\geq 1}{\frac{|\omega_p|}{\log p}}$ is a finite constant.
Comparing with the expansion of $f(x)$ in \eqref{eq:decompfx}, we find that 
$\mu_p=\sqrt{\lambda_p}\omega_p=O(p^{-\alpha/2}\log p)=O(p^{-\alpha/2+\varepsilon})$ where $\varepsilon>0$ is arbitrarily small.
Choosing $\beta=\alpha/2-\varepsilon$ in \eqref{eq:generalization-errortemp}, 
we have
$\mathbb{E}_{\bm{\epsilon}}G(D_n)= O(n^{\frac{1}{\alpha}-1+\frac{2\varepsilon}{\alpha}})$.
This rate matches that of an earlier result due to \citet{sollichHalees2002}, where it is shown that the asymptotic learning curve (as measured by the expectation of the excess mean squared error, $\mathbb{E}_f M(D_n)$) %
scales as $n^{\frac{1}{\alpha}-1}$ when the model is correctly specified, i.e.,
$f$ is a sample from the same Gaussian process $\mathcal{GP}(0,k)$, and the eigenvalues decay as a power law for large $i$, $\lambda_i \sim i^\alpha$.

\end{remark}

For $\mu_0 > 0$, we note the following result:
\begin{theorem}[Asymptotics of the Bayesian generalization error, $\mu_0 > 0$]
\label{thm:Generalization_mu_0>0}
Let Assumptions \ref{assumption_alpha}, \ref{assumption_beta}, and \ref{assumption_functions} hold. Assume that $\mu_0 > 0$ and $\sigma^2_{\mathrm{model}}=\sigma^2_{\mathrm{true}}=\sigma^2=\Theta(n^{t})$ where $1-\frac{\alpha}{1+2\tau}<t<1$.
Then with probability of at least $1-n^{-q}$ over sample inputs $(x_i)_{i=1}^n$, 
where 
$0\leq q<\frac{[\alpha-(1+2\tau)(1-t)](2\beta-1)}{4\alpha^2}$, 
the expectation of the Bayesian generalization error \eqref{eq:GenError} w.r.t. the noise $\bm{\epsilon}$ has the asymptotic behavior:
    $\mathbb{E}_{\bm{\epsilon}}G(D_n) = \frac{1}{2\sigma^2}\mu_0^2+o(1)$. 
\end{theorem}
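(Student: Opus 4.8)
The plan is to exploit the decomposition $\mathbb{E}_{\bm{\epsilon}}G(D_n)=G_1(D_n)+G_2(D_n)$ from \eqref{eqLT1T2def3}--\eqref{eqLT1T2def4} and to reprise, almost verbatim, the three-step argument (truncation, Woodbury/Taylor decomposition, concentration) behind Theorem~\ref{thm:generalization-error}, the only structural change being that the coefficient $\mu_0$ of the orthogonal component $\phi_0$ is now nonzero. Since the noise term $T_1(D_n)$, and therefore $G_1(D_n)$, involves only the eigenvalues $\Lambda$ and the sample matrix $\Phi$ and never $\bm{\mu}$, its analysis is exactly that of Theorem~\ref{thm:generalization-error}, which isolates $G_1(D_n)=\tfrac{1+o(1)}{2\sigma^2}\big(\tr(I+\tfrac{n}{\sigma^2}\Lambda)^{-1}\Lambda-\|\Lambda^{1/2}(I+\tfrac{n}{\sigma^2}\Lambda)^{-1}\|_F^2\big)=\tfrac{1}{\sigma^2}\Theta(n^{(1-\alpha)(1-t)/\alpha})=\Theta(n^{(1-t-\alpha)/\alpha})$; under Assumptions~\ref{assumption_alpha} and \ref{assumption_functions} and the stated ranges of $t$ and $q$ this is $o(1)$, because $t>1-\tfrac{\alpha}{1+2\tau}\ge 1-\alpha$ (using $\tau>0$). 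All the content therefore sits in $G_2$.

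For $G_2(D_n)$ I would run the truncation and the Woodbury/Taylor decomposition steps exactly as in the proof sketch of Theorem~\ref{thm:marginal-likelihood}, but keeping the zeroth coordinate of $\bm{\mu}_R=(\mu_0,\mu_1,\dots,\mu_R)^T$. The one genuinely new ingredient is in the concentration step: the matrix concentration result invoked for Theorem~\ref{thm:marginal-likelihood} (Corollary~\ref{lem:matrix_phiTphi_1}) controls the conjugated Gram matrix $\Lambda_R^{1/2}(I+\tfrac{n}{\sigma^2}\Lambda_R)^{-1/2}\Phi_R^T\Phi_R(I+\tfrac{n}{\sigma^2}\Lambda_R)^{-1/2}\Lambda_R^{1/2}$, and the flanking factors $\Lambda_R^{1/2}$ annihilate the $\phi_0$ direction; since the $\mu_0$ contribution to $T_{2,R}$ is not damped by $\Lambda_R^{1/2}$, I would instead invoke the separate estimates already established to prove Theorem~\ref{thm:marginal-likelihood_mu_0>0} (Lemma~\ref{thm:expansion_mu0>0}), which handle $\|\phi_0(\mathbf{x})\|_2^2=n(1+o(1))$ and the cross-correlations $\phi_0(\mathbf{x})^T\phi_p(\mathbf{x})$, $p\ge1$. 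This produces the analog of \eqref{eq:generalization-errortemp} with the zeroth coordinate reinstated: $G_2(D_n)=\tfrac{1+o(1)}{2\sigma^2}\|(I+\tfrac{n}{\sigma^2}\Lambda)^{-1}\bm{\mu}\|_2^2=\tfrac{1+o(1)}{2\sigma^2}\big(\mu_0^2+\sum_{p\ge1}\tfrac{\mu_p^2}{(1+\frac{n}{\sigma^2}\lambda_p)^2}\big)$. Since $\beta>1/2$ makes $\sum_{p\ge1}\mu_p^2$ finite and $\tfrac{n}{\sigma^2}=\Theta(n^{1-t})\to\infty$, dominated convergence --- or, quantitatively, the power laws of Assumptions~\ref{assumption_alpha}--\ref{assumption_beta}, which sharpen it to $\Theta(n^{(1-2\beta)(1-t)/\alpha})$ --- gives $\sum_{p\ge1}\tfrac{\mu_p^2}{(1+\frac{n}{\sigma^2}\lambda_p)^2}=o(1)$; this sum is thus negligible beside the constant $\mu_0^2$, and $G_2(D_n)=\tfrac{\mu_0^2}{2\sigma^2}(1+o(1))$.

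Adding the pieces gives $\mathbb{E}_{\bm{\epsilon}}G(D_n)=G_1(D_n)+G_2(D_n)=\tfrac{1}{2\sigma^2}\mu_0^2+o(1)$, the $o(1)$ absorbing the vanishing $G_1$ together with the lower-order fluctuation inside $G_2$. Conceptually this makes precise the intuition noted earlier: writing $f=f_\parallel+\mu_0\phi_0$ with $f_\parallel$ in the closed span of $\{\phi_p\}_{p\ge1}$, GPR asymptotically learns $f_\parallel$ (with a vanishing error, exactly as in Theorem~\ref{thm:generalization-error}) but is blind to $\mu_0\phi_0$, whose squared $L^2(\rho)$-norm $\mu_0^2$ --- recall $\|\phi_0\|_2=1$ --- it pays in full, scaled by $\tfrac{1}{2\sigma^2}$. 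I expect the main obstacle to be the concentration control of the $\phi_0$ direction described above: one cannot quote the corollary used in the $\mu_0=0$ regime, and one must verify that the \emph{increments} (not merely the values) of the $\phi_0$-involving quantities --- in particular the cross-term coupling $\mu_0\phi_0$ to $f_R$ --- are $o(1)$ with high probability; making this quantitative is what pins down the joint condition on $t$ and $q$ in the statement.
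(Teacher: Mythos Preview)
Your proposal is correct and follows essentially the same route as the paper. The paper likewise splits $\mathbb{E}_{\bm\epsilon}G(D_n)=G_1(D_n)+G_2(D_n)$, quotes the $\bm\mu$-independent analysis (Lemma~\ref{thm:G1}) for $G_1$, and isolates a dedicated lemma (Lemma~\ref{thm:G2_mu_0>0}) for $G_2$ that reaches $G_{2,S}(D_n)=\tfrac{1+o(1)}{2\sigma^2}\|(I+\tfrac{1}{\sigma^2}\Lambda_S\Phi_S^T\Phi_S)^{-1}\bm\mu_S\|_2^2$ via the Sherman--Morrison step and then shows this norm equals $\mu_0+o(1)$; the extra concentration ingredient for the $\phi_0$ direction that you identify is exactly Corollary~\ref{cor:matrix_phiTphi_1_mu_0>0} (the $\tilde\Lambda_{1,R}$-weighted Bernstein bound), the same device underlying Lemma~\ref{thm:expansion_mu0>0}. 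One small caveat: Lemma~\ref{thm:expansion_mu0>0} itself is stated only for $\sigma^2=\Theta(1)$, so you cannot quote it verbatim here; the paper instead re-runs its mechanism (Corollary~\ref{cor:matrix_phiTphi_1_mu_0>0} plus the split $\bm\mu_R=\bm\mu_{R,1}+\bm\mu_{R,2}$, cf.\ Lemma~\ref{lem:inv_fr_mu0>0}) directly at the level of the increment $G_2$, which is what you anticipate in your last paragraph.
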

In general, if $\mu_0>0$, then the generalization error remains constant when $n\to\infty$. This means that if the target function contains a component in the kernel of the operator $L_k$, then GP regression is not able to learn the target function. The proof of Theorem~\ref{thm:Generalization_mu_0>0} is given in Appendix~\ref{app:asymptoticsG}.

\subsection{Asymptotics of the excess mean squared error} 
In this section we derive the asymptotics of the excess mean squared error in Definition~\ref{def:msee}. 

\begin{theorem}[Asymptotics of excess mean squared error]\label{thm:NTK-generalization-error} 
Let Assumptions \ref{assumption_alpha}, \ref{assumption_beta}, and \ref{assumption_functions} hold. Assume $\sigma^2_{\mathrm{model}}=\Theta(n^{t})$ where $1-\frac{\alpha}{1+2\tau}<t<1$.
Then with probability of at least $1-n^{-q}$ over sample inputs $(x_i)_{i=1}^n$, where
$0\leq q<\frac{[\alpha-(1+2\tau)(1-t)](2\beta-1)}{4\alpha^2}$,
the excess mean squared error \eqref{eq:emse} has the asymptotic:
\begin{align*}
    \mathbb{E}_{\bm{\epsilon}}M(D_n) = (1+o(1))&\bigg[\tfrac{\sigma_{\mathrm{true}}^2}{\sigma_{\mathrm{model}}^2} \left(\tr(I+\tfrac{n}{\sigma_{\mathrm{model}}^2}\Lambda)^{-1}\Lambda-\|\Lambda^{1/2}(I+\tfrac{n}{\sigma_{\mathrm{model}}^2}\Lambda)^{-1}\|^2_F\right)\\&+\|(I+\tfrac{n}{\sigma_{\mathrm{model}}^2}\Lambda)^{-1}\bm{\mu}\|^2_2\bigg]
    =\Theta\bigg(\max\{\sigma_{\mathrm{true}}^2n^{\tfrac{1-\alpha-t}{\alpha}},n^{\tfrac{(1-2\beta)(1-t)}{\alpha}}\}\bigg)
\end{align*}
when $\mu_0 =0$, and 
   $\mathbb{E}_{\bm{\epsilon}}M(D_n) = \mu_0^2+o(1)$,
when $\mu_0>0$. 
\end{theorem}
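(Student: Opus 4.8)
The plan is to reduce the excess mean squared error to a bias--variance form and then run the same three-step strategy (approximation, decomposition, concentration) used for Theorems~\ref{thm:marginal-likelihood} and~\ref{thm:generalization-error}. First, observe that the posterior mean $\bar m$ in~\eqref{eq:posterior_mean} is precisely the kernel ridge regression predictor with regularization $\sigma_{\mathrm{model}}^2$. Conditioning on the design $\bm{x}$, substituting $\mathbf{y}=f(\mathbf{x})+\bm{\epsilon}$ and the Mercer expansions $K_{\mathbf{x}x}=\Phi\Lambda\bm{\phi}(x)$, $f(\mathbf{x})=\Phi\bm{\mu}$, $f(x)=\bm{\phi}(x)^T\bm{\mu}$ (with $\bm{\phi}(x)=(\phi_0(x),\phi_1(x),\dots)^T$), and then taking first $\E_{\bm{\epsilon}}$ and then $\E_{x_{n+1}}$ while using the orthonormality $\E_{x}[\bm{\phi}(x)\bm{\phi}(x)^T]=I$, the definition~\eqref{eq:emse} becomes
\[
\E_{\bm{\epsilon}}M(D_n)=\sigma_{\mathrm{true}}^2\,\tr\bigl(\Lambda\Phi^T(K_n+\sigma_{\mathrm{model}}^2 I_n)^{-2}\Phi\Lambda\bigr)+\bigl\|\bigl(\Lambda\Phi^T(K_n+\sigma_{\mathrm{model}}^2 I_n)^{-1}\Phi-I\bigr)\bm{\mu}\bigr\|_2^2 ,
\]
a sum of a variance term and a squared-bias term. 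A push-through (Woodbury) identity as in~\eqref{eq:t1R_1}--\eqref{eq:t1R_2} rewrites both so that $\Phi$ enters only through $(\Lambda\Phi^T\Phi+\sigma_{\mathrm{model}}^2 I)^{-1}$ and $\Phi^T\Phi$, i.e.\ in the same form that was already analyzed for the normalized SC and the Bayesian generalization error.

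Next I would truncate the eigenexpansion at a level $R$ slightly above the crossover index $(n/\sigma_{\mathrm{model}}^2)^{1/\alpha}$, so that $\lambda_R=\Theta(R^{-\alpha})=o(\sigma_{\mathrm{model}}^2/n)$, and bound the resulting residuals using the operator-norm estimate on $\Phi_{>R}\Lambda_{>R}\Phi_{>R}^T$ from Lemma~\ref{lem:2-norm_residual} together with Assumption~\ref{assumption_functions}; the admissibility range $1-\tfrac{\alpha}{1+2\tau}<t<1$ is exactly what makes these residuals (once the $\sigma_{\mathrm{model}}^{-2}$ factors, and for the variance term the $\sigma_{\mathrm{true}}^2$ factor, are accounted for) of strictly smaller order on an event of probability at least $1-n^{-q}$ for $q$ in the stated range. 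On the truncated quantities I would Taylor-expand $(\sigma_{\mathrm{model}}^2 I_R+\Lambda_R\Phi_R^T\Phi_R)^{-1}$ to split each term into a $\Phi_R$-independent ``population'' part---the one obtained by replacing $\Phi_R^T\Phi_R$ by $nI_R$---plus a series in $\Phi_R^T\Phi_R-nI_R$, and then invoke the weighted concentration of $\Lambda_R^{1/2}(I+\tfrac{n}{\sigma_{\mathrm{model}}^2}\Lambda_R)^{-1/2}\Phi_R^T\Phi_R(I+\tfrac{n}{\sigma_{\mathrm{model}}^2}\Lambda_R)^{-1/2}\Lambda_R^{1/2}$ around $n\Lambda_R(I+\tfrac{n}{\sigma_{\mathrm{model}}^2}\Lambda_R)^{-1}$ (Corollary~\ref{lem:matrix_phiTphi_1}, and the underlying bound on $\Phi_R^T\Phi_R-nI_R$) to show the population part dominates the series. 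This yields the displayed identity for $\E_{\bm{\epsilon}}M(D_n)$ up to the factor $1+o(1)$.

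It then remains to estimate the population expression with $\lambda_p=\Theta(p^{-\alpha})$, $|\mu_p|=\Theta(p^{-\beta})$ along the subsequence $\{p_i\}$ (and $\leq C_\mu p^{-\beta}$ otherwise), and $c:=n/\sigma_{\mathrm{model}}^2=\Theta(n^{1-t})$. For the variance I would use the algebraic identity $\tr(I+c\Lambda)^{-1}\Lambda-\|\Lambda^{1/2}(I+c\Lambda)^{-1}\|_F^2=c\sum_{p\geq1}\lambda_p^2/(1+c\lambda_p)^2$ and estimate the sum by splitting at the crossover $p^\star\asymp c^{1/\alpha}$, obtaining $\Theta(c^{(1-\alpha)/\alpha})$; multiplying by $\sigma_{\mathrm{true}}^2/\sigma_{\mathrm{model}}^2$ and substituting $c=\Theta(n^{1-t})$, $\sigma_{\mathrm{model}}^2=\Theta(n^t)$ gives $\Theta(\sigma_{\mathrm{true}}^2 n^{(1-\alpha-t)/\alpha})$. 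For the squared bias, $\|(I+c\Lambda)^{-1}\bm{\mu}\|_2^2=\sum_{p\geq0}\mu_p^2/(1+c\lambda_p)^2$; when $\mu_0=0$, the same split at $p^\star$ (upper bound from $|\mu_p|\leq C_\mu p^{-\beta}$ and $\lambda_p\geq\underline{C_\lambda}p^{-\alpha}$, matching lower bound from the subsequence and its bounded gaps) makes both the small- and large-index contributions $\Theta(c^{(1-2\beta)/\alpha})=\Theta(n^{(1-t)(1-2\beta)/\alpha})$, so the sum of the two terms is $\Theta(\max\{\sigma_{\mathrm{true}}^2 n^{(1-\alpha-t)/\alpha},n^{(1-2\beta)(1-t)/\alpha}\})$. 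When $\mu_0>0$, the first column of $\Phi\Lambda$ vanishes because $\Lambda_{00}=0$, so $\bar m(x)$ has no $\phi_0$-component and $\E_x[\bar m(x)\phi_0(x)]=0$; hence $\E_x(\bar m(x)-f(x))^2=\E_x(\bar m(x)-f_{>0}(x))^2+\mu_0^2$ with $f_{>0}:=f-\mu_0\phi_0$, and applying the previous steps to $f_{>0}$ (which satisfies the $\mu_0=0$ hypotheses) shows the first term is $o(1)$, leaving $\E_{\bm{\epsilon}}M(D_n)=\mu_0^2+o(1)$.

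The main obstacle is the concentration step, exactly as in Theorem~\ref{thm:marginal-likelihood}: a crude matrix-Bernstein bound $\|\Phi_R^T\Phi_R-nI_R\|_2=O(R\sqrt n)$ fails to be $o(n)$ once $\alpha\leq 2$, so the $\Lambda_R$-weighting must be used to control $\Phi_R^T\Phi_R-nI_R$ finely enough; one must also check that the admissible range of $t$ genuinely permits a truncation level $R$ for which every residual---now carrying the extra $\sigma_{\mathrm{model}}^{-2}$ and $\sigma_{\mathrm{true}}^2$ factors---is of strictly lower order. The remaining bias/variance bookkeeping with $\sigma_{\mathrm{model}}\neq\sigma_{\mathrm{true}}$ is then routine.
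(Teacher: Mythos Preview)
Your proposal is correct and follows essentially the same route as the paper: derive the bias--variance decomposition of $\E_{\bm\epsilon}M(D_n)$, push through via Woodbury so that only $(\sigma_{\mathrm{model}}^2 I+\Lambda\Phi^T\Phi)^{-1}$ and $\Phi^T\Phi$ appear, and then reuse the truncation/expansion/concentration machinery of Theorems~\ref{thm:marginal-likelihood} and~\ref{thm:generalization-error}---indeed the paper's proof literally cites the intermediate estimates \eqref{eq:T1p1}, \eqref{eq:T1p2}, \eqref{eq:inv_mu_s2} obtained while proving Lemmas~\ref{thm:G1} and~\ref{thm:G2}. Your handling of the $\mu_0>0$ case via the orthogonality observation that $\bar m$ has no $\phi_0$-component (since $\Lambda_{00}=0$) is a slightly cleaner alternative to the paper's approach, which instead tracks the $\mu_0$-contribution through the concentration step using Lemma~\ref{thm:G2_mu_0>0}.
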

The proof of Theorem~\ref{thm:NTK-generalization-error} uses similar techniques as Theorem~\ref{thm:generalization-error} and is given in Appendix~\ref{app:asympototicKRR}.

\begin{remark}[Correspondence with kernel ridge regression]
The kernel ridge regression (KRR) estimator arises as a solution to the optimization problem 
\begin{align}\label{eq:KRRestimator}
	\hat{f}=\underset{f \in \mathcal{H}_{k}}{\arg \min } \frac{1}{n} \sum_{i=1}^{n}\left(f\left(x_{i}\right)-y_{i}\right)^{2}+\lambda\|f\|_{\mathcal{H}_{k}}^{2},
\end{align}
where the hypothesis space $\cH_k$ is chosen to be an RKHS, and $\lambda>0$ is a regularization parameter. The solution to \eqref{eq:KRRestimator} is unique as a function, and is given by $\hat{f}(x) = K_{\mathbf{x}x}^T(K_{n}+n\lambda I_n)^{-1}\mathbf{y}$, which coincides with the posterior mean function $\bar{m}(x)$ of the GPR \eqref{eq:posterior_mean} if $\sigma_{\mathrm{model}}^2=n\lambda$ \citep[Proposition 3.6]{kanagawa2018GPreview}. Thus, the additive Gaussian noise in GPR plays the role of regularization in KRR. 
Leveraging this well known equivalence between GPR and KRR we observe that Theorem~\ref{thm:NTK-generalization-error} also describes the generalization error of KRR as measured by the excess mean squared error.  %
\end{remark}

\begin{remark}\label{rem:KRRGaussianDesign}
\citet{cui2021generalization} derived the asymptotics of the expected excess mean-squared error 
for different regularization strengths and different scales of noise. In particular, 
for KRR with Gaussian design
where $\Lambda_R^{1/2}(\phi_1(x),\ldots,\phi_R(x)))$ is assumed to follow a Gaussian distribution $\cN(0,\Lambda_R)$, and regularization  $\lambda=n^{t-1}$ where $1-\alpha \leq t$, %
\citet[Eq. 10]{cui2021generalization} showed that
\begin{equation}
\mathbb{E}_{\{x_i\}_{i=1}^n}\mathbb{E}_{\bm{\epsilon}}M(D_n)=
{O}\left(\max \{\sigma_{\mathrm{true}}^{2}n^{\frac{1-\alpha-t}{\alpha}}, n^{ \frac{(1-2\beta)(1-t)}{\alpha}}\} \right).
\end{equation}

Let $\delta=n^{-q}$, where $0\leq q<\frac{[\alpha-(1+2\tau)(1-t)](2\beta-1)}{4\alpha^2}$. 
By Markov's inequality, this implies that with probability of at least $1-\delta$, {$\mathbb{E}_{\bm{\epsilon}}M(D_n)=O(\frac{1}{\delta}\max \{\sigma_{\mathrm{true}}^{2}n^{\frac{1-\alpha-t}{\alpha}}, n^{ \frac{(1-2\beta)(1-t)}{\alpha}}\} )=O(n^q\max \{\sigma_{\mathrm{true}}^{2}n^{\frac{1-\alpha-t}{\alpha}}, n^{ \frac{(1-2\beta)(1-t)}{\alpha}}\})$}. Theorem~\ref{thm:NTK-generalization-error} improves upon this by showing that with probability of at least $1-\delta$, we have an optimal bound $\mathbb{E}_{\bm{\epsilon}}M(D_n) = \Theta(\max \{\sigma_{\mathrm{true}}^{2}n^{\frac{1-\alpha-t}{\alpha}}, n^{ \frac{(1-2\beta)(1-t)}{\alpha}}\})$. 
Furthermore, 
in contrast to the approach by \citet{cui2021generalization}, we have no requirement on the distribution of $\phi_p(x)$, and hence our result is more generally applicable. For example, Theorem~\ref{thm:NTK-generalization-error} can be applied to KRR with the arc-cosine kernel when the Gaussian design assumption is not valid. 
In the noiseless setting ($\sigma_{\mathrm{true}}=0$) with constant regularization ($t=0$), Theorem~\ref{thm:NTK-generalization-error} implies that the mean squared error behaves as $\Theta(n^{\frac{1-2\beta}{\alpha}})$. This recovers a result in \citet[\S 2.2]{bordelon2020spectrum}.

\end{remark}

\section{Experiments}\label{sec:Experiments}

We illustrate our theory on a few toy experiments. 
We let the input $x$ be uniformly distributed on a unit circle, i.e., $\Omega=S^1$ and $\rho=\mathcal{U}(S^1)$. 
The points on $S^1$ %
can be represented by $x=(\cos \theta, \sin \theta)$ where $\theta\in[-\pi,\pi)$. %
We use the first order arc-cosine kernel function without bias, %
$k^{(1)}_{\mathrm{w/o\  bias}}(x_1,x_2)=\tfrac{1}{\pi}(\sin\psi+(\pi-\psi)\cos \psi)$, where $\psi=\langle x_1,x_2\rangle$ is the angle between $x_1$ and $x_2$. 
\citet{cho2009kernel} showed that this kernel is the conjugate kernel of an infinitely wide shallow ReLU network with two inputs and no biases %
in the hidden layer. 
GP regression with prior $\mathcal{GP}(0,k)$ %
corresponds to %
Bayesian training of this %
network \citep{lee2018deep}. 
The eigenvalues and eigenfunctions of the kernel are %
$\lambda_1=\frac{4}{\pi^2}$, $\lambda_2=\lambda_3=\frac{1}{4}$, $\lambda_{2p}=\lambda_{2p+1}=\frac{4}{\pi^2((2p-2)^2-1)^2}$, $p\geq 2$ and $\phi_1(\theta)=1$, $\phi_2(\theta)=\frac{\sqrt{2}}{2}\cos\theta$, $\phi_3(\theta)=\frac{\sqrt{2}}{2}\sin\theta$, $\phi_{2p}(\theta)=\frac{\sqrt{2}}{2}\cos(2p-2)\theta$,$\phi_{2p+1}(\theta)=\frac{\sqrt{2}}{2}\sin(2p-2)\theta$, $p\geq 2$. 
Hence Assumption~\ref{assumption_alpha} is satisfied with $\alpha=4$, and Assumption~\ref{assumption_functions} is satisfied with $\|\phi_p\|_\infty\leq \frac{\sqrt{2}}{2}$, $p\geq 1$ and $\tau=0$. 
We consider the target functions in Table~\ref{tab:target}, which satisfy Assumption~\ref{assumption_beta} with the indicated $\beta$, and $\mu_0$ indicates whether the function lies in the span of eigenfunctions %
of the kernel.%

The training and test data are generated as follows: We independently sample training inputs $x_1,\ldots,x_n$ and test input $x_{n+1}$ from $\mathcal{U}(S^1)$ and training outputs $y_i$, $i=1,\ldots,n $ from $\mathcal{N}(f(x_i),\sigma^2)$, where we choose $\sigma=0.1$. The Bayesian predictive density conditioned on the test point $x_{n+1}$ $\mathcal{N}(\bar{m}(x_{n+1}), \bar{k}(x_{n+1},x_{n+1}))$ is obtained by \eqref{eq:posterior_mean} 
and \eqref{eq:posterior_variance}. We compute the normalized SC by \eqref{eq:NSC} and the Bayesian generalization error by the Kullback-Leibler divergence between $\mathcal{N}(f(x_{n+1}), \sigma^2)$ and $\mathcal{N}(\bar{m}(x_{n+1}), \bar{k}(x_{n+1},x_{n+1}))$.
For each target we conduct GPR $20$ times and report the mean and standard deviation of the normalized SC and the Bayesian generalization error %
in Figure~\ref{fig:nsc}, which agree with the asymptotics predicted in Theorems~\ref{thm:marginal-likelihood} and \ref{thm:generalization-error}.  
In Appendix~\ref{app:experiments}, we show more experiments confirming our theory for zero- and second- order arc-cosine kernels, with and without biases.
\begin{table}
\centering
\small 
\begin{tabular}{c|c|c|c|c|c}
&function value &$\beta$  &$\mu_0$&$\mathbb{E}_{\bm{\epsilon}}F^0(D_n)$& $\mathbb{E}_{\bm{\epsilon}}G(D_n)$\\
\hline
$f_1$&$\cos{2\theta}$  & $+\infty$ & $0$&$\Theta(n^{1/4})$&$\Theta(n^{-3/4})$\\
\hline
$f_2$&$\theta^2$  & $2$   &  $>0$ &$\Theta(n)$&$\Theta(1)$ \\
\hline
$f_3$&$(|\theta|-\pi/2)^2$ &  2 & 0&$\Theta(n^{1/4})$&$\Theta(n^{-3/4})$\\
\hline
$f_4$&$\begin{cases}
\pi/2-\theta,& \theta \in[0,\pi)\\
-\pi/2-\theta,& \theta \in[-\pi,0)
\end{cases}$ &  1 & 0&$\Theta(n^{3/4})$&$\Theta(n^{-1/4})$\\
\end{tabular}
\caption{Target functions used in the experiments for the first order arc-cosine kernel without bias $k^{(1)}_{\mathrm{w/o\  bias}}$, %
their values of $\beta$ and $\mu_0$, and theoretical rates for the normalized SC and the Bayesian generalization error from our theorems. 
}
\label{tab:target}
\end{table}

\begin{figure}
    \centering
\begin{tikzpicture}[x=\textwidth,y=\textwidth, inner sep = 1pt]
\node[above right] at (0,0) {\includegraphics[width=0.24\textwidth]{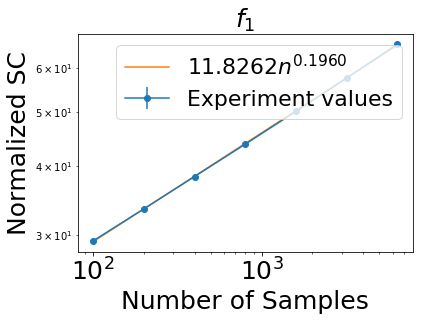}};
\node[above right] at (.25,0) {\includegraphics[width=0.24\textwidth]{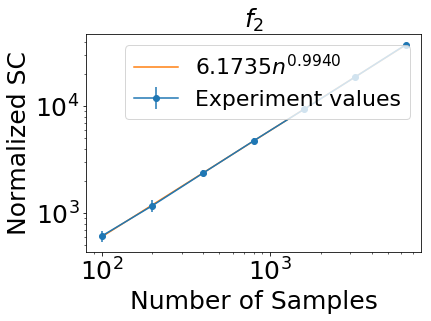}};
\node[above right] at (0.5, 0) {\includegraphics[width=0.24\textwidth]{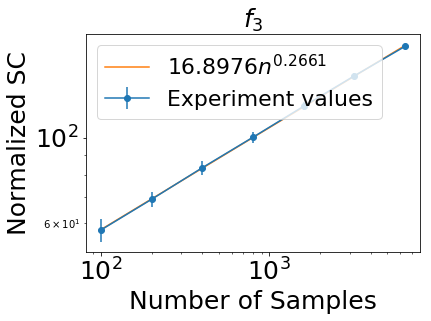}};
\node[above right] at (.75, 0) {\includegraphics[width=0.24\textwidth]{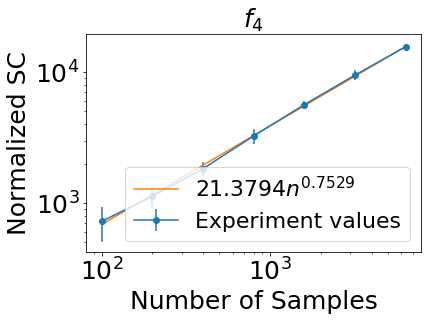}};
\end{tikzpicture}
\begin{tikzpicture}[x=\textwidth,y=\textwidth, inner sep = 1pt]
\node[above right] at (0,0) {\includegraphics[width=0.24\textwidth]{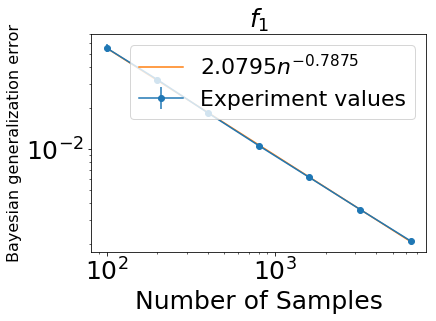}};
\node[above right] at (.25,0) {\includegraphics[width=0.24\textwidth]{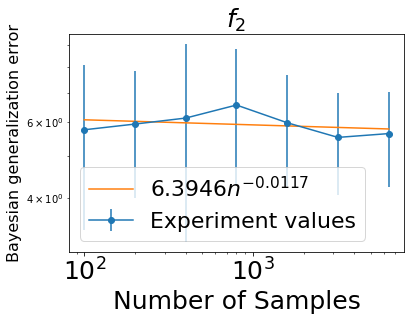}};
\node[above right] at (0.5, 0) {\includegraphics[width=0.24\textwidth]{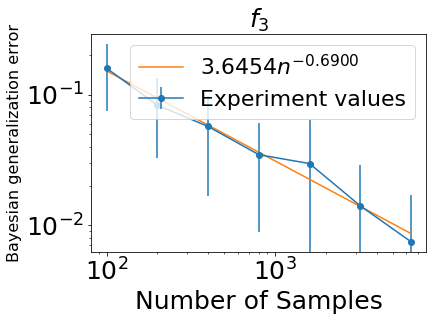}};
\node[above right] at (.75, 0) {\includegraphics[width=0.24\textwidth]{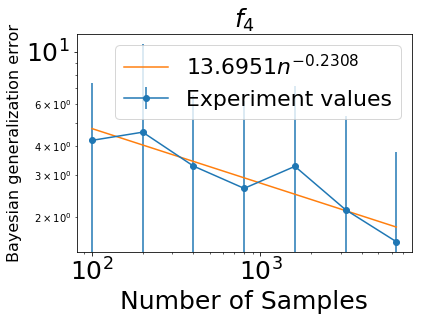}};
\end{tikzpicture}
\caption{Normalized SC (top) and Bayesian generalization error (bottom) for GPR with the kernel $k^{(1)}_{\mathrm{w/o\  bias}}$ and the target functions in Table~\ref{tab:target}. 
The orange curves show the linear regression fit for the experimental values (in blue) of the log Bayesian generalization error as a function of log $n$.
}
\label{fig:nsc}
\end{figure}

\section{Conclusion}
We described the learning curves for GPR for the case that the kernel and target function follow a power law. This setting is frequently encountered in kernel learning and relates to recent advances on neural networks. 
Our approach is based on a tight analysis of the concentration of the inner product of empirical eigenfunctions $\Phi^T\Phi$ around $nI$. 
This allowed us to obtain more general results with more realistic assumptions than previous works. In particular, we recovered some results on learning curves for GPR and KRR previously obtained %
under more restricted settings (vide Remarks \ref{rem:SollichHalees} and \ref{rem:KRRGaussianDesign}). 

We showed that when $\beta\geq \alpha/2$, meaning that the target function has a compact representation in terms of the eigenfunctions of the kernel, the learning rate is as good as in the correctly specified case. 
In addition, our result allows us to interpret $\beta$ from a spectral bias perspective. When $\frac{1}{2}<\beta\leq\frac{\alpha}{2}$, the larger the value of $\beta$, the faster the decay of the generalization error. 
This implies that low-frequency functions are learned faster in terms of the number of training data points. 

By leveraging the equivalence between GPR and KRR, we obtained a result on the generalization error of KRR. In the infinite-width limit, training fully-connected deep NNs with %
gradient descent and infinitesimally small learning rate under least-squared loss is equivalent to solving KRR with respect to the NTK \citep{NEURIPS2018_5a4be1fa,NEURIPS2019_0d1a9651,domingos2020every}, which in several cases is known to have a power-law spectrum \citep{YarotskyVelikanov2021}. 
Hence our methods can be applied to study the generalization error of infinitely wide neural networks. 
In future work, it would be interesting to estimate the values of $\alpha$ and $\beta$ for %
the NTK and the NNGP kernel of deep fully-connected or convolutional NNs and real data distributions and test our theory in these cases. 
Similarly, it would be interesting to consider extensions to finite width kernels.

\newpage 
\bibliographystyle{abbrvnat} 
\bibliography{references}

\appendix 

\section*{Appendix}

\section{Experiments for arc-cosine kernels of different orders}
\label{app:experiments}

Consider the first order arc-cosine kernel function with biases,
\begin{align}\label{eq:firstorderarccos}
k^{(1)}_{\mathrm{w/\  bias}}(x_1,x_2)=\tfrac{1}{\pi}(\sin\bar{\psi}+(\pi-\bar{\psi})\cos \bar{\psi}), \,\text{ where } \bar{\psi}=\arccos\left(\tfrac{1}{2}(\langle x_1,x_2\rangle+1)\right).
\end{align}
\citet{ronen2019convergence} showed that this kernel is the conjugate kernel of an infinitely wide shallow ReLU network with two inputs and one hidden layer with biases, whose eigenvalues satisfy Assumption~\ref{assumption_alpha} with $\alpha=4$.
The eigenfunctions of this kernel
are the same as that of the first-order arc-cosine kernel without biases, $k^{(1)}_{\mathrm{w/o\  bias}}$ in Section~\ref{sec:Experiments}.
We consider the target functions in Table~\ref{tab:target_first_with_bias}, which 
satisfy Assumption 5 with the indicated $\beta$, and $\mu_0$ indicates whether the function lies in the span of eigenfunctions of the kernel.
For each target we conduct GPR $20$ times and report the mean and standard deviation of the normalized SC and the Bayesian generalization error %
in Figure~\ref{fig:nsc_first_with_bias}, 
which agree with the asymptotics predicted in Theorems~\ref{thm:marginal-likelihood} and \ref{thm:generalization-error}.

Table~\ref{tab:kernel_list} summarizes all the different kernel functions
that we consider in our experiments with pointers to the corresponding tables and figures.
\begin{table}[htb!]
\centering
\small 
\begin{tabular}{c|c|c|c|c|c}
&kernel function &$\alpha$  &activation function& bias&pointer\\
\hline
$k^{(1)}_{\mathrm{w/o\  bias}}$&$\frac{1}{\pi}(\sin\psi+(\pi-\psi)\cos \psi)$ & $4$&$\max\{0,x\}$ &no&Table \ref{tab:target}/Figure \ref{fig:nsc}\\
\hline
$k^{(1)}_{\mathrm{w/\  bias}}$&$\frac{1}{\pi}(\sin\bar{\psi}+(\pi-\bar{\psi})\cos \bar{\psi})$ & $4$&$\max\{0,x\}$ &yes&Table \ref{tab:target_first_with_bias}/Figure \ref{fig:nsc_first_with_bias}\\
\hline
$k^{(2)}_{\mathrm{w/o\  bias}}$&$\frac{1}{\pi}(3\sin\psi\cos\psi+(\pi-\psi)(1+2\cos^2 \psi))$ & $6$&$(\max\{0,x\})^2$ &no&Table \ref{tab:target_second_without_bias}/Figure \ref{fig:nsc_second_without_bias}\\
 \hline
$k^{(2)}_{\mathrm{w/\  bias}}$&$\frac{1}{\pi}(3\sin\bar{\psi}\cos\bar{\psi}+(\pi-\bar{\psi})(1+2\cos^2 \bar{\psi}))$ & $6$&$(\max\{0,x\})^2$ &yes&Table \ref{tab:target_second_with_bias}/Figure \ref{fig:nsc_second_with_bias}\\
\hline
$k^{(0)}_{\mathrm{w/o\  bias}}$&$\frac{1}{\pi}(\sin\psi+(\pi-\psi)\cos \psi)$ & $2$&$\frac{1}{2} (1 + \mathrm{sign}(x))$ &no&Table \ref{tab:target_zero_without_bias}/Figure \ref{fig:nsc_zero_without_bias}\\
\hline
$k^{(0)}_{\mathrm{w/\  bias}}$&$\frac{1}{\pi}(\sin\bar{\psi}+(\pi-\psi)\cos \bar{\psi})$ & $2$&$\frac{1}{2} (1 + \mathrm{sign}(x))$ &yes&Table \ref{tab:target_zero_with_bias}/Figure \ref{fig:nsc_zero_with_bias}
\end{tabular}
\caption{The different kernel functions used in our experiments, their values of $\alpha$, the corresponding neural network activation function along with a pointer to the tables showing the target functions used for the kernels and the corresponding figures. %
}
\label{tab:kernel_list}
\end{table}

Summarizing the observations from these experiments, we see that the smoothness of the activation function (which is controlled by the order of the arc-cosine kernel) influences the decay rate $\alpha$ of the eigenvalues. 
In general, when the activation function is smoother, the decay rate $\alpha$ is larger.
Theorem~\ref{thm:generalization-error} then implies that smooth activation functions are more capable in suppressing noise but slower in learning the target. 
We also observe that networks with biases are more capable at learning functions compared to networks without bias. For example, 
the function $\cos(2\theta)$ cannot be learned by the zero order arc-cosine kernel without biases (see Table \ref{tab:target_zero_without_bias} and Figure \ref{fig:nsc_zero_without_bias}), but it can be learned by the zero order arc-cosine kernel with biases (see Table \ref{tab:target_zero_with_bias} and Figure \ref{fig:nsc_zero_with_bias}).

\setcounter{figure}{2}

\begin{figure}[!htbp]
\centering
\begin{tabular}{c|c|c|c|c|c}
&function value &$\beta$  &$\mu_0$&$\mathbb{E}_{\bm{\epsilon}}F^0(D_n)$& $\mathbb{E}_{\bm{\epsilon}}G(D_n)$\\
\hline
$f_1$&$\cos2\theta$  & $+\infty$ & $0$&$\Theta(n^{1/4})$&$\Theta(n^{-3/4})$\\
\hline
$f_2$&$\theta^2$  & $2$   &  $0$ &$\Theta(n^{1/4})$&$\Theta(n^{-3/4})$ \\
\hline
$f_3$&$(|\theta|-\pi/2)^2$ &  2 & 0&$\Theta(n^{1/4})$&$\Theta(n^{-3/4})$\\
\hline
$f_4$&$\begin{cases}
\pi/2-\theta,& \theta \in[0,\pi)\\
-\pi/2-\theta,& \theta \in[-\pi,0)
\end{cases}$ &  1 & 0&$\Theta(n^{3/4})$&$\Theta(n^{-1/4})$\\
\end{tabular}
\captionof{table}{Target functions used in the experiments for the first order arc-cosine kernel with bias, $k^{(1)}_{\mathrm{w/\  bias}}$, their values of $\beta$ and $\mu_0$, and theoretical rates for the normalized SC and the Bayesian generalization error from our theorems. 
}
\label{tab:target_first_with_bias}
   \centering
    \centering
\begin{tikzpicture}[x=\textwidth,y=\textwidth, inner sep = 1pt]
\node[above right] at (0,0) {\includegraphics[width=0.24\textwidth]{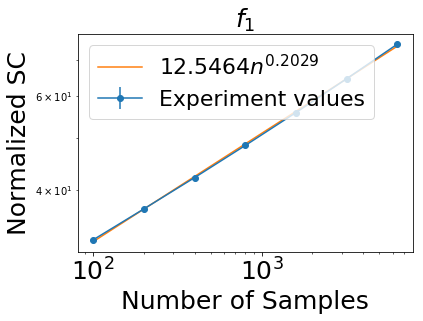}};
\node[above right] at (.25,0) {\includegraphics[width=0.24\textwidth]{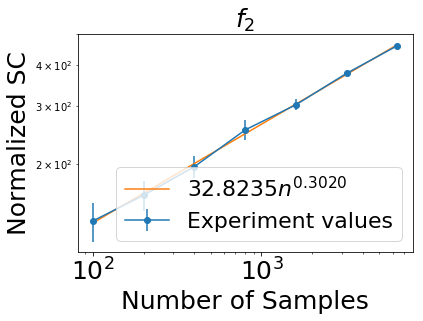}};
\node[above right] at (0.5, 0) {\includegraphics[width=0.24\textwidth]{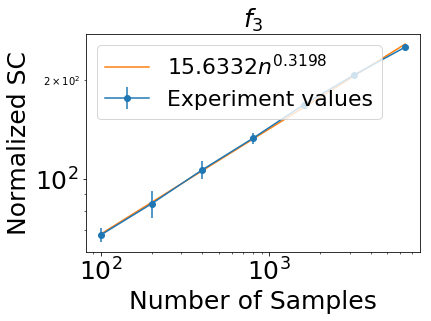}};
\node[above right] at (.75, 0) {\includegraphics[width=0.24\textwidth]{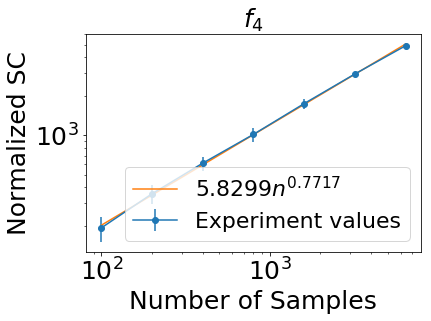}};
\end{tikzpicture}
\begin{tikzpicture}[x=\textwidth,y=\textwidth, inner sep = 1pt]
\node[above right] at (0,0) {\includegraphics[width=0.24\textwidth]{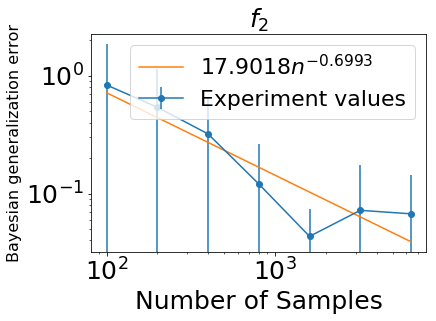}};
\node[above right] at (.25,0) {\includegraphics[width=0.24\textwidth]{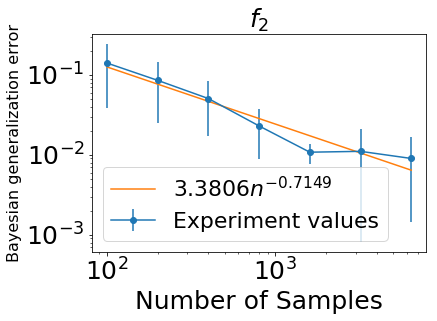}};
\node[above right] at (0.5, 0) {\includegraphics[width=0.24\textwidth]{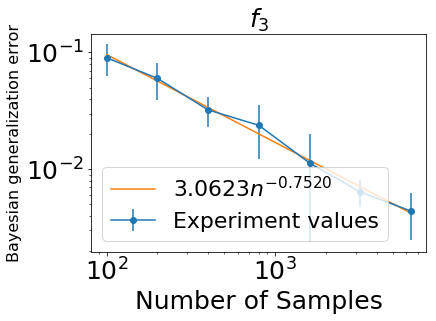}};
\node[above right] at (.75, 0) {\includegraphics[width=0.24\textwidth]{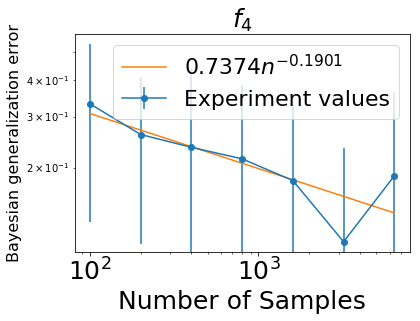}};
\end{tikzpicture}

\captionof{figure}{
Normalized SC (top) and Bayesian generalization error (bottom) for GPR with kernel $k^{(1)}_{\mathrm{w/\  bias}}$ and the target functions in Table~\ref{tab:target_first_with_bias}. The orange curves show the linear regression fit for the experimental values (in blue) of the log Bayesian generalization error as a function of log $n$.}
\label{fig:nsc_first_with_bias}
\end{figure}

\begin{figure}[H]
\centering
\begin{tabular}{c|c|c|c|c|c}
&function value &$\beta$  &$\mu_0$&$\mathbb{E}_{\bm{\epsilon}}F^0(D_n)$& $\mathbb{E}_{\bm{\epsilon}}G(D_n)$\\
\hline
$f_1$&$\cos 2\theta$  & $+\infty$ & $0$&$\Theta(n^{1/6})$&$\Theta(n^{-5/6})$\\
\hline
$f_2$&$\mathrm{sign}(\theta)$  & $1$   &  $0$ &$\Theta(n^{5/6})$&$\Theta(n^{-1/6})$ \\
\hline
$f_3$&$\pi/2-|\theta|$ &  2 & 0&$\Theta(n^{1/2})$&$\Theta(n^{-1/2})$\\
\hline
$f_4$&$\begin{cases}
\pi/2-\theta,& \theta \in[0,\pi)\\
-\pi/2-\theta,& \theta \in[-\pi,0)
\end{cases}$ &  1 & $> 0$&$\Theta(n)$&$\Theta(1)$\\
\end{tabular}

\captionof{table}{Target functions used in the experiments for the second order arc-cosine kernel without bias, $k^{(2)}_{\mathrm{w/o\  bias}}$, their values of $\beta$ and $\mu_0$, and theoretical rates for the normalized SC and the Bayesian generalization error from our theorems. 
}
\label{tab:target_second_without_bias}
   \centering
    \centering
\begin{tikzpicture}[x=\textwidth,y=\textwidth, inner sep = 1pt]
\node[above right] at (0,0) {\includegraphics[width=0.24\textwidth]{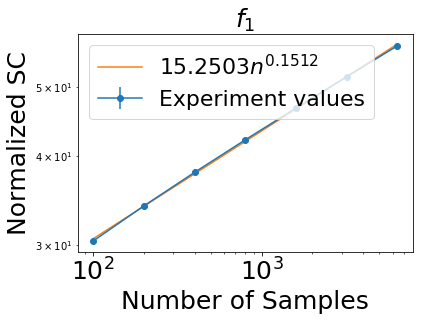}};
\node[above right] at (.25,0) {\includegraphics[width=0.24\textwidth]{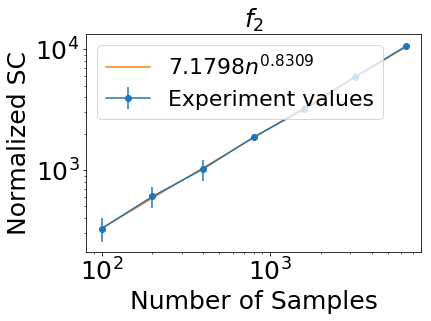}};
\node[above right] at (0.5, 0) {\includegraphics[width=0.24\textwidth]{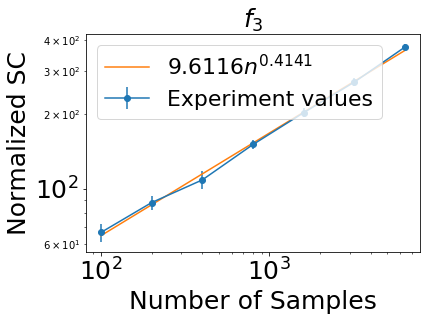}};
\node[above right] at (.75, 0) {\includegraphics[width=0.24\textwidth]{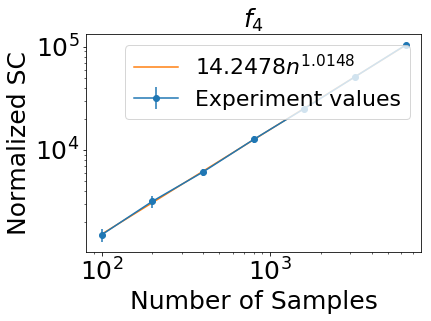}};
\end{tikzpicture}
\begin{tikzpicture}[x=\textwidth,y=\textwidth, inner sep = 1pt]
\node[above right] at (0,0) {\includegraphics[width=0.24\textwidth]{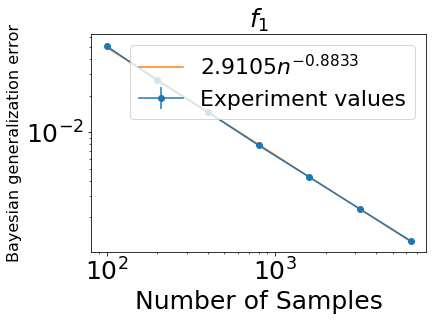}};
\node[above right] at (.25,0) {\includegraphics[width=0.24\textwidth]{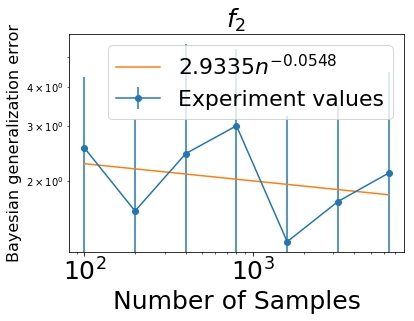}};
\node[above right] at (0.5, 0) {\includegraphics[width=0.24\textwidth]{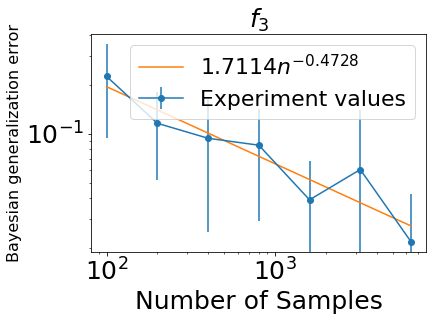}};
\node[above right] at (.75, 0) {\includegraphics[width=0.24\textwidth]{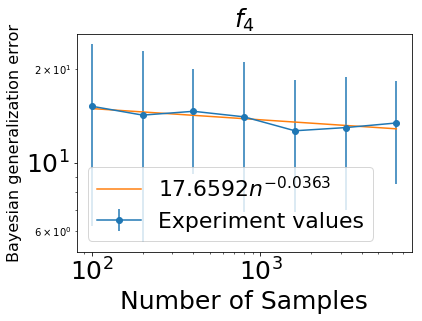}};
\end{tikzpicture}
\captionof{figure}{
Normalized SC (top) and Bayesian generalization error (bottom) for GPR with kernel $k^{(2)}_{\mathrm{w/o\  bias}}$ and the target functions in Table~\ref{tab:target_second_without_bias}.}
\label{fig:nsc_second_without_bias}
\end{figure}

\begin{figure}[H]
\centering
\begin{tabular}{c|c|c|c|c|c}
&function value &$\beta$  &$\mu_0$&$\mathbb{E}_{\bm{\epsilon}}F^0(D_n)$& $\mathbb{E}_{\bm{\epsilon}}G(D_n)$\\
\hline
$f_1$&$\cos2\theta$  & $+\infty$ & $0$&$\Theta(n^{1/6})$&$\Theta(n^{-5/6})$\\
\hline
$f_2$&$\theta^2$  & $2$   &  $0$ &$\Theta(n^{1/2})$&$\Theta(n^{-1/2})$ \\
\hline
$f_3$&$(|\theta|-\pi/2)^2$ &  2 & 0&$\Theta(n^{1/2})$&$\Theta(n^{-1/2})$\\
\hline
$f_4$&$\begin{cases}
\pi/2-\theta,& \theta \in[0,\pi)\\
-\pi/2-\theta,& \theta \in[-\pi,0)
\end{cases}$ &  1 & 0&$\Theta(n^{5/6})$&$\Theta(n^{-1/6})$\\
\end{tabular}

\captionof{table}{Target functions used in the experiments for the second order arc-cosine kernel with bias, $k^{(2)}_{\mathrm{w/\  bias}}$, their values of $\beta$ and $\mu_0$, and theoretical rates for the normalized SC and the Bayesian generalization error from our theorems.  }
\label{tab:target_second_with_bias}
   \centering
    \centering
\begin{tikzpicture}[x=\textwidth,y=\textwidth, inner sep = 1pt]
\node[above right] at (0,0) {\includegraphics[width=0.24\textwidth]{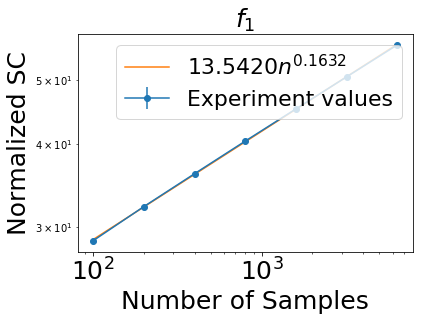}};
\node[above right] at (.25,0) {\includegraphics[width=0.24\textwidth]{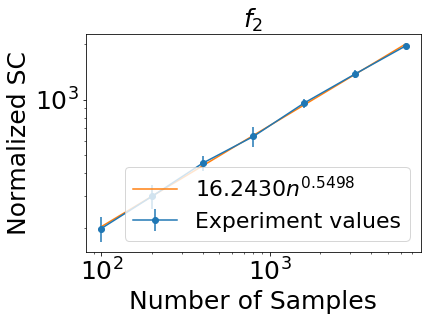}};
\node[above right] at (0.5, 0) {\includegraphics[width=0.24\textwidth]{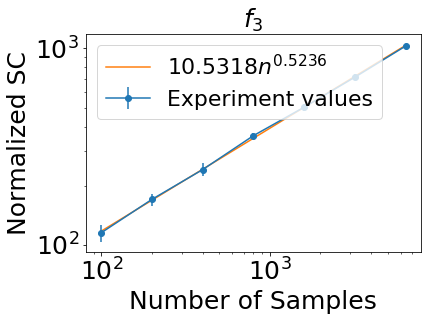}};
\node[above right] at (.75, 0) {\includegraphics[width=0.24\textwidth]{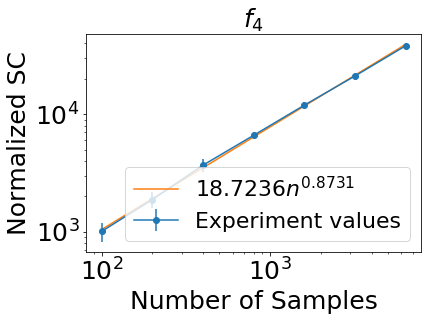}};
\end{tikzpicture}
\begin{tikzpicture}[x=\textwidth,y=\textwidth, inner sep = 1pt]
\node[above right] at (0,0) {\includegraphics[width=0.24\textwidth]{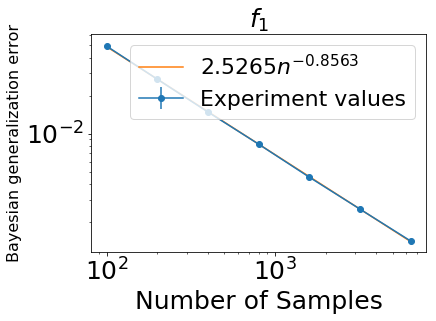}};
\node[above right] at (.25,0) {\includegraphics[width=0.24\textwidth]{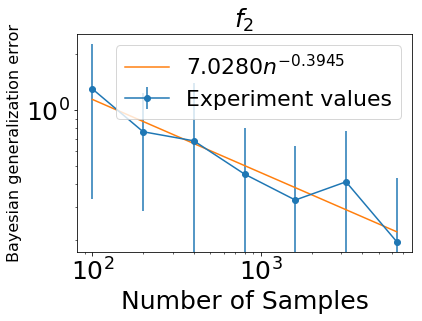}};
\node[above right] at (0.5, 0) {\includegraphics[width=0.24\textwidth]{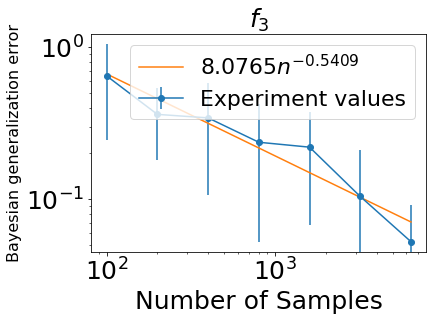}};
\node[above right] at (.75, 0) {\includegraphics[width=0.24\textwidth]{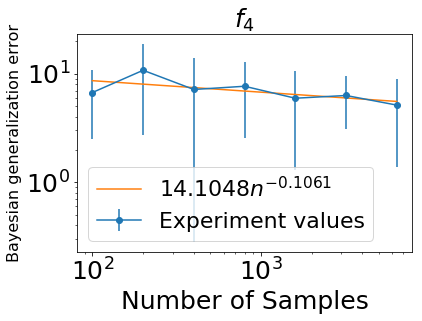}};
\end{tikzpicture}
\captionof{figure}{
Normalized SC (top) and Bayesian generalization error (bottom) for GPR with kernel $k^{(2)}_{\mathrm{w/\  bias}}$ and the target functions in Table~\ref{tab:target_second_with_bias}.}
\label{fig:nsc_second_with_bias}
\end{figure}

\begin{figure}[H]
\centering
\begin{tabular}{c|c|c|c|c|c}
&function value &$\beta$  &$\mu_0$&$\mathbb{E}_{\bm{\epsilon}}F^0(D_n)$& $\mathbb{E}_{\bm{\epsilon}}G(D_n)$\\
\hline
$f_1$&$\cos 2\theta$  & $+\infty$ & $>0$&$\Theta(n)$&$\Theta(1)$\\
\hline
$f_2$&$\mathrm{sign}(\theta)$  & $1$   &  $0$ &$\Theta(n^{1/2})$&$\Theta(n^{-1/2})$ \\
\hline
$f_3$&$\pi/2-|\theta|$ &  2 & 0&$\Theta(n^{1/2})$&$\Theta(n^{-1/2})$\\
\hline
$f_4$&$\begin{cases}
\pi/2-\theta,& \theta \in[0,\pi)\\
-\pi/2-\theta,& \theta \in[-\pi,0)
\end{cases}$ &  1 & $>0$&$\Theta(n)$&$\Theta(1)$\\
\end{tabular}

\captionof{table}{Target functions used in the experiments for the zero order arc-cosine kernel without bias, $k^{(0)}_{\mathrm{w/o\  bias}}$, their values of $\beta$ and $\mu_0$, and theoretical rates for the normalized SC and the Bayesian generalization error from our theorems. }
\label{tab:target_zero_without_bias}
   \centering
    \centering
\begin{tikzpicture}[x=\textwidth,y=\textwidth, inner sep = 1pt]
\node[above right] at (0,0) {\includegraphics[width=0.24\textwidth]{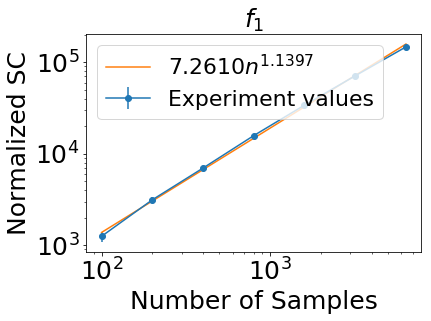}};
\node[above right] at (.25,0) {\includegraphics[width=0.24\textwidth]{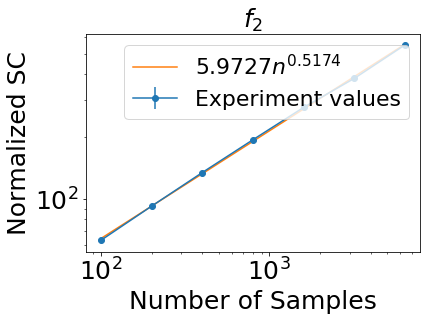}};
\node[above right] at (0.5, 0) {\includegraphics[width=0.24\textwidth]{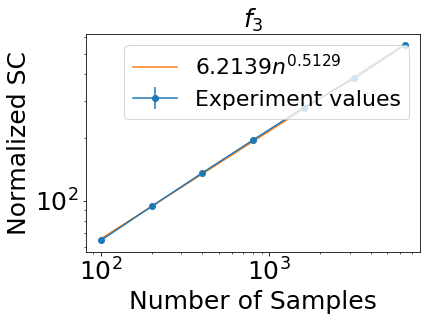}};
\node[above right] at (.75, 0) {\includegraphics[width=0.24\textwidth]{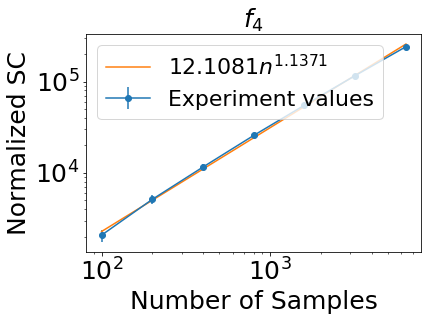}};
\end{tikzpicture}
\begin{tikzpicture}[x=\textwidth,y=\textwidth, inner sep = 1pt]
\node[above right] at (0,0) {\includegraphics[width=0.24\textwidth]{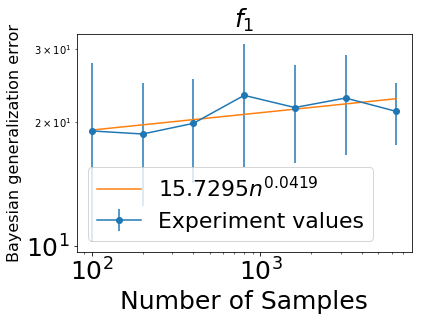}};
\node[above right] at (.25,0) {\includegraphics[width=0.24\textwidth]{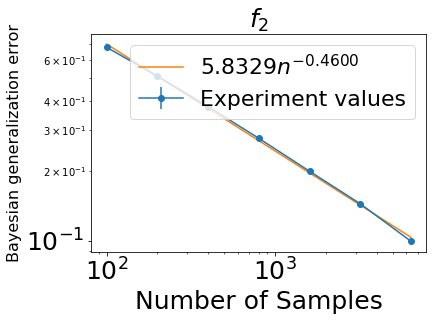}};
\node[above right] at (0.5, 0) {\includegraphics[width=0.24\textwidth]{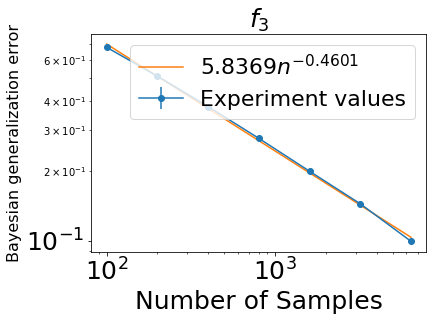}};
\node[above right] at (.75, 0) {\includegraphics[width=0.24\textwidth]{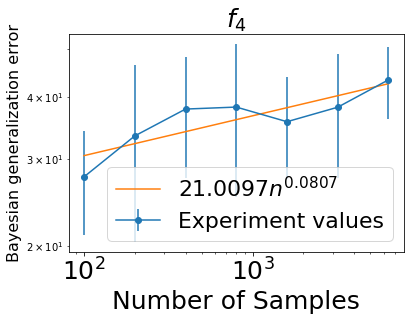}};
\end{tikzpicture}
\captionof{figure}{
Normalized SC (top) and Bayesian generalization error (bottom) for GPR with kernel $k^{(0)}_{\mathrm{w/o\  bias}}$ and the target functions in Table~\ref{tab:target_zero_without_bias}.}
\label{fig:nsc_zero_without_bias}
\end{figure}

\begin{figure}[H]
\centering
\begin{tabular}{c|c|c|c|c|c}
&function value &$\beta$  &$\mu_0$&$\mathbb{E}_{\bm{\epsilon}}F^0(D_n)$& $\mathbb{E}_{\bm{\epsilon}}G(D_n)$\\
\hline
$f_1$&$\cos 2\theta$  & $+\infty$ & $0$&$\Theta(n^{1/2})$&$\Theta(n^{-1/2})$\\
\hline
$f_2$&$\theta^2$  & $2$   &  $0$ &$\Theta(n^{1/2})$&$\Theta(n^{-1/2})$ \\
\hline
$f_3$&$(|\theta|-\pi/2)^2$ &  2 & 0&$\Theta(n^{1/2})$&$\Theta(n^{-1/2})$\\
\hline
$f_4$&$\begin{cases}
\pi/2-\theta,& \theta \in[0,\pi)\\
-\pi/2-\theta,& \theta \in[-\pi,0)
\end{cases}$ &  1 & 0&$\Theta(n^{1/2})$&$\Theta(n^{-1/2})$\\
\end{tabular}
\captionof{table}{Target functions used in the experiments for the zero order arc-cosine kernel with bias, $k^{(0)}_{\mathrm{w/\ bias}}$, their values of $\beta$ and $\mu_0$, and theoretical rates for the normalized SC and the Bayesian generalization error from our theorems. }
\label{tab:target_zero_with_bias}
   \centering
\begin{tikzpicture}[x=\textwidth,y=\textwidth, inner sep = 1pt]
\node[above right] at (0,0) {\includegraphics[width=0.24\textwidth]{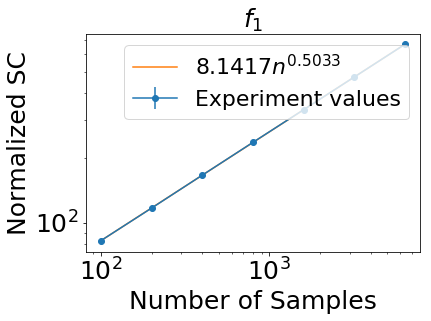}};
\node[above right] at (.25,0) {\includegraphics[width=0.24\textwidth]{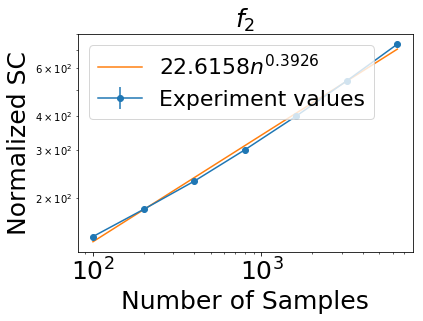}};
\node[above right] at (0.5, 0) {\includegraphics[width=0.24\textwidth]{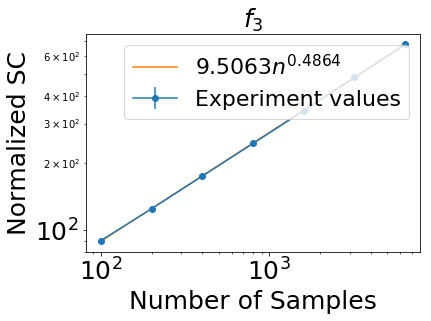}};
\node[above right] at (.75, 0) {\includegraphics[width=0.24\textwidth]{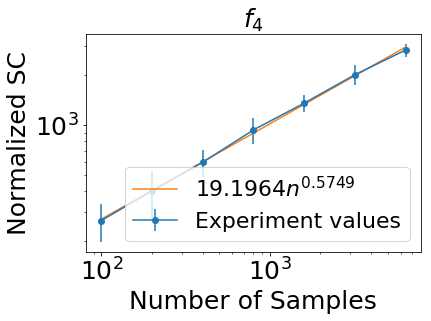}};
\end{tikzpicture}
\begin{tikzpicture}[x=\textwidth,y=\textwidth, inner sep = 1pt]
\node[above right] at (0,0) {\includegraphics[width=0.24\textwidth]{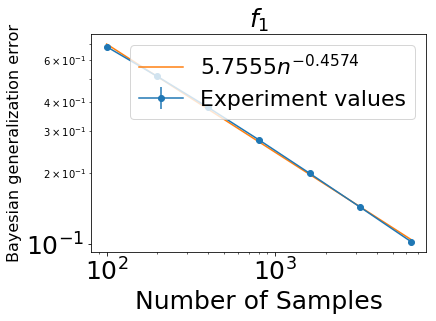}};
\node[above right] at (.25,0) {\includegraphics[width=0.24\textwidth]{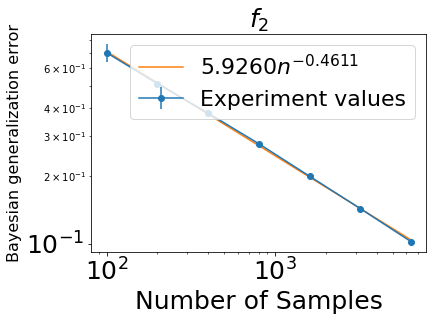}};
\node[above right] at (0.5, 0) {\includegraphics[width=0.24\textwidth]{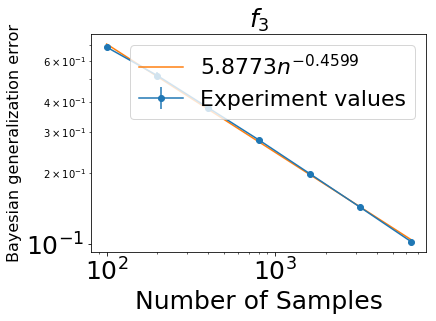}};
\node[above right] at (.75, 0) {\includegraphics[width=0.24\textwidth]{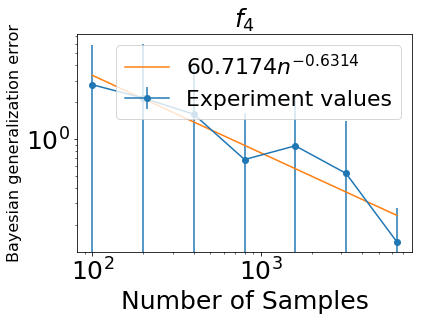}};
\end{tikzpicture}
\captionof{figure}{
Normalized SC (top) and Bayesian generalization error (bottom) for GPR with kernel $k^{(0)}_{\mathrm{w/\  bias}}$ and the target functions in Table~\ref{tab:target_zero_with_bias}.}
\label{fig:nsc_zero_with_bias}
\end{figure}

\section{Proofs related to the marginal likelihood}\label{app:proof_likelihood}

\begin{proof}[Proof of Proposition~\ref{prop:normalized-stochastic-complexity}]
Let $\bar{\mathbf{y}}=(\bar{y}_1,\ldots,\bar{y}_n)^T$ be the outputs of the GP regression model on training inputs $\mathbf{x}$. 
Under the GP prior, the prior distribution of $\bar{\mathbf{y}}$ is $\mathcal{N}(0,K_n)$. Then the evidence of the model is given as follows:
\begin{equation}
    \begin{aligned}
        Z_n&=\int_{\mathbb{R}^n}\left(\prod_{i=1}^n \frac{1}{\sqrt{2\pi}\sigma}e^{-\frac{(\bar{y}_i-y_i)^2}{2\sigma^2}}\right)\frac{1}{(2\pi)^{n/2}\det(K_n)^{1/2}}e^{-\frac{1}{2}\bar{\mathbf{y}}^T K_n^{-1}\bar{\mathbf{y}}}\mathrm{d}\bar{\mathbf{y}}\\
        &=\frac{1}{(2\pi)^{n}\sigma^n\det(K_n)^{1/2}}\int_{\mathbb{R}^n}e^{-\frac{1}{2}\bar{\mathbf{y}}^T (K_n^{-1}+\frac{1}{\sigma^2}I)\bar{\mathbf{y}}+\frac{1}{\sigma^2}\bar{\mathbf{y}}^T \mathbf{y}-\frac{1}{2\sigma^2}\mathbf{y}^T \mathbf{y}}\mathrm{d}\bar{\mathbf{y}}.
    \end{aligned}
\end{equation}
Letting $\tilde{K}_n^{-1}=K_n^{-1}+\frac{1}{\sigma^2}I$ and $\mu=\frac{1}{\sigma^2}\tilde{K}_n\mathbf{y}$, we have
\begin{equation}
    \begin{aligned}
        Z_n
        &=\frac{1}{(2\pi)^{n}\sigma^n\det(K_n)^{1/2}}\int_{\mathbb{R}^n}e^{-\frac{1}{2}(\bar{\mathbf{y}}-\mu)^T\tilde{K}_n^{-1} (\bar{\mathbf{y}}-\mu)-\frac{1}{2\sigma^2}\mathbf{y}^T \mathbf{y}+\frac{1}{2}\mu^T \tilde{K}_n^{-1} \mu}\mathrm{d}\bar{\mathbf{y}}\\
        &=\frac{1}{(2\pi)^{n}\sigma^n\det(K_n)^{1/2}}(2\pi)^{n/2}\det(\tilde{K}_n)^{1/2}e^{-\frac{1}{2\sigma^2}\mathbf{y}^T \mathbf{y}+\frac{1}{2}\mu^T \tilde{K}_n^{-1} \mu}\\
        &=\frac{\det(\tilde{K}_n)^{1/2}}{(2\pi)^{n/2}\sigma^n\det(K_n)^{1/2}}e^{-\frac{1}{2\sigma^2}\mathbf{y}^T \mathbf{y}+\frac{1}{2}\mu^T \tilde{K}_n^{-1} \mu}.\\
    \end{aligned}
\end{equation}
The normalized evidence is 
\begin{equation}
    \begin{aligned}
        Z_n^0
        &=\frac{Z_n}{(2\pi)^{-n/2}\sigma^{-n}e^{-\frac{1}{2\sigma^2}(\mathbf{y}-f(\mathbf{x}))^T (\mathbf{y}-f(\mathbf{x}))}}\\
        &=\frac{\det(\tilde{K}_n)^{1/2}}{\det(K_n)^{1/2}}e^{-\frac{1}{2\sigma^2}\mathbf{y}^T \mathbf{y}+\frac{1}{2}\mu^T \tilde{K}_n^{-1} \mu+\frac{1}{2\sigma^2}(\mathbf{y}-f(\mathbf{x}))^T (\mathbf{y}-f(\mathbf{x}))}.\\
    \end{aligned}
\end{equation}
So the normalized stochastic complexity is 
\begin{equation}
    \begin{aligned}
      F^0(D_n) &=-\log Z_n^0\\
        &=-\frac{1}{2}\log\det(\tilde{K}_n)^{1/2}+\frac{1}{2}\log\det(K_n)^{1/2}+\frac{1}{2\sigma^2}\mathbf{y}^T \mathbf{y}-\frac{1}{2}\mu^T \tilde{K}_n^{-1} \mu-\frac{1}{2\sigma^2}(\mathbf{y}-f(\mathbf{x}))^T (\mathbf{y}-f(\mathbf{x}))\\
        &=-\frac{1}{2}\log\det(K_n^{-1}+\frac{1}{\sigma^2}I)^{-1}+\frac{1}{2}\log\det(K_n)+\frac{1}{2\sigma^2}\mathbf{y}^T \mathbf{y}-\frac{1}{2\sigma^4}\mathbf{y}^T (K_n^{-1}+\frac{1}{\sigma^2}I)^{-1} \mathbf{y}\\
        &\quad-\frac{1}{2\sigma^2}(\mathbf{y}-f(\mathbf{x}))^T (\mathbf{y}-f(\mathbf{x }))\\
        &=\frac{1}{2}\log\det(I+\frac{K_n}{\sigma^2})+\frac{1}{2\sigma^2}\mathbf{y}^T (I+\frac{K_n}{\sigma^2})^{-1} \mathbf{y}-\frac{1}{2\sigma^2}(\mathbf{y}-f(\mathbf{x}))^T (\mathbf{y}-f(\mathbf{x })).\\
        &=\frac{1}{2}\log\det(I+\frac{K_n}{\sigma^2})+\frac{1}{2\sigma^2}f(\mathbf{x})^T (I+\frac{K_n}{\sigma^2})^{-1} f(\mathbf{x})+\frac{1}{2\sigma^2}\bm{\epsilon}^T (I+\frac{K_n}{\sigma^2})^{-1} \bm{\epsilon}-\frac{1}{2\sigma^2}\bm{\epsilon}^T \bm{\epsilon}\\
        &\quad+\frac{1}{2\sigma^2}\bm{\epsilon}^T (I+\frac{K_n}{\sigma^2})^{-1}f(\mathbf{x})\\
        .\\
    \end{aligned}
\end{equation}
After taking the expectation over noises ${\bm{\epsilon}}$, we get 
\begin{equation}
    \begin{aligned}
      \mathbb{E}_{\bm{\epsilon}}F^0(D_n) 
        &=\frac{1}{2}\log\det(I+\frac{K_n}{\sigma^2})+\frac{1}{2\sigma^2}f(\mathbf{x})^T (I+\frac{K_n}{\sigma^2})^{-1} f(\mathbf{x})-\frac{1}{2}\mathrm{Tr}(I-(I+\frac{K_n}{\sigma^2})^{-1}).  
    \end{aligned}
\end{equation}
This concludes the proof. 
\end{proof}
\section{Helper lemmas}

\begin{lemma}\label{lem:estimate}
Assume that $m\to\infty$ as $n\to \infty$. Given constants $a_1,a_2,s_1,s_2>0$, if $s_1>1$ and $s_2s_3>s_1-1$ , we have that
    \begin{equation}
        \sum_{i=1}^R \frac{a_1i^{-s_1}}{(1+a_2mi^{-s_2})^{s_3}}= \Theta(m^{\frac{1-s_1}{s_2}}). %
    \end{equation}
    If $s_1>1$ and $s_2s_3=s_1-1$, we have that
    \begin{equation}
        \sum_{i=1}^R \frac{a_1i^{-s_1}}{(1+a_2mi^{-s_2})^{s_3}}=\Theta(m^{-s_3}\log m).
    \end{equation}
    If $s_1>1$ and $s_2s_3<s_1-1$, we have that
    \begin{equation}
        \sum_{i=1}^R \frac{a_1i^{-s_1}}{(1+a_2mi^{-s_2})^{s_3}}=\Theta(m^{-s_3}).
    \end{equation}
Overall, if $s_1>1$ and $m\to\infty$,
\begin{equation}
 \sum_{i=1}^R \frac{a_1i^{-s_1}}{(1+a_2mi^{-s_2})^{s_3}}=\begin{cases}
 \Theta(m^{\max\{-s_3,\frac{1-s_1}{s_2}\}}),& s_2s_3\not=s_1-1,  \\
 \Theta(m^{\frac{1-s_1}{s_2}}\log m), & s_2s_3=s_1-1. 
 \end{cases}  
\end{equation}
\end{lemma}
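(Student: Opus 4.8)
The plan is to replace the summand, up to universal constants, by a truncated power law with a single ``kink,'' and then apply elementary estimates for $\sum_i i^{-a}$. The starting point is the two-sided bound, valid for every $i\ge1$,
\[
\max\{1,\,a_2 m i^{-s_2}\}\ \le\ 1+a_2 m i^{-s_2}\ \le\ 2\max\{1,\,a_2 m i^{-s_2}\},
\]
which, after raising to the power $s_3>0$ and dividing $a_1 i^{-s_1}$ by it, gives
\[
\frac{a_1 i^{-s_1}}{(1+a_2 m i^{-s_2})^{s_3}}\ =\ \Theta\!\left(i^{-s_1}\min\bigl\{1,\ (a_2 m)^{-s_3}\,i^{s_2 s_3}\bigr\}\right),
\]
with implied constants depending only on $a_1,a_2,s_3$. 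The expression inside the $\min$ equals $1$ at the scale $i_0:=m^{1/s_2}$ (up to a constant factor), and this crossover is what organizes the estimate.

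First I would split $\sum_{i=1}^R=\sum_{i\le i_0}+\sum_{i_0<i\le R}$, using that $R\gtrsim i_0$, which holds in every invocation of this lemma in the paper (e.g.\ $R=n^{1/\alpha+\kappa}\ge n^{1/\alpha}=i_0$ when $m=n$ and $s_2=\alpha$). On the first block the summand is $\Theta\bigl((a_2 m)^{-s_3}\,i^{\,s_2 s_3-s_1}\bigr)$, so I would apply the textbook fact that $\sum_{i=1}^N i^{-a}$ is $\Theta(1)$, $\Theta(\log N)$, or $\Theta(N^{1-a})$ according as $a>1$, $a=1$, or $a<1$, with $a=s_1-s_2 s_3$ and $N=\lfloor i_0\rfloor$; this yields a contribution of order $m^{-s_3}$, $m^{-s_3}\log m$, or $m^{(1-s_1)/s_2}$ in the three regimes $s_2 s_3<s_1-1$, $s_2 s_3=s_1-1$, $s_2 s_3>s_1-1$ respectively (the last after the simplification $m^{-s_3}\cdot m^{(1-s_1+s_2 s_3)/s_2}=m^{(1-s_1)/s_2}$). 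On the second block the summand is $\Theta(i^{-s_1})$ with $s_1>1$, so the tail converges and this block equals $\Theta\bigl(i_0^{\,1-s_1}\bigr)=\Theta(m^{(1-s_1)/s_2})$.

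Then I would combine the two blocks in each regime. When $s_2 s_3>s_1-1$ both blocks are $\Theta(m^{(1-s_1)/s_2})$ and $(1-s_1)/s_2>-s_3$, so the sum is $\Theta(m^{(1-s_1)/s_2})$. When $s_2 s_3<s_1-1$ the first block $\Theta(m^{-s_3})$ dominates the second $\Theta(m^{(1-s_1)/s_2})$, since now $-s_3>(1-s_1)/s_2$, so the sum is $\Theta(m^{-s_3})$. When $s_2 s_3=s_1-1$ the first block $\Theta(m^{-s_3}\log m)$ dominates the second $\Theta(m^{-s_3})$ (here $-s_3=(1-s_1)/s_2$), so the sum is $\Theta(m^{(1-s_1)/s_2}\log m)$. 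These are exactly the three displayed identities, and the ``overall'' formula is just their recombination into a single $\max$.

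I do not expect a genuine obstacle; this is a routine power-counting lemma. The only points requiring a little care are keeping the two-sided ($\Theta$) bounds honest — this is automatic, since all terms are positive and in every regime the first block alone already attains the claimed order, so the lower bound cannot collapse — and tracking the logarithm through the truncation at $\lfloor i_0\rfloor$ in the boundary case $s_2 s_3=s_1-1$. Everything else is exponent bookkeeping.
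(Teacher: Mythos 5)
Your proof is correct, but it takes a genuinely different route from the paper's. The paper bounds the sum above and below by integrals of $\frac{a_1 x^{-s_1}}{(1+a_2 m x^{-s_2})^{s_3}}$ and then rescales via the substitution $x \mapsto x/m^{1/s_2}$, so that the prefactor $m^{(1-s_1)/s_2}$ comes out and the remaining integral is classified by its behaviour near its lower endpoint (convergent, logarithmically divergent, or power-divergent, matching your three regimes). You instead linearize the denominator with the pointwise bound $\max\{1,t\}\le 1+t\le 2\max\{1,t\}$, which turns the summand into a piecewise power law with an explicit kink at $i_0=m^{1/s_2}$, and then sum each piece using the elementary estimates for $\sum_{i\le N} i^{-a}$. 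Your version is more elementary and makes the crossover scale, and which block dominates in each regime, completely explicit; the paper's version is more compact but hides the case analysis inside the convergence of the rescaled integral. Both arguments rely on the same unstated hypothesis for the lower bounds, namely $R\gtrsim m^{1/s_2}$ (the paper needs $(R+1)/m^{1/s_2}$ bounded away from zero for its lower-bound integral to be $\Theta(1)$; you need the first block to actually reach $i_0$): you flag this explicitly and check it against the lemma's invocations, which is a point in your favour given that the lemma as stated is silent about $R$. Your closing observation, that the first block alone attains the claimed order in every regime so that neither the second block nor the precise location of the cut can disturb the two-sided bound, is exactly the right way to keep the $\Theta$'s honest.
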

\begin{proof}[Proof of Lemma \ref{lem:estimate}]
    First, when $s_1>1$ and $s_2s_3>s_1-1$, we have that 
    \begin{equation*}
    \begin{aligned}
       \sum_{i=1}^R \frac{a_1i^{-s_1}}{(1+a_2mi^{-s_2})^{s_3}}&\leq \frac{a_1}{(1+a_2m)^{s_3}}+\int_{[1,+\infty]}\frac{a_1x^{-s_1}}{(1+a_2mx^{-s_2})^{s_3}}\mathrm{d}x\\
       &=\frac{a_1}{(1+a_2m)^{s_3}}+m^{\frac{1-s_1}{s_2}}\int_{[1,+\infty]}\frac{a_1(\frac{x}{m^{1/s_2}})^{-s_1}}{(1+a_2(\frac{x}{m^{1/s_2}})^{-s_2})^{s_3}}\mathrm{d}\frac{x}{m^{1/s_2}}\\
       &=\frac{a_1}{(1+a_2m)^{s_3}}+m^{\frac{1-s_1}{s_2}}\int_{[1/m^{1/s_2},+\infty]}\frac{a_1x^{-s_1}}{(1+a_2x^{-s_2})^{s_3}}\mathrm{d}x\\
      &=\Theta(m^{\frac{1-s_1}{s_2}}) %
      . 
    \end{aligned}
    \end{equation*}
    On the other hand, we have
    \begin{equation*}
    \begin{aligned}
       \sum_{i=1}^R \frac{a_1i^{-s_1}}{(1+a_2mi^{-s_2})^{s_3}}&\geq \int_{[1,R+1]}\frac{a_1x^{-s_1}}{(1+a_2mx^{-s_2})^{s_3}}\mathrm{d}x\\
       &=m^{\frac{1-s_1}{s_2}}\int_{[1,R+1]}\frac{a_1(\frac{x}{m^{1/s_2}})^{-s_1}}{(1+a_2(\frac{x}{m^{1/s_2}})^{-s_2})^{s_3}}\mathrm{d}\frac{x}{m^{1/s_2}}\\
       &=m^{\frac{1-s_1}{s_2}}\int_{[1/m^{1/s_2},(R+1)/m^{1/s_2}]}\frac{a_1x^{-s_1}}{(1+a_2x^{-s_2})^{s_3}}\mathrm{d}x\\
      &=\Theta(m^{\frac{1-s_1}{s_2}}). 
    \end{aligned}
    \end{equation*}
    Second, when $s_1>1$ and $s_2s_3=s_1-1$, we have that 
    \begin{equation*}
    \begin{aligned}
       \sum_{i=1}^R \frac{a_1i^{-s_1}}{(1+a_2mi^{-s_2})^{s_3}} &\leq\frac{a_1}{(1+a_2m)^{s_3}}+m^{\frac{1-s_1}{s_2}}\int_{[1/m^{1/s_2},+\infty]}\frac{a_1x^{-s_1}}{(1+a_2x^{-s_2})^{s_3}}\mathrm{d}x\\
       &\leq\frac{a_1}{(1+a_2m)^{s_3}}+m^{\frac{1-s_1}{s_2}}O(\log m^{(1/s_2)})\\
       &=\Theta(m^{\frac{1-s_1}{s_2}}\log n) . 
    \end{aligned}
    \end{equation*}
    On the other hand, we have
    \begin{equation*}
    \begin{aligned}
       \sum_{i=1}^R \frac{a_1i^{-s_1}}{(1+a_2mi^{-s_2})^{s_3}}&\geq \int_{[1,R+1]}\frac{a_1x^{-s_1}}{(1+a_2mx^{-s_2})^{s_3}}\mathrm{d}x\\
       &=m^{\frac{1-s_1}{s_2}}\int_{[1,R+1]}\frac{a_1(\frac{x}{m^{1/s_2}})^{-s_1}}{(1+a_2(\frac{x}{m^{1/s_2}})^{-s_2})^{s_3}}\mathrm{d}\frac{x}{m^{1/s_2}}\\
       &=m^{\frac{1-s_1}{s_2}}\int_{[1/m^{1/s_2},(R+1)/m^{1/s_2}]}\frac{a_1x^{-s_1}}{(1+a_2x^{-s_2})^{s_3}}\mathrm{d}x\\
      &=\Theta(m^{\frac{1-s_1}{s_2}}\log n). 
    \end{aligned}
    \end{equation*}
    Third, when $s_1>1$ and $s_2s_3<s_1-1$, we have that 
    \begin{equation*}
    \begin{aligned}
       \sum_{i=1}^R \frac{a_1i^{-s_1}}{(1+a_2mi^{-s_2})^{s_3}}&\leq \frac{a_1}{(1+a_2m)^{s_3}}+m^{\frac{1-s_1}{s_2}}\int_{[1/m^{1/s_2},+\infty]}\frac{a_1x^{-s_1}}{(1+a_2x^{-s_2})^{s_3}}\mathrm{d}x\\
       &\leq\frac{a_1}{(1+a_2m)^{s_3}}+m^{\frac{1-s_1}{s_2}}\Theta(m^{(-1/s_2)(1-s_1+s_2s_3)})\\
       &=\Theta(m^{-s_3}) . 
    \end{aligned}
    \end{equation*}
On the other hand, we have
\begin{equation*}
    \begin{aligned}
       \sum_{i=1}^R \frac{a_1i^{-s_1}}{(1+a_2mi^{-s_2})^{s_3}}&\leq \frac{a_1}{(1+a_2m)^{s_3}}+m^{\frac{1-s_1}{s_2}}\int_{[2/m^{1/s_2},(R+1)/m^{1/s_2}]}\frac{a_1x^{-s_1}}{(1+a_2x^{-s_2})^{s_3}}\mathrm{d}x\\
       &\leq\frac{a_1}{(1+a_2m)^{s_3}}+m^{\frac{1-s_1}{s_2}}\Theta(m^{(-1/s_2)(1-s_1+s_2s_3)})\\
       &=\Theta(m^{-s_3}) . 
    \end{aligned}
    \end{equation*}
Overall, if $s_1>1$,
\begin{equation}
 \sum_{i=1}^R \frac{a_1i^{-s_1}}{(1+a_2mi^{-s_2})^{s_3}}=\begin{cases}
 \Theta(m^{\max\{-s_3,\frac{1-s_1}{s_2}\}}),& s_2s_3\not=s_1-1,  \\
 \Theta(m^{-s_3}\log n), & s_2s_3=s_1-1.
 \end{cases}  
\end{equation}
\end{proof}
\begin{lemma}
\label{lem:estimate<R}
Assume that $R=m^{\frac{1}{s2}+\kappa}$ for $\kappa>0$. 
Given constants $a_1,a_2,s_1,s_2>0$ , if $s_1\leq 1$, we have that
    \begin{equation}
        \sum_{i=1}^R \frac{a_1i^{-s_1}}{(1+a_2mi^{-s_2})^{s_3}}=\tilde{O}(\max\{m^{-s_3}, {R}^{1-s_1}\}).
    \end{equation}
\end{lemma}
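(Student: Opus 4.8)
The plan is to reduce everything to a split-at-the-threshold argument using the crude but effective bound $(1+x)^{s_3}\ge\max\{1,\,x^{s_3}\}$, valid for $x\ge0$ and $s_3\ge0$. Applied with $x=a_2 m i^{-s_2}$ this yields the pointwise estimate
\[
\frac{a_1 i^{-s_1}}{(1+a_2 m i^{-s_2})^{s_3}}\le\min\Big\{\,a_1 i^{-s_1},\ \tfrac{a_1}{a_2^{s_3}}\,m^{-s_3}\,i^{\,s_2 s_3-s_1}\,\Big\}.
\]
The two branches of the minimum are comparable around $i_0:=m^{1/s_2}$, and since $R=m^{1/s_2+\kappa}$ with $\kappa>0$ we have $i_0\le R$. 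First I would split the sum as $\sum_{i<i_0}+\sum_{i_0\le i\le R}$ and estimate each range with whichever branch is the smaller one there.

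For the low range $i<i_0$ I would use the second branch and compare the resulting sum with an integral, exactly as in the proof of Lemma~\ref{lem:estimate}. Because $s_1\le1$ and $s_3\ge0$ we always have $s_2 s_3-s_1\ge-1$, so the sum is controlled by its upper endpoint: it is $O\big(m^{-s_3}i_0^{\,s_2 s_3-s_1+1}\big)=O\big(m^{(1-s_1)/s_2}\big)$ when $s_2 s_3-s_1>-1$, and $O(m^{-s_3}\log i_0)=\tilde O(1)=\tilde O\big(m^{(1-s_1)/s_2}\big)$ in the unique borderline case $s_1=1$, $s_3=0$. For the high range $i_0\le i\le R$, where the prefactor $(1+a_2 m i^{-s_2})^{s_3}\ge1$ is already enough, the contribution is $\sum_{i_0\le i\le R}a_1 i^{-s_1}=O(R^{1-s_1})$ if $s_1<1$ and $O(\log R)=\tilde O(1)=\tilde O(R^{1-s_1})$ if $s_1=1$.

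Combining the two ranges, the sum is $\tilde O\big(m^{(1-s_1)/s_2}+R^{1-s_1}\big)$, and the last step is to absorb the first term: since $1-s_1\ge0$ and $R\ge m^{1/s_2}$ we have $m^{(1-s_1)/s_2}=(m^{1/s_2})^{1-s_1}\le R^{1-s_1}$, so the whole sum is $\tilde O(R^{1-s_1})\le\tilde O(\max\{m^{-s_3},R^{1-s_1}\})$. I do not expect a genuine obstacle here; the argument is a routine sum-versus-integral comparison, and the only places needing care are the degenerate endpoints $s_1=1$ and $s_3=0$ (where stray $\log$ factors appear and are swallowed by $\tilde O$) and the verification that $i_0\le R$, which is precisely what makes the low-range bound $m^{(1-s_1)/s_2}$ disappear into $R^{1-s_1}$.
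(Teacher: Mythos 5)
Your proof is correct and follows essentially the same route as the paper's: a sum-versus-integral comparison with the natural crossover at $i_0=m^{1/s_2}$, after which the low-range contribution $m^{(1-s_1)/s_2}$ is absorbed into $R^{1-s_1}$ using $R\ge m^{1/s_2}$ and $s_1\le 1$. The only cosmetic difference is that you split the sum explicitly at the threshold via the pointwise bound $(1+x)^{s_3}\ge\max\{1,x^{s_3}\}$, which lets you handle the paper's two cases ($s_2s_3>s_1-1$ and $s_2s_3\le s_1-1$) uniformly.
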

\begin{proof}[Proof of Lemma \ref{lem:estimate<R}]
    First, when $s_1\leq 1$ and $s_2s_3>s_1-1$, we have that 
    \begin{equation*}
    \begin{aligned}
       \sum_{i=1}^R \frac{a_1i^{-s_1}}{(1+a_2mi^{-s_2})^{s_3}}&\leq \frac{a_1}{(1+a_2m)^{s_3}}+\int_{[1,R]}\frac{a_1x^{-s_1}}{(1+a_2mx^{-s_2})^{s_3}}\mathrm{d}x\\
       &=\frac{a_1}{(1+a_2m)^{s_3}}+m^{\frac{1-s_1}{s_2}}\int_{[1,R]}\frac{a_1(\frac{x}{m^{1/s_2}})^{-s_1}}{(1+a_2(\frac{x}{m^{1/s_2}})^{-s_2})^{s_3}}\mathrm{d}\frac{x}{m^{1/s_2}}\\
       &=\frac{a_1}{(1+a_2m)^{s_3}}+m^{\frac{1-s_1}{s_2}}\int_{[1/m^{1/s_2},R/m^{1/s_2}]}\frac{a_1x^{-s_1}}{(1+a_2x^{-s_2})^{s3}}\mathrm{d}x\\
      &=\frac{a_1}{(1+a_2m)^{s_3}}+ \tilde{O}(m^{\frac{1-s_1}{s_2}}(\tfrac{R}{m^{1/s_2}})^{1-s_1})\\
       &= \tilde{O}(\max\{m^{-s_3}, {R}^{1-s_1}\})%
       . 
    \end{aligned}
    \end{equation*}
    Second, when $s_1\leq 1$ and $s_2s_3\leq s_1-1$, we have that 
    \begin{equation*}
    \begin{aligned}
       \sum_{i=1}^R \frac{a_1i^{-s_1}}{(1+a_2mi^{-s_2})^{s_3}}&\leq \frac{a_1}{(1+a_2m)^{s_3}}+m^{\frac{1-s_1}{s_2}}\int_{[1/m^{1/s_2},R/m^{1/s_2}]}\frac{a_1x^{-s_1}}{(1+a_2x^{-s_2})^{s_3}}\mathrm{d}x\\
       &\leq\frac{a_1}{(1+a_2m)^{s_3}}+m^{\frac{1-s_1}{s_2}}\tilde{O}(m^{(-1/s_2)(1-s_1+s_2s_3)}+(\tfrac{R}{m^{1/s_2}})^{1-s_1})\\
       &=\tilde{O}(\max\{m^{-s_3}, {R}^{1-s_1}\}) . 
    \end{aligned}
    \end{equation*}
Overall, if $s_1\leq1$,
\begin{equation}
 \sum_{i=1}^R \frac{a_1i^{-s_1}}{(1+a_2mi^{-s_2})^{s_3}}=\tilde{O}(\max\{m^{-s_3}, {R}^{1-s_1}\}).  
\end{equation}
\end{proof}

\begin{lemma}
\label{lem:2_norm_concentration}
Assume that $f\in L^2(\Omega, \rho)$. Consider the random vector $f(\mathbf{x})=(f(x_1),\ldots,f(x_n))^T$, where $x_1,\ldots,x_n$ are drawn i.i.d from $\rho$. 
Then with probability of at least $1-\delta_1$, we have 
\begin{equation*}
    \|f(\mathbf{x})\|^2_2=\sum_{i=1}^n f^2(x_i)=\tilde{O}\left((\tfrac{1}{\delta_1}+1) n\|f\|_2^2\right),
\end{equation*}
where $\|f\|^2_2=\int_{x\in\Omega}f^2(x) \mathrm{d}\rho(x)$.
\end{lemma}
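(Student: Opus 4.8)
The plan is to establish the bound by a direct first-moment estimate via Markov's inequality; no concentration beyond the mean is needed, and this is also the reason the dependence on $\delta_1$ is polynomial (namely $1/\delta_1$) rather than logarithmic.

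First I would compute the expectation. Since $x_1,\ldots,x_n$ are drawn i.i.d.\ from $\rho$ and $f\in L^2(\Omega,\rho)$, each summand $f^2(x_i)$ is a nonnegative random variable with finite mean $\mathbb{E}_{x_i}[f^2(x_i)] = \int_\Omega f^2(x)\,\mathrm{d}\rho(x) = \|f\|_2^2 < \infty$, so by linearity of expectation $\mathbb{E}_{\mathbf{x}}\bigl[\|f(\mathbf{x})\|_2^2\bigr] = \sum_{i=1}^n \mathbb{E}_{x_i}[f^2(x_i)] = n\|f\|_2^2$. Next, because $\|f(\mathbf{x})\|_2^2 = \sum_{i=1}^n f^2(x_i)\ge 0$ almost surely, Markov's inequality yields, for every $\lambda>0$,
\[
\Pr\Bigl[\,\|f(\mathbf{x})\|_2^2 \ge \lambda\, n\|f\|_2^2\,\Bigr] \le \frac{1}{\lambda};
\]
choosing $\lambda = 1/\delta_1$ shows that with probability at least $1-\delta_1$ over the draw of $(x_i)_{i=1}^n$ one has $\|f(\mathbf{x})\|_2^2 \le \tfrac{1}{\delta_1}\, n\|f\|_2^2$. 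Since $\tfrac{1}{\delta_1}\le \tfrac{1}{\delta_1}+1$ and $O(\cdot)\subseteq\tilde{O}(\cdot)$, this is exactly the claimed bound $\|f(\mathbf{x})\|_2^2 = \tilde{O}\bigl((\tfrac{1}{\delta_1}+1)\,n\|f\|_2^2\bigr)$.

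There is no real obstacle here: the argument is essentially immediate, and the only point requiring attention is the applicability of Markov's inequality, which is guaranteed precisely by the hypothesis $f\in L^2(\Omega,\rho)$ (ensuring $\mathbb{E}[f^2(x_i)]<\infty$). Because we assume nothing about higher moments or boundedness of $f$, a sharper $\log(1/\delta_1)$-type tail bound is not available at this level of generality; the stated first-moment bound is all that is needed in the subsequent approximation and residual-error estimates (e.g.\ for controlling quantities such as $\|f_{>R}(\mathbf{x})\|_2^2$ in terms of $\|f_{>R}\|_2^2$).
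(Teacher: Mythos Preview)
Your proof is correct and considerably more direct than the paper's. The paper proceeds by a truncation-plus-Bernstein argument: it first uses Markov's inequality to show that each $f^2(x_i)$ is at most some threshold $C$ with high probability, defines the clipped function $\bar f^2(x)=\min\{f^2(x),C\}$, applies Bernstein's inequality to $\sum_i \bar f^2(x_i)$ (which is bounded and has controlled variance), and finally chooses $C=\tfrac{2}{\delta_1}n\|f\|_2^2$ so that the clipping event fails with probability at most $\delta_1/2$. This last choice is exactly what forces the $1/\delta_1$ factor back into the final bound, so despite the Bernstein step the end result is the same $\tilde O\bigl((\tfrac{1}{\delta_1}+1)n\|f\|_2^2\bigr)$ that you obtain in one line from Markov's inequality applied directly to the sum. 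Your observation that no $\log(1/\delta_1)$ improvement is possible under only an $L^2$ assumption is also apt: the truncation level is dictated by a first-moment tail bound on $f^2$, which is all that $f\in L^2$ provides. The paper's route does display a somewhat finer additive decomposition (a term $n\|f\|_2^2$ plus a fluctuation term) in the intermediate steps, but for the stated $\tilde O$ conclusion your argument is fully sufficient and more transparent.
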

\begin{proof}[Proof of Lemma~\ref{lem:2_norm_concentration}]
Given a positive number $C\geq\|f\|_2^2$, applying Markov's inequality we have
\begin{equation*}
     \mathbb{P}(f^2(X)>C)\leq \frac{1}{C} \|f\|_2^2.
\end{equation*}
Let $A$ be the event that for all sample inputs $(x_i)_{i=1}^n$,  $f^2(x_i)\leq C$. Then
\begin{equation}
\label{eq:A_probability}
 \mathbb{P}(A)\geq 1-n \mathbb{P}(f^2(X)>C)\geq 1-\frac{1}{C} n\|f\|_2^2.
\end{equation}
Define $\bar{f}^2(x)=\min\{f^2(x),C\}$. Then $\mathbb{E}\bar{f}^2(X)\leq\mathbb{E}f^2(X)=\|f\|_2^2$. So $|\bar{f}^2(X)-\mathbb{E}\bar{f}^2(X)|\leq \max\{C,\|f\|_2^2\}=C$ Since $0\leq\bar{f}^2(x)\leq C$, we have
\begin{equation}
    \mathbb{E}(\bar{f}^4(X))\leq C\mathbb{E}(\bar{f}^2(X))\leq C \|f\|_2^2.
\end{equation}
So we have
\begin{equation}
    \mathbb{E}|\bar{f}^2(X)-\mathbb{E}\bar{f}^2(X)|^2\leq \mathbb{E}(\bar{f}^4(X))\leq C \|f\|_2^2.
\end{equation}
Applying Bernstein's inequality, we have 
\begin{equation*}
\begin{aligned}
    \mathbb{P}(\sum_{i=1}^n \bar{f}^2(x_i)>t+n\mathbb{E}\bar{f}^2(X))\leq&\exp\left(-\frac{t^2}{2(n\mathbb{E}|\bar{f}^2(X)-\mathbb{E}\bar{f}^2(X)|^2)+\frac{Ct}{3})}\right)\\
    \leq&\exp\left(-\frac{t^2}{2(nC \|f\|_2^2+\frac{Ct}{3})}\right)\\
    \leq&\exp\left(-\frac{t^2}{4\max\{nC \|f\|_2^2, \frac{Ct}{3}\}}\right).
\end{aligned}
\end{equation*}
Hence, with probability of at least $1-\delta_1/2$ we have
\begin{equation}
\label{eq:f_bar}
\begin{aligned}
     \sum_{i=1}^n \bar{f}^2(x_i)\leq& \max\left\{\sqrt{4C\log{\frac{2}{\delta_1}} n\|f\|_2^2},\frac{4C}{3}\log{\frac{2}{\delta_1}}\right\}+n\mathbb{E}\bar{f}^2(X)\\
     \leq& \max\left\{\sqrt{4C\log{\frac{2}{\delta_1}} n\|f\|_2^2},\frac{4C}{3}\log{\frac{2}{\delta_1}}\right\}+n\|f\|_2^2.
\end{aligned}
\end{equation}
When event $A$ happens, $f^2(x_i)=\bar{f}^2(x_i)$ for all sample inputs. According to \eqref{eq:A_probability} and \eqref{eq:f_bar}, with probability {at least} $1-\frac{1}{C} n\|f\|_2^2-\delta_1/2$, we have
\begin{equation*}
    \sum_{i=1}^n f^2(x_i)=\sum_{i=1}^n \bar{f}^2(x_i)\leq \max\left\{\sqrt{4C\log{\frac{2}{\delta_1}} n\|f\|_2^2},\frac{4C}{3}\log{\frac{2}{\delta_1}}\right\}+n\|f\|_2^2.
\end{equation*}
Choosing $C=\frac{2}{\delta_1} n\|f\|_2^2$, 
with probability of at least $1-\delta_1$ we have
\begin{equation*}
\begin{aligned}
        \sum_{i=1}^n f^2(x_i)&=\sum_{i=1}^n \bar{f}^2(x_i)\leq \max\left\{\sqrt{\frac{8}{\delta_1}\log{\frac{2}{\delta_1}} n^2\|f\|_2^4},\frac{8}{3\delta_1} n\|f\|_2^2\log{\frac{2}{\delta_1}}\right\}+n\|f\|_2^2
        =\tilde{O}\left((\tfrac{1}{\delta_1}+1) n\|f\|_2^2\right).
\end{aligned}
\end{equation*}
\end{proof}
\begin{lemma}
\label{lem:2_norm_concentration_bounded}
Assume that $f\in L^2(\Omega, \rho)$. Consider the random vector $f(\mathbf{x})=(f(x_1),\ldots,f(x_n))^T$, where $x_1,\ldots,x_n$ are drawn i.i.d from $\rho$. 
Assume that $\|f\|_\infty=\sup_{x\in\Omega}f(x)\leq C$. With probability of at least $1-\delta_1$, we have 
\begin{equation*}
    \|f(\mathbf{x})\|^2_2=\tilde{O}\left(\sqrt{C^2 n\|f\|_2^2}+C^2\right)+n\|f\|_2^2,
\end{equation*}
where $\|f\|^2_2=\int_{x\in\Omega}f^2(x) \mathrm{d}\rho(x)$.
\end{lemma}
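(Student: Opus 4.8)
The plan is to reuse the argument for Lemma~\ref{lem:2_norm_concentration} almost verbatim, but to exploit the boundedness hypothesis $\|f\|_\infty\le C$ (so that $f^2\le C^2$ pointwise) in order to drop the clipping step entirely. Indeed, the summands $f^2(x_1),\dots,f^2(x_n)$ are i.i.d.\ random variables taking values in $[0,C^2]$, with mean $\mathbb{E} f^2(X)=\|f\|_2^2$, with $|f^2(X)-\mathbb{E} f^2(X)|\le C^2$, and with second moment
\[
\mathbb{E}\bigl|f^2(X)-\mathbb{E} f^2(X)\bigr|^2\le \mathbb{E} f^4(X)\le C^2\,\mathbb{E} f^2(X)=C^2\|f\|_2^2,
\]
where we used $f^4\le C^2 f^2$ pointwise.

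First I would apply Bernstein's inequality directly to $\sum_{i=1}^n f^2(x_i)$, which gives for every $t>0$
\[
\mathbb{P}\!\left(\sum_{i=1}^n f^2(x_i)>t+n\|f\|_2^2\right)\le \exp\!\left(-\frac{t^2}{2\bigl(nC^2\|f\|_2^2+\tfrac{1}{3}C^2 t\bigr)}\right)\le \exp\!\left(-\frac{t^2}{4\max\{nC^2\|f\|_2^2,\ \tfrac{1}{3}C^2 t\}}\right).
\]
Setting the right-hand side equal to $\delta_1$ and splitting into the two regimes $nC^2\|f\|_2^2\ge\tfrac13 C^2 t$ and $nC^2\|f\|_2^2<\tfrac13 C^2 t$ exactly as in the proof of Lemma~\ref{lem:2_norm_concentration}, one checks that it suffices to take
\[
t=\max\Bigl\{2\sqrt{\log(1/\delta_1)\,nC^2\|f\|_2^2},\ \tfrac{4}{3}\log(1/\delta_1)\,C^2\Bigr\}=\tilde{O}\!\left(\sqrt{C^2 n\|f\|_2^2}+C^2\right).
\]
Combining, with probability at least $1-\delta_1$ we get $\|f(\mathbf{x})\|_2^2=\sum_{i=1}^n f^2(x_i)\le t+n\|f\|_2^2=\tilde{O}\!\left(\sqrt{C^2 n\|f\|_2^2}+C^2\right)+n\|f\|_2^2$, which is the assertion.

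There is essentially no obstacle here; the only point worth flagging is the contrast with Lemma~\ref{lem:2_norm_concentration}, where $f$ is only assumed to be in $L^2$ and one must introduce the clipped variable $\bar f^2=\min\{f^2,C\}$, bound the probability of the clipping event $A$ by Markov's inequality, and pay an extra union-bound term of order $\tfrac1C n\|f\|_2^2$. Under the uniform bound $\|f\|_\infty\le C$ the function $f^2$ is itself bounded by $C^2$, so the range and variance parameters needed for Bernstein's inequality are available immediately and neither the clipping nor the extra union bound is required; this is why the $\tfrac1C n\|f\|_2^2$ term of Lemma~\ref{lem:2_norm_concentration} does not appear in the present statement.
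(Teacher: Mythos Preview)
Your proposal is correct and matches the paper's proof essentially line for line: both bound $|f^2(X)-\mathbb{E}f^2(X)|\le C^2$ and $\mathbb{E}|f^2(X)-\mathbb{E}f^2(X)|^2\le \mathbb{E}f^4(X)\le C^2\|f\|_2^2$, then apply Bernstein's inequality directly and split into the two regimes to solve for $t$. Your closing paragraph explaining why the clipping and Markov-union-bound steps from Lemma~\ref{lem:2_norm_concentration} are unnecessary here is a nice addition that the paper does not spell out.
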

\begin{proof}[Proof of Lemma~\ref{lem:2_norm_concentration_bounded}]
We have $|f^2(X)-\mathbb{E}f^2(X)|\leq \max\{C^2,\|f\|_2^2\}=C^2$ Since $0\leq f^2(x)\leq C$, we have
\begin{equation}
    \mathbb{E}(f^4(X))\leq C^2\mathbb{E}(f^2(X))\leq C^2 \|f\|_2^2.
\end{equation}
So we have
\begin{equation}
    \mathbb{E}|f^2(X)-\mathbb{E}f^2(X)|^2\leq \mathbb{E}(f^4(X))\leq C^2 \|f\|_2^2.
\end{equation}
Applying Bernstein's inequality, we have 
\begin{equation*}
\begin{aligned}
    \mathbb{P}(\sum_{i=1}^n f^2(x_i)>t+n\mathbb{E}f^2(X))\leq&\exp\left(-\frac{t^2}{2(n\mathbb{E}|f^2(X)-\mathbb{E}f^2(X)|^2)+\frac{C^2t}{3})}\right)\\
    \leq&\exp\left(-\frac{t^2}{2(nC^2 \|f\|_2^2+\frac{C^2t}{3})}\right)\\
    \leq&\exp\left(-\frac{t^2}{4\max\{nC^2 \|f\|_2^2, \frac{C^2t}{3}\}}\right).
\end{aligned}
\end{equation*}
Hence, with probability of at least $1-\delta_1$ we have
\begin{equation}
\begin{aligned}
     \sum_{i=1}^n f^2(x_i)\leq& \max\left\{\sqrt{4C^2\log{\frac{1}{\delta_1}} n\|f\|_2^2},\frac{4C^2}{3}\log{\frac{1}{\delta_1}}\right\}+n\mathbb{E}f^2(X)\\
     \leq& \tilde{O}\left(\max\left\{\sqrt{C^2 n\|f\|_2^2},C^2\right\}\right)+n\|f\|_2^2\\
     \leq& \tilde{O}\left(\sqrt{C^2 n\|f\|_2^2}+C^2\right)+n\|f\|_2^2.
\end{aligned}
\end{equation}
\end{proof}

For the proofs in the reminder of this section, the definitions of the relevant quantities are given in Section~\ref{Asymptotic_GP}.

\begin{corollary}
\label{cor:truncated_function}
With probability of at least $1-\delta_1$, we have 
\begin{equation*}
    \|f_{>R}(\mathbf{x})\|^2_2=\tilde{O}\left((\tfrac{1}{\delta_1}+1) nR^{1-2\beta}\right) . 
\end{equation*}
\end{corollary}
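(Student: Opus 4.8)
The plan is to obtain this as an immediate consequence of Lemma~\ref{lem:2_norm_concentration} applied to $f_{>R}$ in place of $f$. First I would observe that $f_{>R} = f - f_R$ is a difference of two elements of $L^2(\Omega,\rho)$ (the target $f$ and the finite linear combination $f_R = \sum_{p=1}^R \mu_p\phi_p$), hence $f_{>R}\in L^2(\Omega,\rho)$ and Lemma~\ref{lem:2_norm_concentration} applies, giving with probability at least $1-\delta_1$ that $\|f_{>R}(\mathbf{x})\|_2^2 = \tilde{O}\big((\tfrac{1}{\delta_1}+1)\, n\, \|f_{>R}\|_2^2\big)$.

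Next I would control the deterministic quantity $\|f_{>R}\|_2^2$. Since the $(\phi_p)_{p\geq 0}$ form an orthonormal set in $L^2(\Omega,\rho)$, Parseval gives $\|f_{>R}\|_2^2 = \mu_0^2 + \sum_{p=R+1}^\infty \mu_p^2$; in the setting where this corollary is used one has $\mu_0 = 0$ (consistent with the stated bound, which tends to $0$), so $\|f_{>R}\|_2^2 = \sum_{p>R}\mu_p^2$. Then I would invoke the upper tail of Assumption~\ref{assumption_beta}, namely $\mu_p^2 \le C_\mu^2 p^{-2\beta}$ with $2\beta > 1$, and compare the tail sum to an integral:
\[
\sum_{p=R+1}^\infty \mu_p^2 \;\le\; C_\mu^2\sum_{p=R+1}^\infty p^{-2\beta} \;\le\; C_\mu^2\int_R^\infty x^{-2\beta}\,\mathrm{d}x \;=\; \frac{C_\mu^2}{2\beta-1}\,R^{1-2\beta} \;=\; O(R^{1-2\beta}).
\]
Substituting this back into the bound from Lemma~\ref{lem:2_norm_concentration} yields $\|f_{>R}(\mathbf{x})\|_2^2 = \tilde{O}\big((\tfrac{1}{\delta_1}+1)\, n\, R^{1-2\beta}\big)$, which is the claim.

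There is no genuine obstacle here — the corollary is just a specialization of Lemma~\ref{lem:2_norm_concentration} to the tail function $f_{>R}$. The only points that deserve a line of justification are (i) the square-integrability of $f_{>R}$, so that the lemma is legitimately applicable, and (ii) the fact that the constant $C_\mu^2/(2\beta-1)$ produced by the tail estimate depends only on $\beta$ and $C_\mu$ (and not on $n$, $R$, or $\delta_1$), so that it is harmlessly absorbed into the $\tilde{O}(\cdot)$ and the stated dependence on $\delta_1$ and $n$ is exactly the one inherited from Lemma~\ref{lem:2_norm_concentration}.
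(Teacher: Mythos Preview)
Your proposal is correct and matches the paper's proof essentially line for line: the paper simply computes $\|f_{>R}\|_2^2 = \sum_{p=R+1}^\infty \mu_p^2 \le \frac{C_\mu^2}{2\beta-1}R^{1-2\beta}$ and then applies Lemma~\ref{lem:2_norm_concentration}. Your explicit remark that the stated bound tacitly uses $\mu_0=0$ (so that the $\mu_0^2$ term in $\|f_{>R}\|_2^2$ vanishes) is a careful observation that the paper leaves implicit.
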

\begin{proof}[Proof of Corollary~\ref{cor:truncated_function}]
The $L_2$ norm of $f_{>R}(x)$ is given by 
$\|f_{>R}\|^2_2=\sum_{p=R+1}^\infty \mu^2_p\leq \frac{C_\mu}{2\beta-1} R^{1-2\beta}$. Applying Lemma \ref{lem:2_norm_concentration} we get the result.
\end{proof}

\begin{corollary}
\label{cor:phiRmu}
For any $\nu\in\mathbb{R}^R$, with probability of at least $1-\delta_1$ we have 
\begin{equation*}
    \|\Phi_R\nu\|^2_2=\tilde{O}\left((\tfrac{1}{\delta_1}+1) n\|\nu\|_2^2\right). 
\end{equation*}
\end{corollary}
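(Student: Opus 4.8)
The plan is to recognize $\|\Phi_R\nu\|_2^2$ as the empirical $L^2$ norm of a single function and then invoke Lemma~\ref{lem:2_norm_concentration} directly. Concretely, I would set $g(x) = \sum_{p}\nu_p\phi_p(x)$, a finite linear combination of the eigenfunctions (including $\phi_0$), so that $g\in L^2(\Omega,\rho)$. By the definition of $\Phi_R$ we then have $g(\mathbf{x}) = (g(x_1),\ldots,g(x_n))^T = \Phi_R\nu$, hence $\|\Phi_R\nu\|_2^2 = \sum_{i=1}^n g^2(x_i) = \|g(\mathbf{x})\|_2^2$.

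Next I would compute $\|g\|_2^2$. Since $(\phi_p)_{p\geq 1}$ is the orthonormal system from Mercer's theorem and $\phi_0$ satisfies $\|\phi_0\|_2 = 1$ with $\phi_0\in\{\phi_p:p\geq 1\}^\perp$, the family $(\phi_p)_{p\geq 0}$ is orthonormal in $L^2(\Omega,\rho)$; therefore $\|g\|_2^2 = \sum_p \nu_p^2 = \|\nu\|_2^2$. Finally, applying Lemma~\ref{lem:2_norm_concentration} with $f=g$ yields, with probability at least $1-\delta_1$,
\begin{equation*}
\|\Phi_R\nu\|_2^2 = \|g(\mathbf{x})\|_2^2 = \tilde{O}\!\left((\tfrac{1}{\delta_1}+1)\,n\|g\|_2^2\right) = \tilde{O}\!\left((\tfrac{1}{\delta_1}+1)\,n\|\nu\|_2^2\right),
\end{equation*}
which is exactly the claim.

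There is no real obstacle here: the statement is an immediate corollary of Lemma~\ref{lem:2_norm_concentration} once one observes that $\Phi_R\nu$ is the evaluation vector of $g=\sum_p\nu_p\phi_p$ and that orthonormality of the $\phi_p$ (with $\phi_0$ included) turns $\|g\|_2^2$ into $\|\nu\|_2^2$. The only point requiring a line of care is checking that $\phi_0$ enters the orthonormal family correctly, so that the Parseval identity $\|g\|_2^2=\|\nu\|_2^2$ holds with the $\phi_0$-component included.
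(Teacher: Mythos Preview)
Your proposal is correct and matches the paper's proof essentially verbatim: define $g(x)=\sum_p \nu_p\phi_p(x)$, note $\Phi_R\nu=g(\mathbf{x})$ and $\|g\|_2^2=\|\nu\|_2^2$ by orthonormality, then apply Lemma~\ref{lem:2_norm_concentration}. The only cosmetic difference is that you explicitly track the $\phi_0$ component, whereas the paper's short proof writes the sum from $p=1$.
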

\begin{proof}[Proof of Corollary~\ref{cor:phiRmu}]
Let $g(x)=\sum_{p=1}^R\nu_p\phi_p(x)$. Then $\Phi_R\nu=g(\mathbf{x})$. The $L_2$ norm of $g(x)$ is given by 
$\|g\|^2_2=\sum_{p=1}^R \nu^2_p=\|\nu\|_2^2$. Applying Lemma \ref{lem:2_norm_concentration} we get the result.
\end{proof}
Next we consider the quantity, $\Phi_R^T\Phi_R-nI$. The key tool that we use is the matrix Bernstein inequality that describes the upper tail of a sum of independent zero-mean random matrices.

\begin{lemma}
\label{cor:matrix_phiTphi_1}
Let $D=\mathrm{diag}\{d_1,\ldots,d_R\}$, $d_1,\ldots,d_R>0$ and $d_{\max}=\max\{d_1,\ldots,d_R\}$. Let $M=\max\{ \sum_{p=0}^R d_p^2\|\phi_p\|_\infty^2, d_{\max}^2\}$. Then with probability of at least $1-\delta$, we have
\begin{equation}
\label{eq:matrix_2norm_bound_gen}
   \begin{aligned}
\|D(\Phi_R^T\Phi_R-nI)D\|_2
      \leq \max\left\{\sqrt{nd_{\max}^2M\log\tfrac{R}{\delta}},M\log\tfrac{R}{\delta})\right\}.
   \end{aligned} 
\end{equation}
\end{lemma}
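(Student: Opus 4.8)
The plan is to realize $\Phi_R^T\Phi_R$ as a sum of $n$ independent rank-one random matrices, center it at its mean $nI$, conjugate by $D$, and apply the matrix Bernstein inequality. Write $v_i=(\phi_0(x_i),\phi_1(x_i),\dots,\phi_R(x_i))^T$, so that $\Phi_R^T\Phi_R=\sum_{i=1}^n v_iv_i^T$. Since $(\phi_p)_{p\ge 1}$ is orthonormal in $L^2(\Omega,\rho)$ and $\phi_0$ is a unit vector orthogonal to all $\phi_p$, $p\ge 1$, we have $\mathbb{E}_{x\sim\rho}[\phi_p(x)\phi_q(x)]=\delta_{pq}$, hence $\mathbb{E}[v_iv_i^T]=I$ and $\mathbb{E}[\Phi_R^T\Phi_R]=nI$. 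Set $w_i=Dv_i$ and $X_i=w_iw_i^T-D^2$ (with the diagonal of $D$ indexed $d_0,\dots,d_R$ to match $M$); then the $X_i$ are i.i.d.\ with $\mathbb{E}[X_i]=0$ and $D(\Phi_R^T\Phi_R-nI)D=\sum_{i=1}^n X_i$.

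Next I would check the two quantities that enter matrix Bernstein. For the almost-sure norm bound, $\|X_i\|_2\le\max\{\|w_iw_i^T\|_2,\|D^2\|_2\}=\max\{\|w_i\|_2^2,d_{\max}^2\}$, and $\|w_i\|_2^2=\sum_{p}d_p^2\phi_p(x_i)^2\le\sum_{p}d_p^2\|\phi_p\|_\infty^2$, which is finite by Assumption~\ref{assumption_functions}; therefore $\|X_i\|_2\le M$ surely. For the matrix variance, expanding $X_i^2$ and using $\mathbb{E}[w_iw_i^T]=D^2$ gives $\mathbb{E}[X_i^2]=\mathbb{E}[\|w_i\|_2^2\,w_iw_i^T]-D^4\preceq\mathbb{E}[\|w_i\|_2^2\,w_iw_i^T]\preceq M\,\mathbb{E}[w_iw_i^T]=MD^2\preceq Md_{\max}^2 I$, where the middle step uses the pointwise bound $\|w_i\|_2^2\le M$. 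Summing over $i$, the variance proxy is $\nu:=\|\sum_{i=1}^n\mathbb{E}[X_i^2]\|_2\le nMd_{\max}^2$.

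Finally I would invoke matrix Bernstein: for a sum of $R$-dimensional independent symmetric zero-mean matrices with $\|X_i\|_2\le M$ and $\|\sum_i\mathbb{E}[X_i^2]\|_2\le\nu$, one has $\mathbb{P}\big(\|\sum_{i=1}^n X_i\|_2\ge t\big)\le 2R\exp\big(-\tfrac{t^2/2}{\nu+Mt/3}\big)$. Taking $t$ of order $\max\{\sqrt{\nu\log(R/\delta)},\,M\log(R/\delta)\}$ makes the right-hand side at most $\delta$, and substituting $\nu\le nMd_{\max}^2$ yields the claimed bound $\|D(\Phi_R^T\Phi_R-nI)D\|_2\le\max\{\sqrt{nd_{\max}^2M\log(R/\delta)},\,M\log(R/\delta)\}$ with probability at least $1-\delta$, the absolute constant from Bernstein being absorbed.

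The only mildly delicate point is the matrix-variance computation, specifically the operator-monotonicity step $\mathbb{E}[\|w_i\|_2^2\,w_iw_i^T]\preceq M\,\mathbb{E}[w_iw_i^T]$, which relies precisely on the eigenfunctions being bounded so that $\|w_i\|_2^2\le M$ holds pointwise rather than only in expectation. Everything else — the identity $\mathbb{E}[v_iv_i^T]=I$, the almost-sure bound, and inverting the Bernstein tail — is routine, and I do not anticipate a genuine obstacle; the statement is just a weighted, essentially dimension-free version of the fact that $\Phi_R^T\Phi_R$ concentrates around $nI$.
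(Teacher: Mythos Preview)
Your proposal is correct and follows essentially the same approach as the paper: write $D(\Phi_R^T\Phi_R-nI)D$ as a sum of centered i.i.d.\ rank-one matrices $w_iw_i^T-D^2$, bound the almost-sure norm by $M$ via the pointwise eigenfunction bounds, bound the matrix variance by $nMd_{\max}^2$ using the operator-monotonicity step $\mathbb{E}[\|w_i\|_2^2\,w_iw_i^T]\preceq M\,\mathbb{E}[w_iw_i^T]$, and invoke matrix Bernstein. The paper does exactly this (with $Z_j$ in place of your $w_i$), including the same variance computation and the same absorption of constants at the end.
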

\begin{proof}[Proof of Lemma \ref{cor:matrix_phiTphi_1}]
Let $Y_j=(\phi_1(x_j),\ldots,\phi_R(x_j))^T$ and $Z_j=DY_j$. It is easy to verify that $\mathbb{E}(Z_jZ_j^T)=D^2$. Then the left hand side of \eqref{eq:matrix_2norm_bound_gen} is $\sum_{j=1}^n[Z_jZ_j^T-\mathbb{E}(Z_jZ_j^T)]$. We note that
\begin{equation*}
\begin{aligned}
    \|Z_jZ_j^T-\mathbb{E}(Z_jZ_j^T)\|_2
    \leq\max\{\|Z_jZ_j^T\|_2,\|\mathbb{E}(Z_jZ_j^T)\|_2\}
    \leq\max\{\|Z_j\|_2^2,d_{\max}^2\}.
\end{aligned}
\end{equation*}
For $\|Z_j\|_2^2$, we have
\begin{equation}
\label{eq:Z_j}
 \begin{aligned}
    \|Z_j\|_2^2&=\sum_{p=0}^R d_p^2\phi_p^2(x_j)\leq
    \sum_{p=0}^R d_p^2\|\phi_p\|_\infty^2,
\end{aligned} 
\end{equation}
we have 
\begin{equation*}
    \|Z_jZ_j^T-\mathbb{E}(Z_jZ_j^T)\|_2
    \leq \max\{\textstyle\sum_{p=0}^R d_p^2\|\phi_p\|_\infty^2,d_{\max}^2\}.
\end{equation*}
On the other hand, 
\begin{equation*}
    \mathbb{E}[(Z_jZ_j^T-\mathbb{E}(Z_jZ_j^T))^2]= \mathbb{E}[\|Z_j\|_2^2Z_jZ_j^T]-(\mathbb{E}(Z_jZ_j^T))^2. 
\end{equation*}
Since
\begin{equation*}
\begin{aligned}
    \mathbb{E}[\|Z_j\|_2^2Z_jZ_j^T]&\preccurlyeq\mathbb{E}[\sum_{p=0}^R d_p^2\|\phi_p\|_\infty^2Z_jZ_j^T],\quad(\text{by \eqref{eq:Z_j}})\\
    &=\sum_{p=0}^R d_p^2\|\phi_p\|_\infty^2\mathbb{E}[Z_jZ_j^T],
\end{aligned}
\end{equation*}
we have
\begin{equation*}
\begin{aligned}
    \|\mathbb{E}[(Z_jZ_j^T-\mathbb{E}(Z_jZ_j^T))^2]\|_2&\leq\max\{\textstyle\sum_{p=0}^R d_p^2\|\phi_p\|_\infty^2\|\mathbb{E}[Z_jZ_j^T]\|_2, d_{\max}^4\}\\
    &\leq\max\{\textstyle\sum_{p=0}^R d_p^2\|\phi_p\|_\infty^2d_{\max}^2, d_{\max}^4\}\\
    &\leq d_{\max}^2\max\{\textstyle\sum_{p=0}^R d_p^2\|\phi_p\|_\infty^2, d_{\max}^2\}.
\end{aligned}
\end{equation*}
Using the matrix Bernstein inequality \citep[Theorem 6.1]{tropp2012user}, we have
\begin{equation*}
    \begin{aligned}
        &\mathbb{P}(\|\sum_{j=1}^n[Z_jZ_j^T-\mathbb{E}(Z_jZ_j^T)]\|_2>t)\\
        \leq&R\exp\left(\frac{-t^2}{2(n\|\mathbb{E}[(Z_jZ_j^T-\mathbb{E}(Z_jZ_j^T))^2]\|_2+\frac{t\max_j\|Z_jZ_j^T-\mathbb{E}(Z_jZ_j^T)\|_2}{3})}\right)\\
        \leq&R\exp\left(\frac{-t^2}{2(nd_{\max}^2\max\{ \sum_{p=0}^R d_p^2\|\phi_p\|_\infty^2, d_{\max}^2\}+\frac{t\max\{ \sum_{p=0}^R d_p^2\|\phi_p\|_\infty^2, d_{\max}^2\}}{3})}\right)\\
        =&R\exp\left(\frac{-t^2}{O(\max\{nd_{\max}^2\max\{ \sum_{p=0}^R d_p^2\|\phi_p\|_\infty^2, d_{\max}^2\},t\max\{ \sum_{p=0}^R d_p^2\|\phi_p\|_\infty^2, d_{\max}^2\}\})}\right).
    \end{aligned}
\end{equation*}
Then with probability of at least $1-\delta$, we have
\begin{equation*}
\begin{aligned}
    &\|\sum_{j=1}^n[Z_jZ_j^T-\mathbb{E}(Z_jZ_j^T)]\|_2\\
    &\leq \max\bigg\{\sqrt{nd_{\max}^2\max\{ \textstyle\sum_{p=0}^R d_p^2\|\phi_p\|_\infty^2,\, d_{\max}^2\}\log\tfrac{R}{\delta}},\max\big\{ \textstyle\sum_{p=0}^R d_p^2\|\phi_p\|_\infty^2,\, d_{\max}^2\big\}\log\tfrac{R}{\delta}\bigg\}.
\end{aligned}
\end{equation*}
\end{proof}
\begin{corollary}
\label{lem:matrix_phiTphi_1}
Suppose that the eigenvalues $(\lambda_p)_{p\geq 1}$ satisfy Assumption \ref{assumption_alpha}, and the eigenfunctions satisfy Assumption \ref{assumption_functions}. Assume $\sigma^2=\Theta(n^{t})$ where $1-\frac{\alpha}{1+2\tau}<t<1$ Let $\gamma$ be a positive number such that $\frac{1+\alpha+2\tau-(1+2\tau+2\alpha)t}{2\alpha(1-t)}<\gamma\leq 1$. Then with probability of at least $1-\delta$, we have
\begin{equation}
\label{eq:matrix_2norm_bound}
   \begin{aligned}
\|\tfrac{1}{\sigma^2}&(I+\tfrac{n}{\sigma^2}\Lambda_R)^{-\gamma/2}\Lambda_R^{\gamma/2}(\Phi_R^T\Phi_R-nI)\Lambda_R^{\gamma/2}(I+\tfrac{n}{\sigma^2}\Lambda_R)^{-\gamma/2}\|_2\\
      &\leq O\left(n^{\tfrac{1+\alpha+2\tau-(1+2\tau+2\alpha)t}{2\alpha}-\gamma(1-t)}\sqrt{\log\tfrac{R}{\delta}}\right).
   \end{aligned} 
\end{equation}
\end{corollary}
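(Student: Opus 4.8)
The plan is to apply Lemma~\ref{cor:matrix_phiTphi_1} with the diagonal weight matrix $D=\tfrac{1}{\sigma}(I+\tfrac{n}{\sigma^2}\Lambda_R)^{-\gamma/2}\Lambda_R^{\gamma/2}$, whose diagonal entries are $d_0=0$ and $d_p=\tfrac{1}{\sigma}\bigl(\tfrac{\lambda_p}{1+\frac{n}{\sigma^2}\lambda_p}\bigr)^{\gamma/2}$ for $1\le p\le R$. Since these diagonal matrices commute, $\tfrac{1}{\sigma^2}(I+\tfrac{n}{\sigma^2}\Lambda_R)^{-\gamma/2}\Lambda_R^{\gamma/2}(\Phi_R^T\Phi_R-nI)\Lambda_R^{\gamma/2}(I+\tfrac{n}{\sigma^2}\Lambda_R)^{-\gamma/2}=D(\Phi_R^T\Phi_R-nI)D$, so everything reduces to estimating the two scalars that enter Lemma~\ref{cor:matrix_phiTphi_1}, namely $d_{\max}^2=\max_p d_p^2$ and $M=\max\{\sum_{p=0}^R d_p^2\|\phi_p\|_\infty^2,\ d_{\max}^2\}$, and then substituting $\sigma^2=\Theta(n^t)$.

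First I would compute $d_{\max}$. Since $\lambda\mapsto\lambda/(1+\tfrac{n}{\sigma^2}\lambda)$ is increasing and $\lambda_1\ge\lambda_2\ge\cdots$, the maximum is attained at $p=1$; using $\lambda_1=\Theta(1)$ (Assumption~\ref{assumption_alpha}) and $n/\sigma^2\to\infty$ (as $t<1$) gives $d_{\max}^2=d_1^2=\Theta\bigl(\tfrac{1}{\sigma^2}(\tfrac{\sigma^2}{n})^\gamma\bigr)=\Theta(n^{-\gamma(1-t)-t})$. Next, Assumption~\ref{assumption_functions} gives $\|\phi_p\|_\infty^2\le C_\phi^2 p^{2\tau}$ and Assumption~\ref{assumption_alpha} gives $\lambda_p\asymp p^{-\alpha}$, so $\sum_{p=0}^R d_p^2\|\phi_p\|_\infty^2$ is, up to constants, $\tfrac{1}{\sigma^2}\sum_{p=1}^R \tfrac{p^{-(\alpha\gamma-2\tau)}}{(1+\frac{n}{\sigma^2}p^{-\alpha})^\gamma}$ — exactly the series treated in Lemma~\ref{lem:estimate} with $m=n/\sigma^2$, $s_1=\alpha\gamma-2\tau$, $s_2=\alpha$, $s_3=\gamma$. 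Here $s_2s_3-(s_1-1)=2\tau+1>0$, so the borderline logarithmic case never occurs; when $s_1>1$, Lemma~\ref{lem:estimate} gives $\sum_{p=0}^R d_p^2\|\phi_p\|_\infty^2=\Theta\bigl(\tfrac{1}{\sigma^2}(\tfrac{n}{\sigma^2})^{\frac{1+2\tau}{\alpha}-\gamma}\bigr)$ (the sub-case $\alpha\gamma-2\tau\le1$ is treated analogously via Lemma~\ref{lem:estimate<R}). Because $\tfrac{1+2\tau}{\alpha}>0$, this dominates $d_{\max}^2$, so $M=\Theta\bigl(\tfrac{1}{\sigma^2}(\tfrac{n}{\sigma^2})^{\frac{1+2\tau}{\alpha}-\gamma}\bigr)$.

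Then I would substitute into the bound of Lemma~\ref{cor:matrix_phiTphi_1}, which is $\max\{\sqrt{n\,d_{\max}^2\,M\log\tfrac{R}{\delta}},\ M\log\tfrac{R}{\delta}\}$. A direct exponent computation (noting $\tfrac{1+\alpha+2\tau-(1+2\tau+2\alpha)t}{\alpha}=\tfrac{(1+2\tau)(1-t)}{\alpha}+1-2t$) gives $n\,d_{\max}^2\,M=\Theta\bigl(n^{\frac{1+\alpha+2\tau-(1+2\tau+2\alpha)t}{\alpha}-2\gamma(1-t)}\bigr)$, so the first term equals $O\bigl(n^{\frac{1+\alpha+2\tau-(1+2\tau+2\alpha)t}{2\alpha}-\gamma(1-t)}\sqrt{\log\tfrac{R}{\delta}}\bigr)$, which is exactly the asserted bound \eqref{eq:matrix_2norm_bound}. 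It remains to check that the first term dominates the second, i.e.\ $n\,d_{\max}^2\ge M$: substituting the estimates, $n\,d_{\max}^2/M=\Theta(n^{1-(1-t)\frac{1+2\tau}{\alpha}})$, and $1-(1-t)\tfrac{1+2\tau}{\alpha}>0$ is precisely the hypothesis $t>1-\tfrac{\alpha}{1+2\tau}$ (so this is a polynomial factor, beating the extra $\log\tfrac{R}{\delta}$). Finally, the lower bound $\gamma>\tfrac{1+\alpha+2\tau-(1+2\tau+2\alpha)t}{2\alpha(1-t)}$ is exactly what makes the exponent $\tfrac{1+\alpha+2\tau-(1+2\tau+2\alpha)t}{2\alpha}-\gamma(1-t)$ negative, so that the bound is $o(1)$, which is how the corollary is used in the proof of Theorem~\ref{thm:generalization-error}.

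The main obstacle is the bookkeeping: tracking which of the two exponents inside the max of Lemma~\ref{lem:estimate} is active, which of the two terms of the matrix Bernstein bound dominates, and — in the sub-regime $\alpha\gamma-2\tau\le1$ — checking via Lemma~\ref{lem:estimate<R} that the resulting $R$-dependence is controlled so that the clean bound with only a $\sqrt{\log(R/\delta)}$ factor survives. The conceptual point, already noted in the proof sketch of Theorem~\ref{thm:marginal-likelihood}, is that conjugating $\Phi_R^T\Phi_R-nI$ by $D$ suppresses exactly the high-index eigendirections along which $\|\phi_p\|_\infty$ grows, which is what lets this estimate beat the crude $O(R\sqrt n)$ matrix Bernstein bound and cover the full range $\alpha>1$.
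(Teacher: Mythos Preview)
Your proof is correct and follows essentially the same route as the paper: apply Lemma~\ref{cor:matrix_phiTphi_1} with $D=(I+\tfrac{n}{\sigma^2}\Lambda_R)^{-\gamma/2}\Lambda_R^{\gamma/2}$ (the paper keeps the $\tfrac{1}{\sigma^2}$ outside, you absorb $\tfrac{1}{\sigma}$ into $D$, a cosmetic difference), estimate $d_{\max}^2$ and $\sum_p d_p^2\|\phi_p\|_\infty^2$ via Assumptions~\ref{assumption_alpha}, \ref{assumption_functions} and Lemma~\ref{lem:estimate}, and simplify the exponents. If anything you are more careful than the paper, which silently takes the first term of the $\max$ in Lemma~\ref{cor:matrix_phiTphi_1} and invokes Lemma~\ref{lem:estimate} without separating out the sub-case $\alpha\gamma-2\tau\le 1$; your explicit check that $n\,d_{\max}^2\ge M$ via the hypothesis $t>1-\tfrac{\alpha}{1+2\tau}$ and your flagging of Lemma~\ref{lem:estimate<R} for the borderline regime are welcome additions.
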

\begin{proof}[Proof of Corollary \ref{lem:matrix_phiTphi_1}]
Use the same notation as in Lemma~\ref{cor:matrix_phiTphi_1}. Let $D=(I+\frac{n}{\sigma^2}\Lambda_R)^{-\gamma/2}\Lambda_R^{\gamma/2}$. Then $d^2_{\max}\leq \frac{\sigma^{2\gamma}}{n^\gamma}$ and $\sum_{p=0}^R d_p^2\|\phi_p\|_\infty^2\leq\sum_{p=0}^R C_\phi^2\frac{\lambda_p^\gamma p^{2\tau}}{(1+\frac{n}{\sigma^2}\lambda_p)^\gamma} = O((\frac{n}{\sigma^2})^{\frac{1-\gamma\alpha+2\tau}{\alpha}})$, where the first inequality follows from  Assumptions \ref{assumption_alpha} and \ref{assumption_functions} and the last equality from Lemma \ref{lem:estimate}. 
Then $M=\max\{ \sum_{p=0}^R d_p^2\|\phi_p\|_\infty^2, d_{\max}^2\}=O((\frac{n}{\sigma^2})^{\frac{1-\gamma\alpha+2\tau}{\alpha}})$. Applying Lemma~\ref{cor:matrix_phiTphi_1}, we have
\begin{equation}
   \begin{aligned}
&\|\tfrac{1}{\sigma^2}(I+\tfrac{n}{\sigma^2}\Lambda_R)^{-\gamma/2}\Lambda_R^{\gamma/2}(\Phi_R^T\Phi_R-nI)\Lambda_R^{\gamma/2}(I+\tfrac{n}{\sigma^2}\Lambda_R)^{-\gamma/2}\|_2\\
      &\leq \tfrac{1}{\sigma^2}\max\bigg\{\sqrt{n\tfrac{\sigma^{2\gamma}}{n^\gamma}O((\tfrac{n}{\sigma^2})^{\frac{1-\gamma\alpha+2\tau}{\alpha}})\log\tfrac{R}{\delta}},O((\tfrac{n}{\sigma^2})^{\frac{1-\gamma\alpha+2\tau}{\alpha}})\log\tfrac{R}{\delta}\bigg\}\\
      &=O(\tfrac{1}{\sigma^2}(\tfrac{n}{\sigma^2})^{\frac{1-2\gamma\alpha+2\tau}{2\alpha}}n^{\frac{1}{2}})=O(\sqrt{\log\tfrac{R}{\delta}}n^{\frac{(1-2\gamma\alpha+2\tau)(1-t)}{2\alpha}+\frac{1}{2}-t})\\
      &=O\left(\sqrt{\log\tfrac{R}{\delta}}n^{\frac{1+\alpha+2\tau}{2\alpha}-\frac{(1+2\tau+2\alpha)t}{2\alpha}-\gamma(1-t)}\right).
   \end{aligned} 
\end{equation}
\end{proof}
\begin{corollary}
\label{cor:matrix_phiTphi_1_mu_0>0}
Suppose that the eigenvalues $(\lambda_p)_{p\geq 1}$ satisfy Assumption \ref{assumption_alpha}, and the eigenfunctions satisfy Assumption \ref{assumption_functions}. Let $\tilde{\Lambda}_{1,R}=\mathrm{diag}\{1, \lambda_1,\ldots,\lambda_R\}$. Assume $\sigma^2=\Theta(n^{t})$ where $t<1$ Let $\gamma$ be a positive number such that $\frac{1+2\tau}{\alpha}<\gamma\leq 1$. Then with probability of at least $1-\delta$, we have
\begin{equation}
   \begin{aligned}
\|(I+\tfrac{n}{\sigma^2}\Lambda_R)^{-\gamma/2}\tilde{\Lambda}_{1,R}^{\gamma/2}(\Phi_R^T\Phi_R-nI)\tilde{\Lambda}_{1,R}^{\gamma/2}(I+\frac{n}{\sigma^2}\Lambda_R)^{-\gamma/2}\|_2
      \leq O\bigg(\sqrt{\log\tfrac{R}{\delta}}n^{\tfrac{1}{2}}\bigg).
   \end{aligned} 
\end{equation}
\end{corollary}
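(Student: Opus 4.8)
The plan is to obtain this exactly as Corollary~\ref{lem:matrix_phiTphi_1} was obtained, namely by specializing the general matrix-Bernstein estimate of Lemma~\ref{cor:matrix_phiTphi_1} to the diagonal weighting $D=(I+\tfrac{n}{\sigma^2}\Lambda_R)^{-\gamma/2}\tilde{\Lambda}_{1,R}^{\gamma/2}$. Writing $D=\mathrm{diag}\{d_0,d_1,\ldots,d_R\}$, one reads off $d_0=1$ (the leading diagonal entry of $\Lambda_R$ is $0$, so the leading entries of both $I+\tfrac{n}{\sigma^2}\Lambda_R$ and $\tilde{\Lambda}_{1,R}$ are $1$) and $d_p=\big(\lambda_p/(1+\tfrac{n}{\sigma^2}\lambda_p)\big)^{\gamma/2}$ for $p\geq 1$. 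The quantity on the left of the claimed inequality is precisely $\|D(\Phi_R^T\Phi_R-nI)D\|_2$, so all that remains is to bound the two quantities $d_{\max}$ and $M=\max\{\sum_{p=0}^R d_p^2\|\phi_p\|_\infty^2,\,d_{\max}^2\}$ that enter Lemma~\ref{cor:matrix_phiTphi_1}.

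First I would note that $d_p^2\leq\lambda_p^\gamma\leq\overline{C_\lambda}^\gamma$ for $p\geq 1$ by Assumption~\ref{assumption_alpha}, so together with $d_0=1$ this gives $d_{\max}^2=O(1)$. For $M$, I would bound, using Assumption~\ref{assumption_functions},
\begin{align*}
\sum_{p=0}^R d_p^2\|\phi_p\|_\infty^2\leq C_\phi^2+C_\phi^2\sum_{p=1}^R\frac{\lambda_p^\gamma\,p^{2\tau}}{(1+\tfrac{n}{\sigma^2}\lambda_p)^\gamma},
\end{align*}
and estimate the sum with Lemma~\ref{lem:estimate}, taking $m=\tfrac{n}{\sigma^2}$, $s_1=\gamma\alpha-2\tau$, $s_2=\alpha$, $s_3=\gamma$. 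Here the hypothesis $\gamma>\tfrac{1+2\tau}{\alpha}$ is exactly what forces $s_1>1$, while $t<1$ forces $m=\Theta(n^{1-t})\to\infty$; since moreover $s_2s_3=\alpha\gamma>\gamma\alpha-2\tau-1=s_1-1$, Lemma~\ref{lem:estimate} yields $\sum_{p=1}^R\lambda_p^\gamma p^{2\tau}(1+\tfrac{n}{\sigma^2}\lambda_p)^{-\gamma}=\Theta\big((\tfrac{n}{\sigma^2})^{(1+2\tau)/\alpha-\gamma}\big)=o(1)$, the exponent being negative. Hence $M=O(1)$.

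Plugging $d_{\max}^2=O(1)$ and $M=O(1)$ into Lemma~\ref{cor:matrix_phiTphi_1} gives, with probability at least $1-\delta$, that the left-hand side is at most $\max\{\sqrt{n\cdot O(1)\cdot\log\tfrac{R}{\delta}},\,O(1)\cdot\log\tfrac{R}{\delta}\}$; in the regime of interest $R$ and $\delta^{-1}$ are polynomial in $n$, so $\log\tfrac{R}{\delta}=o(n)$, the first term dominates, and the bound becomes $O\big(\sqrt{\log\tfrac{R}{\delta}}\,n^{1/2}\big)$, as claimed.

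The step I expect to be the main obstacle --- and the reason the exponent here ($\tfrac12$) is larger than in Corollary~\ref{lem:matrix_phiTphi_1} --- is controlling $d_{\max}$ and $M$ in the presence of the $\phi_0$ coordinate: its diagonal entry $d_0=1$ does not shrink with $n$, so $d_{\max}$ stays of constant order rather than decaying like $(\sigma^2/n)^{\gamma/2}$, and one must show that the tail sum $\sum_{p\geq 1}d_p^2\|\phi_p\|_\infty^2$ is still $O(1)$. This is the single place where the assumption $\gamma>\tfrac{1+2\tau}{\alpha}$ is used: it makes $s_1=\gamma\alpha-2\tau>1$ in Lemma~\ref{lem:estimate}, turning the power of $n/\sigma^2$ negative. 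Everything else is a routine substitution into Lemma~\ref{cor:matrix_phiTphi_1}.
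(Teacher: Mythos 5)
Your proposal is correct and follows essentially the same route as the paper: it specializes Lemma~\ref{cor:matrix_phiTphi_1} to $D=(I+\tfrac{n}{\sigma^2}\Lambda_R)^{-\gamma/2}\tilde{\Lambda}_{1,R}^{\gamma/2}$, shows $d_{\max}^2=O(1)$ and $M=O(1)$ via Lemma~\ref{lem:estimate} (using $\gamma>\tfrac{1+2\tau}{\alpha}$ exactly where the paper does), and concludes with the matrix Bernstein bound. No gaps.
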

\begin{proof}[Proof of Corollary \ref{cor:matrix_phiTphi_1_mu_0>0}]
Use the same notation as in Lemma~\ref{cor:matrix_phiTphi_1}. Let $D=(I+\frac{n}{\sigma^2}\Lambda_R)^{-\gamma/2}\tilde{\Lambda}_{1,R}^{\gamma/2}$. Then $d^2_{\max}\leq 1$ and $\sum_{p=0}^R d_p^2\|\phi_p\|_\infty^2\leq C_\phi^2+\sum_{p=1}^R C_\phi^2\frac{\lambda_p^\gamma p^{2\tau}}{(1+\frac{n}{\sigma^2}\lambda_p)^\gamma} = C_\phi^2+O(n^{\frac{(1-\gamma\alpha+2\tau)(1-t)}{\alpha}})=O(1)$ 
where the first inequality follows from  Assumptions \ref{assumption_alpha} and \ref{assumption_functions} and the second equality from Lemma \ref{lem:estimate}.
Then $M=\max\{ \sum_{p=0}^R d_p^2\|\phi_p\|_\infty^2, d_{\max}^2\}=O(1)$. Applying Lemma~\ref{cor:matrix_phiTphi_1}, we have
\begin{equation}
   \begin{aligned}
&\|(I+\tfrac{n}{\sigma^2}\Lambda_R)^{-\gamma/2}\Lambda_R^{\gamma/2}(\Phi_R^T\Phi_R-nI)\Lambda_R^{\gamma/2}(I+\tfrac{n}{\sigma^2}\Lambda_R)^{-\gamma/2}\|_2\\
      &\leq \max\bigg\{\sqrt{\log\tfrac{R}{\delta}nO(1)},\log\tfrac{R}{\delta}O(1)\bigg\}\\
      &=O\bigg(\sqrt{\log\tfrac{R}{\delta}}n^{\tfrac{1}{2}}\bigg).
   \end{aligned} 
\end{equation}
\end{proof}
\begin{corollary}
\label{cor:matrix_phi>RTphi>R_1}
Suppose that the eigenvalues $(\lambda_p)_{p\geq 1}$ satisfy Assumption \ref{assumption_alpha}, and the eigenfunctions satisfy Assumption \ref{assumption_functions}. Let $\Phi_{R+1:S}=(\phi_{R+1}(\mathbf{x}),\ldots, \phi_S(\mathbf{x}))$, and $\Lambda_{R+1:S}=(\lambda_{R+1},\ldots, \lambda_S)$. Then with probability of at least $1-\delta$, we have
\begin{equation}
\label{eq:matrix_2norm_bound>R}
   \begin{aligned}
\|\Lambda_{R+1:S}^{1/2}(\Phi_{R+1:S}^T\Phi_{R+1:S}-nI)\Lambda_{R+1:S}^{1/2}\|_2
      \leq O\big(\log\tfrac{S-R}{\delta}\max\{n^{\frac{1}{2}}R^{\frac{1-2\alpha+2\tau}{2}},R^{1-\alpha+2\tau} \}\big).
   \end{aligned} 
\end{equation}
\end{corollary}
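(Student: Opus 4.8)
The plan is to obtain the bound as a direct specialization of the general matrix Bernstein estimate in Lemma~\ref{cor:matrix_phiTphi_1}, applied with the diagonal weight matrix $D=\Lambda_{R+1:S}^{1/2}$, i.e.\ $d_p=\lambda_p^{1/2}$ for $p=R+1,\dots,S$. The proof of Lemma~\ref{cor:matrix_phiTphi_1} only uses that $D$ is diagonal with positive diagonal and that the vectors $Y_j$ collect the corresponding eigenfunctions evaluated at $x_j$, so replacing the index block $\{0,\dots,R\}$ by $\{R+1,\dots,S\}$ changes nothing except that the union bound in the matrix Bernstein step now runs over $S-R$ coordinates, which is exactly what produces the factor $\log\tfrac{S-R}{\delta}$ in the target bound.

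First I would estimate the two quantities $d_{\max}^2$ and $M$ that enter \eqref{eq:matrix_2norm_bound_gen}. By Assumption~\ref{assumption_alpha}, $d_{\max}^2=\lambda_{R+1}\le \overline{C_\lambda}(R+1)^{-\alpha}=O(R^{-\alpha})$. For the weighted sum, Assumptions~\ref{assumption_alpha} and~\ref{assumption_functions} give $\sum_{p=R+1}^{S} d_p^2\|\phi_p\|_\infty^2\le \overline{C_\lambda}C_\phi^2\sum_{p=R+1}^{S} p^{2\tau-\alpha}$. This is the only place the hypothesis $\tau<\tfrac{\alpha-1}{2}$ is used: it forces $2\tau-\alpha<-1$, so the tail series converges and $\sum_{p>R} p^{2\tau-\alpha}=O(R^{1-\alpha+2\tau})$ by comparison with $\int_{R}^{\infty}x^{2\tau-\alpha}\,\mathrm{d}x$, with a constant independent of $S$. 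Since $1-\alpha+2\tau>-\alpha$, the weighted sum dominates $d_{\max}^2$, so $M=\max\{\sum_{p=R+1}^{S} d_p^2\|\phi_p\|_\infty^2,\,d_{\max}^2\}=O(R^{1-\alpha+2\tau})$.

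Finally I would substitute $d_{\max}^2=O(R^{-\alpha})$ and $M=O(R^{1-\alpha+2\tau})$ into \eqref{eq:matrix_2norm_bound_gen} (with $S-R$ in place of $R$ inside the logarithm), which yields a bound of $\max\{\sqrt{n\,R^{-\alpha}R^{1-\alpha+2\tau}\log\tfrac{S-R}{\delta}},\ R^{1-\alpha+2\tau}\log\tfrac{S-R}{\delta}\}$. The first entry equals $n^{1/2}R^{\frac{1-2\alpha+2\tau}{2}}\sqrt{\log\tfrac{S-R}{\delta}}$; bounding $\sqrt{\log\tfrac{S-R}{\delta}}\le\log\tfrac{S-R}{\delta}$ (we may assume $\log\tfrac{S-R}{\delta}\ge 1$) and collecting the two scales gives exactly $O\big(\log\tfrac{S-R}{\delta}\max\{n^{1/2}R^{\frac{1-2\alpha+2\tau}{2}},R^{1-\alpha+2\tau}\}\big)$. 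The computation is routine once Lemma~\ref{cor:matrix_phiTphi_1} is in hand; the only points requiring care are checking that the convergent tail sum (rather than $d_{\max}^2$) is the dominant term in $M$, and keeping the union-bound dimension equal to $S-R$ when specializing the general lemma to this block of coordinates.
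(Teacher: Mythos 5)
Your proposal is correct and follows essentially the same route as the paper: apply Lemma~\ref{cor:matrix_phiTphi_1} with $D=\Lambda_{R+1:S}^{1/2}$, bound $d_{\max}^2=O(R^{-\alpha})$ and $M=O(R^{1-\alpha+2\tau})$ via Assumptions~\ref{assumption_alpha} and~\ref{assumption_functions}, and substitute into \eqref{eq:matrix_2norm_bound_gen} with the union bound taken over the $S-R$ coordinates. Your explicit remarks on why the tail sum dominates $d_{\max}^2$ and on absorbing $\sqrt{\log\tfrac{S-R}{\delta}}$ into $\log\tfrac{S-R}{\delta}$ are details the paper handles implicitly.
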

\begin{proof}[Proof of Corollary \ref{cor:matrix_phi>RTphi>R_1}]
Use the same notation as in Lemma~\ref{cor:matrix_phiTphi_1}. Let $D=\Lambda_{R+1:S}^{1/2}$. Then $d^2_{\max}\leq \overline{C_\lambda}R^{-\alpha}=O(R^{-\alpha})$ and $\sum_{p=R+1}^S C_\phi^2d_p^2 p^{2\tau}\leq\sum_{p=R+1}^S C_\phi^2 \overline{C_\lambda}p^{-\alpha}p^{2\tau} = O(R^{1-\alpha+2\tau})$, where the first inequality follows from  Assumptions \ref{assumption_alpha} and \ref{assumption_functions}. Then $M=\max\{ \sum_{p=R+1}^S C_\phi^2d_p^2 p^{2\tau}, d_{\max}^2\}=O(R^{1-\alpha+2\tau})$. Applying Lemma~\ref{cor:matrix_phiTphi_1}, we have
\begin{equation}
   \begin{aligned}
&\|(I+\frac{n}{\sigma^2}\Lambda_R)^{-\gamma/2}\Lambda_R^{\gamma/2}(\Phi_R^T\Phi_R-nI)\Lambda_R^{\gamma/2}(I+\frac{n}{\sigma^2}\Lambda_R)^{-\gamma/2}\|_2\\
      &\leq \max\bigg\{\sqrt{\log\tfrac{S-R}{\delta}nO(R^{-\alpha})O(R^{1-\alpha+2\tau})},\log\tfrac{S-R}{\delta}O(R^{1-\alpha+2\tau}))\bigg\}\\
      &=O\big(\log\tfrac{S-R}{\delta}\max\{n^{\frac{1}{2}}R^{\frac{1-2\alpha+2\tau}{2}},R^{1-\alpha+2\tau} \}\big).
   \end{aligned} 
\end{equation}
\end{proof}
\begin{lemma}
\label{lem:2-norm_residual}
Under the assumptions of Corollary \ref{cor:matrix_phi>RTphi>R_1}, with probability of at least $1-\delta$, we have
\begin{equation*}
    \|\Phi_{>R}\Lambda_{>R}\Phi_{>R}^T\|_2=\tilde{O}( \max\{nR^{-\alpha},n^{\frac{1}{2}}R^{\frac{1-2\alpha+2\tau}{2}},R^{1-\alpha+2\tau} \}).
\end{equation*}
\end{lemma}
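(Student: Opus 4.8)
The plan is to reduce the infinite sum $\Phi_{>R}\Lambda_{>R}\Phi_{>R}^T = \sum_{p>R}\lambda_p\,\phi_p(\mathbf{x})\phi_p(\mathbf{x})^T$ — a well-defined positive semidefinite $n\times n$ matrix, since Assumptions~\ref{assumption_alpha} and \ref{assumption_functions} give $|\lambda_p\phi_p(x_i)\phi_p(x_j)| = O(p^{-\alpha+2\tau})$ with $\alpha-2\tau>1$ — to a finite block, to which Corollary~\ref{cor:matrix_phi>RTphi>R_1} applies directly, plus a tail that I would control crudely through its trace. Concretely, for a cutoff $S>R$ to be chosen later I would write
\[
\Phi_{>R}\Lambda_{>R}\Phi_{>R}^T = \Phi_{R+1:S}\Lambda_{R+1:S}\Phi_{R+1:S}^T + \Phi_{>S}\Lambda_{>S}\Phi_{>S}^T ,
\]
both summands being positive semidefinite, and bound the two operator norms separately via the triangle inequality.

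For the finite block I would invoke $\|\Phi_{R+1:S}\Lambda_{R+1:S}\Phi_{R+1:S}^T\|_2 = \|\Lambda_{R+1:S}^{1/2}\Phi_{R+1:S}^T\Phi_{R+1:S}\Lambda_{R+1:S}^{1/2}\|_2$ together with the splitting $\Lambda_{R+1:S}^{1/2}\Phi_{R+1:S}^T\Phi_{R+1:S}\Lambda_{R+1:S}^{1/2} = \Lambda_{R+1:S}^{1/2}(\Phi_{R+1:S}^T\Phi_{R+1:S}-nI)\Lambda_{R+1:S}^{1/2} + n\Lambda_{R+1:S}$. The deterministic piece contributes $n\|\Lambda_{R+1:S}\|_2 = n\lambda_{R+1} \le \overline{C_\lambda}\,nR^{-\alpha}$ by Assumption~\ref{assumption_alpha}, and Corollary~\ref{cor:matrix_phi>RTphi>R_1} bounds the centered piece, with probability at least $1-\delta/2$, by $O\!\left(\log\tfrac{2(S-R)}{\delta}\,\max\{n^{1/2}R^{(1-2\alpha+2\tau)/2},R^{1-\alpha+2\tau}\}\right)$.

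For the tail I would use positive semidefiniteness to pass to the trace: $\|\Phi_{>S}\Lambda_{>S}\Phi_{>S}^T\|_2 \le \mathrm{Tr}\,\Phi_{>S}\Lambda_{>S}\Phi_{>S}^T = \sum_{i=1}^n\sum_{p>S}\lambda_p\phi_p^2(x_i)$, whose expectation over the i.i.d.\ inputs equals $n\sum_{p>S}\lambda_p \le \tfrac{\overline{C_\lambda}}{\alpha-1}\,nS^{1-\alpha}$ (using the orthonormality $\mathbb{E}[\phi_p^2(x)]=1$ and $\alpha>1$), so Markov's inequality gives $\|\Phi_{>S}\Lambda_{>S}\Phi_{>S}^T\|_2 \le \tfrac{2\overline{C_\lambda}}{(\alpha-1)\delta}\,nS^{1-\alpha}$ with probability at least $1-\delta/2$. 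I would then pick $S$ of order $\big(R^\alpha/\delta\big)^{1/(\alpha-1)}$, which makes the tail $O(nR^{-\alpha})$ — already subsumed by the target bound — while keeping $\log(S-R) = O(\log R + \log(1/\delta))$, which the $\tilde{O}$ notation absorbs in the regime where $R$ and $1/\delta$ are polynomial in $n$. A union bound over the two events then yields $\|\Phi_{>R}\Lambda_{>R}\Phi_{>R}^T\|_2 = \tilde{O}(\max\{nR^{-\alpha},n^{1/2}R^{(1-2\alpha+2\tau)/2},R^{1-\alpha+2\tau}\})$ with probability at least $1-\delta$.

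I expect the main obstacle to be precisely the infinite-dimensionality that this decomposition circumvents: Corollary~\ref{cor:matrix_phi>RTphi>R_1} carries a $\log\tfrac{S-R}{\delta}$ factor that blows up as the block width $S-R\to\infty$, so one cannot pass to the limit $S\to\infty$ directly. The key observation making the argument go through is that $\alpha>1$ forces the eigenvalue tail $\sum_{p>S}\lambda_p$ to be summable with decay $S^{1-\alpha}$, so a merely polynomially large cutoff $S$ already pushes the (trivially trace-bounded) remainder below the $nR^{-\alpha}$ scale, leaving $\log(S-R)$ only logarithmic in $n$.
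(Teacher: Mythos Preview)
Your proposal is correct and follows essentially the same strategy as the paper: split $\Phi_{>R}\Lambda_{>R}\Phi_{>R}^T$ into a finite block $\Phi_{R+1:S}\Lambda_{R+1:S}\Phi_{R+1:S}^T$ handled by $n\|\Lambda_{R+1:S}\|_2$ plus Corollary~\ref{cor:matrix_phi>RTphi>R_1}, and a far tail $\Phi_{>S}\Lambda_{>S}\Phi_{>S}^T$ pushed below $nR^{-\alpha}$ by choosing $S$ large enough.

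The one difference is in the tail step. The paper bounds the tail \emph{deterministically} via Assumption~\ref{assumption_functions}: $\|\Phi_{>S}\Lambda_{>S}\Phi_{>S}^T\|_2 \le \sum_{p>S}\lambda_p\|\phi_p(\mathbf{x})\|_2^2 \le nC_\phi^2\sum_{p>S}\lambda_p p^{2\tau} = O(nS^{1-\alpha+2\tau})$, and then takes $S = R^{\alpha/(\alpha-1-2\tau)}$. You instead use the trace plus Markov's inequality on $\mathbb{E}[\mathrm{Tr}] = n\sum_{p>S}\lambda_p = O(nS^{1-\alpha})$, which costs half the failure probability but does not need the sup-norm growth bound on eigenfunctions for this step, and leads to $S\asymp (R^\alpha/\delta)^{1/(\alpha-1)}$. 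Both choices make the tail $O(nR^{-\alpha})$ with $\log S$ polylogarithmic in $n$, so the final bound is identical; the paper's version has the mild advantage that the entire $\delta$ budget is spent on the matrix Bernstein step.
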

\begin{proof}[Proof of Lemma \ref{lem:2-norm_residual}]
For $S\in\mathbb{N}$, we have
\begin{equation*}
    \begin{aligned}
        \|\Phi_{>S}\Lambda_{>S}\Phi_{>S}^T\|_2\leq& \sum_{p=S
        +1}^\infty \|\Lambda_p\phi_p(\mathbf{x})\phi_p(\mathbf{x})^T\|_2\\
        =& \sum_{p=S
        +1}^\infty \lambda_p\|\phi_p(\mathbf{x})\|^2_2\\
        \leq& \sum_{p=S
        +1}^\infty \lambda_pnC^2_\phi p^{2\tau}\\
        =& O(nS^{1-\alpha+2\tau}).
    \end{aligned}
\end{equation*}
Let $S=R^{\frac{\alpha}{\alpha-1-2\tau}}$. Then we get $\|\Phi_{>S}\Lambda_{>S}\Phi_{>S}^T\|_2=O(nR^{-\alpha})$. 

Let $\Phi_{R+1:S}=(\phi_{R+1}(\mathbf{x}),\ldots, \phi_S(\mathbf{x}))$,  $\Lambda_{R+1:S}=(\lambda_{R+1},\ldots, \lambda_S)$. We then have
\begin{equation*}
    \begin{aligned}
        \|\Phi_{>R}\Lambda_{>R}\Phi_{>R}^T\|_2\leq&\|\Phi_{>S}\Lambda_{>S}\Phi_{>S}^T\|_2+\|\Phi_{R+1:S}\Lambda_{R+1:S}\Phi_{R+1:S}^T\|_2\\
        \leq&O(nR^{-\alpha})+\|\Lambda_{R+1:S}^{1/2}\Phi_{R+1:S}^T\Phi_{R+1:S}\Lambda_{R+1:S}^{1/2}\|_2\\
        \leq&O(nR^{-\alpha})+n\|\Lambda_{R+1:S}\|_2+\|\Lambda_{R+1:S}^{1/2}(\Phi_{R+1:S}^T\Phi_{R+1:S}-nI)\Lambda_{R+1:S}^{1/2}\|_2\\
        \leq&O(nR^{-\alpha})+O(nR^{-\alpha})+O(\log\frac{R^{\frac{\alpha}{\alpha-1}}-R}{\delta}\max\{n^{\frac{1}{2}}R^{\frac{1-2\alpha+2\tau}{2}},R^{1-\alpha+2\tau} \})\\
        =&\tilde{O}( \max\{nR^{-\alpha},n^{\frac{1}{2}}R^{\frac{1-2\alpha+2\tau}{2}},R^{1-\alpha+2\tau} \}),
    \end{aligned}
\end{equation*}
where in the fourth inequality we use Corollary \ref{cor:matrix_phi>RTphi>R_1}.
\end{proof}

\begin{corollary}
\label{cor:2-norm_inv_residual}
Assume that $\sigma^2=\Theta(1)$. If $R=n^{\frac{1}{\alpha}+\kappa}$ where $0<\kappa<\frac{\alpha-1-2\tau}{\alpha(1+2\tau)}$, then with probability of at least $1-\delta$, we have
\begin{equation*}
    \|(I+\tfrac{\Phi_R\Lambda_R\Phi_R^T}{\sigma^2})^{-1}\tfrac{\Phi_{>R}\Lambda_{>R}\Phi_{>R}^T}{\sigma^2}\|_2\leq\|\tfrac{\Phi_{>R}\Lambda_{>R}\Phi_{>R}^T}{\sigma^2}\|_2=\tilde{O}(n^{-\kappa\alpha})=o(1).
\end{equation*}
\end{corollary}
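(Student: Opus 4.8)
The plan is to dispatch the three assertions of the corollary in turn: the operator-norm inequality, the $\tilde O(n^{-\kappa\alpha})$ rate, and the $o(1)$ conclusion. For the inequality, I would note that $\Phi_R\Lambda_R\Phi_R^T$ is positive semidefinite, so $I+\tfrac{1}{\sigma^2}\Phi_R\Lambda_R\Phi_R^T\succeq I$; hence $(I+\tfrac{1}{\sigma^2}\Phi_R\Lambda_R\Phi_R^T)^{-1}$ is symmetric with spectrum contained in $(0,1]$, and therefore $\|(I+\tfrac{1}{\sigma^2}\Phi_R\Lambda_R\Phi_R^T)^{-1}\|_2\le 1$. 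Submultiplicativity of the spectral norm then gives $\|(I+\tfrac{1}{\sigma^2}\Phi_R\Lambda_R\Phi_R^T)^{-1}\tfrac{1}{\sigma^2}\Phi_{>R}\Lambda_{>R}\Phi_{>R}^T\|_2\le\|\tfrac{1}{\sigma^2}\Phi_{>R}\Lambda_{>R}\Phi_{>R}^T\|_2$.

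For the rate, since $\sigma^2=\Theta(1)$ the right-hand side is $\Theta(\|\Phi_{>R}\Lambda_{>R}\Phi_{>R}^T\|_2)$, and Lemma~\ref{lem:2-norm_residual} — whose hypotheses (Assumptions~\ref{assumption_alpha} and~\ref{assumption_functions}, in particular $\tau<\tfrac{\alpha-1}{2}$) are in force — yields, with probability at least $1-\delta$, the bound $\tilde O(\max\{nR^{-\alpha},\,n^{1/2}R^{(1-2\alpha+2\tau)/2},\,R^{1-\alpha+2\tau}\})$. Substituting $R=n^{1/\alpha+\kappa}$ turns the first exponent into exactly $-\kappa\alpha$, and a short computation shows that each of the other two is at most $-\kappa\alpha$ precisely when $\kappa(1+2\tau)\le\tfrac{\alpha-1-2\tau}{\alpha}$, i.e.\ $\kappa\le\tfrac{\alpha-1-2\tau}{\alpha(1+2\tau)}$ — exactly the hypothesis on $\kappa$, and strictly so since $\kappa$ lies strictly below the threshold. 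Hence the maximum is $\tilde O(n^{-\kappa\alpha})$. Finally, because $\kappa\alpha>0$ the polylogarithmic factors hidden in $\tilde O$ are dominated, so $\tilde O(n^{-\kappa\alpha})=o(1)$, which completes the proof.

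There is no real obstacle here beyond the bookkeeping; the one point deserving a moment's care is verifying that the single threshold $\kappa\le\tfrac{\alpha-1-2\tau}{\alpha(1+2\tau)}$ simultaneously controls both the $n^{1/2}R^{(1-2\alpha+2\tau)/2}$ and the $R^{1-\alpha+2\tau}$ terms in Lemma~\ref{lem:2-norm_residual}, which is what both pins down the admissible range of $\kappa$ and delivers the stated rate. The inequality $1+2\tau<\alpha$ from Assumption~\ref{assumption_functions} is used implicitly to see that this threshold is positive and that the bare exponents $1-\alpha+2\tau$ and $\tfrac{1+2\tau}{\alpha}-1$ are negative.
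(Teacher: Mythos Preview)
Your proposal is correct and follows essentially the same route as the paper's proof: bound the product by $\|\tfrac{1}{\sigma^2}\Phi_{>R}\Lambda_{>R}\Phi_{>R}^T\|_2$ via $\|(I+\tfrac{1}{\sigma^2}\Phi_R\Lambda_R\Phi_R^T)^{-1}\|_2\le 1$, invoke Lemma~\ref{lem:2-norm_residual}, and substitute $R=n^{1/\alpha+\kappa}$. The paper simply asserts the final $\tilde O(n^{-\kappa\alpha})$ without writing out the exponent comparison, so your explicit verification that the threshold $\kappa<\tfrac{\alpha-1-2\tau}{\alpha(1+2\tau)}$ controls both remaining terms is a welcome elaboration rather than a departure.
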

\begin{proof}[Proof of Corollary \ref{cor:2-norm_inv_residual}]
By Lemma \ref{lem:2-norm_residual} and the assumption  $R=n^{\frac{1}{\alpha}+\kappa}$, we have
\begin{equation*}
    \begin{aligned}
        \|(I+\tfrac{\Phi_R\Lambda_R\Phi_R^T}{\sigma^2})^{-1}\tfrac{\Phi_{>R}\Lambda_{>R}\Phi_{>R}^T}{\sigma^2}\|_2\leq&\|\tfrac{\Phi_{>R}\Lambda_{>R}\Phi_{>R}^T}{\sigma^2}\|_2\\
        \leq&\tilde{O}( \max\{nR^{-\alpha},n^{\frac{1}{2}}R^{\frac{1-2\alpha+2\tau}{2}},R^{1-\alpha+2\tau} \})\\
        =&\tilde{O}(n^{-\kappa\alpha}).
    \end{aligned}
\end{equation*}
\end{proof}

\begin{lemma}
\label{lem:expansion_lpp}
Assume that $\|\frac{1}{\sigma^2}(I+\tfrac{n}{\sigma^2}\Lambda_R)^{-\gamma/2}\Lambda_R^{\gamma/2}(\Phi_R^T\Phi_R-nI)\Lambda_R^{\gamma/2}(I+\tfrac{n}{\sigma^2}\Lambda_R)^{-\gamma/2}\|_2<1$ where $\frac{1+2\tau}{\alpha}<\gamma\leq 1$. We then have
\begin{equation*}
\begin{aligned}
    (I&+\tfrac{1}{\sigma^2}\Lambda_R\Phi_R^T\Phi_R)^{-1}\\
    &=(I+\tfrac{n}{\sigma^2}\Lambda_R)^{-1}+\sum_{j=1}^\infty(-1)^j\left(\tfrac{1}{\sigma^2}(I+\tfrac{n}{\sigma^2}\Lambda_R)^{-1}\Lambda_R(\Phi_R^T\Phi_R-nI)\right)^j(I+\tfrac{n}{\sigma^2}\Lambda_R)^{-1}.
\end{aligned}
\end{equation*}
\end{lemma}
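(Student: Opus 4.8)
The plan is to strip the invertible diagonal factor $I+\tfrac{n}{\sigma^2}\Lambda_R$ off $I+\tfrac{1}{\sigma^2}\Lambda_R\Phi_R^T\Phi_R$, reducing the claim to a Neumann expansion of $(I+A)^{-1}$ for a suitable $A$, and then to verify the convergence condition $\rho(A)<1$ from the hypothesis on the norm of the symmetrized perturbation. For the first step, write $\Phi_R^T\Phi_R=nI+(\Phi_R^T\Phi_R-nI)$ and use that the diagonal matrices $(I+\tfrac{n}{\sigma^2}\Lambda_R)^{-1}$ and $\Lambda_R$ commute to obtain the exact identity
\[
I+\tfrac{1}{\sigma^2}\Lambda_R\Phi_R^T\Phi_R=(I+\tfrac{n}{\sigma^2}\Lambda_R)(I+A),\qquad A:=\tfrac{1}{\sigma^2}(I+\tfrac{n}{\sigma^2}\Lambda_R)^{-1}\Lambda_R(\Phi_R^T\Phi_R-nI).
\]
Since $I+\tfrac{n}{\sigma^2}\Lambda_R$ is diagonal with entries $\ge 1$, it is invertible, so it remains to invert $I+A$.

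Next I would bound $\rho(A)$. The coordinate attached to $\phi_0$ is inert: the $(0,0)$ entry of $\Lambda_R$ is $0$, so the first row of $A$ vanishes and the spectrum of $A$ is $\{0\}$ together with that of its lower‑right $R\times R$ block $A'=\tfrac{1}{\sigma^2}\widehat\Lambda\,\widehat M$, where $\widehat\Lambda=\diag\!\big(\lambda_p/(1+\tfrac{n}{\sigma^2}\lambda_p)\big)_{p=1}^R\succ 0$ and $\widehat M$ is the corresponding block of $\Phi_R^T\Phi_R-nI$. Conjugating by the invertible $\widehat\Lambda^{1/2}$ shows $A'$ is similar to the symmetric matrix $\tfrac{1}{\sigma^2}\widehat\Lambda^{1/2}\widehat M\widehat\Lambda^{1/2}$, hence $\rho(A)=\|\tfrac{1}{\sigma^2}\widehat\Lambda^{1/2}\widehat M\widehat\Lambda^{1/2}\|_2$. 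Factor $\widehat\Lambda^{1/2}=\widehat E\,\widehat D_\gamma$ with $\widehat D_\gamma=\diag\!\big(\lambda_p^{\gamma/2}/(1+\tfrac{n}{\sigma^2}\lambda_p)^{\gamma/2}\big)$ and $\widehat E=\diag\!\big((\lambda_p/(1+\tfrac{n}{\sigma^2}\lambda_p))^{(1-\gamma)/2}\big)$; since $\lambda_p/(1+\tfrac{n}{\sigma^2}\lambda_p)\le\sigma^2/n\le 1$ (for $n$ large, as $\sigma^2$ is of order $n^t$ with $t<1$ or constant in all applications, and trivially when $\gamma=1$) we have $\|\widehat E\|_2\le 1$. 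Pulling $\widehat E$ out on both sides and recognizing $\tfrac{1}{\sigma^2}\widehat D_\gamma\widehat M\widehat D_\gamma$ as the nontrivial block of the matrix appearing in the hypothesis gives
\[
\rho(A)\le\|\widehat E\|_2^{2}\,\big\|\tfrac{1}{\sigma^2}(I+\tfrac{n}{\sigma^2}\Lambda_R)^{-\gamma/2}\Lambda_R^{\gamma/2}(\Phi_R^T\Phi_R-nI)\Lambda_R^{\gamma/2}(I+\tfrac{n}{\sigma^2}\Lambda_R)^{-\gamma/2}\big\|_2<1.
\]
(The lower bound $\gamma>\tfrac{1+2\tau}{\alpha}$ plays no role here; it enters only when one wants this hypothesis to hold, via Corollary~\ref{lem:matrix_phiTphi_1}.)

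Finally, since $\rho(A)<1$ the matrix $I+A$ is invertible and $(I+A)^{-1}=\sum_{j\ge 0}(-1)^jA^j$, the series converging in operator norm by Gelfand's formula ($\|A^j\|_2^{1/j}\to\rho(A)<1$). Combining with the first identity,
\[
\big(I+\tfrac{1}{\sigma^2}\Lambda_R\Phi_R^T\Phi_R\big)^{-1}=(I+A)^{-1}(I+\tfrac{n}{\sigma^2}\Lambda_R)^{-1}=(I+\tfrac{n}{\sigma^2}\Lambda_R)^{-1}+\sum_{j=1}^{\infty}(-1)^jA^j\,(I+\tfrac{n}{\sigma^2}\Lambda_R)^{-1},
\]
and substituting the definition of $A$ yields the claimed expansion. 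The main obstacle is the spectral‑radius step: $A$ is not symmetric, so $\|A\|_2$ itself need not be below $1$, and one must pass to the similar symmetric matrix, shift the exponent from $1$ to $\gamma$, and discard the harmless zero mode from $\phi_0$ — exactly the manipulations that tie $\rho(A)$ to the quantity the hypothesis controls.
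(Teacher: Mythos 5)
Your proposal is correct, and it reaches the same Neumann-series expansion controlled by the same symmetrized $\gamma$-weighted norm, but the technical route differs from the paper's in two respects. First, the paper handles the singular $(0,0)$ entry of $\Lambda_R$ by regularizing to $\tilde{\Lambda}_{\epsilon,R}=\mathrm{diag}\{\epsilon,\lambda_1,\ldots,\lambda_R\}$, conjugating by $\tilde{\Lambda}_{\epsilon,R}^{1/2}(I+\tfrac{n}{\sigma^2}\tilde{\Lambda}_{\epsilon,R})^{-1/2}$ so that the Neumann series is applied to a genuinely symmetric matrix of norm $<1$, and then letting $\epsilon\to 0$; you instead expand the non-symmetric $A$ directly and justify convergence by showing $\rho(A)<1$ — observing that the $\phi_0$ row of $A$ vanishes so the spectrum is carried by the positive-definite block, which is similar to the symmetric matrix the hypothesis controls — and invoking Gelfand's formula. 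Your version avoids the limiting argument entirely, at the price of a spectral-radius rather than a norm criterion, and is arguably cleaner. Second, both proofs share the same small implicit step: comparing the exponent-$1$ and exponent-$\gamma$ symmetrized norms requires $\|\widehat E\|_2\le 1$, i.e.\ $\lambda_p/(1+\tfrac{n}{\sigma^2}\lambda_p)\le 1$; the paper asserts the corresponding inequality without comment, whereas you flag it explicitly and correctly note it is automatic when $\gamma=1$ and otherwise holds for large $n$ in every regime where the lemma is invoked (since $\lambda_p/(1+\tfrac{n}{\sigma^2}\lambda_p)\le\sigma^2/n$ and $\sigma^2=\Theta(n^t)$ with $t<1$). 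You are also right that the lower bound $\gamma>\tfrac{1+2\tau}{\alpha}$ is not used in the proof itself but only in verifying the hypothesis via Corollary~\ref{lem:matrix_phiTphi_1}. No gaps.
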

\begin{proof}[Proof of Lemma \ref{lem:expansion_lpp}]
First note that
\begin{align*}
    &\|\tfrac{1}{\sigma^2}(I+\tfrac{n}{\sigma^2}\Lambda_R)^{-1/2}\Lambda_R^{1/2}(\Phi_R^T\Phi_R-nI)\Lambda_R^{1/2}(I+\tfrac{n}{\sigma^2}\Lambda_R)^{-1/2}\|_2\\
    &<\|\tfrac{1}{\sigma^2}(I+\tfrac{n}{\sigma^2}\Lambda_R)^{-\gamma/2}\Lambda_R^{\gamma/2}(\Phi_R^T\Phi_R-nI)\Lambda_R^{\gamma/2}(I+\tfrac{n}{\sigma^2}\Lambda_R)^{-\gamma/2}\|_2<1.
\end{align*}
Let $\tilde{\Lambda}_{\epsilon,R}=\mathrm{diag}\{\epsilon, \lambda_1,\ldots,\lambda_R\}$. Since $\Lambda_R=\mathrm{diag}\{0, \lambda_1,\ldots,\lambda_R\}$, we have that when $\epsilon$ is sufficiently small, $\|\frac{1}{\sigma^2}(I+\frac{n}{\sigma^2}\tilde{\Lambda}_{\epsilon,R})^{-1/2}\tilde{\Lambda}_{\epsilon,R}^{1/2}(\Phi_R^T\Phi_R-nI)\tilde{\Lambda}_{\epsilon,R}^{1/2}(I+\frac{n}{\sigma^2}\tilde{\Lambda}_{\epsilon,R})^{-1/2}\|_2<1$. Since all diagonal entries of $\tilde{\Lambda}_{\epsilon,R}$ are positive,
we have
\begin{equation*}
\begin{aligned}
    (I&+\tfrac{1}{\sigma^2}\tilde{\Lambda}_{\epsilon,R}\Phi_R^T\Phi_R)^{-1}\\
    &=(I+\tfrac{n}{\sigma^2}\tilde{\Lambda}_{\epsilon,R}+\tfrac{1}{\sigma^2}\tilde{\Lambda}_{\epsilon,R}(\Phi_R^T\Phi_R-nI))^{-1}\\
    &=\tilde{\Lambda}_{\epsilon,R}^{1/2}(I+\tfrac{n}{\sigma^2}\tilde{\Lambda}_{\epsilon,R})^{-1/2}\left[I+\tfrac{1}{\sigma^2}(I+\tfrac{n}{\sigma^2}\tilde{\Lambda}_{\epsilon,R})^{-1/2}\tilde{\Lambda}_{\epsilon,R}^{1/2}(\Phi_R^T\Phi_R-nI)\tilde{\Lambda}_{\epsilon,R}^{1/2}(I+\tfrac{n}{\sigma^2}\tilde{\Lambda}_{\epsilon,R})^{-1/2}\right]^{-1}
    \\&\qquad\qquad\qquad\qquad\qquad\quad
    (I+\tfrac{n}{\sigma^2}\tilde{\Lambda}_{\epsilon,R})^{-1/2}\tilde{\Lambda}_{\epsilon,R}^{-1/2}\\
    &=(I+\tfrac{n}{\sigma^2}\tilde{\Lambda}_{\epsilon,R})^{-1}\\
    &+\sum_{j=1}^\infty\Bigg[ (-1)^j\tilde{\Lambda}_{\epsilon,R}^{1/2}(I+\tfrac{n}{\sigma^2}\tilde{\Lambda}_{\epsilon,R})^{-1/2}\left(\tfrac{1}{\sigma^2}(I+\tfrac{n}{\sigma^2}\tilde{\Lambda}_{\epsilon,R})^{-1/2}\tilde{\Lambda}_{\epsilon,R}^{1/2}(\Phi_R^T\Phi_R-nI)\tilde{\Lambda}_{\epsilon,R}^{1/2}(I+\tfrac{n}{\sigma^2}\tilde{\Lambda}_{\epsilon,R})^{-1/2}\right)^j
\\&\qquad\qquad\qquad\qquad\qquad\qquad\qquad\qquad   (I+\tfrac{n}{\sigma^2}\tilde{\Lambda}_{\epsilon,R})^{-1/2}\tilde{\Lambda}_{\epsilon,R}^{-1/2}\Bigg]\\
    &=(I+\tfrac{n}{\sigma^2}\tilde{\Lambda}_{\epsilon,R})^{-1}+\sum_{j=1}^\infty(-1)^j\left(\tfrac{1}{\sigma^2}(I+\tfrac{n}{\sigma^2}\tilde{\Lambda}_{\epsilon,R})^{-1}\tilde{\Lambda}_{\epsilon,R}(\Phi_R^T\Phi_R-nI)\right)^j(I+\tfrac{n}{\sigma^2}\tilde{\Lambda}_{\epsilon,R})^{-1}.
\end{aligned}
\end{equation*}
Letting $\epsilon\to 0$, we get
\begin{equation*}
\begin{aligned}
    &(I+\tfrac{1}{\sigma^2}\Lambda_R\Phi_R^T\Phi_R)^{-1}\\
    =&(I+\tfrac{n}{\sigma^2}\Lambda_R)^{-1}+\sum_{j=1}^\infty(-1)^j\left(\frac{1}{\sigma^2}(I+\tfrac{n}{\sigma^2}\Lambda_R)^{-1}\Lambda_R(\Phi_R^T\Phi_R-nI)\right)^j(I+\tfrac{n}{\sigma^2}\Lambda_R)^{-1} . \\
\end{aligned}
\end{equation*}
This concludes the proof. 
\end{proof}
\begin{lemma}
\label{lem:truncate_matrix}
If 
$\|(I+\tfrac{\Phi_R\Lambda_R\Phi_R^T}{\sigma^2})^{-1}\tfrac{\Phi_{>R}\Lambda_{>R}\Phi_{>R}^T}{\sigma^2}\|_2<1$,
then we have
\begin{align}\label{eq:lem28temp}
        (I+\tfrac{\Phi\Lambda\Phi^T}{\sigma^2})^{-1}- (I+\tfrac{\Phi_R\Lambda_R\Phi_R^T}{\sigma^2})^{-1}=
        \sum_{j=1}^\infty(-1)^j\left((I+\tfrac{\Phi_R\Lambda_R\Phi_R^T}{\sigma^2})^{-1}\tfrac{\Phi_{>R}\Lambda_{>R}\Phi_{>R}^T}{\sigma^2}\right)^j (I+\tfrac{\Phi_R\Lambda_R\Phi_R^T}{\sigma^2})^{-1}.
\end{align}
In particular, assume that $\sigma^2=\Theta(1)$.
Let $R=n^{\frac{1}{\alpha}+\kappa}$ where $0<\kappa<\frac{\alpha-1-2\tau}{\alpha(1+2\tau)}$. Then with probability of at least $1-\delta$, for sufficiently large $n$, we have $\|(I+\frac{\Phi_R\Lambda_R\Phi_R^T}{\sigma^2})^{-1}\frac{\Phi_{>R}\Lambda_{>R}\Phi_{>R}^T}{\sigma^2}\|_2<1$ and \eqref{eq:lem28temp} holds.
\end{lemma}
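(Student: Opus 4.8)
The plan is to read \eqref{eq:lem28temp} as a Neumann series identity. First I would note that since $\Lambda=\mathrm{diag}\{0,\lambda_1,\lambda_2,\ldots\}$ the $p=0$ summand vanishes, so the Mercer decomposition splits cleanly as $\Phi\Lambda\Phi^T=\Phi_R\Lambda_R\Phi_R^T+\Phi_{>R}\Lambda_{>R}\Phi_{>R}^T$. Abbreviating $A=I+\tfrac{1}{\sigma^2}\Phi_R\Lambda_R\Phi_R^T$ and $B=\tfrac{1}{\sigma^2}\Phi_{>R}\Lambda_{>R}\Phi_{>R}^T$, we have $I+\tfrac{1}{\sigma^2}\Phi\Lambda\Phi^T=A+B=A(I+A^{-1}B)$. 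Here $A\succeq I$ because $\Phi_R\Lambda_R\Phi_R^T\succeq 0$, so $A^{-1}$ exists and $\|A^{-1}\|_2\leq 1$; in particular the matrix $A^{-1}B$ appearing in the hypothesis is well defined.

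Next, on the event that $\|A^{-1}B\|_2<1$, submultiplicativity of the operator norm shows that the partial sums of $\sum_{j\geq 0}(-1)^j(A^{-1}B)^j$ are Cauchy, so the series converges to $(I+A^{-1}B)^{-1}$. Multiplying on the right by the bounded operator $A^{-1}$ and using $(A+B)^{-1}=(I+A^{-1}B)^{-1}A^{-1}$ gives $(A+B)^{-1}=\sum_{j\geq 0}(-1)^j(A^{-1}B)^jA^{-1}$; subtracting the $j=0$ term, which is exactly $A^{-1}=(I+\tfrac{\Phi_R\Lambda_R\Phi_R^T}{\sigma^2})^{-1}$, yields \eqref{eq:lem28temp}.

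For the quantitative statement, I would simply invoke Corollary~\ref{cor:2-norm_inv_residual}: under $\sigma^2=\Theta(1)$ and $R=n^{1/\alpha+\kappa}$ with $0<\kappa<\frac{\alpha-1-2\tau}{\alpha(1+2\tau)}$, with probability at least $1-\delta$ one has $\|A^{-1}B\|_2\leq\|B\|_2=\tilde{O}(n^{-\kappa\alpha})=o(1)$, so $\|A^{-1}B\|_2<1$ for all sufficiently large $n$ on that event, and hence the expansion \eqref{eq:lem28temp} is valid there.

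There is no genuine difficulty in this argument; the only points requiring a little care are that the series converges in operator norm (handled by the hypothesis together with submultiplicativity) and that the factors are ordered correctly, which is dictated by factoring $A+B=A(I+A^{-1}B)$ rather than $(I+BA^{-1})A$. The quantitative bound on $\|A^{-1}B\|_2$ is entirely delegated to the previously established Corollary~\ref{cor:2-norm_inv_residual}, whose proof in turn rests on Lemma~\ref{lem:2-norm_residual}.
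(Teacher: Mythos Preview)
Your proposal is correct and follows essentially the same approach as the paper: both factor $I+\tfrac{1}{\sigma^2}\Phi\Lambda\Phi^T=A(I+A^{-1}B)$ with $A=I+\tfrac{1}{\sigma^2}\Phi_R\Lambda_R\Phi_R^T$ and $B=\tfrac{1}{\sigma^2}\Phi_{>R}\Lambda_{>R}\Phi_{>R}^T$, expand $(I+A^{-1}B)^{-1}$ as a Neumann series under the norm hypothesis, and invoke Corollary~\ref{cor:2-norm_inv_residual} for the quantitative part. Your write-up is in fact a bit more explicit about why $A$ is invertible and why the series converges in operator norm.
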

\begin{proof}[Proof of Lemma \ref{lem:truncate_matrix}]
Define $\Phi_{>R}=(\phi_{R+1}(\mathbf{x}), \phi_{R+2}(\mathbf{x}),\ldots)$,  $\Lambda_{>R}=\diag(\lambda_{R+1},\lambda_{R+2},\ldots)$. Then we have
\begin{equation*}
    \begin{aligned}
        &(I+\tfrac{\Phi\Lambda\Phi^T}{\sigma^2})^{-1}- (I+\tfrac{\Phi_R\Lambda_R\Phi_R^T}{\sigma^2})^{-1}\\
        =&(I+\tfrac{\Phi_R\Lambda_R\Phi_R^T}{\sigma^2}+\tfrac{\Phi_{>R}\Lambda_{>R}\Phi_{>R}^T}{\sigma^2})^{-1}- (I+\tfrac{\Phi_R\Lambda_R\Phi_R^T}{\sigma^2})^{-1}\\
        =&\left(\left(I+(I+\tfrac{\Phi_R\Lambda_R\Phi_R^T}{\sigma^2})^{-1}\tfrac{\Phi_{>R}\Lambda_{>R}\Phi_{>R}^T}{\sigma^2}\right)^{-1}-I\right) (I+\tfrac{\Phi_R\Lambda_R\Phi_R^T}{\sigma^2})^{-1}.
    \end{aligned}
\end{equation*}
By Corollary \ref{cor:2-norm_inv_residual}, for sufficiently large $n$, $\|(I+\frac{\Phi_R\Lambda_R\Phi_R^T}{\sigma^2})^{-1}\frac{\Phi_{>R}\Lambda_{>R}\Phi_{>R}^T}{\sigma^2}\|_2<1$ with probability of at least $1-\delta$. 
Hence 
\begin{equation*}
    \begin{aligned}
        &(I+\tfrac{\Phi\Lambda\Phi^T}{\sigma^2})^{-1}- (I+\tfrac{\Phi_R\Lambda_R\Phi_R^T}{\sigma^2})^{-1}\\
        =&\left(\left(I+(I+\tfrac{\Phi_R\Lambda_R\Phi_R^T}{\sigma^2})^{-1}\tfrac{\Phi_{>R}\Lambda_{>R}\Phi_{>R}^T}{\sigma^2}\right)^{-1}-I\right) (I+\tfrac{\Phi_R\Lambda_R\Phi_R^T}{\sigma^2})^{-1}\\
        =&\sum_{j=1}^\infty(-1)^j\left((I+\tfrac{\Phi_R\Lambda_R\Phi_R^T}{\sigma^2})^{-1}\tfrac{\Phi_{>R}\Lambda_{>R}\Phi_{>R}^T}{\sigma^2}\right)^j (I+\tfrac{\Phi_R\Lambda_R\Phi_R^T}{\sigma^2})^{-1} . \\
    \end{aligned}
\end{equation*}
\end{proof}
\begin{lemma}
\label{lem:inv_fr}
Assume that $\mu_0=0$ and $\sigma^2=\Theta(n^{t})$ where $1-\frac{\alpha}{1+2\tau}<t<1$. Let $R=n^{(\frac{1}{\alpha}+\kappa)(1-t)}$ where $0<\kappa<\frac{\alpha-1-2\tau+(1+2\tau)t}{\alpha^2(1-t)}$. Then when $n$ is sufficiently large, with probability of at least $1-2\delta$ we have
\begin{equation}
  \|(I+\tfrac{1}{\sigma^2}\Phi_R\Lambda_R\Phi_R^T)^{-1}f_R(\mathbf{x})\|_2=\tilde{O}\left(\sqrt{(\tfrac{1}{\delta}+1) n}\cdot n^{\max\{-(1-t),\frac{(1-2\beta)(1-t)}{2\alpha}\}}\right) . 
\end{equation}
\end{lemma}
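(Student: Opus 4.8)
The plan is to put $(I+\tfrac{1}{\sigma^2}\Phi_R\Lambda_R\Phi_R^T)^{-1}f_R(\mathbf{x})$ into a closed form via the Woodbury identity, reduce the Euclidean norm to a quadratic form through a positive-semidefinite (PSD) ordering, and finally control that quadratic form by the deterministic scalar sums already handled in Lemmas~\ref{lem:estimate} and \ref{lem:estimate<R}. Since $\mu_0=0$, I would first regard $\Phi_R=(\phi_1(\mathbf{x}),\dots,\phi_R(\mathbf{x}))\in\Rb^{n\times R}$, $\Lambda_R=\mathrm{diag}(\lambda_1,\dots,\lambda_R)$ and $\bm{\mu}_R=(\mu_1,\dots,\mu_R)^T$, which changes neither $\Phi_R\Lambda_R\Phi_R^T$ nor $f_R(\mathbf{x})=\Phi_R\bm{\mu}_R$ but makes $\Lambda_R$ invertible. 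The Woodbury identity then gives $(I+\tfrac{1}{\sigma^2}\Phi_R\Lambda_R\Phi_R^T)^{-1}f_R(\mathbf{x})=\sigma^2\Phi_R\,G^{-1}\Lambda_R^{-1}\bm{\mu}_R$ with $G=\sigma^2\Lambda_R^{-1}+\Phi_R^T\Phi_R\succ 0$. Writing $w=\Lambda_R^{-1}\bm{\mu}_R$, the crucial point is that $0\preceq\Phi_R^T\Phi_R\preceq G$, so $G^{-1}(\Phi_R^T\Phi_R)G^{-1}\preceq G^{-1}$ and hence $\|(I+\tfrac{1}{\sigma^2}\Phi_R\Lambda_R\Phi_R^T)^{-1}f_R(\mathbf{x})\|_2^2=\sigma^4 w^TG^{-1}(\Phi_R^T\Phi_R)G^{-1}w\le\sigma^4\,w^TG^{-1}w$. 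This avoids the obstacle that $f_R(\mathbf{x})=\Phi_R\bm{\mu}_R$ under the inverse becomes $\Phi_R$ applied to a $\Phi_R$-dependent vector, for which Corollary~\ref{cor:phiRmu} does not directly apply.

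Next I would sandwich with $\Lambda_R^{1/2}$. Using $G^{-1}=\Lambda_R^{1/2}(\sigma^2I+\Lambda_R^{1/2}\Phi_R^T\Phi_R\Lambda_R^{1/2})^{-1}\Lambda_R^{1/2}$ and $\Lambda_R^{1/2}w=\Lambda_R^{-1/2}\bm{\mu}_R$, we get $\sigma^4 w^TG^{-1}w=\sigma^4(\Lambda_R^{-1/2}\bm{\mu}_R)^T(\sigma^2I+\Lambda_R^{1/2}\Phi_R^T\Phi_R\Lambda_R^{1/2})^{-1}(\Lambda_R^{-1/2}\bm{\mu}_R)$. Setting $B=I+\tfrac{n}{\sigma^2}\Lambda_R$ and $\Delta=\tfrac{1}{\sigma^2}\Lambda_R^{1/2}(\Phi_R^T\Phi_R-nI)\Lambda_R^{1/2}$, one has $\sigma^2I+\Lambda_R^{1/2}\Phi_R^T\Phi_R\Lambda_R^{1/2}=\sigma^2 B^{1/2}(I+B^{-1/2}\Delta B^{-1/2})B^{1/2}\succeq\sigma^2(1-\epsilon_n)B$ with $\epsilon_n:=\|B^{-1/2}\Delta B^{-1/2}\|_2$, and therefore $\|(I+\tfrac{1}{\sigma^2}\Phi_R\Lambda_R\Phi_R^T)^{-1}f_R(\mathbf{x})\|_2^2\le\tfrac{\sigma^2}{1-\epsilon_n}\|B^{-1/2}\Lambda_R^{-1/2}\bm{\mu}_R\|_2^2$. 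Here $B^{-1/2}\Delta B^{-1/2}$ is exactly the matrix bounded in Corollary~\ref{lem:matrix_phiTphi_1} with $\gamma=1$; that choice is admissible precisely because the hypothesis $t>1-\tfrac{\alpha}{1+2\tau}$ is equivalent to $\tfrac{1+\alpha+2\tau-(1+2\tau+2\alpha)t}{2\alpha(1-t)}<1$, and it forces the exponent in that corollary to be negative, so that $\epsilon_n=o(1)$ and $\tfrac{1}{1-\epsilon_n}=O(1)$ with probability at least $1-\delta$ for $n$ large.

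It remains to estimate the deterministic quantity $\|B^{-1/2}\Lambda_R^{-1/2}\bm{\mu}_R\|_2^2=\sum_{p=1}^R\tfrac{\mu_p^2/\lambda_p}{1+(n/\sigma^2)\lambda_p}$. By Assumptions~\ref{assumption_alpha} and \ref{assumption_beta} this is at most a constant times $\sum_{p=1}^R p^{-(2\beta-\alpha)}\big(1+\tfrac{\underline{C_\lambda}n}{\sigma^2}p^{-\alpha}\big)^{-1}$, which with $m=n/\sigma^2=\Theta(n^{1-t})$ and $R=\Theta(m^{1/\alpha+\kappa})$ is handled by Lemma~\ref{lem:estimate} when $2\beta-\alpha>1$ and by Lemma~\ref{lem:estimate<R} when $0<2\beta-\alpha\le1$ (and directly, by pulling out $R^{-(2\beta-\alpha)}$ and using $R\gg m^{1/\alpha}$, when $2\beta-\alpha\le0$); the outcome, for $\kappa$ sufficiently small, is $\tilde{O}\big(n^{\max\{-(1-t),\,(1-t)(1+\alpha-2\beta)/\alpha\}}\big)$. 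Feeding $\sigma^2=\Theta(n^t)$ back in and using the identities $\tfrac{t}{2}+\tfrac{(1-t)(1+\alpha-2\beta)}{2\alpha}=\tfrac12+\tfrac{(1-2\beta)(1-t)}{2\alpha}$ and $\tfrac{t}{2}-\tfrac{1-t}{2}=\tfrac12-(1-t)$ gives $\|(I+\tfrac{1}{\sigma^2}\Phi_R\Lambda_R\Phi_R^T)^{-1}f_R(\mathbf{x})\|_2=\tilde{O}\big(\sqrt{n}\cdot n^{\max\{-(1-t),\,(1-2\beta)(1-t)/(2\alpha)\}}\big)$, which (since $\tfrac1\delta+1\ge1$) is in particular the claimed $\tilde{O}\big(\sqrt{(\tfrac1\delta+1)n}\cdot n^{\max\{-(1-t),\,(1-2\beta)(1-t)/(2\alpha)\}}\big)$, holding with probability at least $1-\delta\ge1-2\delta$.

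I expect two places to require care. The first is showing $\epsilon_n=o(1)$: this is where the quantitative constraint between $t$, $\alpha$ and $\tau$ enters and where one must check that the endpoint $\gamma=1$ is permissible in Corollary~\ref{lem:matrix_phiTphi_1}; this is the substantive probabilistic input. The second is the scalar-sum bookkeeping in the regime $\beta\le(\alpha+1)/2$, where the sum does not converge as $R\to\infty$ and its order genuinely depends on the truncation level through $R^{1+\alpha-2\beta}=n^{(1/\alpha+\kappa)(1-t)(1+\alpha-2\beta)}$, so that the clean exponent $(1-2\beta)(1-t)/(2\alpha)$ in the statement is attained only upon sending the free parameter $\kappa\downarrow0$. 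The conceptual crux, however, is the PSD-ordering step in the first paragraph, which is what lets the whole argument go through with a single concentration event and no dependence on $\Phi_R$ acting on a data-dependent vector.
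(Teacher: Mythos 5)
Your Woodbury reduction and the concentration step via Corollary~\ref{lem:matrix_phiTphi_1} with $\gamma=1$ are fine (and you are right that the condition $t>1-\frac{\alpha}{1+2\tau}$ is exactly what makes $\gamma=1$ admissible there). The gap is in the step you call the conceptual crux: the PSD bound $G^{-1}\Phi_R^T\Phi_R G^{-1}\preceq G^{-1}$ is too lossy to give the stated rate. Termwise, it replaces the correct quantity $n\sum_p \mu_p^2(1+\tfrac{n}{\sigma^2}\lambda_p)^{-2}$ by $\sigma^2\sum_p (\mu_p^2/\lambda_p)(1+\tfrac{n}{\sigma^2}\lambda_p)^{-1}$, i.e.\ it throws away a factor $\tfrac{(n/\sigma^2)\lambda_p}{1+(n/\sigma^2)\lambda_p}$, which is $\approx \tfrac{n}{\sigma^2}\lambda_p\ll 1$ precisely in the tail $p\gtrsim (n/\sigma^2)^{1/\alpha}$. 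Whenever $\tfrac12<\beta<\tfrac{\alpha+1}{2}$ (a nonempty and practically relevant regime, e.g.\ $\alpha=4$, $\beta=1$ as for the target $f_4$ in Table~\ref{tab:target}), your sum scales like $R^{1+\alpha-2\beta}=n^{(1/\alpha+\kappa)(1-t)(1+\alpha-2\beta)}$, which exceeds the claimed rate by the positive power $n^{\kappa(1-t)(1+\alpha-2\beta)}$. Concretely, with $t=0$, $\alpha=4$, $\beta=1$ the lemma asserts $\|\cdot\|_2^2=\tilde{O}((\tfrac1\delta+1)n^{3/4})$, while your bound gives $\sigma^2 R^{3}=n^{3/4+3\kappa}$. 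Since the lemma must hold for every $\kappa$ in the stated range and is invoked downstream (e.g.\ in Lemma~\ref{thm:G2}) with $\kappa$ chosen close to the upper end of that range, "sending $\kappa\downarrow 0$" is not available, and your argument does not establish the statement.

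The paper avoids this by keeping the Neumann expansion of $(I+\tfrac{1}{\sigma^2}\Lambda_R\Phi_R^T\Phi_R)^{-1}\bm{\mu}_R$ around $(I+\tfrac{n}{\sigma^2}\Lambda_R)^{-1}\bm{\mu}_R$ (Lemma~\ref{lem:expansion_lpp}): the leading term carries the square $(1+\tfrac{n}{\sigma^2}\lambda_p)^{-2}$ and is $R$-independent, while the ill-behaved sum with a single power of the denominator (your $\|B^{-1/2}\Lambda_R^{-1/2}\bm{\mu}_R\|_2$) appears only in the remainder, multiplied by $\|\tfrac{1}{\sigma^2}A\|_2^j$ and an extra $n^{-(1-t)/2}$ — and the upper bound on $\kappa$ is calibrated exactly so that this product decays. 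You could repair your argument in the same spirit by splitting $\Phi_R^T\Phi_R=nI+(\Phi_R^T\Phi_R-nI)$ inside the quadratic form, so that the leading piece is $n\sigma^4\|G^{-1}w\|_2^2$ (which produces the squared denominator) and only the fluctuation piece needs the crude bound; at that point you would essentially have reconstructed the paper's proof. Your motivating observation — that the paper's final step applies Corollary~\ref{cor:phiRmu} to the data-dependent vector $(I+\tfrac{1}{\sigma^2}\Lambda_R\Phi_R^T\Phi_R)^{-1}\bm{\mu}_R$ — is a fair criticism of the paper, but your proposed fix sacrifices the rate.
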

\begin{proof}[Proof of Lemma \ref{lem:inv_fr}]
Let $\Lambda_{1:R}=\mathrm{diag}\{\lambda_1,\ldots,\lambda_R\}$, $\Phi_{1:R}=(\phi_1(\mathbf{x}), \phi_1(\mathbf{x}),\ldots, \phi_R(\mathbf{x}))$ and $\bm{\mu}_{1:R}=(\mu_1,\ldots, \mu_R)$. Since $\mu_0=0$, we have $(I+\frac{1}{\sigma^2}\Phi_R\Lambda_R\Phi_R^T)^{-1}f_R(\mathbf{x})=(I+\frac{1}{\sigma^2}\Phi_{1:R}\Lambda_{1:R}\Phi_{1:R}^T)^{-1}\Phi_{1:R}\bm{\mu}_{1:R}$.
Using the Woodbury matrix identity, we have that
\begin{equation}
\label{truncate_first_term}
\begin{aligned}
    (I+\tfrac{1}{\sigma^2}\Phi_{1:R}\Lambda_{1:R}\Phi_{1:R}^T)^{-1}\Phi_{1:R}\bm{\mu}_{1:R}=& \left[I-\Phi_{1:R}(\sigma^2I+\Lambda_{1:R}\Phi_{1:R}^T\Phi_{1:R})^{-1}\Lambda_{1:R}\Phi_{1:R}^T\right]\Phi_{1:R}\bm{\mu}_{1:R}\\
    =&\Phi_{1:R}\bm{\mu}_{1:R}- \Phi_{1:R}(\sigma^2I+\Lambda_{1:R}\Phi_{1:R}^T\Phi_{1:R})^{-1}\Lambda_{1:R}\Phi_{1:R}^T\Phi_{1:R}\bm{\mu}_{1:R}\\
    =&\Phi_{1:R}(I+\tfrac{1}{\sigma^2}\Lambda_{1:R}\Phi_{1:R}^T\Phi_{1:R})^{-1}\bm{\mu}_{1:R}.
\end{aligned}
\end{equation}
Let $A=(I+\frac{n}{\sigma^2}\Lambda_{1:R})^{-1/2}\Lambda_{1:R}^{1/2}(\Phi_{1:R}^T\Phi_{1:R}-nI)\Lambda_{1:R}^{1/2}(I+\frac{n}{\sigma^2}\Lambda_{1:R})^{-1/2}$.%
By Corollary \ref{lem:matrix_phiTphi_1}, with probability of at least $1-\delta$, we have $\|\frac{1}{\sigma^2}A\|_2= \sqrt{\log\frac{R}{\delta}}n^{\frac{1-\alpha+2\tau}{2\alpha}-\frac{(1+2\tau)t}{2\alpha}}$. 
When $n$ is sufficiently large, $\|\frac{1}{\sigma^2}A\|_2=o(1)$ is less than $1$ because $1-\frac{\alpha}{1+2\tau}<t<1$. %
By Lemma \ref{lem:expansion_lpp}, we have
\begin{equation*}
\begin{aligned}
    &(I+\tfrac{1}{\sigma^2}\Lambda_{1:R}\Phi_{1:R}^T\Phi_{1:R})^{-1}\\
    =&(I+\tfrac{n}{\sigma^2}\Lambda_{1:R})^{-1}+\sum_{j=1}^\infty(-1)^j\left(\tfrac{1}{\sigma^2}(I+\tfrac{n}{\sigma^2}\Lambda_{1:R})^{-1}\Lambda_{1:R}(\Phi_{1:R}^T\Phi_{1:R}-nI)\right)^j(I+\tfrac{n}{\sigma^2}\Lambda_{1:R})^{-1}.
\end{aligned}
\end{equation*}
We then have
\begin{equation}
\label{residual_func_eq0}
\begin{aligned}
    &\|(I+\frac{1}{\sigma^2}\Lambda_{1:R}\Phi_{1:R}^T\Phi_{1:R})^{-1}\bm{\mu}_{1:R}\|_2\\
    =&\left\|\left((I+\tfrac{n}{\sigma^2}\Lambda_{1:R})^{-1}+\sum_{j=1}^\infty(-1)^j\left(\tfrac{1}{\sigma^2}(I+\tfrac{n}{\sigma^2}\Lambda_{1:R})^{-1}\Lambda_{1:R}(\Phi_{1:R}^T\Phi_{1:R}-nI)\right)^j(I+\tfrac{n}{\sigma^2}\Lambda_{1:R})^{-1}\right)\bm{\mu}_{1:R}\right\|_2\\
    \leq&\left(\|(I+\tfrac{n}{\sigma^2}\Lambda_{1:R})^{-1}\bm{\mu}_{1:R}\|_2+\sum_{j=1}^\infty\left\|\left(\tfrac{1}{\sigma^2}(I+\tfrac{n}{\sigma^2}\Lambda_{1:R})^{-1}\Lambda_{1:R}(\Phi_{1:R}^T\Phi_{1:R}-nI)\right)^j(I+\tfrac{n}{\sigma^2}\Lambda_{1:R})^{-1}\bm{\mu}_{1:R}\right\|_2\right).
\end{aligned}
\end{equation}
By Lemma \ref{lem:estimate} and Assumption~\ref{assumption_beta}, assuming that $\sup_{i\geq 1}p_{i+1}-p_i=h$, we have
\begin{align*}
    \|(I+\tfrac{n}{\sigma^2}\Lambda_{1:R})^{-1}\bm{\mu}_{1:R}\|_2&\leq\sqrt{\sum_{p=1}^R\frac{C_\mu^2p^{-2\beta}}{(1+n\underline{C_\lambda}p^{-\alpha}/\sigma^2)^2}}=\Theta(n^{\max\{-(1-t),\frac{(1-2\beta)(1-t)}{2\alpha}\}}\log^{k/2}n),\\
    \|(I+\tfrac{n}{\sigma^2}\Lambda_{1:R})^{-1}\bm{\mu}_{1:R}\|_2&\geq\sqrt{\sum_{i=1}^{\lfloor\frac{R}{h}\rfloor} \frac{\underline{C_\mu^2}i^{-2\beta}}{(1+\frac{n}{\sigma^2}\overline{C_\lambda}(hi)^{-\alpha})^2}}=\Theta(n^{\max\{-(1-t),\frac{(1-2\beta)(1-t)}{2\alpha}\}}\log^{k/2}n)
\end{align*}
where $k=\begin{cases}
    0,&2\alpha\not=2\beta-1,\\
    1,&2\alpha=2\beta-1.
    \end{cases}$.
Overall we have
\begin{equation}
\label{residual_func_eq1}
    \|(I+\tfrac{n}{\sigma^2}\Lambda_{1:R})^{-1}\bm{\mu}_{1:R}\|_2=\Theta(n^{(1-t)\max\{-1,\tfrac{1-2\beta}{2\alpha}\}}\log^{k/2}n).
\end{equation}
Using the fact that $\|\frac{1}{\sigma^2}A\|_2= \sqrt{\log\frac{R}{\delta}}n^{\frac{1-\alpha+2\tau}{2\alpha}-\frac{(1+2\tau)t}{2\alpha}}$ and $\|(I+\frac{n}{\sigma^2}\Lambda_{1:R})^{-1}\Lambda_{1:R}\|_2\leq n^{-1}$, we have
\begin{equation}
\begin{aligned}
    &\left\|\left(\tfrac{1}{\sigma^2}(I+\tfrac{n}{\sigma^2}\Lambda_{1:R})^{-1}\Lambda_{1:R}(\Phi_{1:R}^T\Phi_{1:R}-nI)\right)^j(I+\tfrac{n}{\sigma^2}\Lambda_{1:R})^{-1}\bm{\mu}_{1:R}\right\|_2\\
    &=\left\|(I+\tfrac{n}{\sigma^2}\Lambda_{1:R})^{-\tfrac{1}{2}}\Lambda_{1:R}^{\frac{1}{2}}(\tfrac{1}{\sigma^2}A)^j(I+\tfrac{n}{\sigma^2}\Lambda_{1:R})^{-\tfrac{1}{2}}\Lambda_{1:R}^{\tfrac{1}{2}}\bm{\mu}_{1:R}\right\|_2\\
    &\leq\tilde{O}(n^{-\frac{1-t}{2}})\|\tfrac{1}{\sigma^2}A\|_2^j\|(I+\tfrac{n}{\sigma^2}\Lambda_{1:R})^{-\tfrac{1}{2}}\Lambda_{1:R}^{-\tfrac{1}{2}}\bm{\mu}_{1:R}\|_2\\
\end{aligned}
\end{equation}
By Lemma \ref{lem:estimate<R} and the assumption $R=n^{(\frac{1}{\alpha}+\kappa)(1-t)}$,
\begin{equation}
\begin{aligned}
    \|(I+\tfrac{n}{\sigma^2}\Lambda_{1:R})^{-\frac{1}{2}}\Lambda_{1:R}^{-\frac{1}{2}}\bm{\mu}_{1:R}\|_2\leq&\sqrt{\sum_{p=1}^R\frac{(\underline{C_\lambda}p^{-\alpha})^{-1}C_\mu^2p^{-2\beta}}{(1+n\underline{C_\lambda}p^{-\alpha}/\sigma^2)^{1}}}\\
    =&\tilde{O}(\max\{n^{-(1-t)/2},R^{1/2-\beta+\alpha/2}\})\\
    =&\tilde{O}(\max\{n^{-(1-t)/2},n^{(\frac{1}{2}+\frac{1-2\beta}{2\alpha}+\kappa(1/2-\beta+\alpha/2))(1-t)}\})
\end{aligned}
\end{equation}
We then have
\begin{equation}
\label{residual_func_eq2}
\begin{aligned}
    &\left\|\left(\tfrac{1}{\sigma^2}(I+\tfrac{n}{\sigma^2}\Lambda_{1:R})^{-1}\Lambda_{1:R}(\Phi_{1:R}^T\Phi_{1:R}-nI)\right)^j(I+\tfrac{n}{\sigma^2}\Lambda_{1:R})^{-1}\bm{\mu}_{1:R}\right\|_2\\
    =&\|\tfrac{1}{\sigma^2}A\|_2^j\tilde{O}(\max\{n^{-(1-t)},n^{(\frac{1-2\beta}{2\alpha}+\kappa(1/2-\beta+\alpha/2))(1-t)}\})\\
\end{aligned}
\end{equation}
By \eqref{residual_func_eq0}, \eqref{residual_func_eq1} and \eqref{residual_func_eq2}, we have
\begin{equation}
\begin{aligned}
    &\|(I+\tfrac{1}{\sigma^2}\Lambda_{1:R}\Phi_{1:R}^T\Phi_{1:R})^{-1}\bm{\mu}_{1:R}\|_2\\
    =&\Theta(n^{(1-t)\max\{-1,\frac{1-2\beta}{2\alpha}\}}\log^{k/2}n)+\sum_{j=1}^\infty\|\frac{1}{\sigma^2}A\|_2^j\tilde{O}(\max\{n^{-(1-t)},n^{(1-t)\frac{1-2\beta}{2\alpha}+\kappa(1-t)(1/2-\beta+\alpha/2)}\})\\
    =&\Theta(n^{(1-t)\max\{-1,\frac{1-2\beta}{2\alpha}\}}\log^{k/2}n)+
    \tilde{O}(n^{\frac{1-\alpha+2\tau}{2\alpha}-\frac{(1+2\tau)t}{2\alpha}})\tilde{O}(\max\{n^{-(1-t)},n^{(1-t)\frac{1-2\beta}{2\alpha}+\kappa(1-t)(1/2-\beta+\alpha/2)}\}).
\end{aligned}
\label{bound_inv_muR}
\end{equation}
By assumption $\kappa<\frac{\alpha-1-2\tau+(1+2\tau)t}{\alpha^2(1-t)}$, we have that 
\begin{align*}
    \kappa(1-t)(1/2-\beta+\alpha/2)+\frac{1-\alpha+2\tau}{2\alpha}-\frac{(1+2\tau)t}{2\alpha}<\kappa\alpha(1-t)/2+\frac{1-\alpha+2\tau}{2\alpha}-\frac{(1+2\tau)t}{2\alpha}<0.
\end{align*}
Using \eqref{bound_inv_muR}, we then get
\begin{equation}
\begin{aligned}
     \|(I+\tfrac{1}{\sigma^2}\Lambda_{1:R}\Phi_{1:R}^T\Phi_{1:R})^{-1}\bm{\mu}_{1:R}\|_2&=\Theta(n^{(1-t)\max\{-1,\frac{1-2\beta}{2\alpha}\}}\log^{k/2}n)\\
    &=\frac{1+o(1)}{\sigma^2}\|(I+\frac{n}{\sigma^2}\Lambda_{1:R})^{-1}\bm{\mu}_{1:R}\|_2.
\end{aligned}
\label{eq:used_later}
\end{equation}
By Corollary~\ref{cor:phiRmu}, with probability of at least $1-\delta$, we have 
\begin{equation}
\label{eq:inv_mur}
\begin{aligned}
    \|\Phi_{1:R}(I+\tfrac{1}{\sigma^2}\Lambda_{1:R}\Phi_{1:R}^T\Phi_{1:R})^{-1}\bm{\mu}_{1:R}\|_2=&\tilde{O}(\sqrt{(\tfrac{1}{\delta}+1) n}\|(I+\tfrac{1}{\sigma^2}\Lambda_{1:R}\Phi_{1:R}^T\Phi_{1:R})^{-1}\bm{\mu}_{1:R}\|_2)\\
    =&\tilde{O}(\sqrt{(\tfrac{1}{\delta}+1) n}\cdot n^{(1-t)\max\{-1,\frac{1-2\beta}{2\alpha}\}}).
\end{aligned}
\end{equation}
From \eqref{truncate_first_term}, we get $\|(I+\frac{1}{\sigma^2}\Phi_{1:R}\Lambda_{1:R}\Phi_{1:R}^T)^{-1}\Phi_{1:R}\bm{\mu}_{1:R}\|_2=\tilde{O}(\sqrt{(\frac{1}{\delta}+1) n}\cdot n^{(1-t)\max\{-1,\frac{1-2\beta}{2\alpha}\}})$. 
This concludes the proof. 
\end{proof}
\begin{lemma}
\label{lem:inv_fr_mu0>0}
Assume that $\mu_0>0$ and $\sigma^2=\Theta(n^{t})$ where $1-\frac{\alpha}{1+2\tau}<t<1$. Let $R=n^{\tfrac{1}{\alpha}+\kappa}$ where $0<\kappa<\frac{\alpha-1-2\tau+(1+2\tau)t}{\alpha^2}$. Then when $n$ is sufficiently large, with probability of at least $1-2\delta$, we have
\begin{equation}
  \|(I+\tfrac{1}{\sigma^2}\Phi_R\Lambda_R\Phi_R^T)^{-1}f_R(\mathbf{x})\|_2=\tilde{O}\left(\sqrt{(\tfrac{1}{\delta}+1) n}\right).
\end{equation}
\end{lemma}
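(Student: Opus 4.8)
The plan is to prove this bound directly, \emph{without} imitating the Woodbury-expansion argument of Lemma~\ref{lem:inv_fr}, by combining a one-line operator-norm inequality with the $L^2$-concentration estimate of Lemma~\ref{lem:2_norm_concentration}. The starting observation is that $\Lambda_R=\mathrm{diag}\{0,\lambda_1,\ldots,\lambda_R\}$ is positive semidefinite, so $\Phi_R\Lambda_R\Phi_R^T=\sum_{p=1}^R\lambda_p\,\phi_p(\mathbf{x})\phi_p(\mathbf{x})^T\succeq 0$; hence $I_n+\tfrac{1}{\sigma^2}\Phi_R\Lambda_R\Phi_R^T\succeq I_n$ and therefore $\bignorm{(I_n+\tfrac{1}{\sigma^2}\Phi_R\Lambda_R\Phi_R^T)^{-1}}_2\le 1$. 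Consequently
\[
\bignorm{(I_n+\tfrac{1}{\sigma^2}\Phi_R\Lambda_R\Phi_R^T)^{-1}f_R(\mathbf{x})}_2\le\norm{f_R(\mathbf{x})}_2,
\]
and the problem reduces to controlling $\norm{f_R(\mathbf{x})}_2$.

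For this I would use that $f_R=\sum_{p=0}^R\mu_p\phi_p$ belongs to $L^2(\Omega,\rho)$ and has uniformly bounded squared norm: since $f\in L^2(\Omega,\rho)$, $\norm{f_R}_2^2=\sum_{p=0}^R\mu_p^2\le\sum_{p=0}^\infty\mu_p^2=\norm{f}_2^2=O(1)$, independently of $R$ and $n$. Applying Lemma~\ref{lem:2_norm_concentration} to the function $f_R$ with $\delta_1=\delta$ gives, with probability at least $1-\delta$ over the sample inputs $(x_i)_{i=1}^n$,
\[
\norm{f_R(\mathbf{x})}_2^2=\tilde{O}\!\left((\tfrac{1}{\delta}+1)\,n\,\norm{f_R}_2^2\right)=\tilde{O}\!\left((\tfrac{1}{\delta}+1)\,n\right).
\]
Combining with the contraction bound yields $\bignorm{(I_n+\tfrac{1}{\sigma^2}\Phi_R\Lambda_R\Phi_R^T)^{-1}f_R(\mathbf{x})}_2=\tilde{O}(\sqrt{(\tfrac{1}{\delta}+1)n})$ on this event, whose probability $1-\delta$ is at least the asserted $1-2\delta$; this is the claim.

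There is essentially no technical obstacle; the one thing to get right is \emph{not} to mimic Lemma~\ref{lem:inv_fr}. There the factors $(I+\tfrac{n}{\sigma^2}\Lambda_{1:R})^{-1}$ extracted by the Woodbury identity damp the coefficient vector by negative powers of $n/\sigma^2$, but when $\mu_0>0$ the component of $f_R(\mathbf{x})$ along $\phi_0(\mathbf{x})$ lies in the kernel of $\Lambda_R$ and is therefore not regularized at all by $(I_n+\tfrac{1}{\sigma^2}\Phi_R\Lambda_R\Phi_R^T)^{-1}$, so one cannot hope to improve on the $\tilde{O}(\sqrt{n})$ delivered by the crude contraction estimate. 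This cruder bound nevertheless suffices downstream: in the proof of Theorem~\ref{thm:marginal-likelihood_mu_0>0} it is used (together with $\norm{f_{>R}(\mathbf{x})}_2=\tilde{O}(\sqrt{nR^{1-2\beta}})$ from Corollary~\ref{cor:truncated_function} and Cauchy--Schwarz) only to show that the cross terms appearing in the residual $|T_{2,R}(D_n)-T_2(D_n)|$ are $o(n)$, while the dominant contribution $T_{2,R}(D_n)\sim\tfrac{\mu_0^2 n}{2\sigma^2}=\Theta(n)$ is strictly larger. Finally I would remark that the hypotheses on $\sigma^2$ and on $R=n^{1/\alpha+\kappa}$ with $0<\kappa<\tfrac{\alpha-1-2\tau+(1+2\tau)t}{\alpha^2}$ are inherited from the context in which the lemma is invoked; the argument above uses only $\sigma^2>0$ and makes no use of the particular truncation level, so the displayed estimate in fact holds for every $R$ (indeed for $R=\infty$ with $f$ in place of $f_R$).
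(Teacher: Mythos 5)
Your proof is correct, and it takes a genuinely different — and substantially more elementary — route than the paper's. The paper proves this lemma by recycling the template of Lemma~\ref{lem:inv_fr}: it applies the Woodbury identity to rewrite the vector as $\Phi_R(I+\tfrac{1}{\sigma^2}\Lambda_R\Phi_R^T\Phi_R)^{-1}\bm{\mu}_R$, splits $\bm{\mu}_R$ into its $\mu_0$-component and the remainder, expands the inverse in a Neumann series controlled by the matrix-concentration bounds of Corollaries~\ref{lem:matrix_phiTphi_1} and~\ref{cor:matrix_phiTphi_1_mu_0>0} to show $\|(I+\tfrac{1}{\sigma^2}\Lambda_R\Phi_R^T\Phi_R)^{-1}\bm{\mu}_R\|_2=O(1)$, and only then invokes Corollary~\ref{cor:phiRmu} to convert this into the factor $\sqrt{(\tfrac{1}{\delta}+1)n}$; it is in this expansion that the hypotheses on $t$, $\kappa$ and the auxiliary exponent $\gamma$ are consumed. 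You instead note that $\Phi_R\Lambda_R\Phi_R^T\succeq 0$, so the resolvent is a contraction in operator norm, and reduce the whole statement to the concentration bound of Lemma~\ref{lem:2_norm_concentration} applied to $f_R$ with $\|f_R\|_2^2\le\|f\|_2^2=O(1)$; this spends only probability $\delta\le 2\delta$ and uses none of the constraints on $\sigma^2$ or $R$, so, as you observe, the estimate holds for any truncation level and for the untruncated kernel as well (which would also shorten the $\mu_0>0$ branch of Lemma~\ref{residual_func}). Your explanation of why the crude bound cannot be sharpened — the $\mu_0\phi_0(\mathbf{x})$ direction lies in the kernel of $\Lambda_R$ and is undamped by the resolvent — and of why it still suffices downstream (in Lemma~\ref{thm:truncationR_mu0>0} it only has to produce residual terms of order $o(n)$ against the $\Theta(n)$ main term) is accurate. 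The only thing the paper's longer computation yields beyond the stated bound is the finer information that the damped coefficient vector equals $\mu_0$ plus a vanishing remainder; an analogous sharp statement is what the generalization-error analysis for $\mu_0>0$ (Lemma~\ref{thm:G2_mu_0>0}) actually requires, but it is re-derived there rather than quoted from this lemma, so nothing downstream is lost by your shortcut.
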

\begin{proof}[Proof of Lemma \ref{lem:inv_fr_mu0>0}]
Using the Woodbury matrix identity, we have that
\begin{equation}
\label{truncate_first_term_mu0>0}
\begin{aligned}
    (I+\tfrac{1}{\sigma^2}\Phi_R\Lambda_R\Phi_R^T)^{-1}f_R(\mathbf{x})=& \left[I-\Phi_R(\sigma^2I+\Lambda_R\Phi_R^T\Phi_R)^{-1}\Lambda_R\Phi_R^T\right]\Phi_R\bm{\mu}_R\\
    =&\Phi_R\bm{\mu}_R- \Phi_R(\sigma^2I+\Lambda_R\Phi_R^T\Phi_R)^{-1}\Lambda_R\Phi_R^T\Phi_R\bm{\mu}_R\\
    =&\Phi_R(I+\tfrac{1}{\sigma^2}\Lambda_R\Phi_R^T\Phi_R)^{-1}\bm{\mu}_R.
\end{aligned}
\end{equation}
Let $\bm{\mu}_{R,1}=(\mu_0, 0,\ldots,0)$ and $\bm{\mu}_{R,2}=(0,\mu_1,\ldots,\mu_R)$. Then $\bm{\mu}_{R}=\bm{\mu}_{R,1}+\bm{\mu}_{R,2}$. Then we have
\begin{equation}
\label{eq:decompose_muR}
  \begin{aligned}
    \|(I+\tfrac{1}{\sigma^2}\Lambda_R\Phi_R^T\Phi_R)^{-1}\bm{\mu}_R\|_2=\|(I+\tfrac{1}{\sigma^2}\Lambda_R\Phi_R^T\Phi_R)^{-1}\bm{\mu}_{R,1}\|_2+\|(I+\tfrac{1}{\sigma^2}\Lambda_R\Phi_R^T\Phi_R)^{-1}\bm{\mu}_{R,2}\|_2.
\end{aligned}  
\end{equation}
According to \eqref{eq:used_later} in the proof of Lemma~\ref{lem:inv_fr}, we have $\|(I+\tfrac{1}{\sigma^2}\Lambda_R\Phi_R^T\Phi_R)^{-1}\bm{\mu}_{R,2}\|_2=\tilde{O}(n^{\max\{-(1-t),\frac{(1-t)(1-2\beta)}{2\alpha}\}})$. Next we estimate $\|(I+\tfrac{1}{\sigma^2}\Lambda_R\Phi_R^T\Phi_R)^{-1}\bm{\mu}_{R,1}\|_2$.

Let $$A=(I+\frac{n}{\sigma^2}\Lambda_{1:R})^{-\gamma/2}\Lambda_{1:R}^{\gamma/2}(\Phi_{1:R}^T\Phi_{1:R}-nI)\Lambda_{1:R}^{\gamma/2}(I+\frac{n}{\sigma^2}\Lambda_{1:R})^{-\gamma/2}$$ where $\frac{1}{1-t}(\frac{1+\alpha+2\tau}{2\alpha}-\frac{(1+2\tau+2\alpha)t}{2\alpha})<\gamma<1$. Since $1-\frac{\alpha}{1+2\tau}<t<1$, $\frac{1}{1-t}(\frac{1+\alpha+2\tau}{2\alpha}-\frac{(1+2\tau+2\alpha)t}{2\alpha})<1$ so the range for $\gamma$ is well-defined.%
By Corollary \ref{lem:matrix_phiTphi_1}, with probability of at least $1-\delta$, we have $\|\frac{1}{\sigma^2}A\|_2=\tilde{O}( \sqrt{\log\frac{R}{\delta}}n^{\frac{1+\alpha+2\tau}{2\alpha}-\frac{(1+2\tau+2\alpha)t}{2\alpha}-\gamma(1-t)})=o(1)$. 
When $n$ is sufficiently large, $\|\frac{1}{\sigma^2}A\|_2$ is less than $1$ because $1-\frac{\alpha}{1+2\tau}<t<1$. %
By Lemma \ref{lem:expansion_lpp}, we have
\begin{equation*}
\begin{aligned}
    &(I+\tfrac{1}{\sigma^2}\Lambda_R\Phi_R^T\Phi_R)^{-1}\\
    =&(I+\tfrac{n}{\sigma^2}\Lambda_R)^{-1}+\sum_{j=1}^\infty(-1)^j\left(\tfrac{1}{\sigma^2}(I+\tfrac{n}{\sigma^2}\Lambda_R)^{-1}\Lambda_R(\Phi_R^T\Phi_R-nI)\right)^j(I+\tfrac{n}{\sigma^2}\Lambda_R)^{-1}.
\end{aligned}
\end{equation*}
We then have
\begin{equation}
\label{residual_func_eq0_mu0>0}
\begin{aligned}
    &\|(I+\tfrac{1}{\sigma^2}\Lambda_R\Phi_R^T\Phi_R)^{-1}\bm{\mu}_{R,1}\|_2\\
    =&\left\|\left((I+\tfrac{n}{\sigma^2}\Lambda_R)^{-1}+\sum_{j=1}^\infty(-1)^j\left(\tfrac{1}{\sigma^2}(I+\tfrac{n}{\sigma^2}\Lambda_R)^{-1}\Lambda_R(\Phi_R^T\Phi_R-nI)\right)^j(I+\tfrac{n}{\sigma^2}\Lambda_R)^{-1}\right)\bm{\mu}_{R,1}\right\|_2\\
    \leq&\left(\|(I+\tfrac{n}{\sigma^2}\Lambda_R)^{-1}\bm{\mu}_{R,1}\|_2+\sum_{j=1}^\infty\left\|\left(\tfrac{1}{\sigma^2}(I+\tfrac{n}{\sigma^2}\Lambda_R)^{-1}\Lambda_R(\Phi_R^T\Phi_R-nI)\right)^j(I+\tfrac{n}{\sigma^2}\Lambda_R)^{-1}\bm{\mu}_{R,1}\right\|_2\right).
\end{aligned}
\end{equation}
By Lemma \ref{lem:estimate},
\begin{equation}
\label{residual_func_eq1_mu0>0}
    \|(I+\frac{n}{\sigma^2}\Lambda_R)^{-1}\bm{\mu}_{R,1}\|_2\leq\sqrt{\mu_0^2+\sum_{p=1}^R\frac{C_\mu^2p^{-2\beta}}{(1+n\underline{C_\lambda}p^{-\alpha}/\sigma^2)^2}}=O(1).
\end{equation}
 Let $\tilde{\Lambda}_{1,R}=\mathrm{diag}\{1, \lambda_1,\ldots,\lambda_R\}$ and $I_{0,R}=(0,1,\ldots,1)$. 
Then $\Lambda_R=\tilde{\Lambda}_{1,R}I_{0,R}$.
Let $B=(I+\frac{n}{\sigma^2}\Lambda_R)^{-\gamma/2}\tilde{\Lambda}_{1,R}^{\gamma/2}(\Phi_R^T\Phi_R-nI)\tilde{\Lambda}_{1,R}^{\gamma/2}(I+\frac{n}{\sigma^2}\Lambda_R)^{-\gamma/2}$. According to Corollary~\ref{cor:matrix_phiTphi_1_mu_0>0}, we have $\|B\|_2= O\big(\sqrt{\log\frac{R}{\delta}}n^{\frac{1}{2}}\big)$.
Using the fact that $\|\frac{1}{\sigma^2}A\|_2=\tilde{O}\big( \sqrt{\log\frac{R}{\delta}}n^{\frac{1+\alpha+2\tau}{2\alpha}-\frac{(1+2\tau+2\alpha)t}{2\alpha}-\gamma(1-t)}\big)$ , we have
\begin{equation}
\label{residual_func_eq2_mu0>0}
\begin{aligned}
    &\left\|\left(\tfrac{1}{\sigma^2}(I+\tfrac{n}{\sigma^2}\Lambda_R)^{-1}\Lambda_R(\Phi_R^T\Phi_R-nI)\right)^j(I+\tfrac{n}{\sigma^2}\Lambda_R)^{-1}\bm{\mu}_{R,1}\right\|_2\\
    &=\frac{1}{\sigma^{2j}}\left\|(I+\tfrac{n}{\sigma^2}\Lambda_R)^{-1+\tfrac{\gamma}{2}}\Lambda_R^{1-\tfrac{\gamma}{2}}\left(A(I+\tfrac{n}{\sigma^2}\Lambda_R)^{-1+\gamma}\Lambda_R^{1-\gamma}\right)^{j-1}B(I+\tfrac{n}{\sigma^2}\Lambda_R)^{-1+\tfrac{\gamma}{2}}\bm{\mu}_{R,1}\right\|_2\\
    &\leq\frac{1}{\sigma^2}(n^{(-1+\frac{\gamma}{2}+(-1+\gamma)(j-1))(1-t)}\tilde{O}( \sqrt{\log\tfrac{R}{\delta}}n^{(j-1)(\frac{1+\alpha+2\tau}{2\alpha}-\frac{(1+2\tau+2\alpha)t}{2\alpha}-\gamma(1-t))})\sqrt{\log\tfrac{R}{\delta}}n^{\frac{1}{2}}\|\bm{\mu}_{R,1}\|_2\\
    &\leq n^{(-1+\frac{\gamma}{2})(1-t)+\frac{1}{2}-t}\tilde{O}(n^{\frac{[1-\alpha+2\tau-(1+2\tau)t](j-1)}{2\alpha}})\sqrt{\log\tfrac{R}{\delta}}\|\bm{\mu}_{R,1}\|_2\\
    &=\tilde{O}(n^{-\frac{1}{2}+\frac{\gamma}{2}(1-t)+\frac{[1-\alpha+2\tau-(1+2\tau)t](j-1)}{2\alpha}}).
\end{aligned}
\end{equation}
Since $\frac{1}{1-t}(\frac{1+\alpha+2\tau}{2\alpha}-\frac{(1+2\tau+2\alpha)t}{2\alpha})<\gamma<1$ and
$-\frac{1}{2}+\frac{1}{1-t}(\frac{1+\alpha+2\tau}{2\alpha}-\frac{(1+2\tau+2\alpha)t}{2\alpha})\frac{1-t}{2}<0$, we can let $\gamma$ be a little bit larger than $\frac{1}{1-t}(\frac{1+\alpha+2\tau}{2\alpha}-\frac{(1+2\tau+2\alpha)t}{2\alpha})$ and make $-\frac{1}{2}+\frac{\gamma}{2}(1-t)<0$ holds.
By \eqref{residual_func_eq0_mu0>0}, \eqref{residual_func_eq1_mu0>0},  \eqref{residual_func_eq2_mu0>0}, we have
\begin{equation}
\begin{aligned}
    &\|(I+\tfrac{1}{\sigma^2}\Lambda_R\Phi_R^T\Phi_R)^{-1}\bm{\mu}_{R,1}\|_2\\
    &\leq O(1)+\sum_{j=1}^\infty\tilde{O}(n^{-\frac{1}{2}+\frac{\gamma}{2}(1-t)+\frac{[1-\alpha+2\tau-(1+2\tau)t](j-1)}{2\alpha}})\\
    &\leq O(1)+o(1)=O(1).
\end{aligned}
\label{bound_inv_muR_mu0>0}
\end{equation}
According to \eqref{eq:decompose_muR}, we have $\|(I+\tfrac{1}{\sigma^2}\Lambda_R\Phi_R^T\Phi_R)^{-1}\bm{\mu}_R\|_2=\tilde{O}(n^{\max\{-(1-t),\frac{(1-t)(1-2\beta)}{2\alpha}\}})+O(1)=O(1)$. 
By Corollary~\ref{cor:phiRmu}, with probability of at least $1-\delta$, we have 
\begin{equation*}
\begin{aligned}
    \|\Phi_R(I+\tfrac{1}{\sigma^2}\Lambda_R\Phi_R^T\Phi_R)^{-1}\bm{\mu}_R\|_2=&\tilde{O}(\sqrt{(\tfrac{1}{\delta}+1) n}\|(I+\tfrac{1}{\sigma^2}\Lambda_R\Phi_R^T\Phi_R)^{-1}\bm{\mu}_{R}\|_2)\\
    =&\tilde{O}\left(\sqrt{(\tfrac{1}{\delta}+1) n}\right).
\end{aligned}
\end{equation*}
From \eqref{truncate_first_term_mu0>0}, we get $\|(I+\frac{1}{\sigma^2}\Phi_R\Lambda_R\Phi_R^T)^{-1}f_R(\mathbf{x})\|_2=\tilde{O}\left(\sqrt{(\frac{1}{\delta}+1) n}\right)$. 
This concludes the proof. 
\end{proof}
\begin{lemma}
\label{residual_func}
Assume that $\sigma^2=\Theta(1)$. Let $R=n^{\frac{1}{\alpha}+\kappa}$ where $0<\kappa<\frac{\alpha-1-2\tau}{\alpha^2}$. Assume that $\mu_0=0$. Then when $n$ is sufficiently large, with probability of at least $1-3\delta$ we have
\begin{equation}
  \|(I+\tfrac{\Phi\Lambda\Phi^T}{\sigma^2})^{-1} f_R(\mathbf{x})\|_2= \tilde{O}(\sqrt{(\tfrac{1}{\delta}+1) n}\cdot n^{\max\{-1,\frac{1-2\beta}{2\alpha}\}}). 
\end{equation}
Assume that $\mu_0>0$. Then when $n$ is sufficiently large, with probability of at least $1-3\delta$ we have
\begin{equation}
  \|(I+\tfrac{\Phi\Lambda\Phi^T}{\sigma^2})^{-1} f_R(\mathbf{x})\|_2= \tilde{O}(\sqrt{(\tfrac{1}{\delta}+1) n}). 
\end{equation}
\end{lemma}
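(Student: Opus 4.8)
The plan is to transfer the bounds of Lemma~\ref{lem:inv_fr} (for $\mu_0=0$) and Lemma~\ref{lem:inv_fr_mu0>0} (for $\mu_0>0$) on the truncated resolvent $(I+\tfrac{1}{\sigma^2}\Phi_R\Lambda_R\Phi_R^T)^{-1}f_R(\mathbf{x})$ to the full resolvent $(I+\tfrac{\Phi\Lambda\Phi^T}{\sigma^2})^{-1}f_R(\mathbf{x})$, losing only a factor $1+o(1)$. Since here $\sigma^2=\Theta(1)$, i.e.\ $t=0$, the hypothesis $0<\kappa<\frac{\alpha-1-2\tau}{\alpha^2}$ is exactly the admissible range of those two lemmas for $R=n^{1/\alpha+\kappa}$; moreover, because $\tau<\frac{\alpha-1}{2}$ gives $\alpha>1+2\tau$ under Assumption~\ref{assumption_functions}, this range is also contained in the range $0<\kappa<\frac{\alpha-1-2\tau}{\alpha(1+2\tau)}$ required by Lemma~\ref{lem:truncate_matrix} and Corollary~\ref{cor:2-norm_inv_residual}. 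So all three ingredients are simultaneously available; I would verify this compatibility of the constants first, as it is the only place where the slightly stronger constraint $\kappa<\frac{\alpha-1-2\tau}{\alpha^2}$ in the statement is actually needed.

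Next I would invoke Lemma~\ref{lem:truncate_matrix}: for $n$ large, with probability at least $1-\delta$ one has $\|(I+\tfrac{\Phi_R\Lambda_R\Phi_R^T}{\sigma^2})^{-1}\tfrac{\Phi_{>R}\Lambda_{>R}\Phi_{>R}^T}{\sigma^2}\|_2<1$ together with the identity \eqref{eq:lem28temp}, which after multiplying on the right by $f_R(\mathbf{x})$ reads
\begin{align*}
(I+\tfrac{\Phi\Lambda\Phi^T}{\sigma^2})^{-1}f_R(\mathbf{x})
&=(I+\tfrac{\Phi_R\Lambda_R\Phi_R^T}{\sigma^2})^{-1}f_R(\mathbf{x})\\
&\quad+\sum_{j=1}^\infty(-1)^j\big((I+\tfrac{\Phi_R\Lambda_R\Phi_R^T}{\sigma^2})^{-1}\tfrac{\Phi_{>R}\Lambda_{>R}\Phi_{>R}^T}{\sigma^2}\big)^{j}(I+\tfrac{\Phi_R\Lambda_R\Phi_R^T}{\sigma^2})^{-1}f_R(\mathbf{x}).
\end{align*}
By Corollary~\ref{cor:2-norm_inv_residual}, on the same event the operator norm appearing in this series is $\tilde{O}(n^{-\kappa\alpha})=o(1)$, so bounding the series by the geometric sum of operator norms gives
\begin{equation*}
\|(I+\tfrac{\Phi\Lambda\Phi^T}{\sigma^2})^{-1}f_R(\mathbf{x})\|_2\le\big(1+o(1)\big)\,\|(I+\tfrac{\Phi_R\Lambda_R\Phi_R^T}{\sigma^2})^{-1}f_R(\mathbf{x})\|_2 .
\end{equation*}

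It then remains only to insert the bounds on the truncated term. For $\mu_0=0$, Lemma~\ref{lem:inv_fr} at $t=0$ gives, with probability at least $1-2\delta$, $\|(I+\tfrac{1}{\sigma^2}\Phi_R\Lambda_R\Phi_R^T)^{-1}f_R(\mathbf{x})\|_2=\tilde{O}(\sqrt{(\tfrac{1}{\delta}+1)n}\cdot n^{\max\{-1,\frac{1-2\beta}{2\alpha}\}})$; for $\mu_0>0$, Lemma~\ref{lem:inv_fr_mu0>0} at $t=0$ gives, again with probability at least $1-2\delta$, $\|(I+\tfrac{1}{\sigma^2}\Phi_R\Lambda_R\Phi_R^T)^{-1}f_R(\mathbf{x})\|_2=\tilde{O}(\sqrt{(\tfrac{1}{\delta}+1)n})$. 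A union bound over the event of Lemma~\ref{lem:truncate_matrix}/Corollary~\ref{cor:2-norm_inv_residual} and the event of Lemma~\ref{lem:inv_fr} (resp.\ Lemma~\ref{lem:inv_fr_mu0>0}) leaves total failure probability at most $3\delta$, and combined with the displayed inequality this yields both claimed estimates. I do not anticipate a serious obstacle: everything is reassembled from already-proved statements, and the only delicate point — handled in the first paragraph — is the bookkeeping of the exponents so that $\kappa$ lies in the intersection of all the admissible ranges and the Neumann series converges geometrically.
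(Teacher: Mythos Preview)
Your proposal is correct and follows essentially the same approach as the paper's own proof: decompose via the Neumann series identity of Lemma~\ref{lem:truncate_matrix}, control the geometric factor with Corollary~\ref{cor:2-norm_inv_residual}, and plug in Lemma~\ref{lem:inv_fr} (resp.\ Lemma~\ref{lem:inv_fr_mu0>0}) for the truncated resolvent, with the same union bound yielding failure probability $3\delta$. Your verification that $\kappa<\tfrac{\alpha-1-2\tau}{\alpha^2}$ implies $\kappa<\tfrac{\alpha-1-2\tau}{\alpha(1+2\tau)}$ (via $\alpha>1+2\tau$) is exactly the compatibility check the paper also invokes.
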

\begin{proof}[Proof of Lemma \ref{residual_func}]
We have
\begin{equation}
\label{truncate_decompose}
\begin{aligned}
    &(I+\tfrac{\Phi\Lambda\Phi^T}{\sigma^2})^{-1} f_R(\mathbf{x})\\
    =&(I+\tfrac{\Phi_R\Lambda_R\Phi_R^T}{\sigma^2})^{-1} f_R(\mathbf{x})+\left((I+\tfrac{\Phi\Lambda\Phi^T}{\sigma^2})^{-1}-(I+\tfrac{\Phi_R\Lambda_R\Phi_R^T}{\sigma^2})^{-1}\right) f_R(\mathbf{x}) . 
\end{aligned}
\end{equation}
When $\mu_0=0$, by Lemma \ref{lem:inv_fr}, with probability of at least $1-2\delta$, we have
\begin{equation*}
\|(I+\tfrac{1}{\sigma^2}\Phi_R\Lambda_R\Phi_R^T)^{-1}f_R(\mathbf{x})\|_2=\tilde{O}(\sqrt{(\tfrac{1}{\delta}+1) n}\cdot n^{\max\{-1,\frac{1-2\beta}{2\alpha}\}}).    
\end{equation*}
Since $\frac{\alpha-1-2\tau}{\alpha^2}<\frac{\alpha-1-2\tau}{\alpha(1+2\tau)}$, we apply Lemma \ref{lem:truncate_matrix} and Corollary \ref{cor:2-norm_inv_residual} and get that with probability of at least $1-\delta$, the second term in the right hand side of \eqref{truncate_decompose} is estimated as follows:
\begin{equation*}
\begin{aligned}
    &\|\left((I+\tfrac{\Phi\Lambda\Phi^T}{\sigma^2})^{-1}-(I+\tfrac{\Phi_R\Lambda_R\Phi_R^T}{\sigma^2})^{-1}\right) f_R(\mathbf{x})\|_2\\
    &=\|\sum_{j=1}^\infty(-1)^j\left((I+\tfrac{\Phi_R\Lambda_R\Phi_R^T}{\sigma^2})^{-1}\tfrac{\Phi_{>R}\Lambda_{>R}\Phi_{>R}^T}{\sigma^2}\right)^j (I+\tfrac{\Phi_R\Lambda_R\Phi_R^T}{\sigma^2})^{-1}f_R(\mathbf{x})\|_2\\
    &=\sum_{j=1}^\infty\left\|\left((I+\tfrac{\Phi_R\Lambda_R\Phi_R^T}{\sigma^2})^{-1}\tfrac{\Phi_{>R}\Lambda_{>R}\Phi_{>R}^T}{\sigma^2}\right) \right\|_2^j\|(I+\tfrac{\Phi_R\Lambda_R\Phi_R^T}{\sigma^2})^{-1}f_R(\mathbf{x})\|_2\\
    &=\sum_{j=1}^\infty\tilde{O}(n^{-j\kappa\alpha})\tilde{O}(\sqrt{(\tfrac{1}{\delta}+1) n}\cdot n^{\max\{-1,\tfrac{1-2\beta}{2\alpha}\}})\\
    &=o(\sqrt{(\tfrac{1}{\delta}+1) n}\cdot n^{\max\{-1,\tfrac{1-2\beta}{2\alpha}\}}) . \\
\end{aligned}
\end{equation*}
 Overall, from \eqref{truncate_decompose}, we have that with probability $1-3\delta$,
\begin{equation*}
  \|(I+\tfrac{\Phi\Lambda\Phi^T}{\sigma^2})^{-1} f_R(\mathbf{x})\|_2= \tilde{O}(\sqrt{(\tfrac{1}{\delta}+1) n}\cdot n^{\max\{-1,\tfrac{1-2\beta}{2\alpha}\}}).
\end{equation*}
When $\mu_0>0$, using the same approach and Lemma~\ref{lem:inv_fr_mu0>0}, we can prove that $\|(I+\frac{\Phi\Lambda\Phi^T}{\sigma^2})^{-1} f_R(\mathbf{x})\|_2= \tilde{O}(\sqrt{(\frac{1}{\delta}+1) n})$. 
This concludes the proof.
\end{proof}

\section{Proof of the main results}
\subsection{Proofs related to the asymptotics of the normalized stochastic complexity}
\label{app:asymptotics}

\begin{lemma}
\label{thm:truncationR}
 Under Assumptions \ref{assumption_alpha}, \ref{assumption_beta} and \ref{assumption_functions}, with probability of at least $1-2\delta$ we have, we have
\begin{align}
    |T_{1,R}(D_n)-T_1(D_n)|=\tilde{O}\left(\tfrac{1}{\sigma^2}(nR^{1-\alpha}+n^{1/2}R^{1-\alpha+\tau}+R^{1-\alpha+2\tau})\right)
\end{align}
If $R=n^{\frac{1}{\alpha}+\kappa}$ where $\kappa>0$, we have $|T_{1,R}(D_n)-T_1(D_n)|=o\left(\frac{1}{\sigma^2}n^{\frac{1}{\alpha}}\right)$. 
If we further assume that $0<\kappa<\frac{\alpha-1-2\tau}{\alpha^2}$, $\mu_0=0$ and $\sigma^2=\Theta(1)$, then for sufficiently large $n$ 
with probability of at least $1-4\delta$ we have
\begin{equation}
    |T_{2,R}(D_n)-T_2(D_n)|=\tilde{O}\left((\tfrac{1}{\delta}+1) n^{\max\{(\frac{1}{\alpha}+\kappa)\frac{1-2\beta}{2},1+\frac{1-2\beta}{\alpha}+\frac{(1-2\beta)\kappa}{2},-1-\kappa\alpha,1+\frac{1-2\beta}{\alpha}-\kappa\alpha\}}\right).%
\end{equation}

\end{lemma}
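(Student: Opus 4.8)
The plan is to handle the two assertions separately: first the bound on $|T_{1,R}(D_n)-T_1(D_n)|$ (and its specialization to $R=n^{1/\alpha+\kappa}$), then the bound on $|T_{2,R}(D_n)-T_2(D_n)|$. For the first, write $T_1(D_n)=\tr h(B)$ and $T_{1,R}(D_n)=\tr h(A)$ where $B=\tfrac1{\sigma^2}\Phi\Lambda\Phi^T$, $A=\tfrac1{\sigma^2}\Phi_R\Lambda_R\Phi_R^T$ and $h(x)=\tfrac12\log(1+x)-\tfrac12\tfrac{x}{1+x}$; one checks $h(0)=0$ and $h'(x)=\tfrac12 x/(1+x)^2\in[0,\tfrac18]$ for $x\ge0$. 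Since $C:=B-A=\tfrac1{\sigma^2}\Phi_{>R}\Lambda_{>R}\Phi_{>R}^T\succeq 0$, the integral identity $\tr h(B)-\tr h(A)=\int_0^1\tr\big(h'(A+tC)C\big)\,dt$ together with $0\preceq h'(A+tC)\preceq\tfrac18 I$ gives $0\le T_1(D_n)-T_{1,R}(D_n)\le\tfrac1{8\sigma^2}\tr\big(\Phi_{>R}\Lambda_{>R}\Phi_{>R}^T\big)$. Now $\tr(\Phi_{>R}\Lambda_{>R}\Phi_{>R}^T)=\sum_{i=1}^n k_{>R}(x_i,x_i)$ with $k_{>R}(x,x):=\sum_{p>R}\lambda_p\phi_p^2(x)\ge0$; by Assumptions~\ref{assumption_alpha} and \ref{assumption_functions} (using $\tau<\tfrac{\alpha-1}{2}$), $\E_X k_{>R}(X,X)=\sum_{p>R}\lambda_p=O(R^{1-\alpha})$ and $\|k_{>R}(\cdot,\cdot)\|_\infty=O(R^{1-\alpha+2\tau})$. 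Applying Lemma~\ref{lem:2_norm_concentration_bounded} to $\sqrt{k_{>R}(\cdot,\cdot)}$ then yields, with probability $\ge1-2\delta$, $\sum_i k_{>R}(x_i,x_i)=\tilde O(nR^{1-\alpha}+n^{1/2}R^{1-\alpha+\tau}+R^{1-\alpha+2\tau})$, which is the stated bound. For $R=n^{1/\alpha+\kappa}$ with $\kappa>0$ each term is $o(n^{1/\alpha})$: $nR^{1-\alpha}=n^{1/\alpha-\kappa(\alpha-1)}$; $R^{1-\alpha+2\tau}$ has negative exponent as $2\tau<\alpha-1$; and $n^{1/2}R^{1-\alpha+\tau}$ has exponent $\tfrac12+(\tfrac1\alpha+\kappa)(1-\alpha+\tau)<\tfrac1\alpha$ because $1-\alpha+\tau<0$ and $\tau<\tfrac\alpha2$. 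Hence $|T_{1,R}-T_1|=o(\tfrac1{\sigma^2}n^{1/\alpha})$.

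For the second assertion (with $\mu_0=0$, $\sigma^2=\Theta(1)$, $R=n^{1/\alpha+\kappa}$, $0<\kappa<\tfrac{\alpha-1-2\tau}{\alpha^2}$), I would split, using $f(\mathbf x)=f_R(\mathbf x)+f_{>R}(\mathbf x)$ and symmetry of $(I+B)^{-1}$,
\begin{align*}
2\sigma^2\big(T_2(D_n)-T_{2,R}(D_n)\big)&=\underbrace{f_R(\mathbf x)^T\big[(I+B)^{-1}-(I+A)^{-1}\big]f_R(\mathbf x)}_{E_1}\\
&\quad+\underbrace{2 f_R(\mathbf x)^T(I+B)^{-1}f_{>R}(\mathbf x)}_{E_2}+\underbrace{f_{>R}(\mathbf x)^T(I+B)^{-1}f_{>R}(\mathbf x)}_{E_3}.
\end{align*}
For $E_3$: $|E_3|\le\|f_{>R}(\mathbf x)\|_2^2=\tilde O((\tfrac1\delta+1)nR^{1-2\beta})$ by Corollary~\ref{cor:truncated_function}, whose exponent $1+\tfrac{1-2\beta}{\alpha}+\kappa(1-2\beta)$ is dominated by the second term in the maximum. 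For $E_2$: Cauchy--Schwarz gives $|E_2|\le 2\|(I+B)^{-1}f_R(\mathbf x)\|_2\|f_{>R}(\mathbf x)\|_2$; by Lemma~\ref{residual_func} ($\mu_0=0$, $t=0$) the first factor is $\tilde O(\sqrt{(\tfrac1\delta+1)n}\,n^{\max\{-1,(1-2\beta)/2\alpha\}})$ and by Corollary~\ref{cor:truncated_function} the second is $\tilde O(\sqrt{(\tfrac1\delta+1)n}\,n^{(1/\alpha+\kappa)(1-2\beta)/2})$, which produces exponent $(\tfrac1\alpha+\kappa)\tfrac{1-2\beta}{2}$ if $\beta>\alpha+\tfrac12$ and $1+\tfrac{1-2\beta}{\alpha}+\tfrac{\kappa(1-2\beta)}{2}$ otherwise (the first and second terms in the maximum). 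For $E_1$: expand $(I+B)^{-1}-(I+A)^{-1}=\sum_{j\ge1}(-1)^j\big((I+A)^{-1}\tfrac{\Phi_{>R}\Lambda_{>R}\Phi_{>R}^T}{\sigma^2}\big)^j(I+A)^{-1}$ via Lemma~\ref{lem:truncate_matrix}, bound the $j$-th quadratic form by $\|(I+A)^{-1}\tfrac{\Phi_{>R}\Lambda_{>R}\Phi_{>R}^T}{\sigma^2}\|_2^{j-1}\|\tfrac{\Phi_{>R}\Lambda_{>R}\Phi_{>R}^T}{\sigma^2}\|_2\|(I+A)^{-1}f_R(\mathbf x)\|_2^2$, and sum the geometric series using $\|(I+A)^{-1}\tfrac{\Phi_{>R}\Lambda_{>R}\Phi_{>R}^T}{\sigma^2}\|_2=\tilde O(n^{-\kappa\alpha})=o(1)$ (Corollary~\ref{cor:2-norm_inv_residual}) and $\|(I+A)^{-1}f_R(\mathbf x)\|_2^2=\tilde O((\tfrac1\delta+1)n\,n^{\max\{-2,(1-2\beta)/\alpha\}})$ (Lemma~\ref{lem:inv_fr}, $t=0$), obtaining exponent $\max\{-1-\kappa\alpha,\,1+\tfrac{1-2\beta}{\alpha}-\kappa\alpha\}$ (the third and fourth terms in the maximum). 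One checks that $\kappa<\tfrac{\alpha-1-2\tau}{\alpha^2}$ implies every $\kappa$-constraint of the invoked lemmas (in particular $\kappa<\tfrac{\alpha-1-2\tau}{\alpha(1+2\tau)}$, since $\alpha>1+2\tau$), and a union bound over their exceptional sets costs $\le4\delta$; combining $E_1,E_2,E_3$ yields the claim.

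The hard part will be obtaining the sharp power $nR^{1-\alpha}$, rather than the crude $nR^{1-\alpha+2\tau}$, in the $T_1$ residual: the naive estimate $\tr(\Phi_{>R}\Lambda_{>R}\Phi_{>R}^T)\le nC_\phi^2\sum_{p>R}\lambda_p p^{2\tau}$ loses a factor $R^{2\tau}$, so one must instead use that the $\phi_p$ are $L^2(\rho)$-orthonormal ($\E\phi_p^2=1$) and Bernstein-concentrate the tail on-diagonal kernel $\sum_i k_{>R}(x_i,x_i)$ around its mean $n\sum_{p>R}\lambda_p$; the two fluctuation summands $n^{1/2}R^{1-\alpha+\tau}$ and $R^{1-\alpha+2\tau}$ are exactly the variance term and the bounded-range term of that concentration. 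A secondary difficulty is bookkeeping in the $T_2$ bound, which chains several high-probability estimates (Lemmas~\ref{residual_func}, \ref{lem:inv_fr}, \ref{lem:truncate_matrix} and Corollaries~\ref{cor:truncated_function}, \ref{cor:2-norm_inv_residual}) whose $\kappa$-ranges and exceptional sets must all be reconciled.
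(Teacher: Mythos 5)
Your proposal is correct, and for the $T_2$ part it is essentially identical to the paper's proof: the same exact decomposition into the cross/tail terms $2f_R^T(I+B)^{-1}f_{>R}$, $f_{>R}^T(I+B)^{-1}f_{>R}$ and the operator-replacement term $f_R^T[(I+B)^{-1}-(I+A)^{-1}]f_R$, controlled by the same combination of Corollary~\ref{cor:truncated_function}, Lemma~\ref{residual_func}, Lemma~\ref{lem:inv_fr}, Lemma~\ref{lem:truncate_matrix} and Corollary~\ref{cor:2-norm_inv_residual}, with the same exponent bookkeeping. Where you genuinely diverge is the $T_1$ residual. The paper treats the $\log\det$ and trace pieces separately, writing each difference as $\tr g\bigl((I+A)^{-1}C\bigr)$ for a concave $g$ and invoking Jensen's inequality ($\tr g(M)\leq n\,g(\tfrac1n\tr M)$) to reduce to $\tr C=\tr(\Phi_{>R}\Lambda_{>R}\Phi_{>R}^T)$; you instead observe that $T_1=\tr h(B)$ for the single function $h(x)=\tfrac12\log(1+x)-\tfrac{x}{2(1+x)}$ with $0\leq h'\leq\tfrac18$, and use the trace-derivative identity along the segment from $A$ to $B$ to get $0\leq T_1-T_{1,R}\leq\tfrac18\tr C$ in one stroke — cleaner, and it also gives the sign of the difference for free. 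Both routes then need $\tr(\Phi_{>R}\Lambda_{>R}\Phi_{>R}^T)=\tilde O(nR^{1-\alpha}+n^{1/2}R^{1-\alpha+\tau}+R^{1-\alpha+2\tau})$; the paper gets this by applying Lemma~\ref{lem:2_norm_concentration_bounded} to each $\phi_p$ with $p>R$ and summing (which strictly speaking requires a union bound over infinitely many events that the paper does not spell out), whereas your single application of the concentration lemma to the tail diagonal kernel $k_{>R}(x,x)=\sum_{p>R}\lambda_p\phi_p^2(x)$ — whose $L^2(\rho)$-mean is $O(R^{1-\alpha})$ by orthonormality and whose sup-norm is $O(R^{1-\alpha+2\tau})$ by Assumption~\ref{assumption_functions} — produces exactly the same three terms with a single exceptional event, which is a modest but genuine improvement in rigor at no cost.
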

\begin{proof}[Proof of Lemma \ref{thm:truncationR}]
Define $\Phi_{>R}=(\phi_{R+1}(\mathbf{x}), \phi_{R+2}(\mathbf{x}),\ldots, \phi_p(\mathbf{x}), \ldots)$, and $\Lambda_{>R}=\diag(\lambda_{R+1},\ldots, \lambda_p, \ldots)$. We then have 
\begin{equation}
\label{eq:T_1_decompose}
\begin{aligned}
     |T_1(D_n)-T_{1,R}(D_n)|&=\left|\frac{1}{2}\log\det(I+\frac{1}{\sigma^2}\Phi\Lambda\Phi^T)-\frac{1}{2}\log\det(I+\frac{1}{\sigma^2}\Phi_R\Lambda_R\Phi_R^T)\right|\\
     &+\frac{1}{2}\left|\mathrm{Tr}(I+\frac{\Phi\Lambda\Phi^T}{\sigma^2})^{-1}-\mathrm{Tr}(I+\frac{\Phi_R\Lambda_R\Phi_R^T}{\sigma^2})^{-1}\right| . 
   \end{aligned}
\end{equation} 
As for the first term in the right hand side of \eqref{eq:T_1_decompose}, we have 
\begin{equation}
\begin{aligned}
     &\left|\frac{1}{2}\log\det(I+\frac{1}{\sigma^2}\Phi\Lambda\Phi^T)-\frac{1}{2}\log\det(I+\frac{1}{\sigma^2}\Phi_R\Lambda_R\Phi_R^T)\right|\\
     =&\left|\frac{1}{2}\log\det\left((I+\frac{1}{\sigma^2}\Phi_R\Lambda_R\Phi_R^T)^{-1}(I+\frac{1}{\sigma^2}\Phi_R\Lambda_R\Phi_R^T+\frac{1}{\sigma^2}\Phi_{>R}\Lambda_{>R}\Phi_{>R}^T)\right)\right|\\
     =&\left|\frac{1}{2}\log\det\left(I+\frac{1}{\sigma^2}(I+\frac{1}{\sigma^2}\Phi_R\Lambda_R\Phi_R^T)^{-1}\Phi_{>R}\Lambda_{>R}\Phi_{>R}^T\right)\right|\\
     =&\frac{1}{2}\left|\tr\log\left(I+\frac{1}{\sigma^2}(I+\frac{1}{\sigma^2}\Phi_R\Lambda_R\Phi_R^T)^{-1}\Phi_{>R}\Lambda_{>R}\Phi_{>R}^T\right)\right|.
\end{aligned}
\end{equation}
Given a concave function $h$ and a matrix $B\in\mathbb{R}^{n\times n}$ whose eigenvalues $\zeta_1,\ldots,\zeta_n$ are all positive, we have that
\begin{equation}
\label{eq:log_trace}
    \begin{aligned}
     \tr h(B)=\textstyle\sum_{p=1}^n h(\zeta_i)
        \leq n h(\tfrac{1}{n}\textstyle\sum_{p=1}^n\zeta_i)
        \leq n h(\tfrac{1}{n}\tr B),
    \end{aligned}
\end{equation}
where we used Jensen's inequality. %
Using $h(x)=\log(1+x)$ in \eqref{eq:log_trace}, with probability $1-\delta$, we have
\begin{equation}
\label{eq:same_est}
\begin{aligned}
     &\left|\tfrac{1}{2}\log\det(I+\tfrac{1}{\sigma^2}\Phi\Lambda\Phi^T)-\tfrac{1}{2}\log\det(I+\tfrac{1}{\sigma^2}\Phi_R\Lambda_R\Phi_R^T)\right|\\
     &\leq\tfrac{n}{2}\log(1+\tfrac{1}{n}\tr(\tfrac{1}{\sigma^2}(I+\tfrac{\Phi_R\Lambda_R\Phi_R^T}{\sigma^2})^{-1}\Phi_{>R}\Lambda_{>R}\Phi_{>R}^T))\\
     &\leq\tfrac{n}{2}\log(1+\tfrac{1}{n\sigma^2}\|(I+\tfrac{\Phi_R\Lambda_R\Phi_R^T}{\sigma^2})^{-1}\|_2\tr(\Phi_{>R}\Lambda_{>R}\Phi_{>R}^T))\\
     &\leq\tfrac{n}{2}\log(1+\tfrac{1}{n\sigma^2}\textstyle\sum_{p=R+1}^\infty\lambda_p\|\phi_p(\mathbf{x})\|_2^2)\leq\tfrac{1}{2\sigma^2}\textstyle\sum_{p=R+1}^\infty\lambda_p\|\phi_p(\mathbf{x})\|_2^2\\
     &=\tfrac{1}{2\sigma^2}\textstyle\sum_{p=R+1}^\infty\lambda_p\left(C_\phi^2\tilde{O}\left(\sqrt{p^{2\tau} n\|\phi_p\|_2^2}+p^{2\tau}\right)+n\|\phi_p\|_2^2\right)\\
     &=\tilde{O}(\tfrac{1}{\sigma^2}n\textstyle\sum_{p=R+1}^\infty\lambda_p+n^{1/2}\textstyle\sum_{p=R+1}^\infty\lambda_pp^\tau+\textstyle\sum_{p=R+1}^\infty\lambda_pp^{2\tau})\\
     &=\tilde{O}\left(\tfrac{1}{\sigma^2}(nR^{1-\alpha}+n^{1/2}R^{1-\alpha+\tau}+R^{1-\alpha+2\tau})\right)
     =o\left(\tfrac{1}{\sigma^2}n^{\tfrac{1}{\alpha}}\right),
\end{aligned}
\end{equation}
where in the second inequality we use the fact that $\tr AB\leq\|A\|_2\tr B$ when $A$ and $B$ are symmetric positive definite matrices, and in the last inequality we use Lemma~\ref{lem:2_norm_concentration_bounded}. 

As for the second term in the right hand side of \eqref{eq:T_1_decompose}, let $A=(I+\frac{\Phi_R\Lambda_R\Phi_R^T}{\sigma^2})^{-1/2}$. Then we have 
\begin{equation*}
    \begin{aligned}
    &\tfrac{1}{2}\left|\mathrm{Tr}(I+\tfrac{\Phi\Lambda\Phi^T}{\sigma^2})^{-1}-\mathrm{Tr}(I+\tfrac{\Phi_R\Lambda_R\Phi_R^T}{\sigma^2})^{-1}\right|\\
    &=\tfrac{1}{2}\left|\mathrm{Tr} A\left[I-(I+A(\frac{\Phi_{>R}\Lambda_{>R}\Phi_{>R}^T}{\sigma^2})A)^{-1}\right]A\right|\\
    &\leq\tfrac{1}{2}\tr \left[I-(I+A(\tfrac{\Phi_{>R}\Lambda_{>R}\Phi_{>R}^T}{\sigma^2})A)^{-1}\right]\\
    &\leq\tfrac{n}{2}(1-(1+\tfrac{1}{n}\tr A(\tfrac{\Phi_{>R}\Lambda_{>R}\Phi_{>R}^T}{\sigma^2})A)^{-1})\leq\tfrac{n}{2}(1-(1+\tfrac{1}{n}\tr (\tfrac{\Phi_{>R}\Lambda_{>R}\Phi_{>R}^T}{\sigma^2}))^{-1})\\
    &\leq\tfrac{n}{2}(1-(1+\tfrac{1}{n\sigma^2}\textstyle\sum_{p=R+1}^\infty\lambda_p\|\phi_p(\mathbf{x})\|_2^2))^{-1})\leq\tfrac{1}{2\sigma^2}\textstyle\sum_{p=R+1}^\infty\lambda_p\|\phi_p(\mathbf{x})\|_2^2\\
    &=\tilde{O}\left(\tfrac{1}{\sigma^2}(nR^{1-\alpha}+n^{1/2}R^{1-\alpha+\tau}+R^{1-\alpha+2\tau})\right)
     =o\left(\tfrac{1}{\sigma^2}n^{\tfrac{1}{\alpha}}\right),
    \end{aligned}
\end{equation*}
where in the first inequality we use  the fact that $\|A\|_2<1$ and $\tr ABA\leq\|A\|^2_2\tr B$ when $A$ and $B$ are symmetric positive definite matrices, in the second inequality we use  $h(x)=1-1/(1+x)$ in \eqref{eq:log_trace} and in the last equality we use the last few steps of \eqref{eq:same_est}. %
This concludes the proof of the first statement. 

As for $|T_2(D_n)-T_{2,R}(D_n)|$, we have 
\begin{equation}
\label{eq:T2R}
\begin{aligned}
        |T_2(D_n)-T_{2,R}(D_n)|&=\left|f(\mathbf{x})^T (I+\tfrac{\Phi\Lambda\Phi^T}{\sigma^2})^{-1} f(\mathbf{x})-f_{R}(\mathbf{x})^T (I+\tfrac{\Phi\Lambda\Phi^T}{\sigma^2})^{-1} f_{R}(\mathbf{x})\right|\\
        &+\left|f_{R}(\mathbf{x})^T (I+\tfrac{\Phi\Lambda\Phi^T}{\sigma^2})^{-1} f_{R}(\mathbf{x})-f_{R}(\mathbf{x})^T (I+\tfrac{\Phi_R\Lambda_R\Phi_R^T}{\sigma^2})^{-1} f_{R}(\mathbf{x})\right|.
\end{aligned}
\end{equation}
For the first term on the right-hand side of \eqref{eq:T2R}, we have 
\begin{equation*}
    \begin{aligned}
        &\left|f(\mathbf{x})^T (I+\tfrac{\Phi\Lambda\Phi^T}{\sigma^2})^{-1} f(\mathbf{x})-f_{R}(\mathbf{x})^T (I+\tfrac{\Phi\Lambda\Phi^T}{\sigma^2})^{-1} f_R(\mathbf{x})\right|\\
        &\leq 2\left|f_{>R}(\mathbf{x})^T (I+\tfrac{\Phi\Lambda\Phi^T}{\sigma^2})^{-1} f_{R}(\mathbf{x})\right|+\left|f_{>R}(\mathbf{x})^T (I+\tfrac{\Phi\Lambda\Phi^T}{\sigma^2})^{-1} f_{>R}(\mathbf{x})\right|\\
        &\leq 2\|f_{>R}(\mathbf{x})\|_2 \|(I+\tfrac{\Phi\Lambda\Phi^T}{\sigma^2})^{-1} f_{R}(\mathbf{x})\|_2+\|f_{>R}(\mathbf{x})\|_2\| (I+\tfrac{\Phi\Lambda\Phi^T}{\sigma^2})^{-1}\|_2\| f_{>R}(\mathbf{x})\|_2\\
        &\leq 2\|f_{>R}(\mathbf{x})\|_2 \|(I+\tfrac{\Phi\Lambda\Phi^T}{\sigma^2})^{-1} f_{R}(\mathbf{x})\|_2+\|f_{>R}(\mathbf{x})\|^2_2 . 
    \end{aligned}
\end{equation*}
Applying Corollary~\ref{cor:truncated_function} and Lemma \ref{residual_func}, with probability of at least $1-4\delta$, we have
\begin{equation*}
    \begin{aligned}
        &\left|f(\mathbf{x})^T (I+\tfrac{\Phi\Lambda\Phi^T}{\sigma^2})^{-1} f(\mathbf{x})-f_{R}(\mathbf{x})^T (I+\tfrac{\Phi\Lambda\Phi^T}{\sigma^2})^{-1} f_R(\mathbf{x})\right|\\
        &\leq 2\tilde{O}\left(\sqrt{(\tfrac{1}{\delta}+1) nR^{1-2\beta}}\right) \tilde{O}(\sqrt{(\tfrac{1}{\delta}+1) n}\cdot n^{\max\{-1,\tfrac{1-2\beta}{2\alpha}\}})+\tilde{O}((\tfrac{1}{\delta}+1) nR^{1-2\beta})\\
        &=2\tilde{O}\left((\tfrac{1}{\delta}+1) n^{1+(\tfrac{1}{\alpha}+\kappa)\tfrac{1-2\beta}{2}+\max\{-1,\tfrac{1-2\beta}{2\alpha}\}}\right)+\tilde{O}((\tfrac{1}{\delta}+1) n^{1+(\tfrac{1}{\alpha}+\kappa)(1-2\beta)})\\
        &=2\tilde{O}\left((\tfrac{1}{\delta}+1) n^{1+(\tfrac{1}{\alpha}+\kappa)\tfrac{1-2\beta}{2}+\max\{-1,\tfrac{1-2\beta}{2\alpha}\}}\right),  %
    \end{aligned}
\end{equation*}
where the last equality holds because $(\frac{1}{\alpha}+\kappa)\frac{1-2\beta}{2}<\frac{1-2\beta}{2\alpha}$ when  $\kappa>0$. 

As for the second term on the right-hand side of \eqref{eq:T2R}, according to Lemma \ref{lem:truncate_matrix}, Corollary \ref{cor:2-norm_inv_residual} and Lemma~\ref{lem:inv_fr}, we have 
\begin{equation}
    \begin{aligned}
   &\left|f_{R}(\mathbf{x})^T (I+\tfrac{\Phi\Lambda\Phi^T}{\sigma^2})^{-1} f_{R}(\mathbf{x})-f_{R}(\mathbf{x})^T (I+\tfrac{\Phi_R\Lambda_R\Phi_R^T}{\sigma^2})^{-1} f_{R}(\mathbf{x})\right|\\
   &=\left|\sum_{j=1}^\infty(-1)^jf_{R}(\mathbf{x})^T\left((I+\tfrac{\Phi_R\Lambda_R\Phi_R^T}{\sigma^2})^{-1}\tfrac{\Phi_{>R}\Lambda_{>R}\Phi_{>R}^T}{\sigma^2}\right)^j (I+\tfrac{\Phi_R\Lambda_R\Phi_R^T}{\sigma^2})^{-1}f_R(\mathbf{x})\right|\\
    &\leq\sum_{j=1}^\infty\|(I+\tfrac{\Phi_R\Lambda_R\Phi_R^T}{\sigma^2})^{-1}\|_2^{j-1}\cdot\|\frac{\Phi_{>R}\Lambda_{>R}\Phi_{>R}^T}{\sigma^2}\|_2^j\cdot\|(I+\tfrac{\Phi_R\Lambda_R\Phi_R^T}{\sigma^2})^{-1}f_R(\mathbf{x})\|^2_2\\
    &=\sum_{j=1}^\infty\tilde{O}(n^{-j\kappa\alpha})\tilde{O}((\tfrac{1}{\delta}+1)  n^{1+\max\{-2,\tfrac{1-2\beta}{\alpha}\}})\\
    &=\tilde{O}((\tfrac{1}{\delta}+1)  n^{1+\max\{-2,\tfrac{1-2\beta}{\alpha}\}-\kappa\alpha}).
    \end{aligned}
\end{equation}
By \eqref{eq:T2R}, we have
\begin{align*}
    |T_2(D_n)-T_{2,R}(D_n)|&=\tilde{O}\left((\tfrac{1}{\delta}+1) n^{1+(\tfrac{1}{\alpha}+\kappa)\tfrac{1-2\beta}{2}+\max\{-1,\tfrac{1-2\beta}{2\alpha}\}}\right)+\tilde{O}\left((\tfrac{1}{\delta}+1)  n^{1+\max\{-2,\tfrac{1-2\beta}{\alpha}\}-\kappa\alpha}\right)\\
    &=\tilde{O}\left((\tfrac{1}{\delta}+1) n^{\max\{(\tfrac{1}{\alpha}+\kappa)\frac{1-2\beta}{2},1+\tfrac{1-2\beta}{\alpha}+\tfrac{(1-2\beta)\kappa}{2},-1-\kappa\alpha,1+\tfrac{1-2\beta}{\alpha}-\kappa\alpha\}}\right) . 
\end{align*}
This concludes the proof of the second statement. 
\end{proof}
In Lemma~\ref{thm:truncationR}, we gave a bound for $|T_{2,R}(D_n)-T_2(D_n)|$ when $n^{\frac{1}{\alpha}}<R< n^{\frac{1}{\alpha}+\frac{\alpha-1-2\tau}{\alpha^2}}$. For $R>n$, we note the following lemma:
\begin{lemma}
\label{thm:trunc_T2R}
 Let $R=n^{C}$ and $\sigma^2=n^{t}$. Assume that $C\geq 1$ and $C(1-\alpha+2\tau)-t<0$. Under Assumptions \ref{assumption_alpha}, \ref{assumption_beta} and \ref{assumption_functions}, 
 for sufficiently large $n$ and with probability of at least $1-3\delta$ we have
\begin{equation}
    |T_{2,R}(D_n)-T_2(D_n)|=\tilde{O}\left((\tfrac{1}{\delta}+1) \tfrac{1}{\sigma^2}nR^{\max\{1/2-\beta,1-\alpha+2\tau\}}\right). 
\end{equation}
\end{lemma}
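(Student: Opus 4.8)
The plan is to mirror the treatment of $|T_2(D_n)-T_{2,R}(D_n)|$ already carried out inside the proof of Lemma~\ref{thm:truncationR} (the split \eqref{eq:T2R}), but to re-run the quantitative estimates in the present regime $R=n^{C}$, $C\geq 1$, $\sigma^2=n^{t}$, rather than $R=n^{1/\alpha+\kappa}$, $\sigma^2=\Theta(1)$. Write $M=(I+\tfrac{\Phi\Lambda\Phi^T}{\sigma^2})^{-1}$ and $M_R=(I+\tfrac{\Phi_R\Lambda_R\Phi_R^T}{\sigma^2})^{-1}$, both of operator norm at most $1$. Since $f=f_R+f_{>R}$ and $T_2,T_{2,R}$ carry a common prefactor $\tfrac{1}{2\sigma^2}$, the decomposition \eqref{eq:T2R} gives $|T_2(D_n)-T_{2,R}(D_n)|\leq \tfrac{1}{2\sigma^2}\bigl(|f(\mathbf{x})^T M f(\mathbf{x})-f_R(\mathbf{x})^T M f_R(\mathbf{x})|+|f_R(\mathbf{x})^T(M-M_R)f_R(\mathbf{x})|\bigr)$, and I would bound the two terms on the right separately.

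For the ``$f$ versus $f_R$'' term, expanding $f=f_R+f_{>R}$ and using Cauchy--Schwarz together with $\|M\|_2\leq 1$ bounds it by $2\|f_{>R}(\mathbf{x})\|_2\|f_R(\mathbf{x})\|_2+\|f_{>R}(\mathbf{x})\|_2^2$. Corollary~\ref{cor:truncated_function} gives $\|f_{>R}(\mathbf{x})\|_2^2=\tilde{O}((\tfrac{1}{\delta}+1)nR^{1-2\beta})$, and since $f_R(\mathbf{x})$ is a linear combination of $\phi_1(\mathbf{x}),\dots,\phi_R(\mathbf{x})$ with weight vector of squared norm $\sum_{p=1}^{R}\mu_p^2\leq\|f\|_2^2=O(1)$, Corollary~\ref{cor:phiRmu} gives $\|f_R(\mathbf{x})\|_2^2=\tilde{O}((\tfrac{1}{\delta}+1)n)$. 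Because $\beta>1/2$ we have $R^{1-2\beta}\leq R^{1/2-\beta}$, so this term is $\tilde{O}((\tfrac{1}{\delta}+1)nR^{1/2-\beta})$, contributing $\tfrac{1}{\sigma^2}\tilde{O}((\tfrac{1}{\delta}+1)nR^{1/2-\beta})$ to $|T_2-T_{2,R}|$. For the ``inverse versus truncated inverse'' term I would first control $\|\Phi_{>R}\Lambda_{>R}\Phi_{>R}^T\|_2$: since $C\geq 1$ forces $R=n^{C}\geq n$ for large $n$ and $\tau\geq 0$, the three-way maximum in Lemma~\ref{lem:2-norm_residual} collapses to its last entry, i.e.\ $\|\Phi_{>R}\Lambda_{>R}\Phi_{>R}^T\|_2=\tilde{O}(R^{1-\alpha+2\tau})$ with probability at least $1-\delta$. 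Hence $\tfrac{1}{\sigma^2}\|\Phi_{>R}\Lambda_{>R}\Phi_{>R}^T\|_2=\tilde{O}(n^{-t}R^{1-\alpha+2\tau})=\tilde{O}(n^{\,C(1-\alpha+2\tau)-t})=o(1)$ by the hypothesis $C(1-\alpha+2\tau)-t<0$. This licenses the resolvent identity $M-M_R=-\,M\,\tfrac{\Phi_{>R}\Lambda_{>R}\Phi_{>R}^T}{\sigma^2}\,M_R$ (equivalently the Neumann expansion of Lemma~\ref{lem:truncate_matrix}); combined with $\|M\|_2,\|M_R\|_2\leq 1$ and $\|f_R(\mathbf{x})\|_2^2=\tilde{O}((\tfrac{1}{\delta}+1)n)$ it gives $|f_R(\mathbf{x})^T(M-M_R)f_R(\mathbf{x})|\leq \tfrac{1}{\sigma^2}\|\Phi_{>R}\Lambda_{>R}\Phi_{>R}^T\|_2\,\|f_R(\mathbf{x})\|_2^2=\tilde{O}\bigl((\tfrac{1}{\delta}+1)\tfrac{1}{\sigma^2}nR^{1-\alpha+2\tau}\bigr)$. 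Adding the two contributions, taking a union bound over the three high-probability events invoked (Corollary~\ref{cor:truncated_function}, Lemma~\ref{lem:2-norm_residual}, Corollary~\ref{cor:phiRmu}) for the stated probability $1-3\delta$, and keeping track of the $\sigma^2=n^{t}$ prefactors, yields $|T_{2,R}(D_n)-T_2(D_n)|=\tilde{O}\bigl((\tfrac{1}{\delta}+1)\tfrac{1}{\sigma^2}nR^{\max\{1/2-\beta,\,1-\alpha+2\tau\}}\bigr)$.

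The main obstacle is the step where the two hypotheses $C\geq 1$ and $C(1-\alpha+2\tau)-t<0$ interact: one must observe that in the regime $R=n^{C}\geq n$ the operator-norm residual $\|\Phi_{>R}\Lambda_{>R}\Phi_{>R}^T\|_2$ from Lemma~\ref{lem:2-norm_residual} is governed by the single term $R^{1-\alpha+2\tau}$, and that scaling this by $1/\sigma^2=n^{-t}$ gives something $o(1)$ precisely under the assumed inequality, so that the Neumann/resolvent control of $M-M_R$ is valid and lower-order. Everything else — Cauchy--Schwarz, the geometric series, the union bound over failure probabilities, and the careful bookkeeping of the $\sigma^2$ powers so that the final bound comes out with the claimed prefactor $\tfrac{1}{\sigma^2}$ and exponent $\max\{1/2-\beta,\,1-\alpha+2\tau\}$ — is routine and parallels the corresponding parts of the proof of Lemma~\ref{thm:truncationR}.
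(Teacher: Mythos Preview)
Your proposal is correct and follows essentially the same route as the paper: the identical two-term split \eqref{eq:T2R_g}, the crude first-term bound via $\|M\|_2\le 1$ together with Corollary~\ref{cor:truncated_function} and a concentration bound on $\|f_R(\mathbf{x})\|_2$ (the paper cites Lemma~\ref{lem:2_norm_concentration} rather than Corollary~\ref{cor:phiRmu}, which is equivalent), and the second-term control via Lemma~\ref{lem:2-norm_residual} combined with the resolvent/Neumann expansion of Lemma~\ref{lem:truncate_matrix}. The only cosmetic difference is that you invoke the one-step resolvent identity $M-M_R=-M\tfrac{\Phi_{>R}\Lambda_{>R}\Phi_{>R}^T}{\sigma^2}M_R$ directly while the paper writes out the full Neumann series and then observes it is dominated by its first term; your explicit reduction of the three-way maximum in Lemma~\ref{lem:2-norm_residual} to $R^{1-\alpha+2\tau}$ under $R\ge n$ is exactly what the paper uses implicitly.
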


\begin{proof}[Proof of Lemma \ref{thm:trunc_T2R}]
Define $\Phi_{>R}=(\phi_{R+1}(\mathbf{x}), \phi_{R+2}(\mathbf{x}),\ldots, \phi_p(\mathbf{x}), \ldots)$, and $\Lambda_{>R}=\diag(\lambda_{R+1},\ldots, \lambda_p, \ldots)$. Then we have 
\begin{equation}
\label{eq:T2R_g}
\begin{aligned}
        |T_2(D_n)-T_{2,R}(D_n)|&=\left|f(\mathbf{x})^T (I+\frac{\Phi\Lambda\Phi^T}{\sigma^2})^{-1} f(\mathbf{x})-f_{R}(\mathbf{x})^T (I+\frac{\Phi\Lambda\Phi^T}{\sigma^2})^{-1} f_{R}(\mathbf{x})\right|\\
        &+\left|f_{R}(\mathbf{x})^T (I+\frac{\Phi\Lambda\Phi^T}{\sigma^2})^{-1} f_{R}(\mathbf{x})-f_{R}(\mathbf{x})^T (I+\frac{\Phi_R\Lambda_R\Phi_R^T}{\sigma^2})^{-1} f_{R}(\mathbf{x})\right|.
\end{aligned}
\end{equation}
For the first term on the right-hand side of \eqref{eq:T2R_g}, with probability $1-3\delta$ we have 
\begin{equation*}
    \begin{aligned}
        &\left|f(\mathbf{x})^T (I+\frac{\Phi\Lambda\Phi^T}{\sigma^2})^{-1} f(\mathbf{x})-f_{R}(\mathbf{x})^T (I+\frac{\Phi\Lambda\Phi^T}{\sigma^2})^{-1} f_R(\mathbf{x})\right|\\
        \leq&2\left|f_{>R}(\mathbf{x})^T (I+\frac{\Phi\Lambda\Phi^T}{\sigma^2})^{-1} f_{R}(\mathbf{x})\right|+\left|f_{>R}(\mathbf{x})^T (I+\frac{\Phi\Lambda\Phi^T}{\sigma^2})^{-1} f_{>R}(\mathbf{x})\right|\\
        \leq&2\|f_{>R}(\mathbf{x})\|_2 \|(I+\frac{\Phi\Lambda\Phi^T}{\sigma^2})^{-1}\|_2\| f_{R}(\mathbf{x})\|_2+\|f_{>R}(\mathbf{x})\|_2\| (I+\frac{\Phi\Lambda\Phi^T}{\sigma^2})^{-1}\|_2\| f_{>R}(\mathbf{x})\|_2\\
        \leq&2\|f_{>R}(\mathbf{x})\|_2\| f_{R}(\mathbf{x})\|_2+\|f_{>R}(\mathbf{x})\|^2_2\\
        \leq& 2\tilde{O}\left(\sqrt{(\frac{1}{\delta}+1) nR^{1-2\beta}}\right) \tilde{O}(\sqrt{(\frac{1}{\delta}+1) n}\cdot\|f\|_2)+\tilde{O}((\frac{1}{\delta}+1) nR^{1-2\beta})\\
        =& \tilde{O}\left((\frac{1}{\delta}+1) nR^{1/2-\beta}\right), 
    \end{aligned}
\end{equation*}
where  we used Corollary~\ref{cor:truncated_function} and Lemma \ref{lem:2_norm_concentration} for the last inequality. 

The assumption $C(1-\alpha+2\tau)-t<0$ means that $\frac{R^{1-\alpha+2\tau}}{\sigma^2}=o(1)$. For the second term on the right-hand side of \eqref{eq:T2R_g}, by Lemmas \ref{lem:truncate_matrix} and \ref{lem:2-norm_residual}, we have 
\begin{equation}
    \begin{aligned}
   &\left|f_{R}(\mathbf{x})^T (I+\frac{\Phi\Lambda\Phi^T}{\sigma^2})^{-1} f_{R}(\mathbf{x})-f_{R}(\mathbf{x})^T (I+\frac{\Phi_R\Lambda_R\Phi_R^T}{\sigma^2})^{-1} f_{R}(\mathbf{x})\right|\\
   =&\left|\sum_{j=1}^\infty(-1)^jf_{R}(\mathbf{x})^T\left((I+\frac{\Phi_R\Lambda_R\Phi_R^T}{\sigma^2})^{-1}\frac{\Phi_{>R}\Lambda_{>R}\Phi_{>R}^T}{\sigma^2}\right)^j (I+\frac{\Phi_R\Lambda_R\Phi_R^T}{\sigma^2})^{-1}f_R(\mathbf{x})\right|\\
    \leq&\sum_{j=1}^\infty\|(I+\frac{\Phi_R\Lambda_R\Phi_R^T}{\sigma^2})^{-1}\|_2^{j+1}\cdot\|\frac{\Phi_{>R}\Lambda_{>R}\Phi_{>R}^T}{\sigma^2}\|_2^j\cdot\|f_R(\mathbf{x})\|^2_2\\
    =&\sum_{j=1}^\infty\tilde{O}(\frac{1}{\sigma^2}R^{j(1-\alpha+2\tau)})\tilde{O}((\frac{1}{\delta}+1)n\|f\|_2^2)\\
    =&\tilde{O}((\frac{1}{\delta}+1)\frac{1}{\sigma^2} nR^{1-\alpha+2\tau}).
    \end{aligned}
\end{equation}
Using \eqref{eq:T2R_g}, we have
\begin{align*}
    |T_2(D_n)-T_{2,R}(D_n)|&=\tilde{O}\left((\frac{1}{\delta}+1) nR^{1/2-\beta}\right)+\tilde{O}((\frac{1}{\delta}+1) n\frac{1}{\sigma^2}R^{1-\alpha+2\tau})\\&=\tilde{O}\left((\frac{1}{\delta}+1) n\frac{1}{\sigma^2}R^{\max\{1/2-\beta,1-\alpha+2\tau\}}\right). 
\end{align*} 
\end{proof}

Next we consider the asympototics of $T_{1,R}(D_n)$ and $T_{2,R}(D_n)$.

\begin{lemma}
\label{lem:T2R}
Let $A=(I+\frac{n}{\sigma^2}\Lambda_R)^{-\gamma/2}\Lambda_R^{\gamma/2}(\Phi_R^T\Phi_R-nI)\Lambda_R^{\gamma/2}(I+\frac{n}{\sigma^2}\Lambda_R)^{-\gamma/2}$. Assume that $\|A\|_2<1$ where $\frac{1+2\tau}{\alpha}<\gamma\leq 1$. Then we have
\begin{align*}
	T_{2,R}&(D_n)
	=\tfrac{n}{2\sigma^2}\bm{\mu}_R^T(I+\tfrac{n}{\sigma^2}\Lambda_R)^{-1}\bm{\mu}_R+\tfrac{1}{2}\textstyle\sum_{j=1}^\infty (-1)^{j+1}E_j,
\end{align*}
where 
$$
E_j=\bm{\mu}_R^T\tfrac{1}{\sigma^2}(I+\tfrac{n}{\sigma^2}\Lambda_R)^{-1}(\Phi_R^T\Phi_R-nI)\left(\tfrac{1}{\sigma^2}(I+\tfrac{n}{\sigma^2}\Lambda_R)^{-1}\Lambda_R(\Phi_R^T\Phi_R-nI)\right)^{j-1}
    (I+\tfrac{n}{\sigma^2}\Lambda_R)^{-1}\bm{\mu}_R.
$$
\end{lemma}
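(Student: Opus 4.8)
The plan is to start from the closed form for $T_{2,R}(D_n)$ recorded in \eqref{eq:t1R_2}, namely $T_{2,R}(D_n)=\tfrac{1}{2\sigma^2}\bm{\mu}_R^T\Phi_R^T\Phi_R\bigl(I_R+\tfrac{1}{\sigma^2}\Lambda_R\Phi_R^T\Phi_R\bigr)^{-1}\bm{\mu}_R$, and to substitute the Neumann-type expansion of the matrix inverse provided by Lemma~\ref{lem:expansion_lpp}. Since $\Lambda_R$ has a single vanishing diagonal entry, I will first carry out the computation under the extra hypothesis that every diagonal entry of $\Lambda_R$ is strictly positive (so that $\Lambda_R^{-1}$ exists); both $T_{2,R}(D_n)$ and the claimed right-hand side are rational (hence continuous) functions of these entries, and under $\|A\|_2<1$ the operator series in Lemma~\ref{lem:expansion_lpp} converges absolutely and locally uniformly, so the general case follows by letting one entry tend to $0$, exactly the regularization device (with $\tilde{\Lambda}_{\epsilon,R}=\mathrm{diag}\{\epsilon,\lambda_1,\dots,\lambda_R\}$ and $\epsilon\to0$) already used in the proof of Lemma~\ref{lem:expansion_lpp}.

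The core algebraic step is to use $B(I+B)^{-1}=I-(I+B)^{-1}$ with $B=\tfrac{1}{\sigma^2}\Lambda_R\Phi_R^T\Phi_R$ to pull out a factor $\Lambda_R^{-1}$:
\[
 \tfrac{1}{\sigma^2}\Phi_R^T\Phi_R\,(I+B)^{-1}=\Lambda_R^{-1}B(I+B)^{-1}=\Lambda_R^{-1}\bigl(I-(I+B)^{-1}\bigr),
\]
whence $T_{2,R}(D_n)=\tfrac12\bm{\mu}_R^T\Lambda_R^{-1}\bigl(I-(I+\tfrac{1}{\sigma^2}\Lambda_R\Phi_R^T\Phi_R)^{-1}\bigr)\bm{\mu}_R$. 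Now insert $(I+\tfrac{1}{\sigma^2}\Lambda_R\Phi_R^T\Phi_R)^{-1}=M+\sum_{j\ge1}(-1)^j(MN)^jM$ from Lemma~\ref{lem:expansion_lpp}, where $M:=(I+\tfrac{n}{\sigma^2}\Lambda_R)^{-1}$ and $N:=\tfrac{1}{\sigma^2}\Lambda_R(\Phi_R^T\Phi_R-nI)$. The $j=0$ contribution uses $I-M=\tfrac{n}{\sigma^2}\Lambda_RM$ together with the commutativity of the diagonal matrices $\Lambda_R,M$ to give $\Lambda_R^{-1}(I-M)=\tfrac{n}{\sigma^2}M$, i.e.\ the leading term $\tfrac{n}{2\sigma^2}\bm{\mu}_R^T(I+\tfrac{n}{\sigma^2}\Lambda_R)^{-1}\bm{\mu}_R$. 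For $j\ge1$, commutativity again yields $\Lambda_R^{-1}(MN)^jM=\Lambda_R^{-1}MN(MN)^{j-1}M=\tfrac{1}{\sigma^2}M(\Phi_R^T\Phi_R-nI)(MN)^{j-1}M$, and since $MN=\tfrac{1}{\sigma^2}(I+\tfrac{n}{\sigma^2}\Lambda_R)^{-1}\Lambda_R(\Phi_R^T\Phi_R-nI)$ this is precisely $E_j$ after multiplying on the left by $\bm{\mu}_R^T$ and on the right by $\bm{\mu}_R$. Keeping track of the sign $-(-1)^j=(-1)^{j+1}$ from the expansion then gives the asserted formula.

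I expect the only genuine subtleties to be bookkeeping: (i) the rank deficiency of $\Lambda_R$, handled by the continuity/limiting argument above (one checks that $\tfrac{1}{\sigma^2}\Phi_R^T\Phi_R(I+\tfrac{1}{\sigma^2}\tilde{\Lambda}_{\epsilon,R}\Phi_R^T\Phi_R)^{-1}$ is continuous at $\epsilon=0$, while the first row of $I-(I+\tfrac{1}{\sigma^2}\tilde{\Lambda}_{\epsilon,R}\Phi_R^T\Phi_R)^{-1}$ is $O(\epsilon)$, so the product with $\tilde{\Lambda}_{\epsilon,R}^{-1}$ stays bounded); and (ii) justifying the termwise left/right multiplication of the infinite series by the fixed bounded factors, which is immediate from the absolute convergence already established in Lemma~\ref{lem:expansion_lpp}. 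The algebra itself is essentially forced once one insists on writing the answer through $(I+\tfrac{n}{\sigma^2}\Lambda_R)^{-1}$ and $\Phi_R^T\Phi_R-nI$.
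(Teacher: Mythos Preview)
Your proposal is correct and follows essentially the same route as the paper's proof: both use the $\epsilon$-regularization $\tilde{\Lambda}_{\epsilon,R}=\mathrm{diag}\{\epsilon,\lambda_1,\dots,\lambda_R\}$ to render $\Lambda_R$ invertible, rewrite $T_{2,R}(D_n)$ as $\tfrac12\bm{\mu}_R^T\Lambda_R^{-1}\bigl(I-(I+\tfrac{1}{\sigma^2}\Lambda_R\Phi_R^T\Phi_R)^{-1}\bigr)\bm{\mu}_R$, insert the Neumann expansion from Lemma~\ref{lem:expansion_lpp}, and use the diagonal identity $\Lambda_R^{-1}(I-M)=\tfrac{n}{\sigma^2}M$ for the leading term and $\Lambda_R^{-1}MN=\tfrac{1}{\sigma^2}M(\Phi_R^T\Phi_R-nI)$ to peel off the first factor in each series term. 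The only cosmetic difference is that you start from the already-recorded $R\times R$ form \eqref{eq:t1R_2}, whereas the paper re-derives it from the $n\times n$ definition via Woodbury before proceeding identically.
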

\begin{proof}[Proof of Lemma \ref{lem:T2R}]
Let $\tilde{\Lambda}_{\epsilon,R}=\mathrm{diag}\{\epsilon, \lambda_1,\ldots,\lambda_R\}$. Since $\Lambda_R=\mathrm{diag}\{0, \lambda_1,\ldots,\lambda_R\}$, we have that when $\epsilon$ is sufficiently small, $\|\frac{1}{\sigma^2}(I+\frac{n}{\sigma^2}\tilde{\Lambda}_{\epsilon,R})^{-1/2}\tilde{\Lambda}_{\epsilon,R}^{1/2}(\Phi_R^T\Phi_R-nI)\tilde{\Lambda}_{\epsilon,R}^{1/2}(I+\frac{n}{\sigma^2}\tilde{\Lambda}_{\epsilon,R})^{-1/2}\|_2<1$. Since all diagonal entries of $\tilde{\Lambda}_{\epsilon,R}$ are positive,
we have
\begin{equation}
\begin{aligned}
    &\frac{1}{2\sigma^2}\bm{\mu}_R^T\Phi_R^T (I+\frac{1}{\sigma^2}\Phi_R\tilde{\Lambda}_{\epsilon,R}\Phi_R^T)^{-1}\Phi_R\bm{\mu}_R\\
    &=\frac{1}{2\sigma^2}\bm{\mu}_R^T\Phi_R^T \left[I-\Phi_R(\sigma^2I+\tilde{\Lambda}_{\epsilon,R}\Phi^T_R\Phi_R)^{-1}\tilde{\Lambda}_{\epsilon,R}\Phi_R^T\right]\Phi_R\bm{\mu}_R\\
    &=\frac{1}{2\sigma^2}\bm{\mu}_R^T\Phi_R^T\Phi_R\bm{\mu}_R- \frac{1}{2\sigma^2}\bm{\mu}_R^T\Phi_R^T\Phi_R(\sigma^2I+\tilde{\Lambda}_{\epsilon,R}\Phi_R^T\Phi_R)^{-1}\tilde{\Lambda}_{\epsilon,R}\Phi_R^T\Phi_R\bm{\mu}_R\\
    &=\frac{1}{2}\bm{\mu}_R^T\Phi_R^T\Phi_R(\sigma^2I+\tilde{\Lambda}_{\epsilon,R}\Phi_R^T\Phi_R)^{-1}\bm{\mu}_R\\
    &=\frac{1}{2}\bm{\mu}_R^T\tilde{\Lambda}_{\epsilon,R}^{-1}\tilde{\Lambda}_{\epsilon,R}\Phi_R^T\Phi_R(\sigma^2I+\tilde{\Lambda}_{\epsilon,R}\Phi_R^T\Phi_R)^{-1}\bm{\mu}_R\\
    &=\frac{1}{2}\bm{\mu}_R^T\tilde{\Lambda}_{\epsilon,R}^{-1}\bm{\mu}_R-\frac{1}{2}\bm{\mu}_R^T\tilde{\Lambda}_{\epsilon,R}^{-1}(I+\frac{1}{\sigma^2}\tilde{\Lambda}_{\epsilon,R}\Phi_R^T\Phi_R)^{-1}\bm{\mu}_R.\\
\end{aligned}
\end{equation}
Using Lemma~\ref{lem:expansion_lpp}, we have
\begin{equation}
\begin{aligned}
    &\frac{1}{2}\bm{\mu}_R^T\tilde{\Lambda}_{\epsilon,R}^{-1}\bm{\mu}_R-\frac{1}{2}\bm{\mu}_R^T\tilde{\Lambda}_{\epsilon,R}^{-1}(I+\frac{1}{\sigma^2}\tilde{\Lambda}_{\epsilon,R}\Phi_R^T\Phi_R)^{-1}\bm{\mu}_R\\
    &=\frac{1}{2}\bm{\mu}_R^T\tilde{\Lambda}_{\epsilon,R}^{-1}\bm{\mu}_R-\frac{1}{2}\bm{\mu}_R^T\tilde{\Lambda}_{\epsilon,R}^{-1}(I+\frac{n}{\sigma^2}\tilde{\Lambda}_{\epsilon,R})^{-1}\bm{\mu}_R\\
    &+\frac{1}{2}\sum_{j=1}^\infty(-1)^{j+1}\bm{\mu}_R^T\tilde{\Lambda}_{\epsilon,R}^{-1}\left(\frac{1}{\sigma^2}(I+\frac{n}{\sigma^2}\tilde{\Lambda}_{\epsilon,R})^{-1}\tilde{\Lambda}_{\epsilon,R}(\Phi_R^T\Phi_R-nI)\right)^j(I+\frac{n}{\sigma^2}\tilde{\Lambda}_{\epsilon,R})^{-1}\bm{\mu}_R\\
    &=\frac{n}{2\sigma^2}\bm{\mu}_R^T(I+\frac{n}{\sigma^2}\tilde{\Lambda}_{\epsilon,R})^{-1}\bm{\mu}_R\\
    &+\frac{1}{2}\sum_{j=1}^\infty(-1)^{j+1}\bm{\mu}_R^T\frac{1}{\sigma^2}(I+\frac{n}{\sigma^2}\tilde{\Lambda}_{\epsilon,R})^{-1}(\Phi_R^T\Phi_R-nI)\left(\frac{1}{\sigma^2}(I+\frac{n}{\sigma^2}\tilde{\Lambda}_{\epsilon,R})^{-1}\tilde{\Lambda}_{\epsilon,R}(\Phi_R^T\Phi_R-nI)\right)^{j-1}\\
    &\quad\quad\quad\quad(I+\frac{n}{\sigma^2}\tilde{\Lambda}_{\epsilon,R})^{-1}\bm{\mu}_R\\
\end{aligned}
\end{equation}
Letting $\epsilon\to 0$, we get
\begin{equation*}
\begin{aligned}
    T_{2,R}(D_n)
    &=\frac{1}{2\sigma^2}\bm{\mu}_R^T\Phi_R^T (I+\frac{1}{\sigma^2}\Phi_R\Lambda_R\Phi_R^T)^{-1}\Phi_R\bm{\mu}_R\\
    &=\frac{n}{2\sigma^2}\bm{\mu}_R^T(I+\frac{n}{\sigma^2}\Lambda_R)^{-1}\bm{\mu}_R\\
    &+\frac{1}{2}\sum_{j=1}^\infty\bigg[(-1)^{j+1}\bm{\mu}_R^T\frac{1}{\sigma^2}(I+\frac{n}{\sigma^2}\Lambda_R)^{-1}(\Phi_R^T\Phi_R-nI)\left(\frac{1}{\sigma^2}(I+\frac{n}{\sigma^2}\Lambda_R)^{-1}\Lambda_R(\Phi_R^T\Phi_R-nI)\right)^{j-1}\\
    &\quad\quad\quad\quad(I+\frac{n}{\sigma^2}\Lambda_R)^{-1}\bm{\mu}_R\bigg]\\
\end{aligned}
\end{equation*}
This concludes the proof. 
\end{proof}

\begin{lemma}
\label{thm:expansion}
 Assume that $\sigma^2=\Theta(1)$. Let $R=n^{\tfrac{1}{\alpha}+\kappa}$ where $0<\kappa<\tfrac{\alpha-1-2\tau}{2\alpha^2}$. 
 Under Assumptions \ref{assumption_alpha}, \ref{assumption_beta} and \ref{assumption_functions}, with probability of at least $1-\delta$, we have %
\begin{equation}
    T_{1,R}(D_n)=\left(\tfrac{1}{2}\log\det(I+\tfrac{n}{\sigma^2}\Lambda_R)-\tfrac{1}{2}\tr\left(I-(I+\tfrac{n}{\sigma^2}\Lambda_R)^{-1}\right)\right)(1+o(1))=\Theta(n^{\tfrac{1}{\alpha}}).
\end{equation}
Furthermore, if we assume $\mu_0=0$, we have
\begin{equation}
    T_{2,R}(D_n)=\left(\tfrac{n}{2\sigma^2}\bm{\mu}_{R}^T(I+\tfrac{n}{\sigma^2}\Lambda_{R})^{-1}\bm{\mu}_{R}\right)(1+o(1))=\begin{cases}
    \Theta(n^{\max\{0,1+\tfrac{1-2\beta}{\alpha}\}}),&\alpha\not=2\beta-1,\\
    \Theta(\log n),&\alpha=2\beta-1.
    \end{cases}
\end{equation}
\end{lemma}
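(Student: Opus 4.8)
The plan is to treat $T_{1,R}$ and $T_{2,R}$ by the same two moves used throughout this appendix (the ``decomposition'' and ``concentration'' steps of the proof sketch of Theorem~\ref{thm:marginal-likelihood}): first express each quantity as a $\Phi_R$\nobreakdash-independent leading term plus a Neumann-type series in the centered Gram matrix $\Phi_R^T\Phi_R-nI$, then control the series using the spectral concentration of Corollary~\ref{lem:matrix_phiTphi_1} — applied with $t=0$, which is legitimate since $\sigma^2=\Theta(1)$ — and show it is $o(\cdot)$ of the leading term, and finally evaluate the leading term with Lemmas~\ref{lem:estimate} and~\ref{lem:estimate<R}. Everywhere I would take $\gamma=1$ in Corollary~\ref{lem:matrix_phiTphi_1} (admissible since $\alpha>1+2\tau$ by Assumption~\ref{assumption_functions}) and abbreviate $A:=(I+\tfrac n{\sigma^2}\Lambda_{1:R})^{-1/2}\Lambda_{1:R}^{1/2}(\Phi_{1:R}^T\Phi_{1:R}-nI)\Lambda_{1:R}^{1/2}(I+\tfrac n{\sigma^2}\Lambda_{1:R})^{-1/2}$ with $\Lambda_{1:R}=\diag\{\lambda_1,\dots,\lambda_R\}$; the corollary then yields, with probability at least $1-\delta$, $\|\tfrac1{\sigma^2}A\|_2=\tilde O(n^{(1+2\tau-\alpha)/(2\alpha)})=o(1)$, which in particular makes the expansions of Lemma~\ref{lem:expansion_lpp} valid.

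For $T_{2,R}$ (where $\mu_0=0$, so only the positive-eigenvalue block enters) I would invoke Lemma~\ref{lem:T2R} and, exactly as in the proof of Lemma~\ref{lem:inv_fr}, insert the identity factors $\Lambda_{1:R}^{\pm1/2}(I+\tfrac n{\sigma^2}\Lambda_{1:R})^{\mp1/2}$ to rewrite the $j$-th term of its series as $E_j=\bm{v}^{T}(\tfrac1{\sigma^2}A)^{j}\bm{v}$ with $\bm{v}:=(I+\tfrac n{\sigma^2}\Lambda_{1:R})^{-1/2}\Lambda_{1:R}^{-1/2}\bm{\mu}_{1:R}$, so that $\sum_{j\ge1}|E_j|\le(1-\|\tfrac1{\sigma^2}A\|_2)^{-1}\|\tfrac1{\sigma^2}A\|_2\,\|\bm{v}\|_2^2$. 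The two deterministic ingredients are the leading term $L:=\tfrac n{2\sigma^2}\sum_{p\le R}\tfrac{\mu_p^2}{1+n\lambda_p/\sigma^2}$ and the prefactor $\|\bm{v}\|_2^2=\sum_{p\le R}\tfrac{\mu_p^2}{\lambda_p(1+n\lambda_p/\sigma^2)}$, both read off from Lemmas~\ref{lem:estimate}/\ref{lem:estimate<R} using $|\mu_p|\le C_\mu p^{-\beta}$ and, for the lower bound on $L$, the bounded-gap subsequence $\{p_i\}$ with $|\mu_{p_i}|\ge\underline{C_\mu}p_i^{-\beta}$ from Assumption~\ref{assumption_beta}; this gives $L=\Theta(n^{\max\{0,\,1+(1-2\beta)/\alpha\}})$ — and $\Theta(\log n)$ when $\alpha=2\beta-1$ — and $\|\bm{v}\|_2^2=\tilde O(\max\{1,\,R^{1+\alpha-2\beta}\})$. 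With $R=n^{1/\alpha+\kappa}$, comparing exponents turns $\sum_j|E_j|=o(L)$ into $\tfrac{1+2\tau-\alpha}{2\alpha}+\kappa\max\{0,1+\alpha-2\beta\}<0$, which is exactly what $\kappa<\tfrac{\alpha-1-2\tau}{2\alpha^2}$ buys, since $\tfrac{\alpha-1-2\tau}{2\alpha^2}\le\tfrac{\alpha-1-2\tau}{2\alpha(1+\alpha-2\beta)}$ as soon as $\beta>\tfrac12$. Hence $T_{2,R}(D_n)=L(1+o(1))$, and $L$ has the asserted order.

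For $T_{1,R}$ I would first rewrite, using \eqref{eq:t1R_1} and $\tr\Phi_R(\sigma^2I+\Lambda_R\Phi_R^T\Phi_R)^{-1}\Lambda_R\Phi_R^T=\tr(I-(I+\tfrac1{\sigma^2}\Lambda_R\Phi_R^T\Phi_R)^{-1})$, that $T_{1,R}=\tfrac12\log\det(I+\tfrac1{\sigma^2}\Lambda_R\Phi_R^T\Phi_R)-\tfrac12\tr(I-(I+\tfrac1{\sigma^2}\Lambda_R\Phi_R^T\Phi_R)^{-1})$, then factor $I+\tfrac1{\sigma^2}\Lambda_R\Phi_R^T\Phi_R=(I+\tfrac n{\sigma^2}\Lambda_R)(I+M)$ with $M:=\tfrac1{\sigma^2}(I+\tfrac n{\sigma^2}\Lambda_R)^{-1}\Lambda_R(\Phi_R^T\Phi_R-nI)$, and expand $\log\det(I+M)=\sum_{j\ge1}\tfrac{(-1)^{j+1}}{j}\tr M^j$ together with $(I+M)^{-1}=\sum_{j\ge0}(-1)^jM^j$ (Lemma~\ref{lem:expansion_lpp}); this produces $T_{1,R}=\big[\tfrac12\log\det(I+\tfrac n{\sigma^2}\Lambda_R)-\tfrac12\tr(I-(I+\tfrac n{\sigma^2}\Lambda_R)^{-1})\big]+\tfrac12\sum_{j\ge1}\big[\tfrac{(-1)^{j+1}}{j}\tr M^j+(-1)^j\tr(M^j(I+\tfrac n{\sigma^2}\Lambda_R)^{-1})\big]$. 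Since the leftmost $\Lambda_R$ in $M$ kills the $\phi_0$-row, each of these traces is a trace of powers of $\tfrac1{\sigma^2}A$ (possibly times a fixed diagonal matrix of norm $\le1$), and I would bound them using that for a symmetric $B$ the sum of $j$-th powers of its eigenvalues is at most $\|B\|_F^2\|B\|_2^{j-2}$ for $j\ge2$ and at most $\sqrt R\,\|B\|_F$ for $j=1$. The one new estimate needed is the Frobenius bound $\|\tfrac1{\sigma^2}A\|_F=\tilde O(n^{(2-\alpha+2\tau)/(2\alpha)})$, which I would obtain from $\mathbb{E}\|\tfrac1{\sigma^2}A\|_F^2\le\tfrac n{\sigma^4}\big(\sum_p\tfrac{\lambda_p\|\phi_p\|_\infty}{1+n\lambda_p/\sigma^2}\big)^2$ (using $\mathbb{E}[(\langle\phi_p(\mathbf{x}),\phi_q(\mathbf{x})\rangle-n\delta_{pq})^2]\le n\|\phi_p\|_\infty\|\phi_q\|_\infty$), Lemma~\ref{lem:estimate}, and Markov's inequality; this is $o(n^{1/\alpha})$ precisely because $\tau<\tfrac{\alpha-1}{2}$, and likewise $\sqrt R\,\|\tfrac1{\sigma^2}A\|_F=o(n^{1/\alpha})$ since $\kappa<\tfrac{\alpha-1-2\tau}{2\alpha^2}<\tfrac{\alpha-1-2\tau}{\alpha}$. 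Summing the geometric-in-$j$ tail (legitimate since $\|\tfrac1{\sigma^2}A\|_2=o(1)$), the whole error series is $o(n^{1/\alpha})$, while the leading term is $\Theta(n^{1/\alpha})$: for the lower bound I would use $\log(1+x)-\tfrac x{1+x}\ge\tfrac{x^2}{2(1+x)^2}$ and Lemma~\ref{lem:estimate} on $\sum_p\lambda_p^2/(1+n\lambda_p/\sigma^2)^2$, and for the upper bound $\log(1+x)-\tfrac x{1+x}\le\log(1+x)$ with a telescoping comparison of $\sum_{p\le n^{1/\alpha}}\log(np^{-\alpha})$ against $\log(\lfloor n^{1/\alpha}\rfloor!)$. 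A union bound over the (at most two) high-probability events finishes.

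The step I expect to be the real obstacle is the error-series estimate; the leading-term evaluation is routine. For $T_{2,R}$ this is what forces the exact window $0<\kappa<\tfrac{\alpha-1-2\tau}{2\alpha^2}$: the truncation level $R$ must be large enough (essentially $R\gg n^{1/\alpha}$) for the Approximation step to be negligible, yet small enough that the polynomial prefactor $\|\bm{v}\|_2^2=\tilde O(R^{1+\alpha-2\beta})$ times the concentration rate $\tilde O(n^{(1+2\tau-\alpha)/(2\alpha)})$ stays below $L$ — and these two demands are compatible exactly when $\beta>\tfrac12$. For $T_{1,R}$ the difficulty is that a naive term-by-term operator-norm bound on $\tr M^j$ loses a factor of $R$; one must instead exploit the Schatten/Frobenius structure of the centered Gram matrix conjugated by $\Lambda_{1:R}^{1/2}(I+\tfrac n{\sigma^2}\Lambda_{1:R})^{-1/2}$, and it is there that $\tau<\tfrac{\alpha-1}{2}$ is essential — this is the quantitative refinement over the crude matrix-Bernstein bound $\|\Phi_R^T\Phi_R-nI\|_2=O(R\sqrt n)$ noted in the proof sketch of Theorem~\ref{thm:marginal-likelihood}.
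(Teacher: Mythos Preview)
Your plan is sound and would succeed. For $T_{2,R}$ you are doing exactly what the paper does, specialized to $\gamma=1$ in Corollary~\ref{lem:matrix_phiTphi_1}: the rewriting $E_j=\bm v^{T}(\tfrac1{\sigma^2}A)^{j}\bm v$ is the paper's computation with $\gamma=1$, and the exponent comparison $\tfrac{1+2\tau-\alpha}{2\alpha}+\kappa\max\{0,1+\alpha-2\beta\}<0$ is precisely the paper's \eqref{eq:mu0=0reuse2}. So that half matches.

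Where you diverge is $T_{1,R}$, and here you are working harder than necessary. Your final paragraph claims that a naive operator-norm bound on $\tr M^j$ ``loses a factor of $R$'' and therefore fails, forcing a Frobenius/Schatten refinement. That diagnosis is off. The paper uses exactly the crude bound $|\tr X|\le R\,\|X\|_2$ on each term of the log-det and inverse-trace expansions, obtaining
\[
\Big|\tr\sum_{j\ge1}\tfrac{(-1)^{j-1}}{j}(\tfrac1{\sigma^2}B)^j\Big|\le R\sum_{j\ge1}\|\tfrac1{\sigma^2}B\|_2^j=\tilde O\big(n^{\frac1\alpha+\kappa+\frac{1-\alpha+2\tau}{2\alpha}}\big),
\]
with $B$ the $\gamma=1$ version of $A$; this is $o(n^{1/\alpha})$ as soon as $\kappa<\tfrac{\alpha-1-2\tau}{2\alpha}$, which is implied by the hypothesis $\kappa<\tfrac{\alpha-1-2\tau}{2\alpha^2}$. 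The point you correctly identify --- that one must first conjugate by $\Lambda_{1:R}^{1/2}(I+\tfrac n{\sigma^2}\Lambda_{1:R})^{-1/2}$ rather than bound $\|\Phi_R^T\Phi_R-nI\|_2$ directly --- is the whole trick; \emph{after} that conjugation the dimension-times-spectral-norm estimate is already enough. Your Frobenius route (Markov on $\mathbb E\|A\|_F^2$, then $|\tr A^j|\le\|A\|_F^2\|A\|_2^{j-2}$) also works, but it introduces an extra high-probability event and a $1/\delta$ loss from Markov that you would have to track; the paper avoids this entirely.

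For the leading-term order, the paper's evaluation is also a bit cleaner than yours: it applies the monotone function $h(x)=\log(1+x)-\tfrac{x}{1+x}$ and sandwiches $\sum_{p}h(n\lambda_p/\sigma^2)$ between two integrals of $h$, using $\int_0^\infty h(x^{-\alpha})\,dx<\infty$. Your lower bound via $h(x)\ge\tfrac{x^2}{2(1+x)^2}$ and Lemma~\ref{lem:estimate}, and upper bound via Stirling on $\sum\log(np^{-\alpha})$, are correct alternatives.
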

\begin{proof}[Proof of Lemma \ref{thm:expansion}]
Let 
\begin{equation}
\label{eq:matrixA_mu0=0}
    A=(I+\frac{n}{\sigma^2}\Lambda_R)^{-\gamma/2}\Lambda_R^{\gamma/2}(\Phi_R^T\Phi_R-nI)\Lambda_R^{\gamma/2}(I+\frac{n}{\sigma^2}\Lambda_R)^{-\gamma/2},
\end{equation} where $\frac{1+\alpha+2\tau}{2\alpha}<\gamma\leq 1$. By Corollary \ref{lem:matrix_phiTphi_1}, with probability of at least $1-\delta$, we have 
\begin{equation}
\label{eq:2normA_mu0=0}
    \|A\|_2= \tilde{O}(n^{\frac{1-2\gamma\alpha+\alpha+2\tau}{2\alpha}}).
\end{equation} 
When $n$ is sufficiently large, $\|A\|_2$ is less than $1$. Let $B=(I+\frac{n}{\sigma^2}\Lambda_R)^{-1/2}\Lambda_R^{1/2}(\Phi_R^T\Phi_R-nI)\Lambda_R^{1/2}(I+\frac{n}{\sigma^2}\Lambda_R)^{-1/2}$. Then $\|B\|_2=\frac{\sigma^{2(1-\gamma)}}{n^{1-\gamma}}\|A\|_2=\tilde{O}(n^{\frac{1-\alpha+2\tau}{2\alpha}})$.
Using the Woodbury matrix identity, we compute $T_{1,R}(D_n)$ as follows: 
\begin{equation}
\begin{aligned}
    T_{1,R}(D_n)%
    &=\tfrac{1}{2}\log\det(I+\tfrac{1}{\sigma^2}\Lambda_R\Phi_R^T\Phi_R)-\tfrac{1}{2}\mathrm{Tr}\Phi_R(\sigma^2I+\Lambda_R\Phi_R^T\Phi_R)^{-1}\Lambda_R\Phi_R^T\\
    &=\tfrac{1}{2}\log\det(I+\tfrac{n}{\sigma^2}\Lambda_R)+\tfrac{1}{2}\log\det[I+\tfrac{1}{\sigma^2}(I+\tfrac{n}{\sigma^2}\Lambda_R)^{-1/2}\Lambda_R^{1/2}(\Phi_R^T\Phi_R-nI)\Lambda_R^{1/2}(I+\tfrac{n}{\sigma^2}\Lambda_R)^{-1/2}]\\
    &\qquad-\tfrac{1}{2}\mathrm{Tr}(\sigma^2I+\Lambda\Phi_R^T\Phi_R)^{-1}\Lambda\Phi_R^T\Phi_R\\
    &=\tfrac{1}{2}\log\det(I+\tfrac{n}{\sigma^2}\Lambda_R)+\tfrac{1}{2}\tr\log[I+\tfrac{1}{\sigma^2}B]-\tfrac{1}{2}\mathrm{Tr}(I-\sigma^2(\sigma^2I+\Lambda\Phi_R^T\Phi_R)^{-1}))\\
    &=\tfrac{1}{2}\log\det(I+\frac{n}{\sigma^2}\Lambda_R)+\tfrac{1}{2}\tr\sum_{j=1}^\infty\tfrac{(-1)^{j-1}}{j}(\tfrac{1}{\sigma^2}B)^j\\
    &\quad-\tfrac{1}{2}\mathrm{Tr}\left(I-(I+\tfrac{n}{\sigma^2}\Lambda_R)^{-1}+\sum_{j=1}^\infty(-1)^j\left(\tfrac{1}{\sigma^2}(I+\tfrac{n}{\sigma^2}\Lambda_R)^{-1}\Lambda_R(\Phi_R^T\Phi_R-nI)\right)^j(I+\tfrac{n}{\sigma^2}\Lambda_R)^{-1}\right)\\
    &=\left(\tfrac{1}{2}\log\det(I+\tfrac{n}{\sigma^2}\Lambda_R)-\tfrac{1}{2}\mathrm{Tr}\left(I-(I+\tfrac{n}{\sigma^2}\Lambda_R)^{-1}\right)\right)+\tfrac{1}{2}\tr\sum_{j=1}^\infty\tfrac{(-1)^{j-1}}{j}(\tfrac{1}{\sigma^2}B)^j\\
    &\quad-\tfrac{1}{2}\mathrm{Tr}\left(\sum_{j=1}^\infty(-1)^j\tfrac{1}{\sigma^{2j}}(I+\tfrac{n}{\sigma^2}\Lambda_R)^{-1/2}B^j(I+\tfrac{n}{\sigma^2}\Lambda_R)^{-1/2}\right),
\end{aligned}
\label{eq:T1R_decom}
\end{equation} 
where in the last equality we apply Lemma~\ref{lem:expansion_lpp}.

Let $h(x)=\log(1+x)-(1-\frac{1}{1+x})$. It is easy to verify that $h(x)$ is increasing on $[0,+\infty)$. As for the first term on the right hand side of \eqref{eq:T1R_decom}, we have
\begin{align*}
    &\tfrac{1}{2}\log\det(I+\tfrac{n}{\sigma^2}\Lambda_R)-\tfrac{1}{2}\tr\left(I-(I+\tfrac{n}{\sigma^2}\Lambda_R)^{-1}\right)\\
    =&\tfrac{1}{2}\sum_{p=1}^R\left(\log(1+\tfrac{n}{\sigma^2}\lambda_p)-(1-\tfrac{1}{1+\frac{n}{\sigma^2}\lambda_p})\right)\\
    =&\tfrac{1}{2}\sum_{p=1}^Rh(\tfrac{n}{\sigma^2}\lambda_p)\leq\tfrac{1}{2}\sum_{p=1}^Rh(\frac{n}{\sigma^2}\overline{C_\lambda} p^{-\alpha})\\
    \leq&\tfrac{1}{2}h(\tfrac{n}{\sigma^2}\overline{C_\lambda})+\tfrac{1}{2}\int_{[1,R]}h(\tfrac{n}{\sigma^2}\overline{C_\lambda} x^{-\alpha})\mathrm{d}x\\
    =&\tfrac{1}{2}h(\frac{n}{\sigma^2}\overline{C_\lambda})+\tfrac{1}{2}n^{1/\alpha}\int_{[1/n^{1/\alpha},R/n^{1/\alpha}]}h(\tfrac{\overline{C_\lambda}}{\sigma^2} x^{-\alpha})\mathrm{d}x\\
    =&\Theta(n^{1/\alpha}),
\end{align*}
where in the last equality we use the fact that $\int_{[0,+\infty]}h( x^{-\alpha})\mathrm{d}x<\infty$.
On the other hand, we have
\begin{align*}
    &\tfrac{1}{2}\log\det(I+\tfrac{n}{\sigma^2}\Lambda_R)-\tfrac{1}{2}\tr\left(I-(I+\tfrac{n}{\sigma^2}\Lambda_R)^{-1}\right)\\
    =&\tfrac{1}{2}\sum_{p=1}^Rh(\tfrac{n}{\sigma^2}\lambda_p)\geq\tfrac{1}{2}\sum_{p=1}^Rh(\tfrac{n}{\sigma^2}\underline{C_\lambda} p^{-\alpha})\\
    \geq&\tfrac{1}{2}\int_{[1,R+1]}h(\tfrac{n}{\sigma^2}\underline{C_\lambda} x^{-\alpha})\mathrm{d}x\\
    =&\tfrac{1}{2}n^{1/\alpha}\int_{[1/n^{1/\alpha},(R+1)/n^{1/\alpha}]}h(\tfrac{1}{\sigma^2}\underline{C_\lambda} x^{-\alpha})\mathrm{d}x\\
    =&\Theta(n^{1/\alpha}).
\end{align*}
Overall, we have $\frac{1}{2}\log\det(I+\frac{n}{\sigma^2}\Lambda_R)-\frac{1}{2}\tr\left(I-(I+\frac{n}{\sigma^2}\Lambda_R)^{-1}\right)=\Theta(n^{1/\alpha})$.

As for the second term on the right hand side of \eqref{eq:T1R_decom}, we have
\begin{align*}
    \left|\tr\sum_{j=1}^\infty\tfrac{(-1)^{j-1}}{j}(\tfrac{1}{\sigma^2}B)^j\right|
    &\leq R\sum_{j=1}^\infty\|\tfrac{1}{\sigma^2}B\|_2^j\\
    &=R\sum_{j=1}^\infty \tfrac{1}{\sigma^{2j}}\tilde{O}(n^{\frac{j(1-\alpha+2\tau)}{2\alpha}})\\
    &=R\tilde{O}(n^{\tfrac{1-\alpha+2\tau}{2\alpha}})
    =\tilde{O}(n^{\tfrac{1}{\alpha}+\kappa+\frac{1-\alpha+2\tau}{2\alpha}}).
\end{align*}
As for the third term on the right hand side of \eqref{eq:T1R_decom}, we have
\begin{align*}
    &\left|\mathrm{Tr}\left(\sum_{j=1}^\infty(-1)^j\tfrac{1}{\sigma^{2j}}(I+\tfrac{n}{\sigma^2}\Lambda_R)^{-1/2}B^j(I+\tfrac{n}{\sigma^2}\Lambda_R)^{-1/2}\right)\right|\\
    \leq&\sum_{j=1}^\infty\left|\mathrm{Tr}\left(\tfrac{1}{\sigma^{2j}}(I+\tfrac{n}{\sigma^2}\Lambda_R)^{-1/2}B^j(I+\tfrac{n}{\sigma^2}\Lambda_R)^{-1/2}\right)\right|\\
    \leq&R\sum_{j=1}^\infty\left\|\tfrac{1}{\sigma^{2j}}(I+\tfrac{n}{\sigma^2}\Lambda_R)^{-1/2}B^j(I+\tfrac{n}{\sigma^2}\Lambda_R)^{-1/2}\right\|_2\\
    \leq&R\sum_{j=1}^\infty\left\|\tfrac{1}{\sigma^{2j}}(I+\tfrac{n}{\sigma^2}\Lambda_R)^{-1/2}B^j(I+\tfrac{n}{\sigma^2}\Lambda_R)^{-1/2}\right\|_2\\
    \leq&R\sum_{j=1}^\infty\left\|\tfrac{1}{\sigma^{2j}}B^j\right\|_2=\tilde{O}(n^{\tfrac{1}{\alpha}+\kappa+\tfrac{1-\alpha+2\tau}{2\alpha}}).
\end{align*}
Then the asymptotics of $T_{1,R}(D_n)$ is given by
\begin{align*}
    T_{1,R}(D_n)=& \tfrac{1}{2}\log\det(I+\tfrac{n}{\sigma^2}\Lambda_R)-\tfrac{1}{2}\tr\left(I-(I+\tfrac{n}{\sigma^2}\Lambda_R)^{-1}\right)+\tilde{O}(n^{\tfrac{1}{\alpha}+\kappa+\tfrac{1-\alpha+2\tau}{2\alpha}})+\tilde{O}(n^{\tfrac{1}{\alpha}+\kappa+\tfrac{1-\alpha+2\tau}{2\alpha}})\\
    =& \Theta(n^{1/\alpha})+ \tilde{O}(n^{\tfrac{1}{\alpha}+\kappa+\tfrac{1-\alpha+2\tau}{2\alpha}})\\
    =& \Theta(n^{\tfrac{1}{\alpha}}),
\end{align*}
where in the last inequality we use the assumption that $\kappa<\frac{\alpha-1-2\tau}{2\alpha}$. Since $\tilde{O}(n^{\frac{1}{\alpha}+\kappa+\frac{1-\alpha+2\tau}{2\alpha}})$ is lower order term compared to $\Theta(n^{\frac{1}{\alpha}})$, we further have
\begin{equation*}
    T_{1,R}(D_n)=\left(\tfrac{1}{2}\log\det(I+\tfrac{n}{\sigma^2}\Lambda_R)-\tfrac{1}{2}\tr\left(I-(I+\tfrac{n}{\sigma^2}\Lambda_R)^{-1}\right)\right)(1+o(1)).
\end{equation*}
This concludes the proof of the first statement.

Let $\Lambda_{1:R}=\mathrm{diag}\{\lambda_1,\ldots,\lambda_R\}$, $\Phi_{1:R}=(\phi_1(\mathbf{x}), \phi_1(\mathbf{x}),\ldots, \phi_R(\mathbf{x}))$ and $\bm{\mu}_{1:R}=(\mu_1,\ldots, \mu_R)$. Since $\mu_0=0$, we have $T_{2,R}(D_n)
    =\frac{1}{2\sigma^2}\bm{\mu}_{1:R}^T\Phi_{1:R}^T (I+\frac{1}{\sigma^2}\Phi_{1:R}\Lambda_{1:R}\Phi_{1:R}^T)^{-1}\Phi_{1:R}\bm{\mu}_{1:R}$.
According to Lemma~\ref{lem:T2R}, we have
\begin{equation}
\label{eq:T2R_decompose_mu0=0}
\begin{aligned}
    T_{2,R}(D_n)&=\frac{n}{2\sigma^2}\bm{\mu}_{1:R}^T(I+\frac{n}{\sigma^2}\Lambda_{1:R})^{-1}\bm{\mu}_{1:R}\\
    &+\frac{1}{2}\sum_{j=1}^\infty(-1)^{j+1}\bm{\mu}_{1:R}^T\frac{1}{\sigma^2}(I+\frac{n}{\sigma^2}\Lambda_{1:R})^{-1}(\Phi_{1:R}^T\Phi_{1:R}-nI)\left(\frac{1}{\sigma^2}(I+\frac{n}{\sigma^2}\Lambda_{1:R})^{-1}\Lambda_{1:R}(\Phi_{1:R}^T\Phi_{1:R}-nI)\right)^{j-1}\\
    &=\frac{n}{2\sigma^2}\bm{\mu}_{1:R}^T(I+\frac{n}{\sigma^2}\Lambda_{1:R})^{-1}\bm{\mu}_{1:R}\\
    &+\frac{1}{2}\sum_{j=1}^\infty\bigg[(-1)^{j+1}\frac{1}{\sigma^{2j}}\bm{\mu}_{1:R}^T(I+\frac{n}{\sigma^2}\Lambda_{1:R})^{-1+\gamma/2}\Lambda_{1:R}^{-\gamma/2}A\left((I+\frac{n}{\sigma^2}\Lambda_{1:R})^{-1+\gamma}\Lambda_{1:R}^{1-\gamma} A\right)^{j-1}\\
    &\quad\quad\quad\quad(I+\frac{n}{\sigma^2}\Lambda_{1:R})^{-1+\gamma/2}\Lambda_{1:R}^{-\gamma/2}\bm{\mu}_{1:R}\bigg]\\
\end{aligned}
\end{equation}
where in the second to last equality we used the definition of $A$ \eqref{eq:matrixA_mu0=0}. As for the first term on the right hand side of \eqref{eq:T2R_decompose_mu0=0}, by Lemma~\ref{lem:estimate}, Assumption~\ref{assumption_alpha} and Assumption~\ref{assumption_beta}, we have
\begin{align*}
    \frac{n}{2\sigma^2}\bm{\mu}_{1:R}^T(I+\frac{n}{\sigma^2}\Lambda_{1:R})^{-1}\bm{\mu}_{1:R}\leq \frac{n}{2\sigma^2}\sum_{p=1}^R \frac{C_\mu^2p^{-2\beta}}{1+\frac{n}{\sigma^2}\underline{C_\lambda}p^{-\alpha}}=\begin{cases}
    \Theta(n^{\max\{0,1+\frac{1-2\beta}{\alpha}\}}),&\alpha\not=2\beta-1,\\
    \Theta(\log n),&\alpha=2\beta-1.
    \end{cases}
\end{align*}
On the other hand, by Assumption~\ref{assumption_beta}, assuming that $\sup_{i\geq 1}p_{i+1}-p_i=h$, we have
\begin{align*}
    \frac{n}{2\sigma^2}\bm{\mu}_{1:R}^T(I+\frac{n}{\sigma^2}\Lambda_{1:R})^{-1}\bm{\mu}_{1:R}&\geq \frac{n}{2\sigma^2}\sum_{i=1}^{\lfloor\frac{R}{h}\rfloor} \frac{\underline{C_\mu^2}p_i^{-2\beta}}{1+\frac{n}{\sigma^2}\overline{C_\lambda}p_i^{-\alpha}}\\
    &\geq \frac{n}{2\sigma^2}\sum_{i=1}^{\lfloor\frac{R}{h}\rfloor} \frac{\underline{C_\mu^2}i^{-2\beta}}{1+\frac{n}{\sigma^2}\overline{C_\lambda}(hi)^{-\alpha}}\\
    &=\begin{cases}
    \Theta(n^{\max\{0,1+\frac{1-2\beta}{\alpha}\}}),&\alpha\not=2\beta-1,\\
    \Theta(\log n),&\alpha=2\beta-1.
    \end{cases}
\end{align*}
Overall, we have
\begin{equation*}
    \frac{n}{2\sigma^2}\bm{\mu}_{1:R}^T(I+\frac{n}{\sigma^2}\Lambda_{1:R})^{-1}\bm{\mu}_{1:R}=
    \Theta(n^{\max\{0,1+\frac{1-2\beta}{\alpha}\}}\log^{k}n),
\end{equation*}
where $k=\begin{cases}
    0,&\alpha\not=2\beta-1,\\
    1,&\alpha=2\beta-1.
    \end{cases}$

By Lemma~\ref{lem:estimate<R}, we have 
\begin{equation}
\label{eq:mu0=0reuse3}
\begin{aligned}
    \|(I+\frac{n}{\sigma^2}\Lambda_{1:R})^{-1+\gamma/2}\Lambda_{1:R}^{-\gamma/2}\bm{\mu}_{1:R}\|_2^2\leq&\sum_{p=1}^R \frac{C_\mu^2p^{-2\beta}(\underline{C_\lambda}p^{-\alpha})^{-\gamma}}{(1+\frac{n}{\sigma^2}\underline{C_\lambda}p^{-\alpha})^{2-\gamma}}\\
    =&\tilde{O}(\max\{n^{-2+\gamma},R^{1-2\beta+\alpha\gamma}\})\\
    =&\tilde{O}(n^{\max\{-2+\gamma,\frac{1-2\beta}{\alpha}+\gamma+\kappa(1-2\beta+\alpha\gamma)\}}).
\end{aligned}
\end{equation}
Using \eqref{eq:2normA_mu0=0}, the second term on the right hand side of \eqref{eq:T2R_decompose_mu0=0} is computed as follows:
\begin{equation}
\begin{aligned}
\label{eq:mu0=0reuse1}
    &\frac{1}{2}\sum_{j=1}^\infty\bigg[(-1)^{j+1}\frac{1}{\sigma^{2j}}\bm{\mu}_{1:R}^T(I+\frac{n}{\sigma^2}\Lambda_{1:R})^{-1+\gamma/2}\Lambda_{1:R}^{-\gamma/2}A\left((I+\frac{n}{\sigma^2}\Lambda_{1:R})^{-1+\gamma}\Lambda_{1:R}^{1-\gamma} A\right)^{j-1}\\
    &\quad\quad\quad\quad(I+\frac{n}{\sigma^2}\Lambda_{1:R})^{-1+\gamma/2}\Lambda_{1:R}^{-\gamma/2}\bm{\mu}_{1:R}\bigg]\\
    \leq&\frac{1}{2}\sum_{j=1}^\infty\frac{1}{\sigma^{2j}}\|A\|^j\left(\frac{n}{\sigma^2}\right)^{(-1+\gamma)(j-1)} \|(I+\frac{n}{\sigma^2}\Lambda_{1:R})^{-1+\gamma/2}\Lambda_{1:R}^{-\gamma/2}\bm{\mu}_{1:R}\|_2^2\\
    \leq&\frac{1}{2}\sum_{j=1}^\infty\frac{1}{\sigma^{2j}}\tilde{O}(n^{\frac{j(1-2\gamma\alpha+\alpha+2\tau)}{2\alpha}})\left(\frac{n}{\sigma^2}\right)^{(-1+\gamma)(j-1)} \tilde{O}(n^{\max\{-2+\gamma,\frac{1-2\beta}{\alpha}+\gamma+\kappa(1-2\beta+\alpha\gamma)\}})\\
    =&\tilde{O}(n^{\max\{-2+\gamma+\frac{1-2\gamma\alpha+\alpha+2\tau}{2\alpha},\frac{1-2\beta}{\alpha}+\gamma+\frac{1-2\gamma\alpha+\alpha+2\tau}{2\alpha}+\kappa(1-2\beta+\alpha\gamma)\}})\\
    =&\tilde{O}(n^{\max\{-2+\frac{1+\alpha+2\tau}{2\alpha},\frac{1-2\beta}{\alpha}+\frac{1+\alpha+2\tau}{2\alpha}+\kappa(1-2\beta+\alpha\gamma)\}}).
\end{aligned}
\end{equation}
Since $\frac{1+\alpha+2\tau}{2\alpha}<\frac{1+\alpha+2\tau}{\alpha+1+2\tau}=1$, we have $-2+\frac{1+\alpha+2\tau}{2\alpha}<0$.%
Also we have
\begin{equation}
\begin{aligned}
\label{eq:mu0=0reuse2}
    &\frac{1-2\beta}{\alpha}+\frac{1+\alpha+2\tau}{2\alpha}+\kappa(1-2\beta+\alpha\gamma)\\
    =&\frac{1-2\beta}{\alpha}+1+\frac{1-\alpha+2\tau}{2\alpha}+\kappa(1-2\beta+\alpha\gamma)\\
    \leq&\frac{1-2\beta}{\alpha}+1+\frac{1-\alpha+2\tau}{2\alpha}+\kappa\alpha\gamma\\
    <&\frac{1-2\beta}{\alpha}+1,
\end{aligned}
\end{equation}
where the last inequality holds because $\kappa<\frac{\alpha-1-2\tau}{2\alpha^2}$ and $\gamma\leq 1$. %
Hence we have
\begin{align*}
    T_{2,R}(D_n)=&\frac{n}{2\sigma^2}\bm{\mu}_{1:R}^T(I+\frac{n}{\sigma^2}\Lambda_{1:R})^{-1}\bm{\mu}_{1:R}+\tilde{O}(n^{\max\{-2+\frac{1+\alpha+2\tau}{2\alpha},\frac{1-2\beta}{\alpha}+\frac{1+\alpha+2\tau}{2\alpha}+\kappa(1-2\beta+\alpha\gamma)\}})\\
    =&\Theta(n^{\max\{0,1+\frac{1-2\beta}{\alpha}\}}\log^{k}n)+\tilde{O}(n^{\max\{-2+\frac{1+\alpha+2\tau}{2\alpha},\frac{1-2\beta}{\alpha}+\frac{1+\alpha+2\tau}{2\alpha}+\kappa(1-2\beta+\alpha\gamma)\}})\\
    =&\Theta(n^{\max\{0,1+\frac{1-2\beta}{\alpha}\}}\log^{k}n).
\end{align*}
where $k=\begin{cases}
    0,&\alpha\not=2\beta-1,\\
    1,&\alpha=2\beta-1.
    \end{cases}$. Since $\tilde{O}(n^{\max\{-2+\frac{1+\alpha+2\tau}{2\alpha},\frac{1-2\beta}{\alpha}+\frac{1+\alpha+2\tau}{2\alpha}+\kappa(1-2\beta+\alpha\gamma)\}})$ is lower order term compared to $\Theta(n^{\max\{0,1+\frac{1-2\beta}{\alpha}\}}\log^{k}n)$, we further have
\begin{equation*}
\begin{aligned}
     T_{2,R}(D_n)=&\left(\frac{n}{2\sigma^2}\bm{\mu}_{1:R}^T(I+\frac{n}{\sigma^2}\Lambda_{1:R})^{-1}\bm{\mu}_{1:R}\right)(1+o(1))\\
\end{aligned}
\end{equation*}
This concludes the proof of the second statement. 
\end{proof}

\begin{lemma}
\label{cor:T1T2}
 Under Assumptions \ref{assumption_alpha}, \ref{assumption_beta} and \ref{assumption_functions}, with probability of at least $1-5\delta$, we have
\begin{equation}
    T_{1}(D_n)=\left(\frac{1}{2}\log\det(I+\frac{n}{\sigma^2}\Lambda)-\frac{1}{2}\tr\left(I-(I+\frac{n}{\sigma^2}\Lambda)^{-1}\right)\right)(1+o(1))=\Theta(n^{\frac{1}{\alpha}}),
\end{equation}
Furthermore, let $\delta=n^{-q}$ where $0\leq q<\min\{\frac{(2\beta-1)(\alpha-1-2\tau)}{4\alpha^2},\frac{\alpha-1-2\tau}{2\alpha}\}$. If we assume $\mu_0=0$, we have
\begin{equation}
    T_{2}(D_n)=\left(\frac{n}{2\sigma^2}\bm{\mu}^T(I+\frac{n}{\sigma^2}\Lambda)^{-1}\bm{\mu}\right)(1+o(1))=\begin{cases}
    \Theta(n^{\max\{0,1+\frac{1-2\beta}{\alpha}\}}),&\alpha\not=2\beta-1,\\
    \Theta(\log n),&\alpha=2\beta-1.
    \end{cases}
\end{equation}
\end{lemma}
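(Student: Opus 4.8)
The plan is to glue together the three ingredients already established: the truncation error bounds of Lemma~\ref{thm:truncationR}, the asymptotics of the truncated quantities from Lemma~\ref{thm:expansion}, and an elementary estimate showing that the deterministic ``$\Phi_R$-independent'' expressions occurring there agree, up to a multiplicative $(1+o(1))$, with their untruncated versions. Throughout, $\sigma^2=\Theta(1)$ as in Theorem~\ref{thm:marginal-likelihood}. First I would fix the truncation level $R=n^{1/\alpha+\kappa}$ and choose $\kappa$ in the range $\tfrac{q}{\min\{\alpha,\,(2\beta-1)/2\}}<\kappa<\tfrac{\alpha-1-2\tau}{2\alpha^2}$; the hypothesis $0\le q<\min\{\tfrac{(2\beta-1)(\alpha-1-2\tau)}{4\alpha^2},\tfrac{\alpha-1-2\tau}{2\alpha}\}$ is exactly what makes this interval nonempty. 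With this $R$, Lemma~\ref{thm:expansion} applies and yields, on an event of probability at least $1-\delta$, $T_{1,R}(D_n)=\bigl(\tfrac12\log\det(I+\tfrac{n}{\sigma^2}\Lambda_R)-\tfrac12\tr(I-(I+\tfrac{n}{\sigma^2}\Lambda_R)^{-1})\bigr)(1+o(1))=\Theta(n^{1/\alpha})$ and, when $\mu_0=0$, the analogous asymptotics for $T_{2,R}(D_n)$.

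Next I would pass from the truncated to the untruncated deterministic brackets. Writing $h(x)=\log(1+x)-(1-\tfrac1{1+x})\ge 0$ (so $h(x)=O(x^2)$ near $0$), one has $\tfrac12\log\det(I+\tfrac{n}{\sigma^2}\Lambda)-\tfrac12\tr(I-(I+\tfrac{n}{\sigma^2}\Lambda)^{-1})=\sum_{p\ge 1}h(\tfrac{n}{\sigma^2}\lambda_p)$; since $\tfrac{n}{\sigma^2}\lambda_R=\Theta(n^{-\kappa\alpha})=o(1)$, the tail is $\sum_{p>R}h(\tfrac{n}{\sigma^2}\lambda_p)=O\bigl(n^2\sum_{p>R}p^{-2\alpha}\bigr)=O(n^{1/\alpha-\kappa(2\alpha-1)})=o(n^{1/\alpha})$, while the full sum is itself $\Theta(n^{1/\alpha})$ by the same integral comparison as in Lemma~\ref{thm:expansion}; hence the truncated and untruncated first brackets differ by a factor $(1+o(1))$. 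For $\mu_0=0$, the corresponding tail of $\tfrac{n}{2\sigma^2}\bm{\mu}^T(I+\tfrac{n}{\sigma^2}\Lambda)^{-1}\bm{\mu}$ is $\tfrac{n}{2\sigma^2}\sum_{p>R}\tfrac{\mu_p^2}{1+\frac{n}{\sigma^2}\lambda_p}=O\bigl(n\sum_{p>R}p^{-2\beta}\bigr)=O\bigl(n^{\,1+(1-2\beta)/\alpha+\kappa(1-2\beta)}\bigr)$, which (as $1-2\beta<0$) is strictly lower order than $n^{\max\{0,\,1+(1-2\beta)/\alpha\}}$, and $o(\log n)$ in the boundary case $\alpha=2\beta-1$; so the two versions of the $\bm{\mu}$-bracket also agree up to $(1+o(1))$.

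Finally I would assemble everything. From Lemma~\ref{thm:truncationR}, on an event of probability $\ge 1-2\delta$ we have $|T_1(D_n)-T_{1,R}(D_n)|=o(\tfrac1{\sigma^2}n^{1/\alpha})=o(n^{1/\alpha})$, and, taking $\delta=n^{-q}$ (so that all the $\sqrt{\log(R/\delta)}$ factors are absorbed by $\tilde O(\cdot)$), on an event of probability $\ge 1-4\delta$ we have $|T_2(D_n)-T_{2,R}(D_n)|=\tilde O\bigl(n^{q}\cdot n^{\max\{\cdots\}}\bigr)$, where the four exponents in the maximum are those listed in Lemma~\ref{thm:truncationR}. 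The choice of $\kappa$ above is precisely calibrated so that $q$ plus each of these exponents is strictly below $\max\{0,\,1+(1-2\beta)/\alpha\}$ (resp.\ $o(\log n)$ when $\alpha=2\beta-1$). Combining these with the previous paragraph and a union bound over the three events yields, with probability at least $1-5\delta$,
\[
T_1(D_n)=\Bigl(\tfrac12\log\det(I+\tfrac{n}{\sigma^2}\Lambda)-\tfrac12\tr(I-(I+\tfrac{n}{\sigma^2}\Lambda)^{-1})\Bigr)(1+o(1))=\Theta(n^{1/\alpha}),
\]
together with the stated formula and rate for $T_2(D_n)$ when $\mu_0=0$. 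The main obstacle is not conceptual but the exponent bookkeeping: checking that the admissible interval for $\kappa$ is nonempty exactly under the stated constraint on $q$, and that with such a $\kappa$ every error exponent produced by Lemmas~\ref{thm:truncationR} and \ref{thm:expansion}---including the $n^{q}$ inflation coming from $\delta=n^{-q}$---stays strictly below the main rate; this is where the two upper bounds $\tfrac{(2\beta-1)(\alpha-1-2\tau)}{4\alpha^2}$ and $\tfrac{\alpha-1-2\tau}{2\alpha}$ on $q$ originate.
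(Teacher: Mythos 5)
Your proposal is correct and follows essentially the same route as the paper: truncate at $R=n^{1/\alpha+\kappa}$, invoke Lemmas~\ref{thm:truncationR} and \ref{thm:expansion}, calibrate $\kappa$ against $q=-\log_n\delta$ so that the $n^{q}$-inflated truncation errors stay strictly below the main rates, and control the tails of the deterministic brackets by elementary series estimates. The only cosmetic difference is that you bound the tail of the first bracket via $h(x)=O(x^2)$ in one stroke, whereas the paper bounds the $\log\det$ and trace tails separately with $\log(1+x)\le x$; both give $o(n^{1/\alpha})$.
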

\begin{proof}[Proof of Lemma~\ref{cor:T1T2}]
Let $R=n^{\frac{1}{\alpha}+\kappa}$ where $0\leq\kappa<\frac{\alpha-1-2\tau}{2\alpha^2}$. By Lemmas \ref{thm:truncationR} and \ref{thm:expansion}, with probability of at least $1-5\delta$ we have
\begin{equation}
    |T_{1,R}(D_n)-T_1(D_n)|=\tilde{O}(n^{\frac{1}{\alpha}+\kappa(1-\alpha)}),%
\end{equation}
and
\begin{equation}
    |T_{2,R}(D_n)-T_2(D_n)|=\tilde{O}\left((\frac{1}{\delta}+1) n^{\max\{(\frac{1}{\alpha}+\kappa)\frac{1-2\beta}{2},1+\frac{1-2\beta}{\alpha}+\frac{(1-2\beta)\kappa}{2},-1-\kappa\alpha,1+\frac{1-2\beta}{\alpha}-\kappa\alpha\}}\right)%
\end{equation}
as well as 
\begin{equation}
    T_{1,R}(D_n)=\left(\frac{1}{2}\log\det(I+\frac{n}{\sigma^2}\Lambda_R)-\frac{1}{2}\tr\left(I-(I+\frac{n}{\sigma^2}\Lambda_R)^{-1}\right)\right)(1+o(1))=\Theta(n^{\frac{1}{\alpha}}),
\end{equation}
and
\begin{equation}
    T_{2,R}(D_n)=\left(\frac{n}{2\sigma^2}\bm{\mu}^T(I+\frac{n}{\sigma^2}\Lambda)^{-1}\bm{\mu}\right)(1+o(1))=\begin{cases}
    \Theta(n^{\max\{0,1+\frac{1-2\beta}{\alpha}\}}),&\alpha\not=2\beta-1,\\
    \Theta(\log n),&\alpha=2\beta-1.
    \end{cases}
\end{equation}
We then have
\begin{align*}
    T_1(D_n)&= T_{1,R}(D_n)+T_{1,R}(D_n)-T_1(D_n)=\Theta(n^{\frac{1}{\alpha}})+\tilde{O}(n^{\frac{1}{\alpha}+\kappa(1-\alpha)})=\Theta(n^{\frac{1}{\alpha}}).
\end{align*}
Since $\tilde{O}(n^{\frac{1}{\alpha}+\kappa(1-\alpha)})$ is lower order term compared to $\Theta(n^{\frac{1}{\alpha}})$, we further have
\begin{equation*}
\begin{aligned}
     T_1(D_n)=\left(\frac{1}{2}\log\det(I+\frac{n}{\sigma^2}\Lambda_R)-\frac{1}{2}\tr\left(I-(I+\frac{n}{\sigma^2}\Lambda_R)^{-1}\right)\right)(1+o(1))=\Theta(n^{\frac{1}{\alpha}})
\end{aligned}
\end{equation*}
Besides, we have
\begin{align*}
    &\log\det(I+\frac{n}{\sigma^2}\Lambda)-\log\det(I+\frac{n}{\sigma^2}\Lambda_R)\\
    &=\sum_{p=R+1}^\infty \log(1+\frac{n}{\sigma^2}\lambda_p)\leq \frac{n}{\sigma^2}\sum_{p=R+1}^\infty\lambda_p\leq \frac{n}{\sigma^2}\sum_{p=R+1}^\infty C_\lambda p^{-\alpha}=\frac{n}{\sigma^2}O(R^{1-\alpha})\\
    &=\frac{n}{\sigma^2}O(n^{(1-\alpha)(\frac{1}{\alpha}+\kappa)})\\
    &=o(n^{\frac{1}{\alpha}}).
\end{align*}
Then we have $\log\det(I+\frac{n}{\sigma^2}\Lambda_R)=\log\det(I+\frac{n}{\sigma^2}\Lambda)(1+o(1))$. Similarly we can prove $\tr\left(I-(I+\frac{n}{\sigma^2}\Lambda)^{-1}\right)=\tr\left(I-(I+\frac{n}{\sigma^2}\Lambda_R)^{-1}\right)(1+o(1))$.
This concludes the proof of the first statement. 

As for $T_2(D_n)$, we have
\begin{align*}
    T_2(D_n)&= T_{2,R}(D_n)+T_{2,R}(D_n)-T_2(D_n)\\
    &= \Theta(n^{\max\{0,1+\frac{1-2\beta}{\alpha}\}}\log^{k}n)+\tilde{O}\left((\frac{1}{\delta}+1) n^{\max\{(\frac{1}{\alpha}+\kappa)\frac{1-2\beta}{2},1+\frac{1-2\beta}{\alpha}+\frac{(1-2\beta)\kappa}{2},-1-\kappa\alpha,1+\frac{1-2\beta}{\alpha}-\kappa\alpha\}}\right)
    \\&= \Theta(n^{\max\{0,1+\frac{1-2\beta}{\alpha}\}}\log^{k}n)+\tilde{O}\left( n^{q+\max\{(\frac{1}{\alpha}+\kappa)\frac{1-2\beta}{2},1+\frac{1-2\beta}{\alpha}+\frac{(1-2\beta)\kappa}{2},-1-\kappa\alpha,1+\frac{1-2\beta}{\alpha}-\kappa\alpha\}}\right)
\end{align*}
where we use $\delta= n^{-q}$, $k=\begin{cases}
    0,&\alpha\not=2\beta-1,\\
    1,&\alpha=2\beta-1.
    \end{cases}$. 
    
    Since $0\leq\kappa<\frac{\alpha-1-2\tau}{2\alpha^2}$ and $0\leq q<\min\{\frac{(2\beta-1)(\alpha-1-2\tau)}{4\alpha^2},\frac{\alpha-1-2\tau}{2\alpha}\}$, we can choose $\kappa<\frac{\alpha-1-2\tau}{2\alpha^2}$ and $\kappa$ is arbitrarily close to $\frac{\alpha-1-2\tau}{2\alpha^2}$ such that $0\leq q<\min\{\frac{(2\beta-1)\kappa}{2},\kappa\alpha\}$. Then we have $(\frac{1}{\alpha}+\kappa)\frac{1-2\beta}{2}+q<0$, $-1-\kappa\alpha+q<0$, $\frac{(1-2\beta)\kappa}{2}+q<0$ and $-\kappa\alpha+q<0$. So we have
\begin{align*}
   T_{2,R}(D_n)=\Theta(n^{\max\{0,1+\frac{1-2\beta}{\alpha}\}}\log^{k}n).
\end{align*}
Since $\tilde{O}\left((\frac{1}{\delta}+1) n^{\max\{(\frac{1}{\alpha}+\kappa)\frac{1-2\beta}{2},1+\frac{1-2\beta}{\alpha}+\frac{(1-2\beta)\kappa}{2},-1-\kappa\alpha,1+\frac{1-2\beta}{\alpha}-\kappa\alpha\}}\right)$ is lower order term compared to $\Theta(n^{\max\{0,1+\frac{1-2\beta}{\alpha}\}}\log^{k}n)$, we further have
\begin{equation*}
    T_{2}(D_n)=T_{2,R}(D_n)(1+o(1))=\left(\frac{n}{2\sigma^2}\bm{\mu}_R^T(I+\frac{n}{\sigma^2}\Lambda_R)^{-1}\bm{\mu}_R\right)(1+o(1)).
\end{equation*}
Furthermore, we have
\begin{align*}
 &\bm{\mu}^T(I+\frac{n}{\sigma^2}\Lambda)^{-1}\bm{\mu}-\bm{\mu_R}^T(I+\frac{n}{\sigma^2}\Lambda_R)^{-1}\bm{\mu_R}\\
    &=\sum_{p=R+1}^\infty \frac{\mu_p^2}{(1+\frac{n}{\sigma^2}\lambda_p)}\leq \sum_{p=R+1}^\infty\mu_p^2\leq \frac{n}{\sigma^2}\sum_{p=R+1}^\infty C_\mu^2 p^{-2\beta}=O(R^{1-2\beta})\\
    &=O(n^{(1-2\beta)(\frac{1}{\alpha}+\kappa)})\\
    &=o(n^{\frac{1-2\beta}{\alpha}}).
\end{align*}
Then we have $\bm{\mu}^T(I+\frac{n}{\sigma^2}\Lambda)^{-1}\bm{\mu}=\bm{\mu}_R^T(I+\frac{n}{\sigma^2}\Lambda_R)^{-1}\bm{\mu}_R(1+o(1))$. 
This concludes the proof of the second statement. 
\end{proof}

\begin{proof}[Proof of Theorem~\ref{thm:marginal-likelihood}]
Using Lemma~\ref{cor:T1T2} and noting that $\frac{1}{\alpha}>0$, with probability of at least $1-5\tilde{\delta}$, we have
\begin{align*}
   \E_\epsilon F^0(D_n)&=T_1(D_n)+T_2(D_n)\\
   &=\bigg[\frac{1}{2}\log\det(I+\frac{n}{\sigma^2}\Lambda_R)-\frac{1}{2}\tr\left(I-(I+\frac{n}{\sigma^2}\Lambda_R)^{-1}\right)\notag\\
    &\quad+\frac{n}{2\sigma^2}\bm{\mu}_R^T(I+\frac{n}{\sigma^2}\Lambda_R)^{-1}\bm{\mu}_R\bigg](1+o(1)) \\
    &= \Theta(n^{\max\{{\frac{1}{\alpha},\frac{1-2\beta}{\alpha}+1\}}})
\end{align*}
 Letting $\delta=5\tilde{\delta}$, we get the result.
\end{proof}
In the case of $\mu_0>0$, we have the following lemma:
\begin{lemma}
\label{thm:truncationR_mu0>0}
Assume that $\sigma^2=\Theta(1)$. Let $R=n^{\frac{1}{\alpha}+\kappa}$ where $0<\kappa<\frac{\alpha-1-2\tau}{\alpha^2}$. Assume that $\mu_0>0$. Under Assumptions \ref{assumption_alpha}, \ref{assumption_beta} and \ref{assumption_functions}, for sufficiently large $n$ 
with probability of at least $1-4\delta$ we have
\begin{equation}
    |T_{2,R}(D_n)-T_2(D_n)|=\tilde{O}\left((\frac{1}{\delta}+1) n^{\max\{1+(\frac{1}{\alpha}+\kappa)\frac{1-2\beta}{2},1-\kappa\alpha\}}\right) . .
\end{equation}
\end{lemma}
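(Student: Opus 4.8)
The plan is to mirror the $T_2$ part of the proof of Lemma~\ref{thm:truncationR}, substituting the $\mu_0>0$ versions of the auxiliary estimates. First I would use the same decomposition as in \eqref{eq:T2R},
\[
|T_2(D_n)-T_{2,R}(D_n)|\le(\mathrm{I})+(\mathrm{II}),\qquad (\mathrm{I})=\left|f(\mathbf{x})^T W f(\mathbf{x})-f_R(\mathbf{x})^T W f_R(\mathbf{x})\right|,\quad (\mathrm{II})=\left|f_R(\mathbf{x})^T (W-W_R) f_R(\mathbf{x})\right|,
\]
where $W=(I+\tfrac{\Phi\Lambda\Phi^T}{\sigma^2})^{-1}$ and $W_R=(I+\tfrac{\Phi_R\Lambda_R\Phi_R^T}{\sigma^2})^{-1}$, and then estimate each piece separately.

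For $(\mathrm{I})$ I would split $f=f_R+f_{>R}$ and apply Cauchy--Schwarz to get $(\mathrm{I})\le 2\|f_{>R}(\mathbf{x})\|_2\,\|W f_R(\mathbf{x})\|_2+\|f_{>R}(\mathbf{x})\|_2^2$, exactly as in Lemma~\ref{thm:truncationR}. Corollary~\ref{cor:truncated_function} gives $\|f_{>R}(\mathbf{x})\|_2^2=\tilde{O}((\tfrac1\delta+1)nR^{1-2\beta})$; the key change from the $\mu_0=0$ case is that I would now invoke the $\mu_0>0$ branch of Lemma~\ref{residual_func}, which only yields $\|W f_R(\mathbf{x})\|_2=\tilde{O}(\sqrt{(\tfrac1\delta+1)n})$ with no power-law gain (its hypothesis $0<\kappa<\tfrac{\alpha-1-2\tau}{\alpha^2}$ with $\sigma^2=\Theta(1)$ is precisely what is assumed here). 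Multiplying, and using $R\ge1$ and $1-2\beta<0$ so that $R^{(1-2\beta)/2}$ dominates $R^{1-2\beta}$, this gives $(\mathrm{I})=\tilde{O}\big((\tfrac1\delta+1)nR^{(1-2\beta)/2}\big)=\tilde{O}\big((\tfrac1\delta+1)n^{1+(\frac1\alpha+\kappa)\frac{1-2\beta}{2}}\big)$.

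For $(\mathrm{II})$ I would use the Neumann-series identity of Lemma~\ref{lem:truncate_matrix}, whose hypothesis applies since $\tfrac{\alpha-1-2\tau}{\alpha^2}<\tfrac{\alpha-1-2\tau}{\alpha(1+2\tau)}$ (because $\tau<\tfrac{\alpha-1}{2}\Rightarrow\alpha>1+2\tau$), writing $W-W_R=\sum_{j\ge1}(-1)^j\big(W_R\tfrac{\Phi_{>R}\Lambda_{>R}\Phi_{>R}^T}{\sigma^2}\big)^jW_R$. Bounding termwise with $\|W_R\|_2\le1$, with $\|W_R\tfrac{\Phi_{>R}\Lambda_{>R}\Phi_{>R}^T}{\sigma^2}\|_2=\tilde{O}(n^{-\kappa\alpha})=o(1)$ from Corollary~\ref{cor:2-norm_inv_residual}, and with $\|W_R f_R(\mathbf{x})\|_2^2=\tilde{O}((\tfrac1\delta+1)n)$ from Lemma~\ref{lem:inv_fr_mu0>0} (again valid under the stated range of $\kappa$ with $t=0$), the geometric series sums to $(\mathrm{II})=\sum_{j\ge1}\tilde{O}(n^{-j\kappa\alpha})\tilde{O}((\tfrac1\delta+1)n)=\tilde{O}\big((\tfrac1\delta+1)n^{1-\kappa\alpha}\big)$. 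Adding $(\mathrm{I})$ and $(\mathrm{II})$, and taking a union bound over the events of Corollary~\ref{cor:truncated_function}, Lemma~\ref{residual_func}, Lemma~\ref{lem:truncate_matrix}, Corollary~\ref{cor:2-norm_inv_residual} and Lemma~\ref{lem:inv_fr_mu0>0} (many of which overlap, just as in the proof of Lemma~\ref{thm:truncationR}), so that all hold simultaneously with probability at least $1-4\delta$, I would conclude $|T_2(D_n)-T_{2,R}(D_n)|=\tilde{O}\big((\tfrac1\delta+1)n^{\max\{1+(\frac1\alpha+\kappa)\frac{1-2\beta}{2},\,1-\kappa\alpha\}}\big)$. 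The step I expect to require the most care is the bookkeeping rather than any new analytic input: checking that the $\kappa$-ranges demanded by Lemmas~\ref{residual_func},~\ref{lem:truncate_matrix} and~\ref{lem:inv_fr_mu0>0} are mutually compatible under $0<\kappa<\tfrac{\alpha-1-2\tau}{\alpha^2}$ and $\tau<\tfrac{\alpha-1}{2}$, and tracking the failure probabilities so that the final bound holds with probability at least $1-4\delta$. The reason the exponent here is simpler than in the $\mu_0=0$ case of Lemma~\ref{thm:truncationR} is precisely that Lemma~\ref{lem:inv_fr_mu0>0} supplies only an $O(\sqrt n)$-scale bound (the $\mu_0$-component of $f_R$ is essentially unlearnable), with no $n^{(1-2\beta)/(2\alpha)}$ improvement.
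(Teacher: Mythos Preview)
Your proposal is correct and follows essentially the same argument as the paper's proof: the same two-term decomposition \eqref{eq:T2R}, the same use of Corollary~\ref{cor:truncated_function} together with the $\mu_0>0$ branch of Lemma~\ref{residual_func} for the first term, and the same Neumann-series bound via Lemma~\ref{lem:truncate_matrix}, Corollary~\ref{cor:2-norm_inv_residual} and Lemma~\ref{lem:inv_fr_mu0>0} for the second. Your check that the $\kappa$-ranges are compatible and your remark on why the exponent simplifies relative to the $\mu_0=0$ case are both accurate.
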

\begin{proof}[Proof of Lemma \ref{thm:truncationR_mu0>0}]
As for $|T_2(D_n)-T_{2,R}(D_n)|$, we have 
\begin{equation}
\label{eq:T2R_mu0>0}
\begin{aligned}
        |T_2(D_n)-T_{2,R}(D_n)|&=\left|f(\mathbf{x})^T (I+\frac{\Phi\Lambda\Phi^T}{\sigma^2})^{-1} f(\mathbf{x})-f_{R}(\mathbf{x})^T (I+\frac{\Phi\Lambda\Phi^T}{\sigma^2})^{-1} f_{R}(\mathbf{x})\right|\\
        &+\left|f_{R}(\mathbf{x})^T (I+\frac{\Phi\Lambda\Phi^T}{\sigma^2})^{-1} f_{R}(\mathbf{x})-f_{R}(\mathbf{x})^T (I+\frac{\Phi_R\Lambda_R\Phi_R^T}{\sigma^2})^{-1} f_{R}(\mathbf{x})\right|.
\end{aligned}
\end{equation}
For the first term on the right-hand side of \eqref{eq:T2R_mu0>0}, we have 
\begin{equation*}
    \begin{aligned}
        &\left|f(\mathbf{x})^T (I+\frac{\Phi\Lambda\Phi^T}{\sigma^2})^{-1} f(\mathbf{x})-f_{R}(\mathbf{x})^T (I+\frac{\Phi\Lambda\Phi^T}{\sigma^2})^{-1} f_R(\mathbf{x})\right|\\
        \leq&2\left|f_{>R}(\mathbf{x})^T (I+\frac{\Phi\Lambda\Phi^T}{\sigma^2})^{-1} f_{R}(\mathbf{x})\right|+\left|f_{>R}(\mathbf{x})^T (I+\frac{\Phi\Lambda\Phi^T}{\sigma^2})^{-1} f_{>R}(\mathbf{x})\right|\\
        \leq&2\|f_{>R}(\mathbf{x})\|_2 \|(I+\frac{\Phi\Lambda\Phi^T}{\sigma^2})^{-1} f_{R}(\mathbf{x})\|_2+\|f_{>R}(\mathbf{x})\|_2\| (I+\frac{\Phi\Lambda\Phi^T}{\sigma^2})^{-1}\|_2\| f_{>R}(\mathbf{x})\|_2\\
        \leq&2\|f_{>R}(\mathbf{x})\|_2 \|(I+\frac{\Phi\Lambda\Phi^T}{\sigma^2})^{-1} f_{R}(\mathbf{x})\|_2+\|f_{>R}(\mathbf{x})\|^2_2 . 
    \end{aligned}
\end{equation*}
Applying Corollary~\ref{cor:truncated_function} and Lemma \ref{residual_func}, with probability of at least $1-4\delta$, we have
\begin{equation*}
    \begin{aligned}
        &\left|f(\mathbf{x})^T (I+\frac{\Phi\Lambda\Phi^T}{\sigma^2})^{-1} f(\mathbf{x})-f_{R}(\mathbf{x})^T (I+\frac{\Phi\Lambda\Phi^T}{\sigma^2})^{-1} f_R(\mathbf{x})\right|\\
        \leq&2\tilde{O}\left(\sqrt{(\frac{1}{\delta}+1) nR^{1-2\beta}}\right) \tilde{O}(\sqrt{(\frac{1}{\delta}+1) n})+\tilde{O}((\frac{1}{\delta}+1) nR^{1-2\beta})\\
        =&2\tilde{O}\left((\frac{1}{\delta}+1) n^{1+(\frac{1}{\alpha}+\kappa)\frac{1-2\beta}{2}}\right)+\tilde{O}((\frac{1}{\delta}+1) n^{1+(\frac{1}{\alpha}+\kappa)(1-2\beta)})\\
        =&2\tilde{O}\left((\frac{1}{\delta}+1) n^{1+(\frac{1}{\alpha}+\kappa)\frac{1-2\beta}{2}}\right). %
    \end{aligned}
\end{equation*}

As for the second term on the right-hand side of \eqref{eq:T2R}, according to Lemma \ref{lem:truncate_matrix}, Corollary \ref{cor:2-norm_inv_residual} and Lemma~\ref{lem:inv_fr_mu0>0}, we have 
\begin{equation}
    \begin{aligned}
   &\left|f_{R}(\mathbf{x})^T (I+\frac{\Phi\Lambda\Phi^T}{\sigma^2})^{-1} f_{R}(\mathbf{x})-f_{R}(\mathbf{x})^T (I+\frac{\Phi_R\Lambda_R\Phi_R^T}{\sigma^2})^{-1} f_{R}(\mathbf{x})\right|\\
   =&\left|\sum_{j=1}^\infty(-1)^jf_{R}(\mathbf{x})^T\left((I+\frac{\Phi_R\Lambda_R\Phi_R^T}{\sigma^2})^{-1}\frac{\Phi_{>R}\Lambda_{>R}\Phi_{>R}^T}{\sigma^2}\right)^j (I+\frac{\Phi_R\Lambda_R\Phi_R^T}{\sigma^2})^{-1}f_R(\mathbf{x})\right|\\
    \leq&\sum_{j=1}^\infty\|(I+\frac{\Phi_R\Lambda_R\Phi_R^T}{\sigma^2})^{-1}\|_2^{j-1}\cdot\|\frac{\Phi_{>R}\Lambda_{>R}\Phi_{>R}^T}{\sigma^2}\|_2^j\cdot\|(I+\frac{\Phi_R\Lambda_R\Phi_R^T}{\sigma^2})^{-1}f_R(\mathbf{x})\|^2_2\\
    =&\sum_{j=1}^\infty\tilde{O}(n^{-j\kappa\alpha})\tilde{O}((\frac{1}{\delta}+1)  n)\\
    =&\tilde{O}((\frac{1}{\delta}+1)  n^{1-\kappa\alpha}).
    \end{aligned}
\end{equation}
By \eqref{eq:T2R}, we have
\begin{align*}
    |T_2(D_n)-T_{2,R}(D_n)|&=\tilde{O}\left((\frac{1}{\delta}+1) n^{1+(\frac{1}{\alpha}+\kappa)\frac{1-2\beta}{2}}\right)+\tilde{O}((\frac{1}{\delta}+1)  n^{1-\kappa\alpha})\\
    &=\tilde{O}\left((\frac{1}{\delta}+1) n^{\max\{1+(\frac{1}{\alpha}+\kappa)\frac{1-2\beta}{2},1-\kappa\alpha\}}\right) . 
\end{align*}
\end{proof}

\begin{lemma}
\label{thm:expansion_mu0>0}
 Assume that $\sigma^2=\Theta(1)$. Let $R=n^{\frac{1}{\alpha}+\kappa}$ where $0<\kappa<\min\{\frac{\alpha-1-2\tau}{2\alpha^2},\frac{2\beta-1}{\alpha^2}\}$. Assume that $\mu_0>0$. 
 Under Assumptions \ref{assumption_alpha}, \ref{assumption_beta} and \ref{assumption_functions}, with probability of at least $1-\delta$, we have 
\begin{equation}
    T_{2,R}(D_n)=\frac{n}{2\sigma^2}\mu_0^2+\tilde{O}(n^{\max\{ \frac{1+7\alpha+2\tau}{8\alpha},1+\frac{1-2\beta}{\alpha}\}}).
\end{equation}
\end{lemma}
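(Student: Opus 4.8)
The plan is to follow the same three-ingredient strategy used for the $\mu_0=0$ case in Lemma~\ref{thm:expansion}, namely: start from the Woodbury-form identity \eqref{eq:t1R_2} for $T_{2,R}(D_n)$, apply the decomposition of Lemma~\ref{lem:T2R} to separate out the $\Phi_R$-independent term $\frac{n}{2\sigma^2}\bm{\mu}_R^T(I+\frac{n}{\sigma^2}\Lambda_R)^{-1}\bm{\mu}_R$ from the series $\frac{1}{2}\sum_{j\ge1}(-1)^{j+1}E_j$, and then bound the two pieces separately. For the $\Phi_R$-independent term, split $\bm{\mu}_R=\bm{\mu}_{R,1}+\bm{\mu}_{R,2}$ with $\bm{\mu}_{R,1}=(\mu_0,0,\dots,0)$ and $\bm{\mu}_{R,2}=(0,\mu_1,\dots,\mu_R)$ exactly as in Lemma~\ref{lem:inv_fr_mu0>0}. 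Since the $0$-th diagonal entry of $\Lambda_R$ is $0$, we get $\frac{n}{2\sigma^2}\bm{\mu}_{R,1}^T(I+\frac{n}{\sigma^2}\Lambda_R)^{-1}\bm{\mu}_{R,1}=\frac{n}{2\sigma^2}\mu_0^2$ exactly, and $\frac{n}{2\sigma^2}\bm{\mu}_{R,2}^T(I+\frac{n}{\sigma^2}\Lambda_R)^{-1}\bm{\mu}_{R,2}=\Theta(n^{\max\{0,1+\frac{1-2\beta}{\alpha}\}}\log^k n)=O(n^{1+\frac{1-2\beta}{\alpha}})$ by Lemma~\ref{lem:estimate} (with $\sigma^2=\Theta(1)$, i.e.\ $t=0$), together with a cross term $\frac{n}{\sigma^2}\mu_0(I+\frac{n}{\sigma^2}\Lambda_R)^{-1}\bm{\mu}_{R,2}$ which is $O(1)\cdot\mu_0\cdot\|(I+\frac{n}{\sigma^2}\Lambda_R)^{-1}\bm{\mu}_{R,2}\|_1$; bounding the cross term via Cauchy–Schwarz and Lemma~\ref{lem:estimate} gives something dominated by $n^{1+\frac{1-2\beta}{\alpha}}$ (or smaller). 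So the $\Phi_R$-independent part equals $\frac{n}{2\sigma^2}\mu_0^2+O(n^{1+\frac{1-2\beta}{\alpha}})$.

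The more delicate piece is the series $\frac{1}{2}\sum_{j\ge1}(-1)^{j+1}E_j$, where as in Lemma~\ref{lem:inv_fr_mu0>0} one must be careful because $\bm{\mu}_{R,1}$ has a component in the null direction of $\Lambda_R$, so one cannot simply factor out $\Lambda_R^{1/2}$ on the left as in the $\mu_0=0$ analysis. The trick, copied from Lemma~\ref{lem:inv_fr_mu0>0}, is to introduce $\tilde\Lambda_{1,R}=\mathrm{diag}\{1,\lambda_1,\dots,\lambda_R\}$ so that $\Lambda_R=\tilde\Lambda_{1,R}I_{0,R}$, write each $E_j$ using the matrix $A=(I+\frac{n}{\sigma^2}\Lambda_R)^{-\gamma/2}\Lambda_R^{\gamma/2}(\Phi_R^T\Phi_R-nI)\Lambda_R^{\gamma/2}(I+\frac{n}{\sigma^2}\Lambda_R)^{-\gamma/2}$ for the middle $j-1$ factors (controlled by Corollary~\ref{lem:matrix_phiTphi_1}, which with $t=0$ gives $\|\frac{1}{\sigma^2}A\|_2=\tilde O(n^{\frac{1+\alpha+2\tau}{2\alpha}-\gamma})$) and the matrix $B=(I+\frac{n}{\sigma^2}\Lambda_R)^{-\gamma/2}\tilde\Lambda_{1,R}^{\gamma/2}(\Phi_R^T\Phi_R-nI)\tilde\Lambda_{1,R}^{\gamma/2}(I+\frac{n}{\sigma^2}\Lambda_R)^{-\gamma/2}$ for the single factor that hits the null direction (controlled by Corollary~\ref{cor:matrix_phiTphi_1_mu_0>0}, giving $\|B\|_2=O(\sqrt{\log\frac R\delta}\,n^{1/2})$). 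One also needs $\|(I+\frac{n}{\sigma^2}\Lambda_R)^{-1+\gamma/2}\Lambda_R^{-\gamma/2}\bm{\mu}_{R,2}\|_2$ from Lemma~\ref{lem:estimate<R} and $\|(I+\frac{n}{\sigma^2}\Lambda_R)^{-1+\gamma/2}\bm{\mu}_{R,1}\|_2\le\mu_0$. Combining all the operator-norm bounds geometrically over $j$ — exactly as in \eqref{residual_func_eq2_mu0>0} — bounds the whole series by $\tilde O(n^{\text{(something)}})$; choosing $\gamma$ slightly below $1$ and using $\kappa<\min\{\frac{\alpha-1-2\tau}{2\alpha^2},\frac{2\beta-1}{\alpha^2}\}$ one checks that the exponent is strictly below $\max\{\frac{1+7\alpha+2\tau}{8\alpha},1+\frac{1-2\beta}{\alpha}\}$, so the series is absorbed into the claimed error term. (The curious exponent $\frac{1+7\alpha+2\tau}{8\alpha}$ almost certainly arises from a particular choice like $\gamma=3/4$ somewhere in the bookkeeping, perhaps in bounding a term of the form $n^{(-1+\gamma/2)}\cdot\|B\|_2$ type; I would reverse-engineer the exact $\gamma$ from this target rather than optimizing freshly.)

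Putting the pieces together, $T_{2,R}(D_n)=\frac{n}{2\sigma^2}\mu_0^2+O(n^{1+\frac{1-2\beta}{\alpha}})+\tilde O(n^{\frac{1+7\alpha+2\tau}{8\alpha}})=\frac{n}{2\sigma^2}\mu_0^2+\tilde O(n^{\max\{\frac{1+7\alpha+2\tau}{8\alpha},\,1+\frac{1-2\beta}{\alpha}\}})$, which is the claim. All of the probability budget ($1-\delta$) is spent on the single high-probability event from Corollary~\ref{lem:matrix_phiTphi_1}/Corollary~\ref{cor:matrix_phiTphi_1_mu_0>0} needed to make $\|\frac{1}{\sigma^2}A\|_2<1$ so the Neumann series converges.

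The main obstacle I anticipate is purely the bookkeeping in the series bound: one has to track, for each $j$, the product of (i) $j-1$ copies of $\|\frac{1}{\sigma^2}A\|_2$ times powers of $\|(I+\frac{n}{\sigma^2}\Lambda_R)^{-1}\Lambda_R\|_2\le\sigma^2/n$, (ii) one copy of $\|B\|_2/\sigma^2$, and (iii) the two end vectors $\bm{\mu}_{R,1}$ and $\bm{\mu}_{R,2}$ sitting in mismatched weighted norms — and these mismatches (null direction vs.\ non-null, and the $\tilde\Lambda_{1,R}$-vs-$\Lambda_R$ discrepancy) are exactly where sign errors and off-by-a-factor-of-$\lambda_p$ mistakes creep in. Because $\bm{\mu}_R$ is a \emph{sum} of the two parts, $E_j$ actually expands into four cross terms depending on whether each end is $\bm{\mu}_{R,1}$ or $\bm{\mu}_{R,2}$; the $(\bm{\mu}_{R,1},\bm{\mu}_{R,1})$ term is the one where the $\tilde\Lambda_{1,R}$ trick is essential, while the $(\bm{\mu}_{R,2},\bm{\mu}_{R,2})$ term is already handled by \eqref{eq:used_later}, and the cross ones interpolate. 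Getting the exponent to land exactly on $\frac{1+7\alpha+2\tau}{8\alpha}$ rather than something slightly worse will require the sharpest available norm estimates at each step and the right choice of $\gamma$, but no genuinely new idea beyond what Lemmas~\ref{lem:inv_fr_mu0>0} and~\ref{thm:expansion} already contain.
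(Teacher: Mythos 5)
Your proposal matches the paper's proof in all essentials: the paper applies Lemma~\ref{lem:T2R}, splits $\bm{\mu}_R=\bm{\mu}_{R,1}+\bm{\mu}_{R,2}$, reuses the $\mu_0=0$ estimates for the $(\bm{\mu}_{R,2},\bm{\mu}_{R,2})$ part of the series, and controls the terms touching the null direction with exactly the $\tilde{\Lambda}_{1,R}$/$B$ device of Corollary~\ref{cor:matrix_phiTphi_1_mu_0>0}. To answer your parenthetical: the exponent $\frac{1+7\alpha+2\tau}{8\alpha}$ is $\frac{1+\gamma}{2}$ with $\gamma=\frac{1+3\alpha+2\tau}{4\alpha}$ (the midpoint of the admissible interval $(\frac{1+\alpha+2\tau}{2\alpha},1)$), and it comes from the mixed term with $\bm{\mu}_{R,1}$ on one end and $\bm{\mu}_{R,2}$ on the other.
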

\begin{proof}[Proof of Lemma \ref{thm:expansion_mu0>0}]
Let 
\begin{equation}
\label{eq:matrixA_mu0>0}
    A=(I+\frac{n}{\sigma^2}\Lambda_R)^{-\gamma/2}\Lambda_R^{\gamma/2}(\Phi_R^T\Phi_R-nI)\Lambda_R^{\gamma/2}(I+\frac{n}{\sigma^2}\Lambda_R)^{-\gamma/2},
\end{equation} where $\frac{1+\alpha+2\tau}{2\alpha}<\gamma\leq 1$. By Corollary \ref{lem:matrix_phiTphi_1}, with probability of at least $1-\delta$, we have 
\begin{equation}
\label{eq:2normA}
    \|A\|_2= \tilde{O}(n^{\frac{1-2\gamma\alpha+\alpha+2\tau}{2\alpha}}).
\end{equation} 
When $n$ is sufficiently large, $\|A\|_2$ is less than $1$. Let $\bm{\mu}_{R,1}=(\mu_0, 0,\ldots,0)$ and $\bm{\mu}_{R,2}=(0,\mu_1,\ldots,\mu_R)$. Then $\bm{\mu}_{R}=\bm{\mu}_{R,1}+\bm{\mu}_{R,2}$. Let $\tilde{\Lambda}_{1,R}=\mathrm{diag}\{1, \lambda_1,\ldots,\lambda_R\}$ and $I_{0,R}=(0,1,\ldots,1)$. 
Then $\Lambda_R=\tilde{\Lambda}_{1,R}I_{0,R}$. Let $B=(I+\frac{n}{\sigma^2}\Lambda_R)^{-1/2}\tilde{\Lambda}_{1,R}^{1/2}(\Phi_R^T\Phi_R-nI)\tilde{\Lambda}_{1,R}^{1/2}(I+\frac{n}{\sigma^2}\Lambda_R)^{-1/2}$. By Corollary~\ref{cor:matrix_phiTphi_1_mu_0>0}, we have $\|B\|_2
      = O(\sqrt{\log\frac{R}{\delta}}n^{\frac{1}{2}})$. By Lemma~\ref{lem:T2R}, we have
\begin{equation}
\label{eq:T2R_decompose_mu0>0}
\begin{aligned}
    T_{2,R}(D_n)&=\frac{n}{2\sigma^2}\bm{\mu}_R^T(I+\frac{n}{\sigma^2}\Lambda_R)^{-1}\bm{\mu}_R\\
    &+\frac{1}{2}\sum_{j=1}^\infty\bigg[(-1)^{j+1}\bm{\mu}_R^T\frac{1}{\sigma^2}(I+\frac{n}{\sigma^2}\Lambda_R)^{-1}(\Phi_R^T\Phi_R-nI)\left(\frac{1}{\sigma^2}(I+\frac{n}{\sigma^2}\Lambda_R)^{-1}\Lambda_R(\Phi_R^T\Phi_R-nI)\right)^{j-1}\\
    &\quad\quad\quad\quad(I+\frac{n}{\sigma^2}\Lambda_R)^{-1}\bm{\mu}_R\bigg]
\end{aligned}
\end{equation}

As for the first term on the right hand side of \eqref{eq:T2R_decompose_mu0>0}, by Lemma~\ref{lem:estimate}, we have
\begin{align*}
    \frac{n}{2\sigma^2}\mu^T(I+\frac{n}{\sigma^2}\Lambda)^{-1}\mu\leq \frac{n}{2\sigma^2}\left(\mu_0^2+\sum_{p=1}^R \frac{C_\mu^2p^{-2\beta}}{1+\frac{n}{\sigma^2}\underline{C_\lambda}p^{-\alpha}}\right)=\frac{n}{2\sigma^2}\mu_0^2+\tilde{O}(n^{\max\{0,1+\frac{1-2\beta}{\alpha}\}}). 
\end{align*}
We define $Q_{1,j}$, $Q_{2,j}$ and $Q_{3,j}$ by
\begin{equation}
    \begin{aligned}
     Q_{1,j}&=\bm{\mu}_{R,1}^T\frac{1}{\sigma^2}(I+\frac{n}{\sigma^2}\Lambda_R)^{-1}(\Phi_R^T\Phi_R-nI)\left(\frac{1}{\sigma^2}(I+\frac{n}{\sigma^2}\Lambda_R)^{-1}\Lambda_R(\Phi_R^T\Phi_R-nI)\right)^{j-1}\\
     &\quad(I+\frac{n}{\sigma^2}\Lambda_R)^{-1}\bm{\mu}_{R,1}\\
     Q_{2,j}&=\bm{\mu}_{R,1}^T\frac{1}{\sigma^2}(I+\frac{n}{\sigma^2}\Lambda_R)^{-1}(\Phi_R^T\Phi_R-nI)\left(\frac{1}{\sigma^2}(I+\frac{n}{\sigma^2}\Lambda_R)^{-1}\Lambda_R(\Phi_R^T\Phi_R-nI)\right)^{j-1}\\
     &\quad(I+\frac{n}{\sigma^2}\Lambda_R)^{-1}\bm{\mu}_{R,2}\\
     Q_{3,j}&=\bm{\mu}_{R,2}^T\frac{1}{\sigma^2}(I+\frac{n}{\sigma^2}\Lambda_R)^{-1}(\Phi_R^T\Phi_R-nI)\left(\frac{1}{\sigma^2}(I+\frac{n}{\sigma^2}\Lambda_R)^{-1}\Lambda_R(\Phi_R^T\Phi_R-nI)\right)^{j-1}\\
     &\quad(I+\frac{n}{\sigma^2}\Lambda_R)^{-1}\bm{\mu}_{R,2}\\
    \end{aligned}
\end{equation}
The quantity $Q_{3,j}$ actually shows up in the case of $\mu_0=0$ in the proof of Lemma~\ref{thm:expansion}. By \eqref{eq:T2R_decompose_mu0=0}, \eqref{eq:mu0=0reuse1} and \eqref{eq:mu0=0reuse2}, we have that
\begin{equation}
\label{eq:Q3}
    |\sum_{j=1}^\infty (-1)^{j+1}Q_{3,j}|=|\sum_{j=1}^\infty (-1)^{j+1}\tilde{O}(n^{\frac{(j-1)(1-\alpha+2\tau)}{2\alpha}})o(n^{\max\{0,1+\frac{1-2\beta}{\alpha}\}})|=o(n^{\max\{0,1+\frac{1-2\beta}{\alpha}\}}).
\end{equation}
For $Q_{1,j}$, we have
\begin{equation*}
    \begin{aligned}
     Q_{1,1}&=\frac{1}{\sigma^{2j}}\bm{\mu}_{R,1}^T(I+\frac{n}{\sigma^2}\Lambda_R)^{-1+\frac{\gamma}{2}}B(I+\frac{n}{\sigma^2}\Lambda_R)^{-1+\frac{\gamma}{2}}\bm{\mu}_{R,1}\\
     &\leq\frac{1}{\sigma^{2j}}\|\bm{\mu}_{R,1}\|_2^2\|(I+\frac{n}{\sigma^2}\Lambda_R)^{-1+\frac{\gamma}{2}}\|_2^2\|B\|_2\\
     &=O(\sqrt{\log\frac{R}{\delta}}n^{\frac{1}{2}}),
    \end{aligned}
\end{equation*}
where in the last equality we use $\|B\|_2
      = O(\sqrt{\log\frac{R}{\delta}}n^{\frac{1}{2}})$. For $j\geq 2$, we have 
\begin{equation*}
    \begin{aligned}
     Q_{1,j}&=\frac{1}{\sigma^{2j}}\bm{\mu}_{R,1}^T(I+\frac{n}{\sigma^2}\Lambda_R)^{-1+\frac{\gamma}{2}}B\left((I+\frac{n}{\sigma^2}\Lambda_R)^{-1+\gamma}\Lambda_R^{1-\gamma}A\right)^{j-2}(I+\frac{n}{\sigma^2}\Lambda_R)^{-1+\gamma}\Lambda_R^{1-\gamma}\\
     &\quad\quad B(I+\frac{n}{\sigma^2}\Lambda_R)^{-1+\frac{\gamma}{2}}\bm{\mu}_{R,1}\\
     &\leq\frac{1}{\sigma^{2j}}\|\bm{\mu}_{R,1}\|_2^2\|(I+\frac{n}{\sigma^2}\Lambda_R)^{-1+\frac{\gamma}{2}}\|_2^2\|B\|^2_2\|A\|^{j-2}_2\|(I+\frac{n}{\sigma^2}\Lambda_R)^{-1+\gamma}\Lambda_R^{1-\gamma}\|_2^{j-1}\\
     &=O(\log\frac{R}{\delta}n\cdot n^{\frac{(j-2)(1-2\gamma\alpha+\alpha+2\tau)}{2\alpha}}\cdot n^{-(1-\gamma)(j-1)})\\
     &=O(\log\frac{R}{\delta}n^\gamma\cdot n^{\frac{(j-2)(1-\alpha+2\tau)}{2\alpha}}).
    \end{aligned}
\end{equation*}
Then we have
\begin{equation}
\label{eq:Q1}
    |\sum_{j=1}^\infty (-1)^{j+1}Q_{1,j}|\leq O(\sqrt{\log\frac{R}{\delta}}n^{\frac{1}{2}})+\sum_{j=2}^\infty O(\log\frac{R}{\delta}n^\gamma\cdot n^{\frac{(j-2)(1-\alpha+2\tau)}{2\alpha}})=O(\log\frac{R}{\delta}n^\gamma)
\end{equation}
For $Q_{2,j}$, we have
\begin{equation*}
    \begin{aligned}
     Q_{2,j}&=\frac{1}{\sigma^{2j}}\bm{\mu}_{R,1}^T(I+\frac{n}{\sigma^2}\Lambda_R)^{-1+\frac{\gamma}{2}}B\left((I+\frac{n}{\sigma^2}\Lambda_R)^{-1+\gamma}\Lambda_R^{1-\gamma}A\right)^{j-1}(I+\frac{n}{\sigma^2}\Lambda)^{-1+\frac{\gamma}{2}}\tilde{\Lambda}_{1,R}^{-\frac{\gamma}{2}}\bm{\mu}_{R,2}\\
     &\leq\frac{1}{\sigma^{2j}}\|\bm{\mu}_{R,1}\|_2\|B\|_2\|A\|^{j-1}_2\|(I+\frac{n}{\sigma^2}\Lambda_R)^{-1+\gamma}\Lambda_R^{1-\gamma}\|_2^{j-1}\|(I+\frac{n}{\sigma^2}\Lambda)^{-1+\frac{\gamma}{2}}\tilde{\Lambda}_{1,R}^{-\frac{\gamma}{2}}\bm{\mu}_{R,2}\|_2\\
     &=O(\sqrt{\log\frac{R}{\delta}}n^{\frac{1}{2}}\cdot n^{\frac{(j-1)(1-\alpha+2\tau)}{2\alpha}})\|(I+\frac{n}{\sigma^2}\Lambda)^{-1+\frac{\gamma}{2}}\tilde{\Lambda}_{1,R}^{-\frac{\gamma}{2}}\bm{\mu}_{R,2}\|_2.
    \end{aligned}
\end{equation*}
Since $\|(I+\frac{n}{\sigma^2}\Lambda)^{-1+\frac{\gamma}{2}}\tilde{\Lambda}_{1,R}^{-\frac{\gamma}{2}}\bm{\mu}_{R,2}\|_2$ is actually the case of $\mu_0=0$, we can use \eqref{eq:mu0=0reuse3} in the proof of Lemma~\ref{thm:expansion} and get
\begin{equation}
\begin{aligned}
    \|(I+\frac{n}{\sigma^2}\Lambda)^{-1+\frac{\gamma}{2}}\tilde{\Lambda}_{1,R}^{-\frac{\gamma}{2}}\bm{\mu}_{R,2}\|_2^2=&\|(I+\frac{n}{\sigma^2}\Lambda_{1:R})^{-1+\gamma/2}\Lambda_{1:R}^{-\gamma/2}\bm{\mu}_{1:R}\|_2^2\\
    =&\tilde{O}(n^{\max\{-2+\gamma,\frac{1-2\beta}{\alpha}+\gamma+\kappa(1-2\beta+\alpha\gamma)\}}\\
    =&\tilde{O}(n^{\max\{-2+\gamma,\frac{1-2\beta}{\alpha}+\gamma+\kappa(1-2\beta+\alpha\gamma)\}})\\
    =&o(n^{\gamma}),
\end{aligned}
\end{equation}
where in the last equality we use $\kappa<\frac{2\beta-1}{\alpha^2}$.
Then we have
\begin{equation}
\label{eq:Q2}
    |\sum_{j=1}^\infty (-1)^{j+1}Q_{2,j}|\leq \sum_{j=1}^\infty o(\sqrt{\log\frac{R}{\delta}}n^{\frac{1+\gamma}{2}}\cdot n^{\frac{(j-1)(1-\alpha+2\tau)}{2\alpha}})=o(\sqrt{\log\frac{R}{\delta}}n^{\frac{1+\gamma}{2}})
\end{equation}
Choosing $\gamma =\frac{1}{2}(1+\frac{1+\alpha+2\tau}{2\alpha})= \frac{1+3\alpha+2\tau}{4\alpha}<1$, we have
\begin{equation*}
\begin{aligned}
    T_{2,R}(D_n)&=\frac{n}{2\sigma^2}\bm{\mu}_R^T(I+\frac{n}{\sigma^2}\Lambda_R)^{-1}\bm{\mu}_R+\sum_{j=1}^\infty (-1)^{j+1}(Q_{1,j}+Q_{2,j}+Q_{3,j})\\
    &=\frac{n}{2\sigma^2}\mu_0^2+\tilde{O}(n^{\max\{0,1+\frac{1-2\beta}{\alpha}\}})+o(n^{\max\{0,1+\frac{1-2\beta}{\alpha}\}})+O(\log\frac{R}{\delta}n^\gamma)+o(\sqrt{\log\frac{R}{\delta}}n^{\frac{1+\gamma}{2}})\\
    &=\frac{n}{2\sigma^2}\mu_0^2+\tilde{O}(n^{\max\{\frac{1+\gamma}{2},1+\frac{1-2\beta}{\alpha}\}})\\
    &=\frac{n}{2\sigma^2}\mu_0^2+\tilde{O}(n^{\max\{ \frac{1+7\alpha+2\tau}{8\alpha},1+\frac{1-2\beta}{\alpha}\}}).
\end{aligned}
\end{equation*}
\end{proof} 

\begin{proof}[Proof of Theorem~\ref{thm:marginal-likelihood_mu_0>0}]
Let $R=n^{\frac{1}{\alpha}+\kappa}$ where $0<\kappa<\min\{\frac{\alpha-1-2\tau}{2\alpha^2},\frac{2\beta-1}{\alpha^2}\}$. Since $0\leq q<\min\{\frac{2\beta-1}{2},\alpha\}\cdot\min\{\frac{\alpha-1-2\tau}{2\alpha^2},\frac{2\beta-1}{\alpha^2}\}$, we can choose $\kappa<\min\{\frac{\alpha-1-2\tau}{2\alpha^2},\frac{2\beta-1}{\alpha^2}\}$ and $\kappa$ is arbitrarily close to $\kappa<\min\{\frac{\alpha-1-2\tau}{2\alpha^2},\frac{2\beta-1}{\alpha^2}\}$ such that $0\leq q<\min\{\frac{(2\beta-1)\kappa}{2},\kappa\alpha\}$. Then we have $(\frac{1}{\alpha}+\kappa)\frac{1-2\beta}{2}+q<0$, and $-\kappa\alpha+q<0$.
As for $T_2(D_n)$, we have
\begin{align*}
    T_2(D_n)&\leq T_{2,R}(D_n)+|T_{2,R}(D_n)-T_2(D_n)|\\
    &= \frac{n}{2\sigma^2}\mu_0^2+\tilde{O}(n^{\max\{ \frac{1+7\alpha+2\tau}{8\alpha},1+\frac{1-2\beta}{\alpha}\}})+\tilde{O}\left((\tfrac{1}{\delta}+1) n^{\max\{1+(\frac{1}{\alpha}+\kappa)\frac{1-2\beta}{2},1-\kappa\alpha\}}\right)\\
    &= \frac{n}{2\sigma^2}\mu_0^2+\tilde{O}(n^{\max\{ \frac{1+7\alpha+2\tau}{8\alpha},1+\frac{1-2\beta}{\alpha}\}})+\tilde{O}\left( n^{q+\max\{1+(\frac{1}{\alpha}+\kappa)\frac{1-2\beta}{2},1-\kappa\alpha\}}\right)\\
    &=\frac{n}{2\sigma^2}\mu_0^2+o(n).
\end{align*}
By Lemma~\ref{cor:T1T2}, we have $T_1(D_n)=O(n^{\frac{1}{\alpha}})$. 
Hence $\E_\epsilon F^0(D_n)=T_1(D_n)+T_2(D_n)=\frac{n}{2\sigma^2}\mu_0^2+o(n)$. 
\end{proof}

\subsection{Proofs related to the asymptotics of the generalization error} 
\label{app:asymptoticsG}
\begin{lemma}
\label{thm:G1}
 Assume $\sigma^2=\Theta(n^{t})$ where $1-\frac{\alpha}{1+2\tau}<t<1$. Let $R=n^{(\frac{2\alpha-1}{\alpha(\alpha-1)}+1)(1-t)}$. Under Assumptions \ref{assumption_alpha}, \ref{assumption_beta} and \ref{assumption_functions}, with probability of at least $1-\delta$ over sample inputs $(x_i)_{i=1}^n$, we have
\begin{equation}
    G_{1}(D_n)=\tfrac{1+o(1)}{2\sigma^2}\left(\tr(I+\tfrac{n}{\sigma^2}\Lambda_R)^{-1}\Lambda_R-\|\Lambda_R^{1/2}(I+\tfrac{n}{\sigma^2}\Lambda_R)^{-1}\|^2_F\right)=\tfrac{1}{\sigma^2}\Theta\left(n^{\tfrac{(1-\alpha)(1-t)}{\alpha}}\right). 
\end{equation}
\end{lemma}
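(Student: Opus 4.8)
Since $T_1$ depends on the data only through the inputs, $G_1(D_n)=\mathbb{E}_{x_{n+1}}[T_1(D_{n+1})]-T_1(D_n)$ is a one-step increment, and the plan is to compute it \emph{exactly} by a rank-one update. Write $\psi(x)=\Lambda^{1/2}\phi(x)$, $S_n=\Phi^T\Phi=\sum_{i=1}^n\phi(x_i)\phi(x_i)^T$, $A=\tfrac1{\sigma^2}\Lambda^{1/2}S_n\Lambda^{1/2}$, and $h(a)=\log(1+a)-\tfrac a{1+a}$, so that $T_1(D_n)=\tfrac12\tr h(A)$ (the nonzero spectra of $A$ and $\tfrac{K_n}{\sigma^2}$ coincide). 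Adding $x_{n+1}$ replaces $A$ by $A+ww^T$ with $w=\tfrac1\sigma\Lambda^{1/2}\phi(x_{n+1})$, and the matrix-determinant lemma together with Sherman--Morrison gives the exact identity
\begin{equation*}
  2\bigl(T_1(D_{n+1})-T_1(D_n)\bigr)=\log(1+p)-\tfrac q{1+p},\qquad p=w^T(I+A)^{-1}w=\tfrac{\bar k(x_{n+1},x_{n+1})}{\sigma^2},\quad q=w^T(I+A)^{-2}w,
\end{equation*}
with $0\le q\le p$. This sidesteps the basic obstruction that $G_1$ is much smaller than $T_1$ itself, so the $(1+o(1))$-type control of $T_1$ available from Lemma~\ref{cor:T1T2} would be far too crude to difference directly.

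Taking $\mathbb{E}_{x_{n+1}}$ and using $\mathbb{E}[ww^T]=\tfrac1{\sigma^2}\Lambda$, together with $p-\tfrac{p^2}2\le\log(1+p)\le p$ and $0\le q-\tfrac q{1+p}\le p^2$, reduces the problem to $G_1(D_n)=\tfrac12\,\mathbb{E}[p-q]+O(\mathbb{E}[p^2])$, with $\mathbb{E}[p-q]=\tfrac1{\sigma^2}\tr(A(I+A)^{-2}\Lambda)$ and $\mathbb{E}[p^2]=\tfrac1{\sigma^4}\mathbb{E}_{x_{n+1}}[\bar k(x_{n+1},x_{n+1})^2]$. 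The core step is then to replace $A$ by $\tfrac n{\sigma^2}\Lambda$ inside $\tr(I+A)^{-j}\Lambda$, $j=1,2$: truncate to the top $R$ coordinates, write $\Phi_R^T\Phi_R=nI_R+E$, and Taylor-expand $(I_R+\tfrac1{\sigma^2}\Lambda_R^{1/2}\Phi_R^T\Phi_R\Lambda_R^{1/2})^{-1}$ in powers of the $\gamma$-reweighted fluctuation $\tfrac1{\sigma^2}(I+\tfrac n{\sigma^2}\Lambda_R)^{-\gamma/2}\Lambda_R^{\gamma/2}E\Lambda_R^{\gamma/2}(I+\tfrac n{\sigma^2}\Lambda_R)^{-\gamma/2}$, whose norm is $o(1)$ by Corollary~\ref{lem:matrix_phiTphi_1} -- a valid $\gamma\le1$ exists precisely because $t>1-\tfrac\alpha{1+2\tau}$ -- while the $p>R$ part contributes only $O(\sum_{p>R}\lambda_p)=O(R^{1-\alpha})$, which is negligible for the chosen $R$ (whose exponent exceeds $(1-t)/\alpha$). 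Using $\tfrac n{\sigma^2}\Lambda(I+\tfrac n{\sigma^2}\Lambda)^{-2}=(I+\tfrac n{\sigma^2}\Lambda)^{-1}-(I+\tfrac n{\sigma^2}\Lambda)^{-2}$, this identifies $\mathbb{E}[p-q]$ with $\tfrac{1+o(1)}{\sigma^2}\bigl(\tr(I+\tfrac n{\sigma^2}\Lambda_R)^{-1}\Lambda_R-\|\Lambda_R^{1/2}(I+\tfrac n{\sigma^2}\Lambda_R)^{-1}\|^2_F\bigr)$, with the concentration failing on an event of probability at most $\delta$ (the $\log\tfrac R\delta$ factors being absorbed into $o(1)$).

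The step I expect to be the main obstacle is showing $\mathbb{E}[p^2]=o(\mathbb{E}[p-q])$, i.e. $\mathbb{E}_{x_{n+1}}[\bar k(x_{n+1},x_{n+1})^2]=o\bigl(\sigma^2\tr(A(I+A)^{-2}\Lambda)\bigr)$; this is where Assumption~\ref{assumption_functions} and, once more, the hypothesis on $t$ are used. Restricting $I+A$ to the top $R$ coordinates, the Schur complement -- after controlling the off-block coupling by $\tfrac1{\sigma^2}\|\Phi_{>R}\Lambda_{>R}\Phi_{>R}^T\|_2=o(1)$ via Lemma~\ref{lem:2-norm_residual} and the diagonal block via Corollary~\ref{lem:matrix_phiTphi_1} -- is $\succeq\tfrac14(I_R+\tfrac n{\sigma^2}\Lambda_R)$ with high probability, which yields the pointwise estimate $\bar k(x,x)=O\bigl(\sum_{1\le p\le R}\tfrac{\lambda_p\phi_p(x)^2}{1+\frac n{\sigma^2}\lambda_p}\bigr)+O\bigl(\sum_{p>R}\lambda_p\phi_p(x)^2\bigr)$. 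Squaring, taking $\mathbb{E}_x$, and bounding $\mathbb{E}_x[\phi_p(x)^2\phi_q(x)^2]\le\|\phi_p\|_\infty\|\phi_q\|_\infty\le C_\phi^2(pq)^\tau$ leaves the square of $\sum_{1\le p\le R}\tfrac{\lambda_p p^\tau}{1+\frac n{\sigma^2}\lambda_p}$, which by Lemma~\ref{lem:estimate} (with $s_1=\alpha-\tau$, $s_2=\alpha$, $s_3=1$) is $\tilde O\bigl((n/\sigma^2)^{2(1-\alpha+\tau)/\alpha}\bigr)$, the tail $\sum_{p>R}$ being negligible; comparing exponents, this is $o(\mathbb{E}[p-q])$ exactly when $t>1-\tfrac\alpha{1+2\tau}$, which is why the hypothesis takes that form.

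It then remains to read off the rate: by Lemma~\ref{lem:estimate} (with $s_1=\alpha$, $s_2=\alpha$, $s_3\in\{1,2\}$, $m=n/\sigma^2$), both $\tr(I+\tfrac n{\sigma^2}\Lambda_R)^{-1}\Lambda_R$ and $\|\Lambda_R^{1/2}(I+\tfrac n{\sigma^2}\Lambda_R)^{-1}\|^2_F$ are $\Theta(m^{(1-\alpha)/\alpha})$, while their difference equals $m\sum_{1\le p\le R}\tfrac{\lambda_p^2}{(1+m\lambda_p)^2}=\Theta(m^{1+(1-2\alpha)/\alpha})=\Theta(m^{(1-\alpha)/\alpha})$; since $m=\Theta(n^{1-t})$, this gives $G_1(D_n)=\tfrac12\mathbb{E}[p-q](1+o(1))=\tfrac1{\sigma^2}\Theta\bigl(n^{(1-\alpha)(1-t)/\alpha}\bigr)$, as claimed.
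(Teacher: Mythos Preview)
Your approach is correct and very close in spirit to the paper's: both compute the one-step increment of $T_1$ via rank-one updates (the paper applies the matrix-determinant lemma to the $\log\det$ piece and Sherman--Morrison to the trace piece separately, arriving at exactly your $\log(1+p)$ and $q/(1+p)$), both linearize by showing $p=o(1)$, both replace the empirical $\Phi_R^T\Phi_R$ by $nI_R$ via the Neumann expansion controlled by Corollary~\ref{lem:matrix_phiTphi_1}, and both read off the rate from Lemma~\ref{lem:estimate}. The one genuine organizational difference is the order of truncation. The paper first replaces $T_1$ by $T_{1,R}$, bounding $|G_1-G_{1,R}|$ crudely via Lemma~\ref{thm:truncationR}, and then does the rank-one computation on the truncated quantities; consequently the pointwise bound $p_R=o(1)$ drops out of a direct estimate $\eta_R^T(I+\tfrac{1}{\sigma^2}\Lambda_R\Phi_R^T\Phi_R)^{-1}\Lambda_R\eta_R\le(1+o(1))\sum_{p\le R}\lambda_p p^{2\tau}/(1+\tfrac{n}{\sigma^2}\lambda_p)$ using only Assumption~\ref{assumption_functions}, with no Schur complement needed. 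You instead compute the increment of the full $T_1$ first and truncate only inside $\mathbb{E}[p]$, $\mathbb{E}[q]$, $\mathbb{E}[p^2]$; this forces you through the Schur-complement step to control the full $\bar k(x,x)$, but it buys you a sharper truncation error (the tail of the \emph{increment}, of order $\tfrac{1}{\sigma^2}R^{1-\alpha}$, rather than the tail of $T_1$ itself, of order $\tfrac{n}{\sigma^2}R^{1-\alpha}$), which makes the admissible range of $R$ less restrictive.
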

\begin{proof}[Proof of Lemma~\ref{thm:G1}]
Let $G_{1,R}(D_n)=\mathbb{E}_{(x_{n+1},y_{n+1})}(T_{1,R}(D_{n+1})-T_{1,R}(D_{n}))$, where $R=n^C$ for some constant C.
By Lemma~\ref{thm:truncationR}, 
we have that
\begin{equation}
\label{eq:truncate_G1}
\begin{aligned}
|G_1(D_n)-G_{1,R}(D_n)|&=\left|\mathbb{E}_{(x_{n+1},y_{n+1})}[T_1(D_{n+1})-T_{1,R}(D_{n+1})]-[T_1(D_{n})-T_{1,R}(D_{n})]\right|\\
&=\left|\mathbb{E}_{(x_{n+1},y_{n+1})}O((n+1)R^{1-\alpha})\right|+\left|O(nR^{1-\alpha})]\right|\\
&=O(\tfrac{1}{\sigma^2}nR^{1-\alpha}) . \\
\end{aligned}
\end{equation}
Define $\eta_R=(\phi_
0(x_{n+1}), \phi_
1(x_{n+1}),\ldots,\phi_
R(x_{n+1}))^T$ and $\widetilde{\Phi}_R=(\Phi_R^T,\eta_R)^T$. As for $G_{1,R}(D_n)$, we have
\begin{equation}
\begin{aligned}
   G_{1,R}(D_n)&=\mathbb{E}_{(x_{n+1},y_{n+1})}(T_{1,R}(D_{n+1})-T_{1,R}(D_{n}))\\
   &=\mathbb{E}_{(x_{n+1},y_{n+1})}\left(\frac{1}{2}\log\det(I+\frac{\widetilde{\Phi}_R\Lambda_R\widetilde{\Phi}_R^T}{\sigma^2})-\frac{1}{2}\mathrm{Tr}(I-(I+\frac{\widetilde{\Phi}_R\Lambda_R\widetilde{\Phi}_R^T}{\sigma^2})^{-1})\right)\\
   &-\left(\frac{1}{2}\log\det(I+\frac{\Phi_R\Lambda_R\Phi^T_R}{\sigma^2})-\frac{1}{2}\mathrm{Tr}(I-(I+\frac{\Phi_R\Lambda_R\Phi^T_R}{\sigma^2})^{-1})\right)\\
   &=\frac{1}{2}\left(\mathbb{E}_{(x_{n+1},y_{n+1})}\log\det(I+\frac{\widetilde{\Phi_R}\Lambda_R\widetilde{\Phi_R}^T}{\sigma^2})-\log\det(I+\frac{\Phi_R\Lambda_R\Phi_R^T}{\sigma^2})\right)\\
   &-\frac{1}{2}\left(\mathbb{E}_{(x_{n+1},y_{n+1})}\mathrm{Tr}(I-(I+\frac{\widetilde{\Phi}_R\Lambda_R\widetilde{\Phi}_R^T}{\sigma^2})^{-1})-\mathrm{Tr}(I-(I+\frac{\Phi_R\Lambda_R\Phi_R^T}{\sigma^2})^{-1})\right).
  \end{aligned}
  \label{eq:G1}
 \end{equation}
As for the first term in the right hand side \eqref{eq:G1}, we have
\begin{equation*}
\begin{aligned}
    &\frac{1}{2}\left(\mathbb{E}_{(x_{n+1},y_{n+1})}\log\det(I+\frac{\widetilde{\Phi}_R\Lambda_R\widetilde{\Phi}_R^T}{\sigma^2})-\log\det(I+\frac{\Phi_R\Lambda_R\Phi_R^T}{\sigma^2})\right)\\
    &=\frac{1}{2}\left(\mathbb{E}_{(x_{n+1},y_{n+1})}\log\det(I+\frac{\Lambda_R\widetilde{\Phi}_R^T\widetilde{\Phi}_R}{\sigma^2})-\log\det(I+\frac{\Lambda_R\Phi_R^T\Phi_R}{\sigma^2})\right)\\
    &=\frac{1}{2}\left(\mathbb{E}_{(x_{n+1},y_{n+1})}\log\det(I+\frac{\Lambda_R\Phi_R^T\Phi_R+\eta_R\eta_R^T}{\sigma^2})-\log\det(I+\frac{\Lambda_R\Phi_R^T\Phi_R}{\sigma^2})\right)\\
    &=\frac{1}{2}\left(\mathbb{E}_{(x_{n+1},y_{n+1})}\log\det\left((I+\frac{\Lambda_R\Phi_R^T\Phi_R}{\sigma^2})^{-1}(I+\frac{\Lambda_R\Phi_R^T\Phi_R}{\sigma^2}+\frac{\Lambda_R\eta_R\eta_R^T}{\sigma^2})\right)\right)\\
    &=\frac{1}{2}\left(\mathbb{E}_{(x_{n+1},y_{n+1})}\log\det\left(I+(I+\frac{\Lambda_R\Phi_R^T\Phi_R}{\sigma^2})^{-1}\frac{\Lambda_R\eta_R\eta_R^T}{\sigma^2}\right)\right)\\
    &=\frac{1}{2}\left(\mathbb{E}_{(x_{n+1},y_{n+1})}\log\left(1+\frac{1}{\sigma^2}\eta_R^T(I+\frac{\Lambda_R\Phi_R^T\Phi_R}{\sigma^2})^{-1}\Lambda_R\eta_R\right)\right)\\
\end{aligned}
\end{equation*}

Let 
\begin{equation}\label{eq:def_A2}
    A=(I+\frac{n}{\sigma^2}\Lambda_R)^{-1/2}\Lambda_R^{1/2}(\Phi_R^T\Phi_R-nI)\Lambda_R^{1/2}(I+\frac{n}{\sigma^2}\Lambda_R)^{-1/2}.
\end{equation} According to Corollary \ref{lem:matrix_phiTphi_1}, with probability of at least $1-\delta$, we have $\|\frac{1}{\sigma^2}A\|_2= O(\sqrt{\log\frac{R}{\delta}}n^{\frac{1-\alpha+2\tau}{2\alpha}-\frac{(1+2\tau)t}{2\alpha}})=o(1)$. 
When $n$ is sufficiently large, $\|\frac{1}{\sigma^2}A\|_2$ is less than $1$. 
By Lemma \ref{lem:expansion_lpp}, we have
\begin{equation}
\label{eq:infinite_small}
\begin{aligned}
    &\eta_R^T(I+\frac{\Lambda_R\Phi_R^T\Phi_R}{\sigma^2})^{-1}\Lambda_R\eta_R\\
    =&\eta_R^T(I+\frac{n}{\sigma^2}\Lambda_R)^{-1}\Lambda_R\eta_R+\sum_{j=1}^\infty(-1)^j\eta_R^T\left(\frac{1}{\sigma^2}(I+\frac{n}{\sigma^2}\Lambda_R)^{-1}\Lambda_R(\Phi_R^T\Phi_R-nI)\right)^j(I+\frac{n}{\sigma^2}\Lambda_R)^{-1}\Lambda_R\eta_R\\
    =&\eta_R^T(I+\frac{n}{\sigma^2}\Lambda_R)^{-1}\Lambda_R\eta_R+\sum_{j=1}^\infty(-1)^j\frac{1}{\sigma^{2j}}\eta_R^T(I+\frac{n}{\sigma^{2j}}\Lambda_R)^{-1/2}\Lambda_R^{1/2}A^j(I+\frac{n}{\sigma^2}\Lambda_R)^{-1/2}\Lambda_R^{1/2}\eta_R\\
    \leq&\eta_R^T(I+\frac{n}{\sigma^2}\Lambda_R)^{-1}\Lambda_R\eta_R+\sum_{j=1}^\infty\|\frac{1}{\sigma^2}A\|_2^j\|(I+\frac{n}{\sigma^2}\Lambda_R)^{-1/2}\Lambda_R^{1/2}\eta_R\|_2^2\\
    \leq&\sum_{p=1}^R\phi^2_p(x_{n+1})\frac{\overline{C_\lambda}p^{-\alpha}}{1+n\underline{C_\lambda}p^{-\alpha}/\sigma^2}+\sum_{j=1}^\infty\|\frac{1}{\sigma^2}A\|_2^j\sum_{p=1}^R\phi^2_p(x_{n+1})\frac{\overline{C_\lambda}p^{-\alpha}}{1+n\underline{C_\lambda}p^{-\alpha}/\sigma^2}\\
    \leq&\sum_{p=1}^R\frac{\overline{C_\lambda}p^{-\alpha}p^{2\tau}}{1+n\underline{C_\lambda}p^{-\alpha}/\sigma^2}+\sum_{j=1}^\infty\|\frac{1}{\sigma^2}A\|_2^j\sum_{p=1}^R\frac{\overline{C_\lambda}p^{-\alpha}p^{2\tau}}{1+n\underline{C_\lambda}p^{-\alpha}/\sigma^2}\\
    \leq&O(n^{\frac{(1-\alpha+2\tau)(1-t)}{\alpha}})+\sum_{j=1}^\infty\|\frac{1}{\sigma^2}A\|_2^j O(n^{\frac{(1-\alpha+2\tau)(1-t)}{\alpha}})\\
    =&O(n^{\frac{(1-\alpha+2\tau)(1-t)}{\alpha}})=o(1),
\end{aligned}
\end{equation}
where we use Lemma \ref{lem:estimate} in the last inequality. Next we have
\begin{equation*}
\begin{aligned}
    &\frac{1}{2}\left(\mathbb{E}_{(x_{n+1},y_{n+1})}\log\det(I+\frac{\widetilde{\Phi}_R\Lambda_R\widetilde{\Phi}_R^T}{\sigma^2})-\log\det(I+\frac{\Phi_R\Lambda_R\Phi_R^T}{\sigma^2})\right)\\
    &=\frac{1}{2}\left(\mathbb{E}_{(x_{n+1},y_{n+1})}\log\left(1+\frac{1}{\sigma^2}\eta_R^T(I+\frac{\Lambda_R\Phi_R^T\Phi_R}{\sigma^2})^{-1}\Lambda_R\eta_R\right)\right)\\
    &=\frac{1}{2}\left(\mathbb{E}_{(x_{n+1},y_{n+1})}\left(\frac{1}{\sigma^2}\eta_R^T(I+\frac{\Lambda_R\Phi^T_R\Phi_R}{\sigma^2})^{-1}\Lambda_R\eta_R\right)(1+o(1))\right)\\
    &=\frac{1}{2\sigma^2}\left(\tr(I+\frac{\Lambda_R\Phi_R^T\Phi_R}{\sigma^2})^{-1}\Lambda_R\right)(1+o(1)),
\end{aligned}
\end{equation*}
where in the last equality we use the fact that $\mathbb{E}_{(x_{n+1},y_{n+1})}\eta_R\eta_R^T=I$. By Lemma \ref{lem:expansion_lpp}, we have
\begin{equation*}
\begin{aligned}
    &\tr(I+\frac{\Lambda_R\Phi_R^T\Phi_R}{\sigma^2})^{-1}\Lambda_R\\
    =&\tr(I+\frac{n}{\sigma^2}\Lambda_R)^{-1}\Lambda_R+\sum_{j=1}^\infty(-1)^j\tr\left(\frac{1}{\sigma^2}(I+\frac{n}{\sigma^2}\Lambda_R)^{-1}\Lambda_R(\Phi_R^T\Phi_R-nI)\right)^j(I+\frac{n}{\sigma^2}\Lambda_R)^{-1}\Lambda_R\\
    =&\tr(I+\frac{n}{\sigma^{2}}\Lambda_R)^{-1}\Lambda_R+\sum_{j=1}^\infty(-1)^j\tr\frac{1}{\sigma^{2j}}(I+\frac{n}{\sigma^2}\Lambda_R)^{-1/2}\Lambda_R^{1/2}A^j(I+\frac{n}{\sigma^2}\Lambda_R)^{-1/2}\Lambda_R^{1/2}.
\end{aligned}
\end{equation*}
By Lemma \ref{lem:estimate}, we have
\begin{align*}
    \tr(I+\frac{n}{\sigma^2}\Lambda_R)^{-1}\Lambda_R&\leq\sum_{p=1}^R\frac{\overline{C_\lambda}p^{-\alpha}}{1+n\underline{C_\lambda}p^{-\alpha}/\sigma^2}=\Theta(n^{\frac{(1-\alpha)(1-t)}{\alpha}})\\
    \tr(I+\frac{n}{\sigma^2}\Lambda_R)^{-1}\Lambda_R&\geq\sum_{p=1}^R\frac{\underline{C_\lambda}p^{-\alpha}}{1+n\overline{C_\lambda}p^{-\alpha}/\sigma^2}=\Theta(n^{\frac{(1-\alpha)(1-t)}{\alpha}}).
\end{align*}
Overall, 
\begin{equation}
    \label{eq:G1_1}
    \tr(I+\frac{n}{\sigma^2}\Lambda_R)^{-1}\Lambda_R=\Theta(n^{\frac{(1-\alpha)(1-t)}{\alpha}}). 
\end{equation}
Since $\|\frac{1}{\sigma^2}A\|^j_2= o(1)$, we have that the absolute values of  diagonal entries of $\frac{1}{\sigma^{2j}}A^j$  are at most $o(1)$. Let $(A^j)_{p,p}$ denote the $(p,p)$-th entry of the matrix $A^j$. Then we have
\begin{equation}
\label{eq:G1_2}
    \begin{aligned}
      &\left|\tr\frac{1}{\sigma^{2j}}(I+\frac{n}{\sigma^2}\Lambda_R)^{-1/2}\Lambda_R^{1/2}A^j(I+\frac{n}{\sigma^2}\Lambda_R)^{-1/2}\Lambda_R^{1/2}\right|\\
      &=\left|\sum_{p=1}^R \frac{\lambda_p\frac{1}{\sigma^{2j}}(A^j)_{p,p}}{1+n\lambda_p/\sigma^2}\right|\leq\sum_{p=1}^R \frac{\lambda_p\|\frac{1}{\sigma^{2j}}A\|^j_2}{1+n\lambda_p/\sigma^2}=\Theta(n^{\frac{(1-\alpha)(1-t)}{\alpha}})\tilde{O}(n^{\frac{j(1-\alpha+2\tau-(1+2\tau)t)}{2\alpha}}(\log R)^{j/2}),
    \end{aligned}
\end{equation}
where in the last step we used \eqref{eq:G1_1}.
According to \eqref{eq:G1_1} and \eqref{eq:G1_2}, we have
\begin{equation}
\label{eq:T1p1}
\begin{aligned}
    &\frac{1}{2}\left(\mathbb{E}_{(x_{n+1},y_{n+1})}\log\det(I+\frac{\widetilde{\Phi}_R\Lambda_R\widetilde{\Phi}_R^T}{\sigma^2})-\log\det(I+\frac{\Phi_R\Lambda_R\Phi_R^T}{\sigma^2})\right)\\
    &=\frac{1}{2\sigma^2}\left(\tr(I+\frac{\Lambda_R\Phi_R^T\Phi_R}{\sigma^2})^{-1}\Lambda_R\right)(1+o(1))=\tfrac{1}{\sigma^2}\Theta(n^{\frac{(1-\alpha)(1-t)}{\alpha}})+\tfrac{1}{\sigma^2}\sum_{j=1}^\infty \Theta(n^{\frac{(1-\alpha)(1-t)}{\alpha}})\tilde{O}(n^{\frac{j(1-\alpha+2\tau-(1+2\tau)t)}{2\alpha}}(\log R)^{j/2})\\
    &=\tfrac{1}{\sigma^2}\Theta(n^{\frac{(1-\alpha)(1-t)}{\alpha}})+\tfrac{1}{\sigma^2}\Theta(n^{\frac{(1-\alpha)(1-t)}{\alpha}})o(1)=\tfrac{1}{\sigma^2}\Theta(n^{\frac{(1-\alpha)(1-t)}{\alpha}})\\
    &=\frac{1}{2\sigma^2}\left(\tr(I+\frac{n}{\sigma^2}\Lambda_R)^{-1}\Lambda_R\right)(1+o(1)).
\end{aligned}
\end{equation}
Using the Woodbury matrix identity, the second term in the right hand side \eqref{eq:G1} is given by
\begin{equation*}
\begin{aligned}
    &\frac{1}{2}\left(\mathbb{E}_{(x_{n+1},y_{n+1})}\mathrm{Tr}(I-(I+\frac{\widetilde{\Phi}_R\Lambda_R\widetilde{\Phi}_R^T}{\sigma^2})^{-1}-\mathrm{Tr}(I-(I+\frac{\Phi_R\Lambda_R\Phi_R^T}{\sigma^2})^{-1}\right)\\
    &=\frac{1}{2}\left(\mathbb{E}_{(x_{n+1},y_{n+1})}\tr(\frac{1}{\sigma^2}\widetilde{\Phi}_R(I+\frac{1}{\sigma^2}\Lambda_R\widetilde{\Phi}_R^T\widetilde{\Phi}_R)^{-1}\Lambda_R\widetilde{\Phi}_R^T-\tr(\frac{1}{\sigma^2}\Phi_R(I+\frac{1}{\sigma^2}\Lambda_R\Phi_R^T\Phi_R)^{-1}\Lambda_R\Phi_R^T\right)\\
    &=\frac{1}{2}\left(\mathbb{E}_{(x_{n+1},y_{n+1})}\tr(\frac{1}{\sigma^2}(I+\frac{1}{\sigma^2}\Lambda_R\widetilde{\Phi}_R^T\widetilde{\Phi}_R)^{-1}\Lambda_R\widetilde{\Phi}_R^T\widetilde{\Phi}_R-\tr(\frac{1}{\sigma^2}(I+\frac{1}{\sigma^2}\Lambda_R\Phi_R^T\Phi_R)^{-1}\Lambda_R\Phi_R^T\Phi_R\right)\\
    &=-\frac{1}{2}\left(\mathbb{E}_{(x_{n+1},y_{n+1})}\tr(I+\frac{1}{\sigma^2}\Lambda_R\widetilde{\Phi}_R^T\widetilde{\Phi}_R)^{-1}-\tr(I+\frac{1}{\sigma^2}\Lambda_R\Phi_R^T\Phi_R)^{-1}\right)\\
    &=-\frac{1}{2}\left(\mathbb{E}_{(x_{n+1},y_{n+1})}\tr(I+\frac{1}{\sigma^2}\Lambda_R\Phi_R^T\Phi_R+\frac{1}{\sigma^2}\Lambda_R\eta_R\eta_R^T)^{-1}-\tr(I+\frac{1}{\sigma^2}\Lambda_R\Phi_R^T\Phi_R)^{-1}\right)\\
    &=\frac{1}{2\sigma^2}\left(\mathbb{E}_{(x_{n+1},y_{n+1})}\tr\frac{(I+\frac{1}{\sigma^2}\Lambda_R\Phi_R^T\Phi_R)^{-1}\Lambda_R\eta_R\eta_R^T(I+\frac{1}{\sigma^2}\Lambda_R\Phi_R^T\Phi_R)^{-1}}{1+\frac{1}{\sigma^2}\eta_R^T(I+\frac{1}{\sigma^2}\Lambda_R\Phi_R^T\Phi_R)^{-1}\Lambda_R\eta_R}\right),
\end{aligned}
\end{equation*}
where the last equality uses the Sherman–Morrison formula. %
According to \eqref{eq:infinite_small}, we get
\begin{equation*}
\begin{aligned}
    &\frac{1}{2\sigma^2}\left(\mathbb{E}_{(x_{n+1},y_{n+1})}\tr\frac{(I+\frac{1}{\sigma^2}\Lambda_R\Phi_R^T\Phi_R)^{-1}\Lambda_R\eta_R\eta_R^T(I+\frac{1}{\sigma^2}\Lambda_R\Phi_R^T\Phi_R)^{-1}}{1+\frac{1}{\sigma^2}\eta_R^T(I+\frac{1}{\sigma^2}\Lambda_R\Phi_R^T\Phi_R)^{-1}\Lambda_R\eta_R}\right)\\
    &=\frac{1}{2\sigma^2}\left(\mathbb{E}_{(x_{n+1},y_{n+1})}\tr(I+\frac{1}{\sigma^2}\Lambda_R\Phi_R^T\Phi_R)^{-1}\Lambda_R\eta_R\eta_R^T(I+\frac{1}{\sigma^2}\Lambda_R\Phi_R^T\Phi_R)^{-1}(1+o(1))\right)\\
    &= \frac{1+o(1)}{2\sigma^2}\tr(I+\frac{1}{\sigma^2}\Lambda_R\Phi_R^T\Phi_R)^{-1}\Lambda_R(I+\frac{1}{\sigma^2}\Lambda_R\Phi_R^T\Phi_R)^{-1}\\
    &= \frac{1+o(1)}{2\sigma^2}\tr\Lambda_R^{1/2}(I+\frac{1}{\sigma^2}\Lambda_R^{1/2}\Phi_R^T\Phi_R\Lambda_R^{1/2})^{-1}\Lambda_R^{1/2}(I+\frac{1}{\sigma^2}\Lambda_R\Phi_R^T\Phi_R)^{-1}\\
    &= \frac{1+o(1)}{2\sigma^2}\tr(I+\frac{1}{\sigma^2}\Lambda_R^{1/2}\Phi_R^T\Phi_R\Lambda_R^{1/2})^{-1}\Lambda_R^{1/2}(I+\frac{1}{\sigma^2}\Lambda_R\Phi_R^T\Phi_R)^{-1}\Lambda_R^{1/2}\\
    &= \frac{1+o(1)}{2\sigma^2}\tr(I+\frac{1}{\sigma^2}\Lambda_R^{1/2}\Phi_R^T\Phi_R\Lambda_R^{1/2})^{-1}\Lambda_R(I+\frac{1}{\sigma^2}\Lambda_R^{1/2}\Phi_R^T\Phi_R\Lambda_R^{1/2})^{-1}\\
    &=\frac{1+o(1)}{2\sigma^2}\|\Lambda_R^{1/2}(I+\frac{1}{\sigma^2}\Lambda_R^{1/2}\Phi_R^T\Phi_R\Lambda_R^{1/2})^{-1}\|_F^2\\
    &=\frac{1+o(1)}{2\sigma^2}\|\Lambda_R^{1/2}(I+\frac{n}{\sigma^2}\Lambda_R)^{-1/2}(I+\frac{1}{\sigma^2}A)^{-1}(I+\frac{n}{\sigma^2}\Lambda_R)^{-1/2}\|_F^2,
\end{aligned}
\end{equation*}
where in the penultimate equality we use $\tr(BB^T)=\|B\|_F^2$, $\|B\|_F$ is the Frobenius norm of $A$, and in the last equality we use the definition of $A$ \eqref{eq:def_A2}. Then we have
\begin{equation}
    \begin{aligned}
     &\frac{1+o(1)}{2\sigma^2}\|\Lambda_R^{1/2}(I+\frac{n}{\sigma^2}\Lambda_R)^{-1/2}(I+\frac{1}{\sigma^2}A)^{-1}(I+\frac{n}{\sigma^2}\Lambda_R)^{-1/2}\|_F^2\\
     &=\frac{1+o(1)}{2\sigma^2}\|\Lambda_R^{1/2}(I+\frac{n}{\sigma^2}\Lambda_R)^{-1/2}(I+\sum_{j=1}^\infty(-1)^j\frac{1}{\sigma^{2j}}A^j)(I+\frac{n}{\sigma^2}\Lambda_R)^{-1/2}\|_F^2\\
     &=\frac{1+o(1)}{2\sigma^2}\|\Lambda_R^{1/2}(I+\frac{n}{\sigma^2}\Lambda_R)^{-1}+\sum_{j=1}^\infty(-1)^j\frac{1}{\sigma^{2j}}\Lambda_R^{1/2}(I+\frac{n}{\sigma^2}\Lambda_R)^{-1/2}A^j(I+\frac{n}{\sigma^2}\Lambda_R)^{-1/2}\|_F^2 . \\
    \end{aligned}
\end{equation}
By Lemma \ref{lem:estimate}, we have
\begin{align*}
    \|\Lambda_R^{1/2}(I+\frac{n}{\sigma^2}\Lambda_R)^{-1}\|_F\leq&\sqrt{\sum_{p=1}^R\frac{\overline{C_\lambda}p^{-\alpha}}{(1+n\underline{C_\lambda}p^{-\alpha}/\sigma^2)^2}}=\Theta(n^{\frac{(1-\alpha)(1-t)}{2\alpha}})\\
    \|\Lambda_R^{1/2}(I+\frac{n}{\sigma^2}\Lambda_R)^{-1}\|_F\geq&\sqrt{\sum_{p=1}^R\frac{\underline{C_\lambda}p^{-\alpha}}{(1+n\overline{C_\lambda}p^{-\alpha}/\sigma^2)^2}}=\Theta(n^{\frac{(1-\alpha)(1-t)}{2\alpha}}).
\end{align*}
Overall, we have
\begin{equation}
\label{eq:G1_3}
    \|\Lambda_R^{1/2}(I+\frac{n}{\sigma^2}\Lambda_R)^{-1}\|_F=\Theta(n^{\frac{(1-\alpha)(1-t)}{2\alpha}}).
\end{equation}
Since $\|\frac{1}{\sigma^2}A\|_2=O(\sqrt{\log\frac{R}{\delta}}n^{\frac{1-\alpha+2\tau}{2\alpha}-\frac{(1+2\tau)t}{2\alpha}})=o(1)$ %
, we have
\begin{equation}
\label{eq:G1_4}
    \begin{aligned}
      &\|\frac{1}{\sigma^{2j}}\Lambda_R^{1/2}(I+\frac{n}{\sigma^2}\Lambda_R)^{-1/2}A^j(I+\frac{n}{\sigma^2}\Lambda_R)^{-1/2}\|_F\\
      &\leq\|\Lambda_R^{1/2}(I+\frac{n}{\sigma^2}\Lambda_R)^{-1/2}\|_F\|\frac{1}{\sigma^2}A\|_2^j\|(I+\frac{n}{\sigma^2}\Lambda_R)^{-1/2}\|_2\\
      &=O(n^{\frac{(1-\alpha)(1-t)}{2\alpha}})\tilde{O}(n^{\frac{j(1-\alpha+2\tau-(1+2\tau)t)}{2\alpha}}(\log R)^{j/2}),
    \end{aligned}
\end{equation}
where in the first inequality we use the fact that $\|AB\|_F\leq \|A\|_F\|B\|_2$ when $B$ is symmetric.
By Lemma \ref{lem:estimate}, we have
\begin{equation}
\label{eq:G1_5}
    \begin{aligned}
      &\frac{1}{\sigma^{2j}}\left|\tr\Lambda_R^{1/2}(I+\frac{n}{\sigma^2}\Lambda_R)^{-1}\Lambda_R^{1/2}(I+\frac{n}{\sigma^2}\Lambda_R)^{-1/2}A^j(I+\frac{n}{\sigma^2}\Lambda_R)^{-1/2}\right|\\
      &=\left|\sum_{p=1}^R \frac{\lambda_p((\frac{1}{\sigma^2}A)^j)_{p,p}}{(1+n\lambda_p/\sigma^2)^2}\right|\leq\sum_{p=1}^R \frac{\lambda_p\|\frac{1}{\sigma^2}A\|^j_2}{(1+n\lambda_p/\sigma^2)^2}
      =\Theta(n^{\frac{(1-\alpha)(1-t)}{\alpha}})\tilde{O}(n^{\frac{j(1-\alpha+2\tau-(1+2\tau)t)}{2\alpha}}(\log R)^{j/2}),
    \end{aligned}
\end{equation}
According to \eqref{eq:G1_3}, \eqref{eq:G1_4} and \eqref{eq:G1_5}, we have
\begin{equation}
\label{eq:T1p2}
    \begin{aligned}
    &\frac{1}{2}\left(\mathbb{E}_{(x_{n+1},y_{n+1})}\mathrm{Tr}(I-(I+\frac{\widetilde{\Phi}_R\Lambda_R\widetilde{\Phi}_R^T}{\sigma^2})^{-1}-\mathrm{Tr}(I-(I+\frac{\Phi_R\Lambda_R\Phi_R^T}{\sigma^2})^{-1}\right)\\
    =&\frac{1+o(1)}{2\sigma^2}\tr(I+\frac{1}{\sigma^2}\Lambda_R\Phi_R^T\Phi_R)^{-1}\Lambda_R(I+\frac{1}{\sigma^2}\Lambda_R\Phi_R^T\Phi_R)^{-1}\\
     =&\frac{1+o(1)}{2\sigma^2}\|\Lambda_R^{1/2}(I+\frac{n}{\sigma^2}\Lambda_R)^{-1}+\sum_{j=1}^\infty(-1)^j\frac{1}{\sigma^{2j}}\Lambda_R^{1/2}(I+\frac{n}{\sigma^2}\Lambda_R)^{-1/2}A^j(I+\frac{n}{\sigma^2}\Lambda_R)^{-1/2}\|_F^2\\
     =& \frac{1+o(1)}{2\sigma^2}\bigg(\|\Lambda_R^{1/2}(I+\frac{n}{\sigma^2}\Lambda_R)^{-1}\|_F^2+\sum_{j=1}^\infty\left\|\frac{1}{\sigma^{2j}}\Lambda_R^{1/2}(I+\frac{n}{\sigma^2}\Lambda_R)^{-1/2}A^j(I+\frac{n}{\sigma^2}\Lambda_R)^{-1/2}\right\|_F^2\\
     &\quad\quad\quad+2\tr\Lambda_R^{1/2}(I+\frac{n}{\sigma^2}\Lambda_R)^{-1}\sum_{j=1}^\infty(-1)^j\frac{1}{\sigma^{2j}}\Lambda_R^{1/2}(I+\frac{n}{\sigma^2}\Lambda_R)^{-1/2}A^j(I+\frac{n}{\sigma^2}\Lambda_R)^{-1/2}
     \bigg)\\
     =& \frac{1+o(1)}{2\sigma^2}\bigg(\Theta(n^{\frac{(1-\alpha)(1-t)}{\alpha}})+\sum_{j=1}^\infty\frac{1}{\sigma^{2j}}O(n^{\frac{(1-\alpha)(1-t)}{\alpha}})\tilde{O}(n^{\frac{j(1-\alpha+2\tau-(1+2\tau)t)}{2\alpha}}(\log R)^{j/2})\\
     &\quad\quad\quad+2\sum_{j=1}^\infty\frac{1}{\sigma^{2j}}\Theta(n^{\frac{(1-\alpha)(1-t)}{\alpha}})\tilde{O}(n^{\frac{j(1-\alpha+2\tau-(1+2\tau)t)}{2\alpha}}(\log R)^{j/2})\bigg)\\
     =& %
     \tfrac{1}{\sigma^2}\Theta(n^{\frac{(1-\alpha)(1-t)}{\alpha}})=\frac{1+o(1)}{2\sigma^2}\|\Lambda_R^{1/2}(I+\frac{n}{\sigma^2}\Lambda_R)^{-1}\|_F^2. 
    \end{aligned}
\end{equation}
Combining \eqref{eq:T1p1} and \eqref{eq:T1p2} we get that $G_{1,R}(D_n)=\frac{1+o(1)}{2\sigma^2}(\tr(I+\frac{n}{\sigma^2}\Lambda_R)^{-1}\Lambda_R+\|\Lambda_R^{1/2}(I+\frac{n}{\sigma^2}\Lambda_R)^{-1}\|^2_F)=\tfrac{1}{\sigma^2}\Theta(n^{\frac{(1-\alpha)(1-t)}{\alpha}})$. 
From \eqref{eq:truncate_G1} we have that $G_1(D_n)\leq G_{1,R}(D_n)+|G_1(D_n)-G_{1,R}(D_n)|=\tfrac{1}{\sigma^2}\Theta(n^{\frac{(1-\alpha)(1-t)}{\alpha}})+O(n\frac{1}{\sigma^2}R^{1-\alpha})$. 
Choosing $R=n^{(\frac{2\alpha-1}{\alpha(\alpha-1)}+1)(1-t)}$ we conclude the proof. 
\end{proof}

\begin{lemma}
\label{lem:inv_philambdaxi}
Assume $\sigma^2=\Theta(n^{t})$ where $1-\frac{\alpha}{1+2\tau}<t<1$. Let $S=n^D$. Assume that $\|\xi\|_2=1$. When $n$ is sufficiently large, with probability of at least $1-2\delta$ we have
\begin{equation}
  \|(I+\tfrac{1}{\sigma^2}\Phi_S\Lambda_S\Phi_S^T)^{-1}\Phi_S\Lambda_S\xi\|_2=O(\sqrt{(\tfrac{1}{\delta}+1)n}\cdot n^{-(1-t)}) . 
\end{equation}
\end{lemma}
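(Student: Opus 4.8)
The plan is to reduce to the positive–eigenvalue block, use the Woodbury identity to pull the inverse out in front, and then control the resulting coefficient vector via the Neumann expansion of Lemma~\ref{lem:expansion_lpp}, exactly mirroring the proof of Lemma~\ref{lem:inv_fr}. First I would note that, since $\lambda_0=0$, writing $\Lambda_{1:S}=\mathrm{diag}\{\lambda_1,\ldots,\lambda_S\}$, $\Phi_{1:S}=(\phi_1(\mathbf{x}),\ldots,\phi_S(\mathbf{x}))$ and $\xi_{1:S}$ for $\xi$ with its $0$-th coordinate deleted, one has $\Phi_S\Lambda_S\Phi_S^T=\Phi_{1:S}\Lambda_{1:S}\Phi_{1:S}^T$, $\Phi_S\Lambda_S\xi=\Phi_{1:S}\Lambda_{1:S}\xi_{1:S}$ and $\|\xi_{1:S}\|_2\leq 1$. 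The Woodbury identity (as in \eqref{truncate_first_term}) then gives
\begin{equation*}
(I+\tfrac{1}{\sigma^2}\Phi_S\Lambda_S\Phi_S^T)^{-1}\Phi_S\Lambda_S\xi=\Phi_{1:S}(I+\tfrac{1}{\sigma^2}\Lambda_{1:S}\Phi_{1:S}^T\Phi_{1:S})^{-1}\Lambda_{1:S}\xi_{1:S}=:\Phi_{1:S}\nu .
\end{equation*}
Applying Corollary~\ref{cor:phiRmu} with $R=S$ (as is done for the analogous $\Phi$-dependent vector in the proof of Lemma~\ref{lem:inv_fr}), with probability at least $1-\delta$ we get $\|\Phi_{1:S}\nu\|_2=\tilde{O}(\sqrt{(\tfrac{1}{\delta}+1)n}\,\|\nu\|_2)$, so it remains to show $\|\nu\|_2=O(n^{-(1-t)})$ with probability at least $1-\delta$.

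For this second step I would take $A=(I+\tfrac{n}{\sigma^2}\Lambda_{1:S})^{-1/2}\Lambda_{1:S}^{1/2}(\Phi_{1:S}^T\Phi_{1:S}-nI)\Lambda_{1:S}^{1/2}(I+\tfrac{n}{\sigma^2}\Lambda_{1:S})^{-1/2}$, the $\gamma=1$ instance of the matrix in Corollary~\ref{lem:matrix_phiTphi_1} (admissible since $\tfrac{1+2\tau}{\alpha}<1$ by Assumption~\ref{assumption_functions}); that corollary gives, with probability at least $1-\delta$, $\|\tfrac{1}{\sigma^2}A\|_2=O(\sqrt{\log(S/\delta)}\,n^{(1-\alpha+2\tau-(1+2\tau)t)/(2\alpha)})=o(1)$, the exponent being negative precisely because $t>1-\tfrac{\alpha}{1+2\tau}$. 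Thus $\|\tfrac{1}{\sigma^2}A\|_2<1$ for large $n$ and Lemma~\ref{lem:expansion_lpp} applies. Multiplying its expansion on the right by $\Lambda_{1:S}\xi_{1:S}$ and using the algebraic identity
\begin{equation*}
\left(\tfrac{1}{\sigma^2}(I+\tfrac{n}{\sigma^2}\Lambda_{1:S})^{-1}\Lambda_{1:S}(\Phi_{1:S}^T\Phi_{1:S}-nI)\right)^{\!j}(I+\tfrac{n}{\sigma^2}\Lambda_{1:S})^{-1}\Lambda_{1:S}=\tfrac{1}{\sigma^{2j}}(I+\tfrac{n}{\sigma^2}\Lambda_{1:S})^{-1/2}\Lambda_{1:S}^{1/2}A^{j}(I+\tfrac{n}{\sigma^2}\Lambda_{1:S})^{-1/2}\Lambda_{1:S}^{1/2},
\end{equation*}
together with the elementary bound $\|(I+\tfrac{n}{\sigma^2}\Lambda_{1:S})^{-1/2}\Lambda_{1:S}^{1/2}\|_2^2=\|(I+\tfrac{n}{\sigma^2}\Lambda_{1:S})^{-1}\Lambda_{1:S}\|_2=\max_{p}\tfrac{\lambda_p}{1+(n/\sigma^2)\lambda_p}\leq\tfrac{\sigma^2}{n}$ and $\|\xi_{1:S}\|_2\leq 1$, each term of the series is bounded by $\tfrac{\sigma^2}{n}\|\tfrac{1}{\sigma^2}A\|_2^{j}$, whence
\begin{equation*}
\|\nu\|_2\leq\frac{\sigma^2}{n}\sum_{j=0}^{\infty}\Bigl\|\tfrac{1}{\sigma^2}A\Bigr\|_2^{j}=\frac{\sigma^2}{n}\bigl(1+o(1)\bigr)=O\bigl(n^{-(1-t)}\bigr),
\end{equation*}
since $\sigma^2/n=\Theta(n^{-(1-t)})$. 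A union bound over the two events of probability $1-\delta$ then gives the claim with probability at least $1-2\delta$.

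The one genuinely delicate point will be the convergence of the Neumann series, i.e.\ verifying $\|\tfrac{1}{\sigma^2}A\|_2<1$: this is exactly where the hypothesis $1-\tfrac{\alpha}{1+2\tau}<t$ is consumed (through Corollary~\ref{lem:matrix_phiTphi_1}), and it is also what makes the whole sum comparable to its leading term $\sigma^2/n$, producing the exponent $-(1-t)$. Everything else is routine: the Woodbury step, the weighted factorization that keeps the $\sigma^2/n$ prefactor in front of every term of the series, and the passage through Corollary~\ref{cor:phiRmu}. The truncation level $S=n^{D}$ plays no real role; it enters only through the harmless factor $\log(S/\delta)$ in the concentration bound.
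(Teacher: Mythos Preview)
Your proposal is correct and follows essentially the same route as the paper's proof: Woodbury identity to pull the inverse inside, Neumann expansion via Lemma~\ref{lem:expansion_lpp}, control of $\|\tfrac{1}{\sigma^2}A\|_2$ through Corollary~\ref{lem:matrix_phiTphi_1} (using $t>1-\tfrac{\alpha}{1+2\tau}$ for convergence), and the final passage through Corollary~\ref{cor:phiRmu}. The only cosmetic difference is that the paper keeps a general weighting exponent $\gamma$ in the decomposition whereas you fix $\gamma=1$, which is admissible here and simply streamlines the bounds on the series terms.
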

\begin{proof}[Proof of Lemma \ref{lem:inv_philambdaxi}]
Using the Woodbury matrix identity, we have that
\begin{equation}
\label{truncate_first_term_xi}
\begin{aligned}
    ((I+\frac{1}{\sigma^2}\Phi_S\Lambda_S\Phi_S^T)^{-1}\Phi_S\Lambda_S\xi=& \left[I-\Phi_S(\sigma^2I+\Lambda_S\Phi_S^T\Phi_S)^{-1}\Lambda_S\Phi_S^T\right]\Phi_S\Lambda_S\xi\\
    =&\Phi_S\Lambda_S\xi- \Phi_S(\sigma^2I+\Lambda_S\Phi_S^T\Phi_S)^{-1}\Lambda_S\Phi_S^T\Phi_S\Lambda_S\xi\\
    =&\Phi_S(I+\tfrac{1}{\sigma^2}\Lambda_S\Phi_S^T\Phi_S)^{-1}\Lambda_S\xi.
\end{aligned}
\end{equation}
Let $A=(I+\frac{n}{\sigma^2}\Lambda_S)^{-\gamma/2}\Lambda_S^{\gamma/2}(\Phi_S^T\Phi_S-nI)\Lambda_S^{\gamma/2}(I+\frac{n}{\sigma^2}\Lambda_S)^{-\gamma/2}$, where $\gamma>\frac{1+\alpha+2\tau-(1+2\tau+2\alpha)t}{2\alpha(1-t)}$. By Corollary \ref{lem:matrix_phiTphi_1}, with probability of at least $1-\delta$, we have $\|\frac{1}{\sigma^2}A\|_2= \tilde{O}(n^{\frac{1+\alpha+2\tau-(1+2\tau+2\alpha)t}{2\alpha}-\gamma(1-t)})$.
When $n$ is sufficiently large, $\|\frac{1}{\sigma^2}A\|_2$ is less than $1$. 
By Lemma \ref{lem:expansion_lpp}, we have
\begin{equation*}
\begin{aligned}
    &(I+\frac{1}{\sigma^2}\Lambda_S\Phi_S^T\Phi_S)^{-1}\\
    =&(I+\frac{n}{\sigma^2}\Lambda_S)^{-1}+\sum_{j=1}^\infty(-1)^j\left(\frac{1}{\sigma^2}(I+\frac{n}{\sigma^2}\Lambda_S)^{-1}\Lambda_S(\Phi_S^T\Phi_S-nI)\right)^j(I+\frac{n}{\sigma^2}\Lambda_S)^{-1}.
\end{aligned}
\end{equation*}
Then we have
\begin{equation}
\label{residual_func_eq0xi}
\begin{aligned}
    &\|(I+\tfrac{1}{\sigma^2}\Lambda_S\Phi_S^T\Phi_S)^{-1}\Lambda_S\xi\|_2\\
    =&\left\|\left((I+\frac{n}{\sigma^2}\Lambda_S)^{-1}+\sum_{j=1}^\infty(-1)^j\left(\frac{1}{\sigma^2}(I+\frac{n}{\sigma^2}\Lambda_S)^{-1}\Lambda_S(\Phi_S^T\Phi_S-nI)\right)^j(I+\frac{n}{\sigma^2}\Lambda_S)^{-1}\right)\Lambda_S\xi\right\|_2\\
    \leq&\left(\|(I+\frac{n}{\sigma^2}\Lambda_S)^{-1}\Lambda_S\xi\|_2+\sum_{j=1}^\infty\left\|\left(\frac{1}{\sigma^2}(I+\frac{n}{\sigma^2}\Lambda_S)^{-1}\Lambda_S(\Phi_S^T\Phi_S-nI)\right)^j(I+\frac{n}{\sigma^2}\Lambda_S)^{-1}\Lambda_S\xi\right\|_2\right).
\end{aligned}
\end{equation}
For the first term in the right hand side of the last equation, we have
\begin{equation}
\label{residual_func_eq1xi}
    \|(I+\frac{n}{\sigma^2}\Lambda_S)^{-1}\Lambda_S\xi\|_2\leq\|(I+\frac{n}{\sigma^2}\Lambda_S)^{-1}\Lambda_S\|_2\|\xi\|_2\leq \frac{\sigma^2}{n}=O(n^{-(1-t)}).
\end{equation}
Using the fact that $\|\frac{1}{\sigma^2}A\|_2= \tilde{O}(n^{\frac{1+\alpha+2\tau-(1+2\tau+2\alpha)t}{2\alpha}-\gamma(1-t)})$ and $\|(I+\frac{n}{\sigma^2}\Lambda_S)^{-1}\Lambda_S\|_2\leq n^{-1}$, we have
\begin{equation}
\label{eq:xi3}
\begin{aligned}
    &\left\|\left(\frac{1}{\sigma^2}(I+\frac{n}{\sigma^2}\Lambda_S)^{-1}\Lambda_S(\Phi_S^T\Phi_S-nI)\right)^j(I+\frac{n}{\sigma^2}\Lambda_S)^{-1}\Lambda_S\xi\right\|_2\\
    =&\frac{1}{\sigma^{2j}}\left\|(I+\frac{n}{\sigma^2}\Lambda_S)^{-1+\frac{\gamma}{2}}\Lambda_S^{1-\frac{\gamma}{2}}\left(A(I+\frac{n}{\sigma^2}\Lambda_S)^{-1+\gamma}\Lambda_S^{1-\gamma}\right)^{j-1}A(I+\frac{n}{\sigma^2}\Lambda_S)^{-1+\frac{\gamma}{2}}\Lambda_S^{-\frac{\gamma}{2}}\Lambda_S\xi\right\|_2\\
    \leq&n^{(1-t)(-1+\frac{\gamma}{2}+(-1+\gamma)(j-1))}\tilde{O}(n^{\frac{j(1+\alpha+2\tau-(1+2\tau+2\alpha)t)}{2\alpha}-j\gamma(1-t)})\|(I+\frac{n}{\sigma^2}\Lambda_S)^{-1+\frac{\gamma}{2}}\Lambda_S^{1-\frac{\gamma}{2}}\xi\|_2\\
    =&\tilde{O}(n^{-\frac{\gamma}{2}(1-t)+\frac{(1-\alpha+2\tau-(1+2\tau)t)j}{2\alpha}})\|(I+\frac{n}{\sigma^2}\Lambda_S)^{-1+\frac{\gamma}{2}}\Lambda_S^{1-\frac{\gamma}{2}}\|_2\|\xi\|_2\\
    =&\tilde{O}(n^{-\frac{\gamma}{2}(1-t)+\frac{(1-\alpha+2\tau-(1+2\tau)t)j}{2\alpha}})O(n^{(-1+\gamma/2)(1-t)})\\
    =&\tilde{O}(n^{-(1-t)+\frac{(1-\alpha+2\tau-(1+2\tau)t)j}{2\alpha}}).
\end{aligned}
\end{equation}
Using \eqref{residual_func_eq0xi}, \eqref{residual_func_eq1xi} and \eqref{eq:xi3}, we have
\begin{equation}
\begin{aligned}
    &\|(I+\tfrac{1}{\sigma^2}\Lambda_S\Phi_S^T\Phi_S)^{-1}\Lambda_S\xi\|_2\\
    =&\left(\tilde{O}(n^{-(1-t)})+\sum_{j=1}^\infty\tilde{O}(n^{-1+\frac{(1-\alpha+2\tau-(1+2\tau)t)j}{2\alpha}})\right)\\
    =&\left(\tilde{O}(n^{-(1-t)})+\tilde{O}(n^{-1+\frac{1-\alpha+2\tau-(1+2\tau)t}{2\alpha}})\right)\\
    =&\tilde{O}(n^{-(1-t)}).
\end{aligned}
\end{equation}
By Corollary~\ref{cor:phiRmu}, with probability of at least $1-\delta$, we have 
\begin{equation*}
\begin{aligned}
    \|\Phi_S(I+\tfrac{1}{\sigma^2}\Lambda_S\Phi_S^T\Phi_S)^{-1}\Lambda_S\xi\|_2=&\tilde{O}(\sqrt{(\frac{1}{\delta}+1) n}\|(I+\tfrac{1}{\sigma^2}\Lambda_S\Phi_S^T\Phi_S)^{-1}\Lambda_S\xi\|_2)\\
    =&\tilde{O}(\sqrt{(\frac{1}{\delta}+1) n}\cdot n^{-(1-t)}).
\end{aligned}
\end{equation*}
From \eqref{truncate_first_term_xi} we get $\|(I+\frac{1}{\sigma^2}\Phi_S\Lambda_S\Phi_S^T)^{-1}f_S(\mathbf{x})\|_2=\tilde{O}(\sqrt{(\frac{1}{\delta}+1) n}\cdot n^{-(1-t)})$. 
This concludes the proof. 
\end{proof}
\begin{lemma}
\label{thm:G2}
  Assume $\sigma^2=\Theta(n^{t})$ where $1-\frac{\alpha}{1+2\tau}<t<1$. Let $\delta=n^{-q}$ where $0\leq q<\frac{[\alpha-(1+2\tau)(1-t)](2\beta-1)}{4\alpha^2}$. Under Assumptions \ref{assumption_alpha}, \ref{assumption_beta} and \ref{assumption_functions}, assume that $\mu_0=0$. Let $R=n^{(\frac{1}{\alpha}+\kappa)(1-t)}$ where $0<\kappa<\frac{\alpha-1-2\tau+(1+2\tau)t}{2\alpha^2(1-t)}$. Then with probability of at least $1-6\delta$ over sample inputs $(x_i)_{i=1}^n$, we have
    $G_{2}(D_n)=\frac{(1+o(1))}{2\sigma^2}\|(I+\frac{n}{\sigma^2}\Lambda_{R})^{-1}\bm{\mu}_{R}\|_2^2=\tfrac{1}{\sigma^2}\Theta(n^{\max\{-2(1-t),\frac{(1-2\beta)(1-t)}{\alpha}\}}\log^{k/2}n)$, where $k=\begin{cases}
    0,&2\alpha\not=2\beta-1,\\
    1,&2\alpha=2\beta-1.
    \end{cases}$.
\end{lemma}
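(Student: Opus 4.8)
The plan is to mimic the three-step strategy from the proof of Lemma~\ref{thm:G1} (truncate; reduce the truncated increment to a rank-one update via Sherman--Morrison; identify the expectation using orthonormality of the eigenfunctions), now carrying the $\sigma^2=\Theta(n^t)$ dependence. Fix $R=n^{(\frac1\alpha+\kappa)(1-t)}$ as in the statement and put $G_{2,R}(D_n)=\mathbb{E}_{(x_{n+1},y_{n+1})}\big(T_{2,R}(D_{n+1})-T_{2,R}(D_n)\big)$. First I would control the truncation error via $|G_2(D_n)-G_{2,R}(D_n)|\le\mathbb{E}_{x_{n+1}}|T_2(D_{n+1})-T_{2,R}(D_{n+1})|+|T_2(D_n)-T_{2,R}(D_n)|$, bounding $|T_2(D_m)-T_{2,R}(D_m)|$ for $m\in\{n,n+1\}$ as in the $T_2$-part of Lemma~\ref{thm:truncationR} but retaining the powers of $\sigma^2$, using Corollary~\ref{cor:truncated_function} for $\|f_{>R}(\mathbf{x})\|_2^2$, Lemma~\ref{lem:inv_fr} for $\|(I+\tfrac1{\sigma^2}\Phi_R\Lambda_R\Phi_R^T)^{-1}f_R(\mathbf{x})\|_2$, and Lemmas~\ref{lem:2-norm_residual}, \ref{lem:truncate_matrix} and Corollary~\ref{cor:2-norm_inv_residual} (adapted to $\sigma^2=\Theta(n^t)$) for the tail-matrix corrections. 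The ranges $0<\kappa<\frac{\alpha-1-2\tau+(1+2\tau)t}{2\alpha^2(1-t)}$ and $0\le q<\frac{[\alpha-(1+2\tau)(1-t)](2\beta-1)}{4\alpha^2}$ are chosen precisely so that this residual is $o\big(\tfrac1{\sigma^2}\|(I+\tfrac n{\sigma^2}\Lambda_R)^{-1}\bm{\mu}_R\|_2^2\big)$.

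\textbf{The rank-one update.} For $G_{2,R}(D_n)$, since $\mu_0=0$ and $\lambda_0=0$ we may work with the $1{:}R$ blocks $\Phi_{1:R},\Lambda_{1:R},\bm{\mu}_{1:R}$. Writing $H=\Phi_{1:R}^T\Phi_{1:R}$ (invertible for a.e.\ $\mathbf{x}$; otherwise regularize by $H+\epsilon I$ and let $\epsilon\to0$ as in Lemma~\ref{lem:T2R}), the Woodbury/push-through identity gives $T_{2,R}(D_n)=\tfrac1{2\sigma^2}\bm{\mu}_{1:R}^T(H^{-1}+\tfrac1{\sigma^2}\Lambda_{1:R})^{-1}\bm{\mu}_{1:R}$. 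Adding the test point replaces $H$ by $H'=H+\eta\eta^T$ with $\eta=(\phi_1(x_{n+1}),\ldots,\phi_R(x_{n+1}))^T$; applying Sherman--Morrison once to $(H')^{-1}$ and once more to the resulting rank-one perturbation of $H^{-1}+\tfrac1{\sigma^2}\Lambda_{1:R}$, and noting that the two denominators combine (a short computation using $(H^{-1}+\tfrac1{\sigma^2}\Lambda_{1:R})^{-1}=H(I+\tfrac1{\sigma^2}\Lambda_{1:R}H)^{-1}$), one obtains
\begin{equation*}
T_{2,R}(D_{n+1})-T_{2,R}(D_n)=\frac1{2\sigma^2}\cdot\frac{(\eta^{T}b)^{2}}{1+\tfrac1{\sigma^2}\eta^{T}(I+\tfrac1{\sigma^2}\Lambda_{1:R}H)^{-1}\Lambda_{1:R}\eta},\qquad b=(I+\tfrac1{\sigma^2}\Lambda_{1:R}H)^{-1}\bm{\mu}_{1:R}.
\end{equation*}
The denominator is $1+O(\sigma^{-2})=1+o(1)$ \emph{uniformly} in $x_{n+1}$ and $\mathbf{x}$, since $\tfrac1{\sigma^2}\eta^{T}(I+\tfrac1{\sigma^2}\Lambda_{1:R}H)^{-1}\Lambda_{1:R}\eta\le\tfrac1{\sigma^2}\eta^{T}\Lambda_{1:R}\eta\le\tfrac1{\sigma^2}\sum_{p\ge1}\overline{C_\lambda}C_\phi^2 p^{-(\alpha-2\tau)}=O(\sigma^{-2})$, the series converging because $\tau<\frac{\alpha-1}2$ (Assumption~\ref{assumption_functions}); this is the analogue of \eqref{eq:infinite_small}. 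Taking $\mathbb{E}_{x_{n+1}}$, using $\mathbb{E}[\eta\eta^T]=I_R$ (orthonormality of the eigenfunctions in $L^2(\Omega,\rho)$) and that $b$ does not depend on $x_{n+1}$, gives $G_{2,R}(D_n)=\tfrac{1+o(1)}{2\sigma^2}\,b^T(\mathbb{E}\,\eta\eta^T)b=\tfrac{1+o(1)}{2\sigma^2}\|b\|_2^2$.

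\textbf{Identifying $\|b\|_2^2$.} With probability at least $1-\delta$ over $\mathbf{x}$, $\|b\|_2^2=\|(I+\tfrac1{\sigma^2}\Lambda_{1:R}\Phi_{1:R}^T\Phi_{1:R})^{-1}\bm{\mu}_{1:R}\|_2^2=(1+o(1))\|(I+\tfrac n{\sigma^2}\Lambda_{1:R})^{-1}\bm{\mu}_{1:R}\|_2^2$: this is exactly the computation inside the proof of Lemma~\ref{lem:inv_fr} (eqs.~\eqref{residual_func_eq0}--\eqref{eq:used_later}) --- expand $(I+\tfrac1{\sigma^2}\Lambda_{1:R}\Phi_{1:R}^T\Phi_{1:R})^{-1}$ via Lemma~\ref{lem:expansion_lpp}, bound the correction series using Corollary~\ref{lem:matrix_phiTphi_1} and Lemma~\ref{lem:estimate<R}, and use $\kappa<\frac{\alpha-1-2\tau+(1+2\tau)t}{2\alpha^2(1-t)}$ to make each term subdominant. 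Next, $\|(I+\tfrac n{\sigma^2}\Lambda_{1:R})^{-1}\bm{\mu}_{1:R}\|_2^2=\sum_{p=1}^R\mu_p^2/(1+\tfrac n{\sigma^2}\lambda_p)^2$ is deterministic: $|\mu_p|\le C_\mu p^{-\beta}$ with Lemma~\ref{lem:estimate} (taking $m=n/\sigma^2=\Theta(n^{1-t})$, $s_1=2\beta$, $s_2=\alpha$, $s_3=2$) gives the upper bound, and restricting the sum to the subsequence $\{p_i\}$ with $|\mu_{p_i}|\ge\underline{C_\mu}p_i^{-\beta}$ (Assumption~\ref{assumption_beta}) gives the matching lower bound, so the quantity is $\Theta(n^{\max\{-2(1-t),(1-2\beta)(1-t)/\alpha\}}\log^{k}n)$. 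Since $\mu_0=0$, this equals $\|(I+\tfrac n{\sigma^2}\Lambda_R)^{-1}\bm{\mu}_R\|_2^2$. A union bound over the $O(\delta)$-failure events invoked then yields $G_2(D_n)=\tfrac{1+o(1)}{2\sigma^2}\|(I+\tfrac n{\sigma^2}\Lambda_R)^{-1}\bm{\mu}_R\|_2^2=\tfrac1{\sigma^2}\Theta(\cdot)$ with probability at least $1-6\delta$.

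\textbf{Main obstacle.} The hard part is the truncation step under $\sigma^2=\Theta(n^t)$: the ready-made residual estimates for $|T_2-T_{2,R}|$ in the excerpt are stated either for $\sigma^2=\Theta(1)$ (the $T_2$-part of Lemma~\ref{thm:truncationR}, Lemma~\ref{residual_func}) or for $R\ge n$ (Lemma~\ref{thm:trunc_T2R}), whereas here $\sigma^2=\Theta(n^t)$ and $R\ll n$, so the $\sigma^2$-bookkeeping must be redone from the ingredients (Corollary~\ref{cor:truncated_function}, Lemmas~\ref{lem:inv_fr}, \ref{lem:2-norm_residual}, \ref{lem:truncate_matrix}, Corollary~\ref{cor:2-norm_inv_residual}); it is precisely the demand that the resulting bound be $o\big(\tfrac1{\sigma^2}\|(I+\tfrac n{\sigma^2}\Lambda_R)^{-1}\bm{\mu}_R\|_2^2\big)$ that forces the coupled ranges of $\kappa$ and $q$. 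A secondary point is verifying that the Neumann expansion of Lemma~\ref{lem:expansion_lpp} applies on the relevant high-probability event (the weighted version of $\Phi_{1:R}^T\Phi_{1:R}-nI$ must have operator norm $<1$ after the $\tfrac1{\sigma^2}$-scaling), which is where Corollary~\ref{lem:matrix_phiTphi_1} and the hypothesis $1-\frac{\alpha}{1+2\tau}<t<1$ enter; the two Sherman--Morrison steps themselves are unconditionally valid since both denominators are positive.
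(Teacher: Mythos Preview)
The rank-one/Sherman--Morrison computation and the identification of $\|b\|_2^2$ via the argument of Lemma~\ref{lem:inv_fr} are correct and match the paper. The truncation step, however, fails as written. You bound $|G_2-G_{2,R}|\le\mathbb{E}|T_2(D_{n+1})-T_{2,R}(D_{n+1})|+|T_2(D_n)-T_{2,R}(D_n)|$, but each summand is of the order of $T_{2,R}$ itself, not of the order of its increment $G_{2,R}$, and these differ by a full power of~$n$. Concretely, at $t=0$, $\tau=0$, $\alpha=2$, $\beta=1$, Lemma~\ref{thm:truncationR} gives $|T_2-T_{2,R}|=\tilde O(n^{1/2-\kappa/2+q})$ while the target is $G_2=\Theta(n^{-1/2})$; no admissible $(\kappa,q)$ closes a gap of order~$n$. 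The coupled ranges of $\kappa$ and $q$ in the statement are not calibrated for this purpose --- they control a different residual, described below.

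The paper circumvents the problem with a two-scale truncation. It first truncates at a very large auxiliary level $S=n^{D}$ (with $D$ chosen only at the very end), where the crude triangle inequality of Lemma~\ref{thm:trunc_T2R} suffices because $S$ enters with a negative exponent. After the Sherman--Morrison step at level $S$ has produced the single scalar $\tfrac{1+o(1)}{2\sigma^2}\|(I+\tfrac1{\sigma^2}\Lambda_{1:S}\Phi_{1:S}^T\Phi_{1:S})^{-1}\bm{\mu}_{1:S}\|_2^2$, it splits $\bm{\mu}_{1:S}=\hat{\bm{\mu}}_{1:R}+(\bm{\mu}_{1:S}-\hat{\bm{\mu}}_{1:R})$ \emph{inside this one expression}: the head is handled by \eqref{eq:used_later}, and the tail $\|(I+\tfrac1{\sigma^2}\Lambda_{1:S}\Phi_{1:S}^T\Phi_{1:S})^{-1}(\bm{\mu}_{1:S}-\hat{\bm{\mu}}_{1:R})\|_2=O((\tfrac1\delta+1)R^{(1-2\beta)/2})$ by the dedicated Lemma~\ref{lem:inv_philambdaxi} together with Corollary~\ref{cor:phiRmu}. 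It is this last bound that forces the ranges of $\kappa$ and~$q$. The essential point is that the restriction to level $R$ happens \emph{after} the increment has been taken, so the order-$n$ cancellation between $T_2(D_{n+1})$ and $T_2(D_n)$ is never lost. A smaller issue: your denominator estimate $\tfrac1{\sigma^2}\eta^T\Lambda_{1:R}\eta=O(\sigma^{-2})$ is only $o(1)$ when $t>0$; the paper's \eqref{eq:infinite_small} keeps the factor $(1+\tfrac{n}{\sigma^2}\lambda_p)^{-1}$ and obtains $O(n^{(1-\alpha+2\tau)(1-t)/\alpha})=o(1)$ across the full range $1-\tfrac{\alpha}{1+2\tau}<t<1$.
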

\begin{proof}[Proof of Lemma~\ref{thm:G2}]
Let $S=n^D$. Let $G_{2,S}(D_n)=\mathbb{E}_{(x_{n+1},y_{n+1})}(T_{2,S}(D_{n+1})-T_{2,S}(D_{n}))$.
By Lemma~\ref{thm:trunc_T2R}, when $S>n^{\max\{1,\frac{-t}{(\alpha-1-2\tau)}\}}$ with probability of at least $1-3\delta$ we have that
\begin{align}
\label{eq:truncate_G2}
&|G_2(D_n)-G_{2,S}(D_n)|=|\mathbb{E}_{(x_{n+1},y_{n+1})}[T_2(D_{n+1})-T_{2,S}(D_{n+1})]-[T_2(D_n)-T_{2,S}(D_n)]|\notag\\
&=\left|\mathbb{E}_{(x_{n+1},y_{n+1})}\tilde{O}\left((\tfrac{1}{\delta}+1) \tfrac{1}{\sigma^2}(n+1)S^{\max\{1/2-\beta,1-\alpha+2\tau\}}\right)-\tilde{O}\left((\tfrac{1}{\delta}+1) \tfrac{1}{\sigma^2}nS^{\max\{1/2-\beta,1-\alpha+2\tau\}}\right)\right|\notag\\
&=\tilde{O}\left((\tfrac{1}{\delta}+1) \tfrac{1}{\sigma^2}nS^{\max\{1/2-\beta,1-\alpha+2\tau\}}\right)\\
\end{align}
Let $\Lambda_{1:S}=\mathrm{diag}\{\lambda_1,\ldots,\lambda_S\}$, $\Phi_{1:S}=(\phi_1(\mathbf{x}), \phi_1(\mathbf{x}),\ldots, \phi_S(\mathbf{x}))$ and $\bm{\mu}_{1:S}=(\mu_1,\ldots, \mu_S)$. Since $\mu_0=0$, we have $T_{2,S}(D_n)
    =\frac{1}{2\sigma^2}\bm{\mu}_{1:S}^T\Phi_{1:S}^T (I+\frac{1}{\sigma^2}\Phi_{1:S}\Lambda_{1:S}\Phi_{1:S}^T)^{-1}\Phi_{1:S}\bm{\mu}_{1:S}$.
Define $\eta_{1:S}=(\phi_
1(x_{n+1}),\ldots,\phi_
S(x_{n+1}))^T$ and $\widetilde{\Phi}_{1:S}=(\Phi_{1:S}^T,\eta_{1:S})^T$.
In the proof of Lemma~\ref{lem:T2R}, we showed that
\begin{equation*}
\begin{aligned}
    T_{2,S}(D_n)&=\frac{1}{2\sigma^2}\bm{\mu}_{1:S}^T\Phi_{1:S}^T (I+\frac{1}{\sigma^2}\Phi_{1:S}\Lambda_{1:S}\Phi_{1:S}^T)^{-1}\Phi_{1:S}\bm{\mu}_{1:S}\\
    &=\frac{1}{2}\bm{\mu}_{1:S}^T\Lambda_{1:S}^{-1}\bm{\mu}_{1:S}-\frac{1}{2}\bm{\mu}_{1:S}^T\Lambda_{1:S}^{-1}(I+\frac{1}{\sigma^2}\Lambda_{1:S}\Phi_{1:S}^T\Phi_{1:S})^{-1}\bm{\mu}_{1:S}.\\
\end{aligned}
\end{equation*}
We have
\begin{equation}
\label{eq:G2S(Dn)}
\begin{aligned}
   G_{2,S}(D_n)&=\mathbb{E}_{(x_{n+1},y_{n+1})}(T_{2,S}(D_{{n+1}})-T_{2,S}(D_{n}))\\
   &=\mathbb{E}_{(x_{n+1},y_{n+1})}\left(\frac{1}{2}\bm{\mu}_{1:S}^T\Lambda_{1:S}^{-1}\bm{\mu}_{1:S}-\frac{1}{2}\bm{\mu}_{1:S}^T\Lambda_{1:S}^{-1}(I+\frac{1}{\sigma^2}\Lambda_{1:S}\widetilde{\Phi}_S^T\widetilde{\Phi}_S)^{-1}\bm{\mu}_{1:S}\right)\\
   &-\left(\frac{1}{2}\bm{\mu}_{1:S}^T\Lambda_{1:S}^{-1}\bm{\mu}_{1:S}-\frac{1}{2}\bm{\mu}_{1:S}^T\Lambda_{1:S}^{-1}(I+\frac{1}{\sigma^2}\Lambda_{1:S}\Phi_{1:S}^T\Phi_{1:S})^{-1}\bm{\mu}_{1:S})\right)\\
   &=\mathbb{E}_{(x_{n+1},y_{n+1})}\left(\frac{1}{2}\bm{\mu}_{1:S}^T\Lambda_{1:S}^{-1}(I+\frac{1}{\sigma^2}\Lambda_{1:S}\Phi_{1:S}^T\Phi_{1:S})^{-1}\bm{\mu}_{1:S}-\frac{1}{2}\bm{\mu}_{1:S}^T\Lambda_{1:S}^{-1}(I+\frac{1}{\sigma^2}\Lambda_{1:S}\widetilde{\Phi}_S^T\widetilde{\Phi}_S)^{-1}\bm{\mu}_{1:S}\right)\\
   &=\mathbb{E}_{(x_{n+1},y_{n+1})}\left(\frac{1}{2\sigma^2}\bm{\mu}_{1:S}^T\Lambda_{1:S}^{-1}\frac{(I+\frac{1}{\sigma^2}\Lambda_{1:S}\Phi_{1:S}^T\Phi_{1:S})^{-1}\Lambda_{1:S}\eta_{1:S}\eta_{1:S}^T(I+\frac{1}{\sigma^2}\Lambda_{1:S}\Phi_{1:S}^T\Phi_{1:S})^{-1}}{1+\frac{1}{\sigma^2}\eta_{1:S}^T(I+\frac{1}{\sigma^2}\Lambda_{1:S}\Phi_{1:S}^T\Phi_{1:S})^{-1}\Lambda_{1:S}\eta_{1:S}}\bm{\mu}_{1:S})\right)\\
   &=\mathbb{E}_{(x_{n+1},y_{n+1})}\left(\frac{1}{2\sigma^2}\frac{\bm{\mu}_{1:S}^T(I+\frac{1}{\sigma^2}\Phi_{1:S}^T\Phi_{1:S}\Lambda_{1:S})^{-1}\eta_{1:S}\eta_{1:S}^T(I+\frac{1}{\sigma^2}\Lambda_{1:S}\Phi_{1:S}^T\Phi_{1:S})^{-1}\bm{\mu}_{1:S}}{1+\frac{1}{\sigma^2}\eta_{1:S}^T(I+\frac{1}{\sigma^2}\Lambda_{1:S}\Phi_{1:S}^T\Phi_{1:S})^{-1}\Lambda_{1:S}\eta_{1:S}})\right)\\
   &=\mathbb{E}_{(x_{n+1},y_{n+1})}\left(\frac{1+o(1)}{2\sigma^2}\bm{\mu}_{1:S}^T(I+\frac{1}{\sigma^2}\Phi_{1:S}^T\Phi_{1:S}\Lambda_{1:S})^{-1}\eta_{1:S}\eta_{1:S}^T(I+\frac{1}{\sigma^2}\Lambda_{1:S}\Phi_{1:S}^T\Phi_{1:S})^{-1}\bm{\mu}_{1:S}\right)\\
   &=\frac{1+o(1)}{2\sigma^2}\bm{\mu}_{1:S}^T(I+\frac{1}{\sigma^2}\Phi_{1:S}^T\Phi_{1:S}\Lambda_{1:S})^{-1}(I+\frac{1}{\sigma^2}\Lambda_{1:S}\Phi_{1:S}^T\Phi_{1:S})^{-1}\bm{\mu}_{1:S}\\
   &=\frac{1+o(1)}{2\sigma^2}\|(I+\frac{1}{\sigma^2}\Lambda_{1:S}\Phi_{1:S}^T\Phi_{1:S})^{-1}\bm{\mu}_{1:S}\|_2^2,
\end{aligned}
\end{equation}
where in the fourth to last equality we used the Sherman–Morrison formula, in the third inequality we used \eqref{eq:infinite_small}
, and in the last equality we used the fact that $\mathbb{E}_{(x_{n+1},y_{n+1})}\eta_{1:S}\eta_{1:S}^T=I$.

Let $\hat{\bm{\mu}}_{1:R}=(\mu_1,\ldots,\mu_R,0,\ldots,0)\in\mathbb{R}^S$. Then we have \begin{equation}
\begin{aligned}
     \|(I+\frac{1}{\sigma^2}\Lambda_{1:S}\Phi_{1:S}^T\Phi_{1:S})^{-1}\bm{\mu}_{1:S}\|_2&\leq\|(I+\frac{1}{\sigma^2}\Lambda_{1:S}\Phi_{1:S}^T\Phi_{1:S})^{-1}\hat{\bm{\mu}}_{1:R}\|_2+\|(I+\frac{1}{\sigma^2}\Lambda_{1:S}\Phi_{1:S}^T\Phi_{1:S})^{-1}(\bm{\mu}_{1:S}-\hat{\bm{\mu}}_{1:R})\|_2,\\
         \|(I+\frac{1}{\sigma^2}\Lambda_{1:S}\Phi_{1:S}^T\Phi_{1:S})^{-1}\bm{\mu}_{1:S}\|_2&\geq\|(I+\frac{1}{\sigma^2}\Lambda_{1:S}\Phi_{1:S}^T\Phi_{1:S})^{-1}\hat{\bm{\mu}}_{1:R}\|_2-\|(I+\frac{1}{\sigma^2}\Lambda_{1:S}\Phi_{1:S}^T\Phi_{1:S})^{-1}(\bm{\mu}_{1:S}-\hat{\bm{\mu}}_{1:R})\|_2.
\end{aligned}
    \label{eq:inv_mu_s}
\end{equation}
Let $R=n^{(\frac{1}{\alpha}+\kappa)(1-t)}$ where $0<\kappa<\frac{\alpha-1-2\tau+(1+2\tau)t}{2\alpha^2(1-t)}$.
In Lemma~\ref{lem:inv_fr}, \eqref{eq:used_later}, we showed that with probability of at least $1-\delta$,
\begin{equation}
\begin{aligned}
     \|(I+\tfrac{1}{\sigma^2}\Lambda_{1:R}\Phi_{1:R}^T\Phi_{1:R})^{-1}\bm{\mu}_{1:R}\|_2&=\Theta(n^{(1-t)\max\{-1,\frac{1-2\beta}{2\alpha}\}}\log^{k/2}n)\\
    &=(1+o(1))\|(I+\frac{n}{\sigma^2}\Lambda_{1:R})^{-1}\bm{\mu}_{1:R}\|_2,
\end{aligned}
\end{equation}
where $k=\begin{cases}
    0,&2\alpha\not=2\beta-1,\\
    1,&2\alpha=2\beta-1.
    \end{cases}$.
The same proof holds if we replace $\Phi_{1:R}$ with $\Phi_{1:S}$, $\Lambda_{1:R}$ with $\Lambda_{1:S}$, and $\bm{\mu}_{1:R}$ with $\hat{\bm{\mu}}_{1:R}$. We have
\begin{equation}
\label{eq:inv_mu_r}
\begin{aligned}
     \|(I+\tfrac{1}{\sigma^2}\Lambda_{1:S}\Phi_{1:S}^T\Phi_{1:S})^{-1}\hat{\bm{\mu}}_{1:R}\|_2&=\Theta(n^{(1-t)\max\{-1,\frac{1-2\beta}{2\alpha}\}}\log^{k/2}n)\\
    &=(1+o(1))\|(I+\frac{n}{\sigma^2}\Lambda_{1:S})^{-1}\hat{\bm{\mu}}_{1:R}\|_2.
\end{aligned}
\end{equation}
Next we bound $\|(I+\frac{1}{\sigma^2}\Lambda_{1:S}\Phi_{1:S}^T\Phi_{1:S})^{-1}(\bm{\mu}_{1:S}-\hat{\bm{\mu}}_{1:R})\|_2$. By Assumption~\ref{assumption_beta}, we have that  $\|\bm{\mu}_{1:S}-\hat{\bm{\mu}}_{1:R}\|_2=O(R^{\frac{1-2\beta}{2}})$. For any $\xi\in\mathbb{R}^S$ and $\|\xi\|_2=1$, using the Woodbury matrix identity,  with probability of at least $1-2\delta$ we have
\begin{equation*}
    \begin{aligned}
       &|\xi^T (I+\frac{1}{\sigma^2}\Lambda_{1:S}\Phi_{1:S}^T\Phi_{1:S})^{-1}(\bm{\mu}_{1:S}-\hat{\bm{\mu}}_{1:R})|\\
       &=|\xi^T \left(I-\frac{1}{\sigma^2}\Lambda_{1:S}\Phi_{1:S}^T(I+\frac{1}{\sigma^2}\Phi_{1:S}\Lambda_{1:S}\Phi_{1:S}^T)^{-1}\Phi_{1:S}\right)(\bm{\mu}_{1:S}-\hat{\bm{\mu}}_{1:R})|\\
       &=|\xi^T(\bm{\mu}_{1:S}-\hat{\bm{\mu}}_{1:R}) -\frac{1}{\sigma^2}\xi^T\Lambda_{1:S}\Phi_{1:S}^T(I+\frac{1}{\sigma^2}\Phi_{1:S}\Lambda_{1:S}\Phi_{1:S}^T)^{-1}\Phi_{1:S}(\bm{\mu}_{1:S}-\hat{\bm{\mu}}_{1:R})|\\
       &\leq\|\xi\|_2\|\bm{\mu}_{1:S}-\hat{\bm{\mu}}_{1:R}\|_2+ \frac{1}{\sigma^2}|\xi^T\Lambda_{1:S}\Phi_{1:S}^T(I+\frac{1}{\sigma^2}\Phi_{1:S}\Lambda_{1:S}\Phi_{1:S}^T)^{-1}\Phi_{1:S}(\bm{\mu}_{1:S}-\hat{\bm{\mu}}_{1:R})|\\
       &\leq O(R^{\frac{1-2\beta}{2}})+ \frac{1}{\sigma^2}\|(I+\frac{1}{\sigma^2}\Phi_{1:S}\Lambda_{1:S}\Phi_{1:S}^T)^{-1}\Phi_{1:S}\Lambda_{1:S}\xi\|_2\|\Phi_{1:S}(\bm{\mu}_{1:S}-\hat{\bm{\mu}}_{1:R})\|_2\\
       &=O(R^{\frac{1-2\beta}{2}})+ \frac{1}{\sigma^2}O(\sqrt{(\frac{1}{\delta}+1)n}\cdot n^{-(1-t)}) O(\sqrt{(\frac{1}{\delta}+1)n}R^{\frac{1-2\beta}{2}})\\
       &= O((\frac{1}{\delta}+1)R^{\frac{1-2\beta}{2}}),
    \end{aligned}
\end{equation*}
where in the second to last  step we used Corollary~\ref{cor:phiRmu} to show $\|\Phi_{1:S}(\bm{\mu}_{1:S}-\hat{\bm{\mu}}_{1:R})\|_2=O(\sqrt{(\frac{1}{\delta}+1)n}R^{\frac{1-2\beta}{2}})$ with probability of at least $1-\delta$, and Lemma~\ref{lem:inv_philambdaxi} to show that $\|(I+\frac{1}{\sigma^2}\Phi_{1:S}\Lambda_{1:S}\Phi_{1:S}^T)^{-1}\Phi_{1:S}\Lambda_{1:S}\xi\|_2=O(\sqrt{(\frac{1}{\delta}+1)n}\cdot n^{-1})$ with probability of at least $1-\delta$. %
Since $R=n^{(\frac{1}{\alpha}+\kappa)(1-t)}$, we have
\begin{equation*}
    \begin{aligned}
       &|\xi^T (I+\frac{1}{\sigma^2}\Lambda_{1:S}\Phi_{1:S}^T\Phi_{1:S})^{-1}(\bm{\mu}_{1:S}-\hat{\bm{\mu}}_{1:R})|= O((\frac{1}{\delta}+1)n^{\frac{(1-2\beta)(1-t)}{2\alpha}+\frac{(1-2\beta)(1-t)\kappa}{2}}) . 
    \end{aligned}
\end{equation*}
Since $\xi$ is arbitrary, we have $\|(I+\frac{1}{\sigma^2}\Lambda_{1:S}\Phi_{1:S}^T\Phi_{1:S})^{-1}(\bm{\mu}_{1:S}-\hat{\bm{\mu}}_{1:R})\|_2=O((\frac{1}{\delta}+1)n^{\frac{(1-2\beta)(1-t)}{2\alpha}+\frac{(1-2\beta)(1-t)\kappa}{2}})$.
Since $0\leq q<\frac{[\alpha-(1+2\tau)(1-t)](2\beta-1)}{4\alpha^2}$ and $0<\kappa<\frac{\alpha-1-2\tau+(1+2\tau)t}{2\alpha^2(1-t)}$, we can choose $\kappa<\frac{\alpha-1-2\tau+(1+2\tau)t}{2\alpha^2(1-t)}$ and $\kappa$ is arbitrarily close to $\kappa<\frac{\alpha-1-2\tau+(1+2\tau)t}{2\alpha^2(1-t)}$ such that $0\leq q<\frac{(2\beta-1)(1-t)\kappa}{2}$. Then we have $\frac{(1-2\beta)(1-t)\kappa}{2}+q<0$. 
From \eqref{eq:inv_mu_s} and \eqref{eq:inv_mu_r}, we have \begin{equation}
\label{eq:inv_mu_s2}
\begin{aligned}
    \|(I+\frac{1}{\sigma^2}\Lambda_{1:S}\Phi_{1:S}^T\Phi_{1:S})^{-1}\bm{\mu}_{1:S}\|_2=& \Theta(n^{\max\{-(1-t),\frac{(1-2\beta)(1-t)}{2\alpha}\}}\log^{k/2}n)+O((\frac{1}{\delta}+1)n^{\frac{(1-2\beta)(1-t)}{2\alpha}+\frac{(1-2\beta)(1-t)\kappa}{2}})
    \\
    =&\Theta(n^{\max\{-(1-t),\frac{(1-2\beta)(1-t)}{2\alpha}\}}\log^{k/2}n)+O((n^{q+\frac{(1-2\beta)(1-t)}{2\alpha}+\frac{(1-2\beta)(1-t)\kappa}{2}})
    \\
    =&\Theta(n^{\max\{-(1-t),\frac{(1-2\beta)(1-t)}{2\alpha}\}}\log^{k/2}n)\\
    =&(1+o(1))\|(I+\frac{n}{\sigma^2}\Lambda_{1:S})^{-1}\hat{\bm{\mu}}_{1:R}\|_2\\
     =&(1+o(1))\|(I+\frac{n}{\sigma^2}\Lambda_{R})^{-1}\bm{\mu}_{R}\|_2.
\end{aligned}
\end{equation}
Hence $G_{2,S}(D_n)=\frac{1+o(1)}{2\sigma^2}\|(I+\frac{1}{\sigma^2}\Lambda_{1:S}\Phi_{1:S}^T\Phi_{1:S})^{-1}\bm{\mu}_{1:S}\|_2^2=\tfrac{1}{\sigma^2}\Theta(n^{(1-t)\max\{-2,\frac{1-2\beta}{\alpha}\}}\log^{k/2}n)$. 
Then by \eqref{eq:truncate_G2}, we have 
\begin{equation*}
    G_{2}(D_n)=\tfrac{1}{\sigma^2}\Theta(n^{\max\{-2(1-t),\frac{(1-2\beta)(1-t)}{\alpha}\}}\log^{k/2}n)+\tilde{O}\left((\frac{1}{\delta}+1) \frac{n}{\sigma^2}S^{\max\{1/2-\beta,1-\alpha+2\tau\}}\right).
\end{equation*} 
Choosing $S=n^{\max\left\{1,\frac{-t}{(\alpha-1-2\tau)},\left(\frac{1+q+\min\{2,\frac{2\beta-1}{\alpha}\}}{\min\{\beta-1/2,\alpha-1-2\tau\}}+1\right)(1-t)\right\}}$, we get the result.
\end{proof}

\begin{proof}[Proof of Theorem~\ref{thm:generalization-error}] 
From Lemmas~\ref{thm:G1} and \ref{thm:G2} and $\frac{1}{\alpha}-1>-2$, we have that with probability of at least $1-7\tilde{\delta}$,
\begin{equation}
\begin{aligned}
    \mathbb{E}_{\bm{\epsilon}}G(D_n) =& \frac{1+o(1)}{2\sigma^2}(\tr(I+\frac{n}{\sigma^2}\Lambda_R)^{-1}\Lambda_R-\|\Lambda_R^{1/2}(I+\frac{n}{\sigma^2}\Lambda_R)^{-1}\|^2_F+\|(I+\frac{n}{\sigma^2}\Lambda_{R})^{-1}\bm{\mu}_{R}\|^2_2)\\
    =&\tfrac{1}{\sigma^2}\Theta(n^{\frac{(1-\alpha)(1-t)}{\alpha}})+\tfrac{1}{\sigma^2}\Theta(n^{\max\{-2(1-t),\frac{(1-2\beta)(1-t)}{\alpha}\}}\log^{k/2}n)\\
    =&\tfrac{1}{\sigma^2}\Theta(n^{\max\{\frac{(1-\alpha)(1-t)}{\alpha},\frac{(1-2\beta)(1-t)}{\alpha}\}})
\end{aligned}
\end{equation}
where $k=\begin{cases}
    0,&2\alpha\not=2\beta-1\\
    1,&2\alpha=2\beta-1
    \end{cases}$, and  $R=n^{(\frac{1}{\alpha}+\kappa)(1-t)}$, $\kappa>0$.
    
    Furthermore, we have
\begin{align*}
    &\tr(I+\frac{n}{\sigma^2}\Lambda)^{-1}\Lambda-\tr(I+\frac{n}{\sigma^2}\Lambda_R)^{-1}\Lambda_R\\
    &=\sum_{p=R+1}^\infty \frac{\lambda_p}{1+\frac{n}{\sigma^2}\lambda_p}\leq\sum_{p=R+1}^\infty\frac{C_\lambda p^{-\alpha}}{1+\frac{n}{\sigma^2}C_\lambda p^{-\alpha}}\leq\sum_{p=R+1}^\infty C_\lambda p^{-\alpha}=\frac{n}{\sigma^2}O(R^{1-\alpha})\\
    &=O(n^{(1-\alpha)(1-t)(\frac{1}{\alpha}+\kappa)})\\
    &=o(n^{\frac{(1-\alpha)(1-t)}{\alpha}}).
\end{align*}
Then we have
\begin{equation}
\label{eq:final_trun_R1}
    \tr(I+\frac{n}{\sigma^2}\Lambda_R)^{-1}\Lambda_R=\tr(I+\frac{n}{\sigma^2}\Lambda)^{-1}\Lambda(1+o(1)).
\end{equation}
Similarly we can prove
\begin{align}
\label{eq:final_trun_R2}
    \|\Lambda_R^{1/2}(I+\frac{n}{\sigma^2}\Lambda_R)^{-1}\|^2_F&=\|\Lambda^{1/2}(I+\frac{n}{\sigma^2}\Lambda)^{-1}\|^2_F(1+o(1))\\
    \label{eq:final_trun_R3}
     \|(I+\frac{n}{\sigma^2}\Lambda_{R})^{-1}\bm{\mu}_{R}\|^2_2&=\|(I+\frac{n}{\sigma^2}\Lambda)^{-1}\bm{\mu}\|^2_2(1+o(1))
\end{align}
 Letting $\delta=7\tilde{\delta}$, the proof is complete.
\end{proof}
In the case of $\mu_0>0$, we have the following lemma:
\begin{lemma}
\label{thm:G2_mu_0>0}
Let $\delta=n^{-q}$ where $0\leq q<\frac{[\alpha-(1+2\tau)(1-t)](2\beta-1)}{4\alpha^2}$.
 Under Assumptions \ref{assumption_alpha}, \ref{assumption_beta} and \ref{assumption_functions}, assume that $\mu_0>0$. Then with probability of at least $1-6\delta$ over sample inputs $(x_i)_{i=1}^n$, we have
    $G_{2}(D_n)=\frac{1}{2\sigma^2}\mu_0^2+o(1)$.
\end{lemma}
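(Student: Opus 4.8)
The plan is to follow the route of Lemma~\ref{thm:G2}, isolating the contribution of the component $\mu_0\phi_0$ of the target. First I would truncate: fix $S=n^D$ with $D$ large, write everything in terms of $\Phi_S=(\phi_0(\mathbf{x}),\dots,\phi_S(\mathbf{x}))$, $\Lambda_S=\diag(0,\lambda_1,\dots,\lambda_S)$ and $\bm{\mu}_S=(\mu_0,\dots,\mu_S)^T$, and set $G_{2,S}(D_n)=\mathbb{E}_{(x_{n+1},y_{n+1})}\bigl(T_{2,S}(D_{n+1})-T_{2,S}(D_n)\bigr)$. Applying Lemma~\ref{thm:trunc_T2R} at $n$ and at $n+1$ gives, with probability at least $1-3\delta$ over the training inputs,
\[
|G_2(D_n)-G_{2,S}(D_n)|=\tilde O\!\left((\tfrac1\delta+1)\tfrac1{\sigma^2}\,nS^{\max\{1/2-\beta,\,1-\alpha+2\tau\}}\right),
\]
which is $o(1)$ once $D$ is taken large enough (the admissible range for $q$ is precisely what makes this work, exactly as in Lemma~\ref{thm:G2}). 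So it suffices to analyse $G_{2,S}(D_n)$.

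For $G_{2,S}(D_n)$ I would repeat the computation \eqref{eq:G2S(Dn)}, dealing with the zero eigenvalue $\lambda_0=0$ by the $\epsilon$-perturbation device of Lemma~\ref{lem:T2R}: replace $\Lambda_S$ by $\tilde\Lambda_{\epsilon,S}=\diag(\epsilon,\lambda_1,\dots,\lambda_S)$, observe that the $n$-independent term $\tfrac12\bm{\mu}_S^T\tilde\Lambda_{\epsilon,S}^{-1}\bm{\mu}_S$ drops out of $T_{2,S}(D_{n+1})-T_{2,S}(D_n)$, apply the Sherman–Morrison formula to the rank-one update $\tilde\Phi_S^T\tilde\Phi_S=\Phi_S^T\Phi_S+\eta_S\eta_S^T$ (where $\eta_S=(\phi_0(x_{n+1}),\dots,\phi_S(x_{n+1}))^T$) exactly as there, and let $\epsilon\to0$. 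Writing $W_n=I_{S+1}+\tfrac1{\sigma^2}\Lambda_S\Phi_S^T\Phi_S$, this gives
\[
G_{2,S}(D_n)=\frac1{2\sigma^2}\,\mathbb{E}_{x_{n+1}}\!\left[\frac{\bigl(\eta_S^TW_n^{-1}\bm{\mu}_S\bigr)^2}{1+\tfrac1{\sigma^2}\eta_S^TW_n^{-1}\Lambda_S\eta_S}\right].
\]
Because $\lambda_0=0$ the denominator only involves the positive block, $1+\tfrac1{\sigma^2}\eta'^T(I+\tfrac1{\sigma^2}\Lambda_{1:S}\Phi_{1:S}^T\Phi_{1:S})^{-1}\Lambda_{1:S}\eta'$ with $\eta'=(\phi_1(x_{n+1}),\dots,\phi_S(x_{n+1}))^T$, and this is $1+o(1)$ by the estimate \eqref{eq:infinite_small} (handled uniformly in $x_{n+1}$ the same, slightly informal, way as in Lemmas~\ref{thm:G1} and \ref{thm:G2}). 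Since $\mathbb{E}_{x_{n+1}}[\eta_S\eta_S^T]=I_{S+1}$ by orthonormality of $\phi_0,\dots,\phi_S$ in $L^2(\rho)$, we obtain $G_{2,S}(D_n)=\tfrac{1+o(1)}{2\sigma^2}\|W_n^{-1}\bm{\mu}_S\|_2^2$.

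It remains to show $\|W_n^{-1}\bm{\mu}_S\|_2^2=\mu_0^2+o(1)$. Since $\lambda_0=0$, the $0$-th row of $W_n$ is $(1,0,\dots,0)$, so $W_n=\begin{pmatrix}1&0\\ w&V\end{pmatrix}$ with $V=I_S+\tfrac1{\sigma^2}\Lambda_{1:S}\Phi_{1:S}^T\Phi_{1:S}$ and $w=\tfrac1{\sigma^2}\Lambda_{1:S}\Phi_{1:S}^T\phi_0(\mathbf{x})$, hence $W_n^{-1}\bm{\mu}_S=\bigl(\mu_0,\;V^{-1}\bm{\mu}_{1:S}-\mu_0 V^{-1}w\bigr)^T$ and $\|W_n^{-1}\bm{\mu}_S\|_2^2=\mu_0^2+\|V^{-1}\bm{\mu}_{1:S}-\mu_0 V^{-1}w\|_2^2$. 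Here $\|V^{-1}\bm{\mu}_{1:S}\|_2=\|(I+\tfrac1{\sigma^2}\Lambda_{1:S}\Phi_{1:S}^T\Phi_{1:S})^{-1}\bm{\mu}_{1:S}\|_2=\Theta(n^{(1-t)\max\{-1,(1-2\beta)/2\alpha\}}\log^{k/2}n)=o(1)$ by \eqref{eq:used_later} (with $S$ in place of $R$, as used in the proof of Lemma~\ref{thm:G2}). The crux is the bound $\|V^{-1}w\|_2=o(1)$: by push-through, $V^{-1}w=\tfrac1{\sigma^2}(I+\tfrac1{\sigma^2}\Lambda_{1:S}\Phi_{1:S}^T\Phi_{1:S})^{-1}\Lambda_{1:S}\Phi_{1:S}^T\phi_0(\mathbf{x})$; I would Taylor-expand the inverse via Lemma~\ref{lem:expansion_lpp}, so that the leading term is $\tfrac1{\sigma^2}(I+\tfrac n{\sigma^2}\Lambda_{1:S})^{-1}\Lambda_{1:S}\Phi_{1:S}^T\phi_0(\mathbf{x})$, whose $p$-th coordinate is $\tfrac{\lambda_p}{\sigma^2(1+n\lambda_p/\sigma^2)}\,\phi_p(\mathbf{x})^T\phi_0(\mathbf{x})$. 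Since $\phi_p\perp\phi_0$ in $L^2(\rho)$ one has $\mathbb{E}_{\mathbf{x}}\bigl[(\phi_p(\mathbf{x})^T\phi_0(\mathbf{x}))^2\bigr]=n\,\mathbb{E}[\phi_p^2\phi_0^2]\le nC_\phi^4p^{2\tau}$; combining a Bernstein bound for each $\phi_p(\mathbf{x})^T\phi_0(\mathbf{x})$ with a union bound over $p\le S$ and Lemma~\ref{lem:estimate} (exponents $s_1=2\alpha-2\tau$, $s_2=\alpha$, $s_3=2$, $m=n/\sigma^2$) yields $\tfrac1{\sigma^2}\|(I+\tfrac n{\sigma^2}\Lambda_{1:S})^{-1}\Lambda_{1:S}\Phi_{1:S}^T\phi_0(\mathbf{x})\|_2=\tilde O\bigl(n^{-1/2+(1-t)(1+2\tau)/2\alpha}\bigr)$, which is $o(1)$ precisely because $t>1-\tfrac\alpha{1+2\tau}$. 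The higher-order terms of the expansion each carry an extra factor $\|\tfrac1{\sigma^2}A\|_2=o(1)$ coming from the conjugated operator-norm concentration of $\Phi_{1:S}^T\Phi_{1:S}$ (Corollary~\ref{lem:matrix_phiTphi_1}, with $\tfrac{1+2\tau}\alpha<\gamma\le1$), exactly as in the proofs of Lemmas~\ref{lem:inv_fr} and \ref{lem:inv_philambdaxi}, so the series converges geometrically. Hence $\|V^{-1}w\|_2=o(1)$, $\|W_n^{-1}\bm{\mu}_S\|_2^2=\mu_0^2+o(1)$, and $G_{2,S}(D_n)=\tfrac{\mu_0^2}{2\sigma^2}+o(1)$; combining with the first paragraph, $G_2(D_n)=\tfrac{\mu_0^2}{2\sigma^2}+o(1)$, and collecting the failure probabilities gives the claimed $1-6\delta$.

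The main obstacle is exactly the bound $\|V^{-1}w\|_2=o(1)$: it expresses that the invisible direction $\phi_0$ is not spuriously ``learned'' through its empirical correlations with the visible eigenfunctions, and it is here --- and only here --- that the lower bound on $t$ is used; making it precise genuinely requires combining the mean-zero concentration of $\phi_p(\mathbf{x})^T\phi_0(\mathbf{x})$ with the sharp operator-norm concentration of $\Phi_{1:S}^T\Phi_{1:S}$, i.e., both of the technical tools developed in the paper. The other mildly delicate point --- uniform-in-$x_{n+1}$ control of the Sherman--Morrison denominator --- I would treat exactly as in the proofs of Lemmas~\ref{thm:G1} and \ref{thm:G2}.
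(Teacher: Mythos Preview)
Your overall route coincides with the paper's: truncate to $S$, apply the $\epsilon$-perturbed Sherman--Morrison computation to obtain $G_{2,S}(D_n)=\tfrac{1+o(1)}{2\sigma^2}\|W_n^{-1}\bm{\mu}_S\|_2^2$, and then argue this equals $\tfrac{\mu_0^2}{2\sigma^2}+o(1)$. The difference is in that last step. The paper splits $\bm{\mu}_S=\hat{\bm{\mu}}_R+(\bm{\mu}_S-\hat{\bm{\mu}}_R)$, controls the remainder via Lemma~\ref{lem:inv_philambdaxi}, and asserts \eqref{eq:inv_mu_r_mu_0>0}, i.e.\ $\|W_n^{-1}\hat{\bm{\mu}}_R\|_2=\mu_0+o(1)$, from the fact that the positive block satisfies $\|V^{-1}\hat{\bm{\mu}}_{1:R}\|_2=o(1)$. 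That inference is exactly the cross-term bound $\|V^{-1}w\|_2=o(1)$ that you isolate via the explicit $2\times2$ block structure; the paper's own proof of Lemma~\ref{thm:G2_mu_0>0} is terse here, but its machinery supplies an alternative route---the weighted matrix concentration of Corollary~\ref{cor:matrix_phiTphi_1_mu_0>0} (the operator $B$ built with $\tilde\Lambda_{1,R}$) as deployed in Lemma~\ref{lem:inv_fr_mu0>0}, where the $0$-th column $\Phi_{1:S}^T\phi_0(\mathbf{x})$ of $\Phi_S^T\Phi_S-nI$ is controlled as part of the whole matrix. Your entrywise Bernstein-plus-union-bound argument on $\phi_p(\mathbf{x})^T\phi_0(\mathbf{x})$ achieves the same end and is more transparent about where the hypothesis $t>1-\tfrac{\alpha}{1+2\tau}$ enters, while the paper's matrix approach is cleaner bookkeeping. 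One minor wrinkle: your citation of \eqref{eq:used_later} ``with $S$ in place of $R$'' for $\|V^{-1}\bm{\mu}_{1:S}\|_2$ is not literally applicable (that lemma fixes $R=n^{(\frac{1}{\alpha}+\kappa)(1-t)}$); what you really need is the conclusion \eqref{eq:inv_mu_s2} from the proof of Lemma~\ref{thm:G2}, which itself performs the $\hat{\bm{\mu}}_{1:R}$/remainder split and is where the restriction on $q$ is used.
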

\begin{proof}[Proof of Lemma~\ref{thm:G2_mu_0>0}]
Let $S=n^D$. Let $G_{2,S}(D_n)=\mathbb{E}_{(x_{n+1},y_{n+1})}(T_{2,S}(D_{n+1})-T_{2,S}(D_{n}))$.
By Lemma~\ref{thm:trunc_T2R}, when $S>n^{\max\{1,\frac{-t}{(\alpha-1-2\tau)}\}}$, with probability of at least $1-3\delta$ we have that
\begin{equation}
\label{eq:truncate_G2_mu_0>0}
\begin{aligned}
&|G_2(D_n)-G_{2,S}(D_n)|=|\mathbb{E}_{(x_{n+1},y_{n+1})}[T_2(D_{n+1})-T_{2,S}(D_{n+1})]-[T_2(D_n)-T_{2,S}(D_n)]|\notag\\
&=\left|\mathbb{E}_{(x_{n+1},y_{n+1})}\tilde{O}\left((\tfrac{1}{\delta}+1) \tfrac{1}{\sigma^2}(n+1)S^{\max\{1/2-\beta,1-\alpha+2\tau\}}\right)-\tilde{O}\left((\tfrac{1}{\delta}+1) \tfrac{1}{\sigma^2}nS^{\max\{1/2-\beta,1-\alpha+2\tau\}}\right)\right|\notag\\
&=\tilde{O}\left((\tfrac{1}{\delta}+1) \tfrac{1}{\sigma^2}nS^{\max\{1/2-\beta,1-\alpha+2\tau\}}\right)\\
\end{aligned}
\end{equation}
Let $\Lambda_{S}=\mathrm{diag}\{\lambda_1,\ldots,\lambda_S\}$, $\Phi_{S}=(\phi_1(\mathbf{x}), \phi_1(\mathbf{x}),\ldots, \phi_S(\mathbf{x}))$ and $\bm{\mu}_{S}=(\mu_1,\ldots, \mu_S)$. 
Define $\eta_{S}=(\phi_
0(x_{n+1}),\phi_
1(x_{n+1}),\ldots,\phi_
S(x_{n+1}))^T$ and $\widetilde{\Phi}_{S}=(\Phi_{S}^T,\eta_{S})^T$.
By the same technique as in the proof of Lemma~\ref{lem:T2R}, we replace $\Lambda_R$ by $\tilde{\Lambda}_{\epsilon,R}=\mathrm{diag}\{\epsilon, \lambda_1,\ldots,\lambda_R\}$, let $\epsilon\to 0$ and show the counterpart of the result \eqref{eq:G2S(Dn)} in the proof of Lemma~\ref{thm:G2}:
\begin{equation}
\label{eq:G2S(Dn)mu_0>0}
\begin{aligned}
   G_{2,S}(D_n)&=\mathbb{E}_{(x_{n+1},y_{n+1})}(T_{2,S}(D_{{n+1}})-T_{2,S}(D_{n}))\\
   &=\mathbb{E}_{(x_{n+1},y_{n+1})}\left(\frac{1}{2\sigma^2}\frac{\bm{\mu}_{S}^T(I+\frac{1}{\sigma^2}\Phi_{S}^T\Phi_{S}\Lambda_S)^{-1}\eta_S\eta_S^T(I+\frac{1}{\sigma^2}\Lambda_S\Phi_S^T\Phi_S)^{-1}\bm{\mu}_S}{1+\frac{1}{\sigma^2}\eta_S^T(I+\frac{1}{\sigma^2}\Lambda_S\Phi_S^T\Phi_S)^{-1}\Lambda_S\eta_S})\right)\\
   &=\mathbb{E}_{(x_{n+1},y_{n+1})}\left(\frac{1+o(1)}{2\sigma^2}\bm{\mu}_S^T(I+\frac{1}{\sigma^2}\Phi_S^T\Phi_S\Lambda_S)^{-1}\eta_S\eta_S^T(I+\frac{1}{\sigma^2}\Lambda_S\Phi_S^T\Phi_S)^{-1}\bm{\mu}_S\right)\\
   &=\frac{1+o(1)}{2\sigma^2}\bm{\mu}_S^T(I+\frac{1}{\sigma^2}\Phi_S^T\Phi_S\Lambda_S)^{-1}(I+\frac{1}{\sigma^2}\Lambda_S\Phi_S^T\Phi_S)^{-1}\bm{\mu}_S\\
   &=\frac{1+o(1)}{2\sigma^2}\|(I+\frac{1}{\sigma^2}\Lambda_S\Phi_S^T\Phi_S)^{-1}\bm{\mu}_S\|_2^2,
\end{aligned}
\end{equation}
where in the fourth to last equality we used the Sherman–Morrison formula, in the third inequality we used \eqref{eq:infinite_small}
, and in the last equality we used the fact that $\mathbb{E}_{(x_{n+1},y_{n+1})}\eta_{1:S}\eta_{1:S}^T=I$.

Let $\hat{\bm{\mu}}_{R}=(\mu_0,\mu_1,\ldots,\mu_R,0,\ldots,0)\in\mathbb{R}^S$. Then we have \begin{equation}
\begin{aligned}
     \|(I+\frac{1}{\sigma^2}\Lambda_S\Phi_S^T\Phi_S)^{-1}\bm{\mu}_S\|_2&\leq\|(I+\frac{1}{\sigma^2}\Lambda_S\Phi_S^T\Phi_S)^{-1}\hat{\bm{\mu}}_{R}\|_2+\|(I+\frac{1}{\sigma^2}\Lambda_S\Phi_S^T\Phi_S)^{-1}(\bm{\mu}_S-\hat{\bm{\mu}}_{R})\|_2,\\
         \|(I+\frac{1}{\sigma^2}\Lambda_S\Phi_S^T\Phi_S)^{-1}\bm{\mu}_S\|_2&\geq\|(I+\frac{1}{\sigma^2}\Lambda_S\Phi_S^T\Phi_S)^{-1}\hat{\bm{\mu}}_{R}\|_2-\|(I+\frac{1}{\sigma^2}\Lambda_S\Phi_S^T\Phi_S)^{-1}(\bm{\mu}_S-\hat{\bm{\mu}}_{R})\|_2.
\end{aligned}
    \label{eq:inv_mu_s_mu_0>0}
\end{equation}
Choose $R=n^{(\frac{1}{\alpha}+\kappa)(1-t)}$ where $0<\kappa<\frac{\alpha-1-2\tau+(1+2\tau)t}{\alpha^2(1-t)}$.
In Lemma~\ref{lem:inv_fr}, \eqref{eq:used_later}, we showed that with probability of at least $1-\delta$,
\begin{equation}
\begin{aligned}
     \|(I+\tfrac{1}{\sigma^2}\Lambda_{1:R}\Phi_{1:R}^T\Phi_{1:R})^{-1}\bm{\mu}_{1:R}\|_2&=\Theta(n^{(1-t)\max\{-1,\frac{1-2\beta}{2\alpha}\}}\log^{k/2}n)\\
    &=(1+o(1))\|(I+\frac{n}{\sigma^2}\Lambda_{1:R})^{-1}\bm{\mu}_{1:R}\|_2,
\end{aligned}
\end{equation}
where $k=\begin{cases}
    0,&2\alpha\not=2\beta-1,\\
    1,&2\alpha=2\beta-1.
    \end{cases}$.
The same proof holds if we replace $\Phi_{1:R}$ with $\Phi_{1:S}$, $\Lambda_{1:R}$ with $\Lambda_{1:S}$, and $\bm{\mu}_{1:R}$ with $\hat{\bm{\mu}}_{1:R}$. We have
\begin{equation}
\begin{aligned}
     \|(I+\tfrac{1}{\sigma^2}\Lambda_{1:S}\Phi_{1:S}^T\Phi_{1:S})^{-1}\hat{\bm{\mu}}_{1:R}\|_2&=\Theta(n^{(1-t)\max\{-1,\frac{1-2\beta}{2\alpha}\}}\log^{k/2}n)\\
    &=(1+o(1))\|(I+\frac{n}{\sigma^2}\Lambda_{1:S})^{-1}\hat{\bm{\mu}}_{1:R}\|_2.
\end{aligned}
\end{equation}
So we have 
\begin{equation}
\label{eq:inv_mu_r_mu_0>0}
\begin{aligned}
     \|(I+\tfrac{1}{\sigma^2}\Lambda_{S}\Phi_{S}^T\Phi_{S})^{-1}\hat{\bm{\mu}}_{R}\|_2&=\mu_0+\Theta(n^{(1-t)\max\{-1,\frac{1-2\beta}{2\alpha}\}}\log^{k/2}n)\\
    &=\mu_0+o(1).
\end{aligned}
\end{equation}
Next we bound $\|(I+\frac{1}{\sigma^2}\Lambda_S\Phi_S^T\Phi_S)^{-1}(\bm{\mu}_S-\hat{\bm{\mu}}_R)\|_2$. By Assumption~\ref{assumption_beta}, we have that  $\|\bm{\mu}_S-\hat{\bm{\mu}}_R\|_2=O(R^{\frac{1-2\beta}{2}})$. For any $\xi\in\mathbb{R}^S$ and $\|\xi\|_2=1$, using the Woodbury matrix identity,  with probability of at least $1-2\delta$ we have
\begin{equation*}
    \begin{aligned}
       &|\xi^T (I+\frac{1}{\sigma^2}\Lambda_S\Phi_S^T\Phi_S)^{-1}(\bm{\mu}_S-\hat{\bm{\mu}}_R)|\\
       &=|\xi^T \left(I-\frac{1}{\sigma^2}\Lambda_S\Phi_S^T(I+\frac{1}{\sigma^2}\Phi_S\Lambda_S\Phi_S^T)^{-1}\Phi_S\right)(\bm{\mu}_S-\hat{\bm{\mu}}_R)|\\
       &=|\xi^T(\bm{\mu}_S-\hat{\bm{\mu}}_R) -\frac{1}{\sigma^2}\xi^T\Lambda_S\Phi_S^T(I+\frac{1}{\sigma^2}\Phi_S\Lambda_S\Phi_S^T)^{-1}\Phi_S(\bm{\mu}_S-\hat{\bm{\mu}}_R)|\\
       &\leq\|\xi\|_2\|\bm{\mu}_S-\hat{\bm{\mu}}_R\|_2+ \frac{1}{\sigma^2}|\xi^T\Lambda_S\Phi_S^T(I+\frac{1}{\sigma^2}\Phi_S\Lambda_S\Phi_S^T)^{-1}\Phi_S(\bm{\mu}_S-\hat{\bm{\mu}}_R)|\\
       &\leq O(R^{\frac{1-2\beta}{2}})+ \frac{1}{\sigma^2}\|(I+\frac{1}{\sigma^2}\Phi_S\Lambda_S\Phi_S^T)^{-1}\Phi_S\Lambda_S\xi\|_2\|\Phi_S(\bm{\mu}_S-\hat{\bm{\mu}}_R)\|_2\\
       &=O(R^{\frac{1-2\beta}{2}})+ \frac{1}{\sigma^2}O(\sqrt{(\frac{1}{\delta}+1)n}\cdot n^{-(1-t)}) O(\sqrt{(\frac{1}{\delta}+1)n}R^{\frac{1-2\beta}{2}})\\
       &= O((\frac{1}{\delta}+1)R^{\frac{1-2\beta}{2}}),
    \end{aligned}
\end{equation*}
where in the second to last  step we used Corollary~\ref{cor:phiRmu} to show $\|\Phi_S(\bm{\mu}_S-\hat{\bm{\mu}}_R)\|_2=O(\sqrt{(\frac{1}{\delta}+1)n}R^{\frac{1-2\beta}{2}})$ with probability of at least $1-\delta$, and Lemma~\ref{lem:inv_philambdaxi} to show that $\|(I+\frac{1}{\sigma^2}\Phi_S\Lambda_S\Phi_S^T)^{-1}\Phi_S\Lambda_S\xi\|_2=O(\sqrt{(\frac{1}{\delta}+1)n}\cdot n^{-(1-t)})$ with probability of at least $1-\delta$. 
Since $R=n^{(\frac{1}{\alpha}+\kappa)(1-t)}$, we have
\begin{equation*}
    \begin{aligned}
       &|\xi^T (I+\frac{1}{\sigma^2}\Lambda_S\Phi_S^T\Phi_S)^{-1}(\bm{\mu}_S-\hat{\bm{\mu}}_R)|= O((\frac{1}{\delta}+1)n^{\frac{(1-2\beta)(1-t)}{2\alpha}+\frac{(1-2\beta)(1-t)\kappa}{2}}) . 
    \end{aligned}
\end{equation*}
Since $\xi$ is arbitrary, we have $\|(I+\frac{1}{\sigma^2}\Lambda_S\Phi_S^T\Phi_S)^{-1}(\bm{\mu}_S-\hat{\bm{\mu}}_R)\|_2=O((\frac{1}{\delta}+1)n^{\frac{(1-2\beta)(1-t)}{2\alpha}+\frac{(1-2\beta)(1-t)\kappa}{2}})$. 
Since $0\leq q<\frac{[\alpha-(1+2\tau)(1-t)](2\beta-1)}{4\alpha^2}$ and $0<\kappa<\frac{\alpha-1-2\tau+(1+2\tau)t}{2\alpha^2(1-t)}$, we can choose $\kappa<\frac{\alpha-1-2\tau+(1+2\tau)t}{2\alpha^2(1-t)}$ and $\kappa$ is arbitrarily close to $\kappa<\frac{\alpha-1-2\tau+(1+2\tau)t}{2\alpha^2(1-t)}$ such that $0\leq q<\frac{(2\beta-1)(1-t)\kappa}{2}$. Then we have $\frac{(1-2\beta)(1-t)\kappa}{2}+q<0$. 
From \eqref{eq:inv_mu_s_mu_0>0} and \eqref{eq:inv_mu_r_mu_0>0}, we have \begin{equation}
\label{eq:inv_mu_s2_mu_0>0}
\begin{aligned}
    \|(I+\tfrac{1}{\sigma^2}\Lambda_S\Phi_S^T\Phi_S)^{-1}\bm{\mu}_S\|_2=&\mu_0+ \Theta(n^{(1-t)\max\{-1,\frac{1-2\beta}{2\alpha}\}}\log^{k/2}n)+O((\frac{1}{\delta}+1)n^{\frac{(1-2\beta)(1-t)}{2\alpha}+\frac{(1-2\beta)(1-t)\kappa}{2}})
    \\
    =&\mu_0+\Theta(n^{(1-t)\max\{-1,\frac{1-2\beta}{2\alpha}\}}\log^{k/2}n)\\
    =&\mu_0+o(1).
\end{aligned}
\end{equation}
Hence $G_{2,S}(D_n)=\frac{1+o(1)}{2\sigma^2}\|(I+\frac{1}{\sigma^2}\Lambda_S\Phi_S^T\Phi_S)^{-1}\bm{\mu}_S\|_2^2=\frac{1}{2\sigma^2}\mu_0^2+o(1)$. 
Then by \eqref{eq:truncate_G2_mu_0>0}, $G_{2}(D_n)=\frac{1}{2\sigma^2}\mu_0^2+o(1)+\tilde{O}\left((\frac{1}{\delta}+1) nS^{\max\{1/2-\beta,1-\alpha\}}\right)$. Choosing $S=n^{\max\left\{1,\frac{-t}{(\alpha-1-2\tau)},\left(\frac{1+q+\min\{2,\frac{2\beta-1}{\alpha}\}}{\min\{\beta-1/2,\alpha-1-2\tau\}}+1\right)(1-t)\right\}}$, we get the result.
\end{proof}

\begin{proof}[Proof of Theorem~\ref{thm:Generalization_mu_0>0}]
According to Lemma~\ref{thm:G2_mu_0>0},
    $G_2(D_n)=\frac{1}{2\sigma^2}\mu_0^2+o(1)$. 
By Lemma~\ref{thm:G1}, we have $G_1(D_n)=\Theta(n^{\frac{(1-\alpha)(1-t)}{\alpha}})$. Then $\E_\epsilon G(D_n)=G_1(D_n)+G_2(D_n)=\frac{1}{2\sigma^2}\mu_0^2+o(1)$.
\end{proof}
\subsection{Proofs related to the excess mean squared generalization error} 
\label{app:asympototicKRR}
\begin{proof}[Proof of Theorem~\ref{thm:NTK-generalization-error}]
For $\mu_0=0$, we can show that 
\begin{equation*}
\begin{aligned}
 \mathbb{E}_{\bm{\epsilon}}M(D_n) &= \mathbb{E}_{\bm{\epsilon}} \mathbb{E}_{x_{n+1}}[\bar{m}(x_{n+1})-f(x_{n+1})]^2 \\
 &= \mathbb{E}_{\bm{\epsilon}} \mathbb{E}_{x_{n+1}}[ K_{x_{n+1}\mathbf{x}}(K_{n}+\sigma_{\mathrm{model}}^2I_n)^{-1}\mathbf{y}-f(x_{n+1})]^2 \\
 &= \mathbb{E}_{\bm{\epsilon}} \mathbb{E}_{x_{n+1}}[ \eta^T\Lambda \Phi^T[\Phi\Lambda\Phi^T+\sigma_{\mathrm{model}}^2I_n)^{-1}(\Phi\mu+\bm{\epsilon})-\eta^T\mu]^2 \\
  &=\mathbb{E}_{\bm{\epsilon}}\mathbb{E}_{x_{n+1}}[\eta^T\Lambda \Phi^T(\Phi\Lambda\Phi^T+\sigma_{\mathrm{model}}^2I_n)^{-1}\bm{\epsilon}]^2 \\
  &+\mathbb{E}_{x_{n+1}}\left[\eta^T\left(\Lambda \Phi^T(\Phi\Lambda\Phi^T+\sigma_{\mathrm{model}}^2I_n)^{-1}\Phi-I\right)\mu\right]^2\\
    &=\sigma_{\mathrm{true}}^2\tr\Lambda \Phi^T(\Phi\Lambda\Phi^T+\sigma_{\mathrm{model}}^2I_n)^{-2}\Phi\Lambda\\
    &+\mu^T \left(I+\tfrac{1}{\sigma_{\mathrm{model}}^2}\Phi^T\Phi\Lambda\right)^{-1}\left(I+\tfrac{1}{\sigma_{\mathrm{model}}^2}\Lambda\Phi^T\Phi\right)^{-1}\mu\\
 &=\tfrac{\sigma_{\mathrm{true}}^2}{\sigma_{\mathrm{model}}^2} \tr(I+\tfrac{\Lambda\Phi^T\Phi}{\sigma_{\mathrm{model}}^2})^{-1}\Lambda-\tr(I+\tfrac{\Lambda\Phi^T\Phi}{\sigma_{\mathrm{model}}^2})^{-2}\Lambda+\|(I+\tfrac{1}{\sigma_{\mathrm{model}}^2}\Lambda\Phi^T\Phi)^{-1}\mu\|_2^2.
\end{aligned}
\end{equation*}
According to \eqref{eq:inv_mu_s2} from the proof of Lemma~\ref{thm:G2}, the truncation procedure \eqref{eq:truncate_G2} and \eqref{eq:final_trun_R3}, with probability of at least $1-\delta$ we have
\begin{equation*}
    \|(I+\tfrac{1}{\sigma_{\mathrm{model}}^2}\Lambda\Phi^T\Phi)^{-1}\mu\|_2^2=\Theta(n^{\max\{-2(1-t),\tfrac{(1-2\beta)(1-t)}{\alpha}\}}\log^{k/2}n)
    =(1+o(1))\|(I+\tfrac{n}{\sigma_{\mathrm{model}}^2}\Lambda)^{-1}\bm{\mu}\|^2_2,
\end{equation*}
where $k=\begin{cases}
    0,&2\alpha\not=2\beta-1,\\
    1,&2\alpha=2\beta-1.
    \end{cases}$. 
    
According to \eqref{eq:T1p1} and \eqref{eq:T1p2} from the proof of Lemma~\ref{thm:G1}, the truncation procedure \eqref{eq:truncate_G1}, \eqref{eq:final_trun_R1} and \eqref{eq:final_trun_R2}, with probability of at least $1-\delta$ we have
\begin{align*}
    &\tr(I+\tfrac{\Lambda\Phi^T\Phi}{\sigma_{\mathrm{model}}^2})^{-1}\Lambda-\tr(I+\tfrac{\Lambda\Phi^T\Phi}{\sigma_{\mathrm{model}}^2})^{-2}\Lambda\\
    &=\left(\tr(I+\tfrac{n}{\sigma_{\mathrm{model}}^2}\Lambda)^{-1}\Lambda\right)(1+o(1))-\|\Lambda^{1/2}(I+\tfrac{n}{\sigma_{\mathrm{model}}^2}\Lambda)^{-1}\|_F^2(1+o(1))\\
    &=\Theta(n^{\tfrac{(1-\alpha)(1-t)}{\alpha}}).
\end{align*}
Combining the above two equations we get
\begin{equation*}
\begin{aligned}
    \mathbb{E}_{\bm{\epsilon}}M(D_n) =& (1+o(1))\left(\tfrac{\sigma_{\mathrm{true}}^2}{\sigma_{\mathrm{model}}^2}\left(\tr(I+\tfrac{n}{\sigma_{\mathrm{model}}^2}\Lambda)^{-1}\Lambda-\|\Lambda^{1/2}(I+\tfrac{n}{\sigma_{\mathrm{model}}^2}\Lambda)^{-1}\|^2_F\right)+\|(I+\tfrac{n}{\sigma_{\mathrm{model}}^2}\Lambda)^{-1}\bm{\mu}\|^2_2\right)\\
    =&\tfrac{\sigma_{\mathrm{true}}^2}{\sigma_{\mathrm{model}}^2}\Theta(n^{\tfrac{(1-\alpha)(1-t)}{\alpha}})+\Theta(n^{\max\{-2(1-t),\tfrac{(1-2\beta)(1-t)}{\alpha}\}}\log^{k/2}n)\\
    =&{\sigma_{\mathrm{true}}^2}\Theta(n^{\tfrac{1-\alpha-t}{\alpha}})+\Theta(n^{\max\{-2(1-t),\tfrac{(1-2\beta)(1-t)}{\alpha}\}}\log^{k/2}n)\\
    =&\Theta\bigg(\max\{\sigma_{\mathrm{true}}^2n^{\tfrac{1-\alpha-t}{\alpha}},n^{\tfrac{(1-2\beta)(1-t)}{\alpha}}\}\bigg)
\end{aligned}
\end{equation*} 
When $\mu_0>0$, according to \eqref{eq:inv_mu_s2_mu_0>0} in the proof of Lemma~\ref{thm:G2_mu_0>0} and the truncation procedure \eqref{eq:truncate_G2}, with probability of at least $1-\delta$ we have 
\begin{equation*}
\begin{aligned}
    \mathbb{E}_{\bm{\epsilon}}M(D_n) =& \Theta(n^{\frac{(1-\alpha)(1-t)}{\alpha}})+\mu_0^2+o(1)\\
    =&\mu_0^2+o(1).
\end{aligned}
\end{equation*} 
\end{proof}

\end{document}